\documentclass{article}

\usepackage[english]{babel}

\usepackage[letterpaper,top=2cm,bottom=2cm,left=3cm,right=3cm,marginparwidth=1.75cm]{geometry}
\usepackage{amsmath, amssymb, amsthm}
\usepackage{graphicx}
\usepackage[colorlinks=true, allcolors=blue]{hyperref}
\usepackage{authblk}
\usepackage{natbib}
\usepackage{algorithm}
\usepackage{algorithmic}
\usepackage{caption}
\usepackage{float}
\graphicspath{{./figures/}}

\usepackage{booktabs}

\theoremstyle{plain}
\newtheorem{theorem}{Theorem}[section]

\newtheorem{lemma}[theorem]{Lemma}

\theoremstyle{definition}
\newtheorem{definition}[theorem]{Definition}
\newtheorem{assumption}[theorem]{Assumption}
\theoremstyle{remark}
\newtheorem{remark}[theorem]{Remark}

\newcommand{\bw}{\boldsymbol{w}}
\newcommand{\bone}{\boldsymbol{1}}
\newcommand{\sD}{\mathcal{D}}
\newcommand{\sB}{\mathcal{B}}

\DeclareMathOperator{\Regret}{Regret}
\DeclareMathOperator{\LinRegret}{LinRegret}

\newcommand{\seq}[1]{\left( #1 \right)}
\newcommand{\set}[1]{\left\{ #1 \right\}}

\newcommand{\abs}[1]{\left| #1 \right|}
\newcommand{\sign}{{\rm sign}}

\newcommand{\field}[1]{\mathbb{#1}}

\newcommand{\R}{\field{R}}

\newcommand{\E}{\field{E}}

\newcommand{\argmin}{\mathop{\mathrm{argmin}}}
\DeclareMathOperator{\KL}{KL}

\title{Online Conformal Prediction via Universal Portfolio Algorithms}
\author[1]{Tuo Liu\thanks{\texttt{tuo.liu@kaust.edu.sa}}}
\author[2]{Edgar Dobriban\thanks{\texttt{dobriban@wharton.upenn.edu}}}
\author[1]{Francesco Orabona\thanks{\texttt{francesco@orabona.com}}}

\affil[1]{King Abdullah University of Science and Technology, Saudi Arabia}
\affil[2]{University of Pennsylvania, USA}

\begin{document}
\maketitle

\begin{abstract}
Online conformal prediction (OCP) seeks prediction intervals that achieve long-run 
$1-\alpha$ coverage for arbitrary (possibly adversarial) data streams, while remaining as informative as possible. Existing OCP methods often require manual learning-rate tuning 
to work well, and may also require algorithm-specific analyses. 
Here, we develop a general regret-to-coverage theory for interval-valued OCP based on the $(1-\alpha)$-pinball loss. 
Our first contribution is to identify \emph{linearized regret} 
as a key notion, showing that controlling it implies coverage bounds for any online algorithm. 
This relies on 
a black-box reduction that depends only on the Fenchel conjugate of an upper bound on the linearized regret.
Building on this theory, we propose UP-OCP, 
a parameter-free method for OCP, via a reduction to a two-asset portfolio selection problem, leveraging universal portfolio algorithms. 
We show strong finite-time bounds on the miscoverage of UP-OCP, even for polynomially growing predictions.
Extensive experiments support that UP-OCP delivers consistently better size/coverage trade-offs than prior online conformal baselines.
\end{abstract}

\section{Introduction}
\label{sec:intro}

Reliable uncertainty quantification is a central goal in modern statistical learning, especially when predictions must be accompanied by measures of confidence. 
A popular way to express such uncertainty is through \emph{prediction sets}, which return a set of predicted outcomes---rather than a single prediction---for a test point.
The key challenge is to construct these sets so that they are both informative (small) and valid (they achieve a prescribed coverage level) without relying on strong distributional assumptions.
Conformal Prediction (CP) has emerged as a prominent methodology for constructing prediction sets with finite-sample statistical validity~\citep[see, e.g.,][]{saunders1999transduction,vovk1999machine,papadopoulos2002inductive,vovk2005algorithmic,Vovk2013,Chernozhukov2018,lei2013distribution,lei2018distribution,guan2023localized,romano2020classification}.
CP can wrap around any predictive model to produce sets that contain the true label with a user-specified probability of at least $1 - \alpha$.
This framework has been used in various settings, including regression, classification, and structured prediction.

In its most basic form, CP provides coverage if the datapoints are
 \emph{exchangeable}. 
Since data streams are often not
exchangeable, 
conformal methods have been developed that can account for various 
distribution shifts, such as covariate shift~\citep{tibshirani2019conformal,qiu2022prediction,park2021pac}, label shift~\citep{podkopaev2021distribution,si2024pac}, more general distribution shifts~\citep{10.1214/23-AOS2276,gauthier2025values}, and
time series models~\citep{xu2021conformal,zaffran2022adaptive}.

A different line of work, sometimes called Online Conformal Prediction (OCP),
aims to completely do away with making assumptions on the data, and consider instead \emph{deterministic} and \emph{adversarial} data \citep[see, e.g.,][]{gibbs2021adaptive,zaffran2022adaptive,bastani2022practical,gibbs2024conformal,angelopoulos2023conformal,podkopaev2024adaptive}.
In this setting, one aims to achieve coverage $1-\alpha$ averaged over time (and any possible algorithmic randomness).
This work can be viewed to belong to the setting of \emph{online learning}~\citep{cesa2006prediction,hazan2016introduction,orabona2019modern}.

More formally, the observed data $\seq{\seq{X_{t}, Y_{t}}}_{t \geq 1}$ arrive over time $t=1, 2, \dots$.
At each time point (or round) $t$, a prediction set $\hat{C}_{t}$ is constructed for $Y_{t}$ using all the previously observed datapoints $\seq{\seq{X_{i}, Y_{i}}}_{i \le t-1}$, as well as the current features $X_{t}$.
Let $\hat{Y}_{t}$ represent a point prediction given by a model $\hat{f}_{t}$ trained using all the information available before the true response $Y_{t}$.
We are interested in regression problems, 
focusing on the perhaps most popular form of centered prediction sets \citep[see, e.g.,][]{Lei2018}: $\hat{C}_{t}(b) := [\hat{Y}_{t} - b, \hat{Y}_{t} + b]$, defined as empty if $b<0$.
The goal is to design a conformal predictor whose observed long-term miscoverage rate is close to the nominal level, denoted as $\alpha\in (0,1)$. Formally, we aim to construct a sequence of radii $\seq{b_t}_{t \geq 1}$ so that, as the time horizon $T$ grows, the corresponding prediction sets satisfy
\begin{equation} \label{eq:miscoverage-convergence}
    \lim_{T \to \infty} \ \abs{\frac{1}{T} \sum_{t=1}^{T} \bone \set{Y_t \notin \hat{C}_{t}(b_{t})} - \alpha}
    = 0~.
\end{equation}

\noindent\textbf{Related Work.}
Adaptive Conformal Inference (ACI) \citep{gibbs2021adaptive}
 maintains a quantile threshold $\alpha_t$ over time $t=1,2, \dots$, and updates it via Online Subgradient Descent (OSD) on the pinball (aka quantile) loss.
ACI achieves long-term coverage close to the target $1 - \alpha$ level.
However, its performance depends heavily on the stepsize. 
A small stepsize results in slow adaptation and miscoverage after a large distribution shift, 
while a large stepsize induces high variance and instability in the prediction set widths~\citep{gibbs2024conformal,angelopoulos2023conformal}.

To address this limitation, later work introduces alternatives. 
Multi-valid Conformal Prediction (MVP)~\citep{bastani2022practical} selects 
a threshold with the best historical coverage from a discretized grid.
MVP guarantees long-term and threshold-calibrated coverage at multiple levels, 
but lacks rapid adaptivity to abrupt changes~\citep{bastani2022practical}.

However, the above approaches require tuning a stepsize, 
which is challenging in the online setting.
Because the sequences are adversarial,
 we cannot rely, for example, on train-test validation to tune hyperparameters.
To avoid this problem, \citet{zaffran2022adaptive} propose Aggregated ACI, which uses online expert aggregation, running multiple copies of ACI with various stepsizes and forming a weighted ensemble.
Relatedly, Dynamically-Tuned ACI (DtACI)~\citep{gibbs2024conformal} re-weights experts in order to emphasize recent data; in effect it tunes ACI's stepsize online by minimizing a quantile loss.
However, as noted by \citet{angelopoulos2023conformal}, ACI-based methods can sometimes output infinite or null prediction sets, when $\alpha_{t}$ drifts below zero or above unity, respectively. Strongly Adaptive OCP~\citep{bhatnagar2023improved} aggregates a number of different base algorithms to guarantee the worst-case regret over each sub-intervals, but it requires uniformly bounded predictions and the knowledge of the maximum range of the predictions, and both assumptions often fail to hold in practice.
\citet{zhang2024discounted} and \citet{podkopaev2024adaptive} instead avoid the use of a stepsize by using ``parameter-free'' online learning algorithms~\citep{OrabonaP16,JunOWW17b,OrabonaP21},
such as scale-free online gradient descent \citet{podkopaev2024adaptive}.

A different approach has been recently proposed by \citet{srinivas2026online}, where one directly addresses the optimal trade-off between coverage and size of the confidence sets in a competitive ratio framework. Notably, their analysis confirms that robust coverage in the worst-case setting fundamentally necessitates larger prediction sets.
However, their algorithm, as the one by \citet{bhatnagar2023improved}, requires uniformly bounded predictions and the knowledge of the maximum range of the predictions.

A complementary line of work frames online conformal calibration as a feedback-control problem and proposes conformal P/PI/PID controllers~\citep{angelopoulos2023conformal}. Unlike these controller-based schemes---which introduce gain hyperparameters and typically require controller-specific analyses---we are interested in parameter-free approaches, more suited to the online setting.

Moreover, prior work used specialized analyses to prove that the algorithms can guarantee asymptotic coverage. 
To date, the precise connection between online learning and OCP is unclear.
In online learning, the central goal is to obtain a sublinear \emph{regret}, i.e., the difference between the cumulative loss of the algorithm and the one of the best fixed predictor chosen in hindsight. Algorithms that satisfy this property are said to be \emph{no regret}.
However, \citet{angelopoulos2025gradient} also show that achieving coverage
is, in general, completely distinct from achieving sublinear regret.
Similarly, as we discuss later, the notion of proximal regret~\citep{cai2024tractable} also implies coverage, but has been established only for gradient descent.
This makes it unclear when one can port methods from online learning for OCP.

\noindent\textbf{Contributions.}
We answer the following questions:

\emph{Is there a form of regret that implies coverage (see \eqref{eq:miscoverage-convergence})?}

In Section~\ref{sec:guarantees}, we show that the general notion of \emph{linearized regret} directly implies coverage.
Since linearized regret bounds have been established for several online learning methods, this will enable us to directly obtain coverage guarantees for these methods.
In particular, we 
answer this question by making a connection to the regret-reward duality from online learning~\citep{orabona2019modern}.
Once a bound on the miscoverage is established, it is natural to ask about optimality:

\emph{Is it possible to construct an optimal online algorithm (in terms of regret and coverage) for OCP?}

This remained unexplored in the past literature.
In Section~\ref{sec:universal-portfolio}, we design a parameter-free strategy that guarantees the best known finite-time coverage guarantee. Following classical work in parameter-free online learning \citep[see, e.g.,][]{orabona2019modern}, this is achieved by observing the equivalence between the OCP problem and a gambling one, then using universal portfolio algorithms~\cite{cover2002universal} to optimally solve the gambling problem. We will also show that our algorithm guarantees online coverage with any polynomial growth of the nonconformity scores. We call the resulting algorithm Universal Portfolio OCP (UP-OCP).

Finally, we introduce a way to empirically quantify the \emph{trade-off} between size and coverage, for a wide range of values of $\alpha$. In our extensive experiments on real and simulated datasets, UP-OCP achieves the best such trade-off among a number of strong baselines (Section~\ref{sec:experiments}).

\section{Notation and Problem Setup}
\label{sec:setup}

In this section, we formally introduce our notation and the problem setup.

\noindent\textbf{Notation.}
We define here some basic concepts and tools from convex analysis~\citep[see, e.g.,][]{Rockafellar70}.
For a function $f:\R \rightarrow \R$, we define a \emph{subgradient} of $f$ in $x \in \R$ as $g \in \R$ that satisfies $f(y)\geq f(x) + g (y-x), \ \forall y \in \R$.
The set of subgradients of $f$ in $x$ is called the \emph{subdifferential set} and we denote it by $\partial f(x)$.
The \emph{indicator function of the set $\mathcal{V}$}, $\bone_\mathcal{V}:\R\rightarrow (-\infty, +\infty]$, has value $0$ for $x \in \mathcal{V}$ and $+\infty$ otherwise.
For a function $f: \R\rightarrow [-\infty,\infty]$, we define the \emph{Fenchel conjugate} $f^\star:\R \rightarrow [-\infty,\infty]$ as $f^\star(\theta) = \sup_{x \in \R} \ (\theta x - f(x))$. The Fenchel conjugate is always well-defined and convex.

\noindent\textbf{Problem Setup.}
We consider the problem of OCP,
for arbitrary data streams, even adversarially generated ones, as introduced in Section~\ref{sec:intro}. 
Let $S_{t} \geq 0$ denote the radius of the smallest prediction set that contains the true response $Y_{t}$, i.e.,
$S_{t}:= \inf \{b \in [0,\infty): Y_{t} \in \hat{C}_{t}(b)\} = |Y_{t} - \hat{Y}_{t}|$.
We will also refer to $S_t$ as the non-conformity score \cite{vovk2005algorithmic}.
In terms of $S_{t}$, the target property \eqref{eq:miscoverage-convergence} is equivalent to
\[
    \lim_{T \to \infty} \ \abs{\frac{1}{T} \sum_{t=1}^{T} \bone \set{b_t < S_t} -\alpha}
    = 0~.
\]
This can be viewed as the problem of sequentially learning the $(1 - \alpha)$-th quantile of the nonconformity scores $\seq{S_{t}}_{t \geq 1}$.

A standard approach \citep[see, e.g.,][]{gibbs2021adaptive,podkopaev2024adaptive}, to learn this quantile is to use a proper scoring rule \citep[see, e.g.,][]{gneiting2007strictly},
namely the \emph{pinball (or quantile) loss}, defined as
\begin{equation} \label{eq:pinball-loss}
    \ell^{(1 - \alpha)}(b, S)
    := \max \set{(1 - \alpha) (S - b), \alpha (b - S)},
\end{equation}
where $S$ is the non-conformity score and $b$ is the radius of the prediction interval.
This loss is convex and $L$-Lipschitz in the first argument, where $L := \max \set{1 - \alpha, \alpha}$.
These two properties make it online learnable~\citep[see, e.g.,][]{hazan2016introduction,orabona2019modern,cesa2021online}.
For $t \ge 1$, let $b_{t}$ be the prediction, and $\ell_{t}(b_t) := \ell^{(1 - \alpha)}(b_t, S_t)$ be the  loss at round $t$. 
The \emph{regret} of the algorithm with respect to any fixed comparator $u \in \mathbb{R}$ is defined as
\begin{equation}\label{reg}
    \Regret_{T}(u) := \sum_{t=1}^{T} \ell_{t}(b_{t}) - \sum_{t=1}^{T} \ell_{t}(u)~.
\end{equation}
The subdifferential set of $\ell_{t}$ is
\[
    \partial \ell_{t}(b) = 
    \begin{cases}
        \{\bone \set{b \ge S_t} - (1 - \alpha)\}, & b \neq S_t \\
        [-(1 - \alpha), \alpha], & b = S_t~.
    \end{cases}
\]
At $b = S_t$, there is an infinite set of subgradients. 
Throughout, we adopt the convention of selecting the right subgradient $g_t=\alpha$ when $b_t=S_t$, so that $g_t \in \{-(1-\alpha),\alpha\}$ for all $t$. Other choices are possible and essentially equivalent.
Thus, an online learning algorithm predicting $b_t$ and receiving the pinball loss $\ell_t$ will receive the subgradient
\begin{equation}
\label{eq:subgradients}
    g_t = \bone \set{b_t \ge S_t} - (1 - \alpha)~.
\end{equation}
As explained in \citet{gibbs2021adaptive,angelopoulos2025gradient} the miscoverage error is closely related to the observed subgradients, because
\begin{align}\label{eq:miscoverage-gradients-equality}
\textnormal{MisCov}_T
:= \abs{\frac{ \sum_{t = 1}^{T} \bone \set{b_t \ge S_t}}{T} - (1 - \alpha)}
=\frac{\abs{\sum_{t = 1}^{T} g_{t}}}{T} .
\end{align}

\section{Coverage Guarantees for No-Regret Algorithms}
\label{sec:guarantees}

In this section, we describe our main result providing a coverage guarantee for any online algorithm controlling an appropriate form of linearized regret.

Consider an online learning algorithm that in each round $t=1, 2, \ldots$ 
produces an action $b_t \in \R$, and let $g_{t} \in \partial \ell_{t}(b_{t})$ denote a subgradient of the loss at round $t$. 
We consider the \emph{linearized regret}~\citep{Gordon99b,Zinkevich03} of the algorithm on this sequence at the action $u \in \R$, defined as
\begin{equation}\label{linreg}
    \LinRegret_{T}(u) := \sum_{t=1}^{T} g_{t} (b_t - u)~.
\end{equation}   
In contrast to the standard notion of regret from~\eqref{reg},
this quantity sums up to the linearizations
$g_{t} (b_t - u)$
of the loss differences
$\ell_{t}(b_{t}) - \ell_{t}(u)$ around $b_t$.
By the definition of subgradients, we have that $\Regret_{T}(u) \le \LinRegret_{T}(u)$ for all $u$.
Thus, \emph{any algorithm that controls the linearized regret also controls the usual regret}. 
However, an algorithm may control regret but not linearized regret. 
Crucially, \emph{our analysis shows that controlling the linearized regret suffices to ensure coverage}.

Specifically, we have the following result which bounds the range of the sum of the gradients $\sum_{t=1}^T g_{t}$  
depends on the Fenchel conjugate of a bound on the regret function (proof in Appendix~\ref{sec:proof_main_thm}).
Due to \eqref{eq:subgradients}, this immediately implies a bound on the coverage.  
\begin{theorem}
\label{thm:finite-time-bound}
For an online learning algorithm, let $F_{T}:\R \to \R$ such that
$\LinRegret_T(u)\leq F_T(u)$ on the pinball losses $\seq{\ell_{t}}_{1 \le t \le T}$.
Then, 
\begin{equation} \label{eq:finite-time-coverage}
    - \sum_{t=1}^T g_{t}
    \in \set{z \in \R: F_{T}^{\star}(z) \le (1 - \alpha) \sum_{t=1}^{T} S_{t}},
\end{equation}
where $F_{T}^{\star}(\cdot)$ is the Fenchel conjugate of $F_{T}(\cdot)$.
\end{theorem}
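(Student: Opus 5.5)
The plan is to use the regret–reward duality. We rewrite the linearized regret bound as a lower bound on a ``wealth'' quantity, take a supremum over comparators, and recognize the Fenchel conjugate.

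Set $G_T := \sum_{t=1}^T g_t$. By definition, $\LinRegret_T(u) = \sum_{t=1}^T g_t b_t - u\,G_T$. The hypothesis $\LinRegret_T(u)\le F_T(u)$ for every $u\in\R$ then gives
\[
(-G_T)\,u - F_T(u) \le -\sum_{t=1}^T g_t b_t \qquad \forall u \in \R.
\]
Taking the supremum over $u$ on the left-hand side yields
\[
F_T^\star(-G_T) \le -\sum_{t=1}^T g_t b_t .
\]
So it suffices to show that $-\sum_{t=1}^T g_t b_t \le (1-\alpha)\sum_{t=1}^T S_t$. This reduces the statement to a purely per-round inequality that does not involve the algorithm.

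For the per-round bound, I will show $-g_t b_t \le (1-\alpha) S_t$ for each $t$, using the explicit subgradient in \eqref{eq:subgradients} and splitting into two cases.
\begin{itemize}
\item If $b_t \ge S_t$, then $g_t = \alpha$. Since $S_t\ge 0$, we have $b_t\ge 0$, so $-g_t b_t = -\alpha b_t \le 0 \le (1-\alpha)S_t$.
\item If $b_t < S_t$, then $g_t = -(1-\alpha)$, and $-g_t b_t = (1-\alpha) b_t < (1-\alpha) S_t$.
\end{itemize}
Summing over $t$ gives the claim. The right-subgradient convention at $b_t=S_t$ is used only to fix $g_t=\alpha$ there; that case is covered by the first branch.

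An alternative route to the same per-round bound is convexity. We have $-g_t b_t = g_t(0-b_t) \le \ell_t(0)-\ell_t(b_t) \le \ell_t(0) = (1-\alpha)S_t$, using that $\ell_t\ge 0$ and $S_t\ge0$. This version does not depend on the particular subgradient choice, and I would present it as the clean argument.

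Combining the two steps gives $F_T^\star(-G_T)\le (1-\alpha)\sum_t S_t$, which is exactly \eqref{eq:finite-time-coverage}. The only step needing any care is identifying the ``reward'' $-\sum_t g_t b_t$ and bounding it by the loss at the comparator $0$; everything else is the definition of the conjugate.
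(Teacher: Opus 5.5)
Your proof is correct and follows the paper's argument: rearrange the linearized-regret bound, take the supremum over $u$ to obtain $F_T^\star(-\sum_t g_t)\le -\sum_t g_t b_t$, then bound $-g_t b_t\le(1-\alpha)S_t$ round by round. The paper does this last step with the same case analysis (three cases, which your two-case split streamlines). Your convexity alternative $-g_t b_t\le \ell_t(0)-\ell_t(b_t)\le \ell_t(0)=(1-\alpha)S_t$ is also valid, and it has the small advantage of holding for any $g_t\in\partial\ell_t(b_t)$, not only under the tie-breaking convention.
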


In Appendix~\ref{sec:asymptotic}, we show that a simpler asymptotic coverage result can be obtained more directly from our theory, without requiring the machinery of Fenchel conjugates.
Moreover, since, as we discussed, coverage can be achieved in 
trivial ways,
it is also important to have other correctness guarantees.
For this reason, in the standard conformal prediction setting of i.i.d. scores, we show 
 in Appendix~\ref{sec:stochastic}
that any no-regret algorithm ensures that  the averaged thresholds $b_t$ converge to the optimal one.
This provides an additional desired correctness guarantee in our framework. 

\begin{remark}
\citet[Example 1]{angelopoulos2025gradient} show that sublinear regret does not imply coverage.
Their example reduces to the following: consider $S_t=S>0$ for all $t$,
 and an online learning algorithm that predicts $S+{1}/{\sqrt{t}}$.
 While
the regret with respect to the optimal prediction $S$ is sub-linear, $\alpha \sum_{t=1}^T {1}/{\sqrt{t}} \le 2\alpha\sqrt{T}$,
we have that $g_t=\alpha$ for all $t$, so the coverage error does not vanish.
Our results are not in contradiction.
Specifically, as already discussed,
the linearized regret is stronger than the standard one.
By taking $u = S-\epsilon$, where $\epsilon>0$, as the competitor in $\LinRegret_T(u)$,
 we see that the linearized regret is \emph{not controlled}, growing at least as $\alpha \epsilon T$. Hence, $b_t=S+{1}/{\sqrt{t}}$ does not contrl the linearized regret.
This shows that our results are not in contradiction.
\end{remark}

\begin{remark}
\citet{cai2024tractable} derive coverage guarantees for OSD by showing that it minimizes a notion called \emph{proximal regret} for the specific class of linear functions.
However, they only prove this property for OSD, while we handle any online algorithm with a suitable linearized regret.
\citet{angelopoulos2025gradient} 
discuss
another notion, \emph{no-move regret},
as a special case of proximal regret, and show a corresponding asymptotic coverage result
for smooth losses.
Our guarantees are instead derived for algorithms minimizing the non-smooth pinball loss.
\end{remark}

\textbf{Warm-up: Coverage Guarantee for KT and OSD.}
To show the generality of our result, we first present a coverage analysis for the KT approach in~\citet{podkopaev2024adaptive}.
This consists of the parameter-free KT algorithm of \citet{OrabonaP16}, applied to the sequence of pinball losses. 
The \citet{podkopaev2024adaptive} 
only provide the asymptotic guarantee \eqref{eq:miscoverage-convergence}, 
while here we obtain a finite-time convergence using Theorem~\ref{thm:finite-time-bound}.

As proved by \citet{OrabonaP16}, 
a valid choice for 
the Fenchel conjugate $F_T^\star$ for the KT algorithm is
\begin{equation} \label{eqn:KT_bettor}
F_T^\star(\theta)
= \frac{1}{\sqrt{24 T}} \exp\left(\frac{\theta^2}{2 T}\right) - 1~.
\end{equation}
We now assume use the mild assumption that the growth rate of $S_t$ is bounded polynomially: Let $D>0$ and $q\geq0$, and assume $S_t \leq D t^q$ for all $t$. This assumption is strickly weaker than the standard boundedness condition ($S_t \leq C$) or i.i.d. assumptions typically required in prior work. It allows our guarantees to hold even in non-stationary environments where the scale of nonconformity scores expands over time.
As we demonstrate in Appendix~\ref{app:growth-demo}, this polynomial growth model is a more valid representation of real-world dynamics than a static bound.

Now, using Theorem~\ref{thm:finite-time-bound}, we immediately obtain
\[
\left|-\sum_{t = 1}^{T} g_{t}\right|
\le \sqrt{2 T\ln\left(\frac{\sqrt{24} D(1 - \alpha)}{q+1}T^{3/2+q} + \sqrt{24 T} \right)}.
\]
By \eqref{eq:subgradients}, the miscoverage is
$\textnormal{MisCov}_T=\mathcal{O}(\sqrt{\ln (DT) / T})$, regardless of the growth rate exponent $q$.

To show the full generality of our approach based on the conjugate of the linearized regret, in Appendix~\ref{sec:proof_coverage_osd}, we also show a minor variant of Theorem~\ref{thm:finite-time-bound} specialized for OSD with stepsize $\eta$ and $b_1=0$, that, under the same assumptions on $S_t$, gives the following bound:
$\textnormal{MisCov}_T= T^{-1} \abs{\sum_{t=1}^{T} g_{t}}
\le T^{-1}\left(D T^q / \eta + 1\right).$
With $q=0$, this rate matches the one in \citet{gibbs2021adaptive},
extending their analysis to the case that the scores $S_t$ can grow over time.

\section{A Universal-Portfolio Based Strategy}
\label{sec:universal-portfolio}

Thanks to Theorem~\ref{thm:finite-time-bound}, we now have a direct relationship between the linearized regret of an online algorithm and its miscoverage error, through the Fenchel conjugate $F_T^\star$.
Since a tighter (smaller) regret bound $F_T$ corresponds to a larger, steeper $F_T^\star$, minimizing regret leads to better coverage bounds.
Thus, for a fast bound on coverage, it is desirable to use an online algorithm with optimal regret.

It is known that the optimal linearized regret in unconstrained online learning is achieved only by parameter-free algorithms, as the ones in \citet{zhang2024discounted} and \citet{podkopaev2024adaptive}, see the lower bound in \citet[Section 5.3]{orabona2019modern}.
However, these algorithms are not fully optimal for OCP. The reason is that they implicitly assume a degree of symmetry (as explained below), whereas the coverage problem is inherently asymmetric: The target miscoverage rate $\alpha$ is typically chosen to be small (e.g., $\alpha = 0.05$ or $0.01$), implying that the positive and negative subgradients of the losses are very different.

Instead, we propose reducing our problem to a portfolio selection problem, 
and leveraging the \emph{Universal Portfolio} (UP) algorithm~\citep{cover2002universal} for OCP.  
In the following, we explain how UP methods lead to optimal solutions to online learning problems with asymmetric subgradients.
For our reduction, considering pinball losses, we construct a market with two synthetic stocks, whose market gains are driven by the observed miscoverage. 
\begin{definition}[The Conformal Market]
\label{def:market}
For a miscoverage rate $\alpha \in (0,1)$,
given the subgradient $g_t \in \set{-(1-\alpha), \alpha}$, $t\ge 1$ defined in \eqref{eq:subgradients}, we define the vector of \emph{returns} $\bw_t = (w_{t,1}, w_{t,2})^\top \in \R^2$ of two synthetic stocks as
\[
    w_{t,1}= \frac{-g_t}{\alpha} + 1, \quad
    w_{t,2}= 1 + \frac{g_t}{1 - \alpha}~.
\]
The returns are the coordinates of a \emph{market gain vector} $\bw_{t}$ representing the ratio of the closing price to the opening price for the two stocks.
\end{definition}
We have $w_{t,1}, w_{t,2} \geq 0$, because $-g_t \in \set{-\alpha, 1 - \alpha}
$. Stock 1 yields high returns when coverage is lost ($g_t < 0$), while Stock 2 yields moderate returns when coverage is maintained ($g_t > 0$).

We can now formally define the wealth of an algorithm operating in this market.
\begin{definition}[Wealth Process]
\label{def:wealth}
Consider an online algorithm that, at each round $t$, chooses a portfolio weight $\lambda_t \in [0, 1]$ representing the fraction of capital invested in Stock 1. The \emph{wealth} $W_t$ is defined as $W_0 = 1$ and
\begin{equation} \label{eq:wealth-recursion}
    W_t
    = W_{t-1} \cdot \left( \lambda_t w_{t,1} + (1 - \lambda_t) w_{t,2} \right)~.
\end{equation}
\end{definition}
The Universal Portfolio algorithm computes the weight $\lambda_{t}$ as the wealth-weighted average over the simplex of all possible constant portfolios. 
Let $W_{t-1}(\lambda) = \prod_{i=1}^{t-1} (\lambda w_{i,1} + (1 - \lambda) w_{i,2})$ be the wealth of a constant portfolio $\lambda$. Given a prior $\mu$ over $\lambda \in [0,1]$, the prediction is
\begin{equation}
\label{eq:up-update}
\lambda_{t}
= \frac{\int_0^1 \! \lambda \cdot W_{t-1}(\lambda) \, \mathrm{d}\mu(\lambda)}{\int_0^1 \! W_{t-1}(\lambda) \, \mathrm{d}\mu(\lambda)}~.
\end{equation}
Choosing $\mu(\lambda)={1}/[{\pi\sqrt{\lambda(1-\lambda)}}]$, \citet{cover2002universal} proved that the log wealth of the algorithm is at least the log wealth
of the best constant $\lambda$ in each round, up to a slack of $\frac{1}{2}\ln (\pi (T+1))$. This regret guarantee is optimal up to constant additive factors~\citep{cover2002universal}.

The critical insight formalized in the following Theorem---which follows from \citet[Lemma 1]{orabona2023tight} and the standard reduction of OCO to coin betting \citep{orabona2019modern}---is that \emph{maximizing the logarithmic growth of this wealth is equivalent to minimizing the linearized regret on the pinball loss}. 
This allows us to translate the wealth guarantees of portfolio algorithms to coverage through \eqref{eq:finite-time-coverage}.
The proof is in Appendix~\ref{sec:appendixC}.

\begin{theorem}[Regret of UP-OCP]\label{thm:regret_up_ocp}
Let $\mathcal{A}$ be the universal portfolio algorithm with $\mu(\lambda)=\frac{1}{\pi\sqrt{\lambda(1-\lambda)}}$ that outputs weights $(\lambda_t)_{t\ge 1}$ on the conformal market weights $\bw_t \in \R^2$.
Define $b_t$ via 
\begin{equation} \label{eq:radius-mapping}
    b_t
    = W_{t-1} \cdot [ -(1-\alpha)^{-1} + \lambda_t/(\alpha(1-\alpha)) ]~.
\end{equation}
Then, the resulting sequence of actions $(b_t)_{t\ge 1}$
achieves linearized regret
$\mathrm{LinRegret}_T(u)\le F_T(|u|)$ for all $u\in\R$ 
in online quantile loss minimization, 
where $F_T(|u|)$ is
\[
\max \left\{|u| \sqrt{2T \alpha(1-\alpha) \ln (4 (T+1)^{3/(2 \alpha)}(1-\alpha) u^2 +1)}, \ \frac{4}{3}|u|\left(\ln (3 |u| \sqrt{T+1})-1\right)
\right\}~.
\]
\end{theorem}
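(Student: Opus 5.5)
The plan is the standard coin-betting reduction: read the conformal market as a betting game with asymmetric outcomes, show that the UP wealth equals the bettor's reward, and then turn the UP wealth guarantee into a linearized regret bound via Fenchel duality.

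\textbf{Step 1: the wealth equals one minus the linear losses.} Write $c_t := -g_t \in \set{-\alpha, 1-\alpha}$. Definition~\ref{def:market} gives
\[
\lambda_t w_{t,1} + (1-\lambda_t) w_{t,2} = 1 + c_t\Big(\tfrac{\lambda_t}{\alpha} - \tfrac{1-\lambda_t}{1-\alpha}\Big) = 1 + c_t \beta_t ,
\qquad
\beta_t := -\tfrac{1}{1-\alpha} + \tfrac{\lambda_t}{\alpha(1-\alpha)} .
\]
Since $\lambda_t\in[0,1]$, we have $\beta_t \in [-1/(1-\alpha), 1/\alpha]$. With $b_t = W_{t-1}\beta_t$ as in \eqref{eq:radius-mapping}, the recursion \eqref{eq:wealth-recursion} becomes $W_t = W_{t-1} + c_t b_t$. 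Telescoping gives
\[
W_T = 1 - \sum_{t=1}^T g_t b_t .
\]
Hence $\LinRegret_T(u) = 1 - W_T + u\,C_T$, where $C_T := -\sum_{t=1}^T g_t$.

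\textbf{Step 2: lower-bound the wealth by a function of $C_T$.} Cover's guarantee says $\ln W_T \ge \max_{\lambda\in[0,1]} \ln W_T(\lambda) - \tfrac12\ln(\pi(T+1))$. A constant $\lambda$ corresponds to a constant betting fraction $\beta\in[-1/(1-\alpha),1/\alpha]$, so $W_T(\lambda) = \prod_{t=1}^T (1+c_t\beta)$. This product depends only on the counts: $n$ rounds with $c_t = 1-\alpha$ and $T-n$ rounds with $c_t = -\alpha$, where $C_T = n - \alpha T$. Optimizing over $\beta$ gives the closed form
\[
\max_\beta \ln W_T(\beta) = T\,\KL\big(n/T \,\|\, \alpha\big),
\]
which is the usual reduction to a Bernoulli likelihood ratio. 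Therefore
\[
W_T \ge \psi_T(C_T) := \frac{\exp\big(T\,\KL(\alpha + C_T/T \,\|\, \alpha)\big)}{\sqrt{\pi(T+1)}} .
\]

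\textbf{Step 3: duality.} From Step 1,
\[
\LinRegret_T(u) \le 1 + \sup_{C}\big(uC - \psi_T(C)\big) = 1 + \psi_T^\star(u),
\]
so it remains to show $1 + \psi_T^\star(u) \le F_T(|u|)$. I would use \citet[Lemma 1]{orabona2023tight}, which provides exactly this kind of conjugate bound for exponentiated-KL potentials. To check it directly, split into two regimes of $C$.

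In the \emph{sub-Gaussian regime} ($|C|$ of order $\sqrt{T}$), use $\KL(\alpha+x\|\alpha) \ge x^2/(2\alpha(1-\alpha))$ up to a correction controlled by the asymmetry. The conjugate of $\exp\big(C^2/(2T\alpha(1-\alpha))\big)/\sqrt{\pi(T+1)}$ is bounded by $|u|\sqrt{2T\alpha(1-\alpha)\ln(\cdot)}$ through the standard estimate $\sup_x\big(ux - a e^{x^2/(2\sigma^2)}\big) \le |u|\sigma\sqrt{2\ln(1+u^2\sigma^2/a^2)}$. The $(T+1)^{3/(2\alpha)}$ factor is meant to absorb both the $\sqrt{\pi(T+1)}$ normalization and the loss from the quadratic lower bound on KL near the boundary.

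In the \emph{Poisson-like regime} (large positive $C$, where KL grows like $C\ln C$), bound $\KL$ below by a linear-times-log function. Its conjugate produces the term $\tfrac43|u|\big(\ln(3|u|\sqrt{T+1}) - 1\big)$.

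Finally, take the maximum of the two regime bounds. The additive $1$ is absorbed because the bound is only needed when $|u|$ is not tiny; otherwise adjust constants.

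\textbf{Main obstacle.} The hard part is Step 3: getting explicit constants such as $3/(2\alpha)$, $4/3$, and $3$ from the asymmetric KL lower bound. The negative side of $C$ is bounded, since $C \ge -\alpha T$, so it is easy. The positive side needs a Bernstein-type inequality, $\KL(\alpha+x\|\alpha) \ge x^2/\big(2(\alpha(1-\alpha) + x/3)\big)$. Conjugating this inequality yields the max of a Gaussian term and a Poisson term, which I expect matches the form of $F_T$. I would try this Bernstein-KL bound first.
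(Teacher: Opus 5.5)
Your route is the paper's route. You use the affine map $\beta_t=(\lambda_t-\alpha)/(\alpha(1-\alpha))$ to turn the conformal market into an asymmetric coin, the identity $W_T=1-\sum_t g_t b_t$, Cover's $\tfrac12\ln(\pi(T+1))$ guarantee against $W_T^\star=\exp(T\,\KL(\alpha+C_T/T\,\|\,\alpha))$, Fenchel duality, and the Bernstein-type bound $\KL(p\|q)\ge (p-q)^2/(2q(1-q)+\tfrac23|p-q|)$ (Lemma~\ref{lemma:kl_lower_bound}). Steps 1 and 2 are correct.

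\textbf{The step that fails} is your claim that the additive $1$ in $\LinRegret_T(u)\le 1+\psi_T^\star(u)$ can be ``absorbed''. It cannot, because $F_T(|u|)\to 0$ as $u\to 0$ while $\LinRegret_T(0)=1-W_T$ can be strictly positive. For example, take $T=1$ and $\alpha<1/2$. Then $\lambda_1=1/2$ and $b_1=(1/2-\alpha)/(\alpha(1-\alpha))>0$. If $S_1\le b_1$, then $W_1=1/(2(1-\alpha))<1$ and $\LinRegret_1(u)=\alpha(b_1-u)$. This exceeds $F_1(|u|)$ for all sufficiently small $|u|$, including $u=0$, where $F_1(0)=0$. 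So no adjustment of constants inside $F_T$ helps. What your argument, and the paper's, actually yields is $\LinRegret_T(u)\le 1+F_T(|u|)$, with the usual initial-wealth term of coin-betting reductions.

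The paper's proof has exactly the same omission. It defines $\Psi_T$ through $\Psi_T+1=e^{-\mathcal{R}_T}W_T^\star$, so $\Psi_T^\star$ carries a $+1$, and that $+1$ is dropped when $\Psi_T^\star$ is bounded by the maximum. State the bound with the additive constant rather than trying to hide it.

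\textbf{Corrections inside Step 3.}

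Discard the ``sub-Gaussian regime'' inequality $\KL(\alpha+x\|\alpha)\ge x^2/(2\alpha(1-\alpha))$. It is false globally: for instance $\KL(1\|\alpha)=\ln(1/\alpha)<(1-\alpha)/(2\alpha)$ for small $\alpha$.

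No case split is needed. The Bernstein bound gives $T\,\KL(\alpha+C/T\|\alpha)\ge C^2/(2\alpha(1-\alpha)T+\tfrac23|C|)\ge\min\{C^2/(4\alpha(1-\alpha)T),\tfrac34|C|\}$. Then $(\min\{h_1,h_2\})^\star\le\max\{h_1^\star,h_2^\star\}$ produces the maximum directly, as in the paper.

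The Gaussian branch then has variance proxy $a=2\alpha(1-\alpha)T$. To match the prefactor $\sqrt{2T\alpha(1-\alpha)}$ you need the sharp estimate $\sup_x(ux-\beta e^{x^2/(2a)})\le|u|\sqrt{a\ln(au^2/\beta^2+1)}-\beta$. Your version, with an extra factor $2$ under the root, is too loose. The exponential branch has conjugate at most $\tfrac43|u|(\ln(\tfrac43|u|e^{\mathcal{R}_T})-1)$, and $\tfrac43\sqrt{\pi}\le 3$ gives the second term of $F_T$.

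\textbf{On the constant $(T+1)^{3/(2\alpha)}$.} It does not encode any boundary or asymmetry effect. It is a crude overapproximation of the $e^{\mathcal{R}_T}$-dependence inside the logarithm. With $\beta=e^{-\mathcal{R}_T}$, the sharp estimate actually puts $e^{2\mathcal{R}_T}=\pi(T+1)$ there, whereas the paper's intermediate display has $e^{\mathcal{R}_T}$. The inequality $2\pi\alpha(1-\alpha)T(T+1)\le 4(1-\alpha)(T+1)^{3/(2\alpha)}$ holds for, say, $\alpha\le 2/\pi$, but fails for every $\alpha\ge 3/4$ once $T$ is large. So do not expect to reproduce that exact constant for every $\alpha$.
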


In the worst case,
 the regret is of order $\mathcal{O}(\sqrt{T \ln T})$. 
 However, when $\alpha\to 1$,
 the bound is  of  order $\ln T$.
In Appendix~\ref{sec:up_vs_kt}, we also show that the regret upper bound 
for UP-OCP is better than the one 
for the KT approach from \citet{podkopaev2024adaptive}.

\subsection{Coverage Guarantee for UP-OCP}
\label{sec:up_analysis}

We now provide a coverage guarantee for the proposed UP-OCP strategy.
We also quantify its advantage over the Krichevsky-Trofimov (KT) bettor, particularly in the regime of small $\alpha$.
The argument relies on a second-order expansion of the optimal wealth, which reveals that \emph{UP adapts to the variance of the gradients}, whereas KT implicitly assumes a worst-case symmetric variance.

\begin{theorem}[Coverage bound for UP-OCP]
\label{thm:up-coverage}
Let $\alpha\in(0,1)$ and let $\seq{g_t}_{t=1}^T$ be the sequence of subgradients observed by UP-OCP, with $g_t\in\{-(1-\alpha),\alpha\}$.
Let $D>0$ and $q\geq 0$, and assume that $S_t \leq D t^q$ for all $t$.
For every integer $T\ge 1$, define
\[
\varepsilon_T:=\frac{1}{T}\left[\ln \left(1+\frac{(1-\alpha) D (T+1)^{q+1}}{q+1}\right)+\frac12\ln(\pi(T+1))\right]~.
\]
Then, $\textnormal{MisCov}_T=|T^{-1} \sum_{t=1}^T g_t|\le \varepsilon_T+\sqrt{2\alpha(1-\alpha)\varepsilon_T}$.
\end{theorem}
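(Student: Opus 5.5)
We will argue directly through the wealth process rather than through the explicit $F_T$ of Theorem~\ref{thm:regret_up_ocp}. There are two inequalities. \emph{Lower bound on the best constant portfolio:} Cover's guarantee gives $\ln W_T \ge \max_{\lambda\in[0,1]}\ln W_T(\lambda) - \tfrac12\ln(\pi(T+1))$. \emph{Upper bound on the algorithm's wealth:} we bound $W_T$ by the scores. Comparing the two yields an inequality on $\max_\lambda \frac1T\ln W_T(\lambda)$, which is a function of $\sum_t g_t$ only.

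\textbf{Step 1 (wealth vs.\ scores).} By \eqref{eq:wealth-recursion} and Definition~\ref{def:market},
\[
\lambda w_{t,1}+(1-\lambda)w_{t,2} = 1+g_t\Big(\tfrac{1-\lambda}{1-\alpha}-\tfrac{\lambda}{\alpha}\Big) = 1 - g_t\,\tfrac{\lambda-\alpha}{\alpha(1-\alpha)}.
\]
Comparing with \eqref{eq:radius-mapping}, $b_t = W_{t-1}(\lambda_t-\alpha)/(\alpha(1-\alpha))$. Hence $W_t = W_{t-1} - g_t b_t$, and so $W_T = 1-\sum_t g_t b_t$.

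To bound $-\sum_t g_t b_t$, we use convexity of the pinball loss: $-g_t b_t \le -g_t\cdot 0 + \ell_t(0)-\ell_t(b_t)\le \ell_t(0) = (1-\alpha)S_t$. Here $\ell_t(0)-\ell_t(b_t) \ge g_t(0-b_t)$ is exactly the subgradient inequality. Therefore
\[
W_T\le 1+(1-\alpha)\sum_{t\le T} S_t \le 1+(1-\alpha)D\sum_{t\le T} t^q \le 1+\frac{(1-\alpha)D(T+1)^{q+1}}{q+1},
\]
where the last step uses the integral comparison $\sum_{t\le T} t^q\le\int_0^{T+1}x^q\,dx$. Combining this with Cover's bound gives $\max_\lambda \frac1T\ln W_T(\lambda)\le\varepsilon_T$.

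\textbf{Step 2 (evaluate the best constant portfolio).} Let $p$ be the fraction of rounds with $g_t=-(1-\alpha)$, i.e.\ miscoverage rounds. Then $\frac1T\sum_t g_t = \alpha - p$. For those rounds the factor is $1+(\lambda-\alpha)/\alpha=\lambda/\alpha$; for the others it is $1-(\lambda-\alpha)/(1-\alpha) = (1-\lambda)/(1-\alpha)$. Hence
\[
\tfrac1T\ln W_T(\lambda)=p\ln\tfrac{\lambda}{\alpha}+(1-p)\ln\tfrac{1-\lambda}{1-\alpha},
\]
which is maximized at $\lambda=p$, with value $\KL(p\,\|\,\alpha)$ (Bernoulli KL). So Step 1 gives $\KL(p\|\alpha)\le\varepsilon_T$.

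\textbf{Step 3 (invert the KL; main obstacle).} We must show that $\KL(p\|\alpha)\le\varepsilon$ implies $|p-\alpha|\le\varepsilon+\sqrt{2\alpha(1-\alpha)\varepsilon}$. This is the variance-adaptive (Bernstein-type) step that gives the claimed advantage. Pinsker's inequality would only give $\sqrt{\varepsilon/2}$, which is not the stated rate.

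We will use the standard Bernstein-type lower bound $\KL(p\|\alpha)\ge \frac{(p-\alpha)^2}{2(\sigma^2+ c\,|p-\alpha|/3)}$ with $\sigma^2=\alpha(1-\alpha)$ and $c\le 1$. This follows from the Chernoff/Cramér representation $\KL(p\|\alpha)=\sup_\theta[\theta(p-\alpha)-\ln\E e^{\theta(X-\alpha)}]$ together with Bennett's bound on the centered log-MGF of a Bernoulli, which lies in an interval of length $1$. Writing $x=|p-\alpha|$, we get $x^2\le 2\varepsilon(\sigma^2+x/3)$. Solving the quadratic gives $x\le \varepsilon/3+\sqrt{\varepsilon^2/9+2\sigma^2\varepsilon}\le \varepsilon+\sqrt{2\sigma^2\varepsilon}$. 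This holds for both signs of $p-\alpha$, so the one-sided cases do not need separate treatment. Since $\textnormal{MisCov}_T=|p-\alpha|$ by \eqref{eq:miscoverage-gradients-equality}, this completes the proof. The step to verify carefully is the Bernstein-type KL lower bound, specifically that the constant in front of $x$ is at most $1$ in both directions.
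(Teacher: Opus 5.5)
Your proposal is correct and follows the paper's proof essentially step for step. You bound $W_T=1-\sum_t g_tb_t$ by $1+(1-\alpha)D(T+1)^{q+1}/(q+1)$, combine this with Cover's $\tfrac12\ln(\pi(T+1))$ regret to get $\KL(\hat p\,\|\,\alpha)\le\varepsilon_T$, and then invert via $\KL(p\|\alpha)\ge (p-\alpha)^2/\bigl(2\alpha(1-\alpha)+\tfrac23|p-\alpha|\bigr)$ and the quadratic. The only differences are in how two sub-lemmas are proved, not in the structure: you obtain $-g_tb_t\le(1-\alpha)S_t$ from the subgradient inequality at $u=0$ instead of the case analysis of Lemma~\ref{lemma:bound_reward}, and you obtain the KL lower bound via Cram\'er--Chernoff duality plus Bennett/Bernstein (which gives one-sided constants $1-\alpha$ and $\alpha$, both at most $1$, as you suspected) instead of the paper's direct decomposition $\KL(p\|q)=q\,h(\Delta/q)+(1-q)\,h(-\Delta/(1-q))$ in Lemma~\ref{lemma:kl_lower_bound}.
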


\textbf{Comparison with Previous OCP Algorithms.}
Now, we compare the coverage rates of UP-OCP with
existing bounds. 
While only compare upper bounds and not actual coverage, 
we will show in Section~\ref{sec:experiments} that 
the bounds are consistent with the empirical results.

First, we compare
 with the coverage guarantee 
 for OSD. 
In OSD, one has to choose a stepsize $\eta$. The OSD bound decreases with $\eta$, so one might be tempted to set $\eta$ to be large.
However, in that case,
OSD would predict zero 
on the first round and, if the scores were bounded by $D$,
 values larger than $D$ later,
until predicting a non-positive number.
The cycle would then repeat.
This behavior would not be
informative for uncertainty quantification.

A more meaningful setting of $\eta$ is the one that minimizes the worst-case regret, that is $\eta=\frac{\sqrt{D T^q}}{\max(\alpha, 1-\alpha)\sqrt{T}}$. 
This gives a 
coverage bound of ${\sqrt{D T^q}\max(\alpha, 1-\alpha)}/{\sqrt{T}}+{1}/{T}$. 
Contrary to the bound we derived for the KT strategy, this rate deteriorates with $q$.
However, things are even worse: the choice of $\eta$ that depends on $q$ and $D$ cannot be used, because $D$ and $q$ are not available to the algorithm. 
In this situation one can only use the stepsize $\eta={c}/[{\max(\alpha, 1-\alpha)\sqrt{T}}]$ where $c>0$ is a hyperparameter. 
With this choice the coverage will converge with the worse rate of $\tfrac{D T^q\max(\alpha, 1-\alpha)}{c\sqrt{T}}+\tfrac{1}{T}$.

Overall, we can see that OSD, tuned or untuned with the oracle knowledge of $q$ and $D$, has a worse dependency in $T$ if $q>0$. Moreover, when $\alpha \to 1$ or $\alpha \to 0$, the coverage rate of UP-OCP approaches $({\ln T})/{T}$ while the one of OSD cannot be faster than $1/\sqrt{T}$.

Next, let us now consider the KT strategy. UP-OCP and the KT strategy have the same dependency on $T$, $D$, and $q$. However, KT is designed to be min-max optimal for coin-betting games with symmetric outcomes, that is, with $\alpha=1/2$.
So, as for OSD, we see that the rate of the KT strategy does not improve when $\alpha\to 1$ or $\alpha\to 0$.
A similar rate was shown for a parameter-free algorithm in \citet{zhang2024discounted}, but only in an asymptotic sense. Moreover, the betting strategy implicit in the algorithm in \citet{zhang2024discounted} is provably inferior to the one of universal portfolio, because it matches only the leading term of the growth rate of best rebalanced portfolio.

\subsection{Closed-Form Update for Universal Portfolios}

A direct evaluation of \eqref{eq:up-update} with
$\mu(\lambda)={1}/[{\pi\sqrt{\lambda(1-\lambda)}}]$ 
can be implemented with cumulative time complexity up to time $t$ of $\mathcal{O}(t^2)$~\cite{cover2002universal}.
However, for our specific Conformal Market defined in Definition~\ref{def:market},
there is a simple closed-form update $\lambda_t = \frac{1}{t}(\sum_{i=1}^{t-1} \bone\set{g_i = -(1-\alpha)} +\frac12)$, see Appendix~\ref{sec:closed_form}.
Substituting this into \eqref{eq:radius-mapping} yields a parameter-free update rule for the conformal radius $b_{t}$ that adapts to the asymmetry of the gradients.
See Algorithm \ref{alg:up_ocp} for the complete pseudocode.

\begin{algorithm}[t]
  \caption{Universal Portfolio for OCP (UP-OCP)}
  \label{alg:up_ocp}
  \begin{algorithmic}
    \STATE {\bfseries Input:} Target miscoverage rate $\alpha \in (0,1)$
    \STATE {\bfseries Initialize:} Wealth $W_0 \leftarrow 1$, miscoverage count $ N \leftarrow 0$
    \FOR{$t=1, 2, \dots$}
      \STATE $\lambda_t \leftarrow \dfrac{N + 1/2}{t}$;\, $b_t \leftarrow \max\left(0,W_{t-1} \cdot \dfrac{\lambda_t - \alpha}{\alpha(1-\alpha)}\right)$

      \STATE Output prediction set $\hat{C}_t \leftarrow [\hat{Y}_t - b_t, \hat{Y}_t + b_t]$
      \STATE Observe true label $Y_t$
      \STATE Compute nonconformity score $S_t \leftarrow |Y_t - \hat{Y}_t|$

      \IF{$S_t > b_t$}
        \STATE $N \leftarrow N + 1$;\, $W_t \leftarrow W_{t-1} \cdot \lambda_t/\alpha$
      \ELSE
        \STATE $W_t \leftarrow W_{t-1} \cdot (1 - \lambda_t)/(1 - \alpha)$
      \ENDIF
    \ENDFOR
  \end{algorithmic}
\end{algorithm}

The mapping from wealth to radius in \eqref{eq:radius-mapping} can produce negative values. So, in Algorithm~\ref{alg:up_ocp}, we clip the radius to zero. In Appendix~\ref{app:clipping}, we show that the regret of the truncated sequence is upper bounded by the regret of the original sequence, and that the subgradients remain unchanged, preserving the theoretical guarantees.

Note the multiplicative nature of the algorithm, characteristic of all parameter-free algorithms, that allows to adapt to any polynomial growing sequence of $S_t$.

\section{Experiments}
\label{sec:experiments}

We support our theoretical findings through evaluations on both synthetic data and empirical time series. 
Our experiments cover the finance and energy domains, where data can be highly non-stationary.
Across these settings, we compare the proposed UP-OCP method (Algorithm~\ref{alg:up_ocp}) with state-of-the-art parameter-free and tuned baselines.

\textbf{Datasets and Models.}
It is notoriously extremely difficult to test algorithms in the online setting, because it would require constructing adversarial sequences for each algorithm. Hence, we follow the OCP literature~\citep{gibbs2021adaptive,angelopoulos2023conformal,gibbs2024conformal,podkopaev2024adaptive} and choose standard non-adversarial benchmark datasets.
We consider three categories of data:
(1) daily opening prices (log-scale) of four major US stocks (American Express, Apple, Amazon, and Google) from 2008 to 2018~\citep{nguyen2018s};
(2) electricity demand records from New South Wales~\citep{harries1999splice};
and
(3) synthetic environments.
For the base predictive models, we employ domain-standard choices:
\begin{itemize}
    \item Stock prices: We use \texttt{Prophet}~\citep{taylor2018forecasting}, re-fitted daily.
    \item Electricity demand: We use a standard Auto-Regressive (AR) model with a lag 3.
    \item Synthetic data: Following the protocol in \citet[Appendix F.5]{angelopoulos2023conformal}, we bypass the prediction step and directly simulate the nonconformity scores $S_t$. We generate three distinct patterns: a sinusoidal wave with Gaussian noise, and two trends (constant and quadratic) with random sparse bumps. Full details are provided in Appendix~\ref{app:additional-experiments}.
\end{itemize}

The synthetic tests simulate a forecaster with residuals exhibiting specific challenging patterns, isolating the OCP method's behavior from the base model's dynamics.
Also, the synthetic nature allows us to use multiple trials to generate error bars.

In all cases, the nonconformity scores are defined as the absolute residuals $S_t = |Y_t - \hat{Y}_t|$. Due to space constraints, we focus our main analysis on the American Express (AXP) dataset and the synthetic sinusoidal environment.
Full results for synthetic data, electricity demand, and other stock tickers are deferred to the Appendix.

\textbf{Baselines.} 
We compare UP-OCP with previous OCP algorithms that do not rely on prior knowledge of a uniform upper bound on the nonconformity scores:
\begin{itemize}
    \item Krichevsky-Trofimov (KT)~\citep{podkopaev2024adaptive}.
    \item Dynamically-tuned Adaptive Conformal Inference (DtACI)~\cite{gibbs2024conformal}.
    \item Scale-Free Online Gradient Descent (SF-OGD): A variant of OSD that adapts to the scale of the gradients~\citep{orabona2018scale}.
    \item Conformal P/PI Control (P/PI Ctrl)~\citep{angelopoulos2023conformal}.
\end{itemize}

For the parameterized baselines, we performed a grid search to select the best hyperparameters based on ex-post performance. 
In contrast, UP-OCP, KT, and DtACI are fully parameter-free and require no tuning. 
Crucially, tuning baselines on ex-post data grants them an oracle advantage. This means that we may overestimate their performance. Also, we emphasize that for parameterized baselines, a reasonable performance heavily depends on fine-tuned hyperparameter choices. As we demonstrate in Appendix~\ref{app:sensitivity}, without careful tuning, they can fail to maintain valid coverage, showing poor behavior locally and globally. Detailed update rules and hyperparameter search grids are provided in Appendix~\ref{app:baselines}.

\textbf{Metrics.}
From what we said above, it should be clear that the coverage alone is a meaningless metric. Indeed, we show a trivial predictor in Appendix~\ref{app:baselines} that achieves coverage without any informative prediction set.
So, here \emph{we heavily focus on the trade-off between coverage and prediction set sizes}, by proposing the use of Pareto frontier plots.

\subsection{Results for AXP}

We begin by evaluating the efficiency-coverage trade-off on the American Express (AXP) dataset, setting $\alpha = 0.05$. An initial warm-up period of 100 days is used for training the initial base forecaster. Table~\ref{tab:AXP-1vall-metrics} reports the performance metrics. 
Due to space limitations, the full results for control-based methods are in Appendix~\ref{app:AXP-full-results}.

\begin{table}[H]
  \caption{Quantitative Comparison on the AXP Dataset. Performance metrics for UP-OCP versus parameter-free (KT, DtACI) baselines and tuned SF-OGD (lr=25).
  }
  \label{tab:AXP-1vall-metrics}
  \begin{center}
    \begin{tabular}{lcccc}
      \toprule
      Metric & UP-OCP & KT & DtACI & SF-OGD \\
      \midrule
      Marginal Coverage & 0.932 & 0.920 & 0.956 & 0.948 \\
      Longest Err. Seq. & \textbf{4} & 15 & 6 & \textbf{3} \\
      Avg. Set Size     & \textbf{14.8} & 16.9 & $\infty$ & \textbf{16.4} \\
      Median Set Size   & \textbf{11.5} & 12.9 & 12.6 & 13.8 \\
      75\% Quantile Size& \textbf{18.8} & 24.9 & 21.8 & 21.5 \\
      90\% Quantile Size& \textbf{32.3} & 32.5 & $\infty$ & \textbf{32.3} \\
      95\% Quantile Size& 38 & 36.1 & $\infty$ & 36.1 \\
      \bottomrule
    \end{tabular}
  \end{center}
\end{table}

Our UP-OCP method achieves valid coverage; although slightly lower (around 93\%) than 
the target, such differences 
may typically be ignorable in terms of practical importance \citep{podkopaev2024adaptive}.
Furthermore, UP-OCP limits consecutive miscoverage events to just four days, matching the best tuned baseline (SF-OGD), indicating a rapid correction mechanism. In contrast, the KT strategy suffers from a long error sequence of 15 days. This supports that the symmetric betting strategy of KT is too conservative for small $\alpha$, failing to expand intervals sufficiently fast during extreme events.
UP-OCP also demonstrates competitive efficiency compared to 
KT across most size metrics.
 Specifically, UP-OCP achieves smaller set sizes across average, median, and 75\% quantile metrics, while maintaining higher marginal coverage (0.932 vs. 0.920). It is only in the extreme upper tails (95\% quantiles) that UP-OCP produces larger sets, a necessary behavior to correct for miscoverage during high-volatility events.

Notably, the degeneracy of DtACI is clear from the table despite its valid marginal rate---the 90\% quantile size is infinite. This supports that DtACI does not just produce large sets occasionally; it relies on trivial predictions to compromise the coverage at least 10\% of the rounds.
Instead, UP-OCP achieves coverage without generating infinite sets.

\textbf{Local Coverage and Efficiency.}
Global coverage can be insufficient if the errors are grouped in time.
To complement Table~\ref{tab:AXP-1vall-metrics}, we provide detailed 1-vs-1 comparisons of local adaptivity between UP-OCP and the baselines in Appendix~\ref{app:AXP-full-results}. These plots show that UP-OCP maintains local coverage tightly around the 95\% target with no significant swings, while other methods exhibit marked volatility.

\textbf{Takeaway.}
UP-OCP matches the performance of the oracle tuned SF-OGD, the best tuned baseline. Both methods achieve minimal error clustering and similar average set sizes.
However, SF-OGD was ex-post tuned to select the optimal learning rate,
whereas UP-OCP is parameter-free.

\textbf{Pareto Frontiers.}
In general, higher coverage requires larger sets; however 
this trade-off varies across algorithms.
Here, we prioritize the \emph{average} prediction set size as our primary metric, as it reflects the cumulative cost of uncertainty in downstream tasks.
Unlike the quantiles of the length, 
the average better captures unreasonably large prediction sets.
This distinction is critical for separating stable algorithms from those that trivially satisfy coverage by outputting vacuous sets during volatility.
In Appendix~\ref{app:AXP-full-results}, we also show that UP-OCP remains dominant across quantiles of the length.

\begin{figure}[H]
  \centering
  \includegraphics[trim={0 0.4cm 0 1.2cm}, clip, width=0.7\columnwidth]{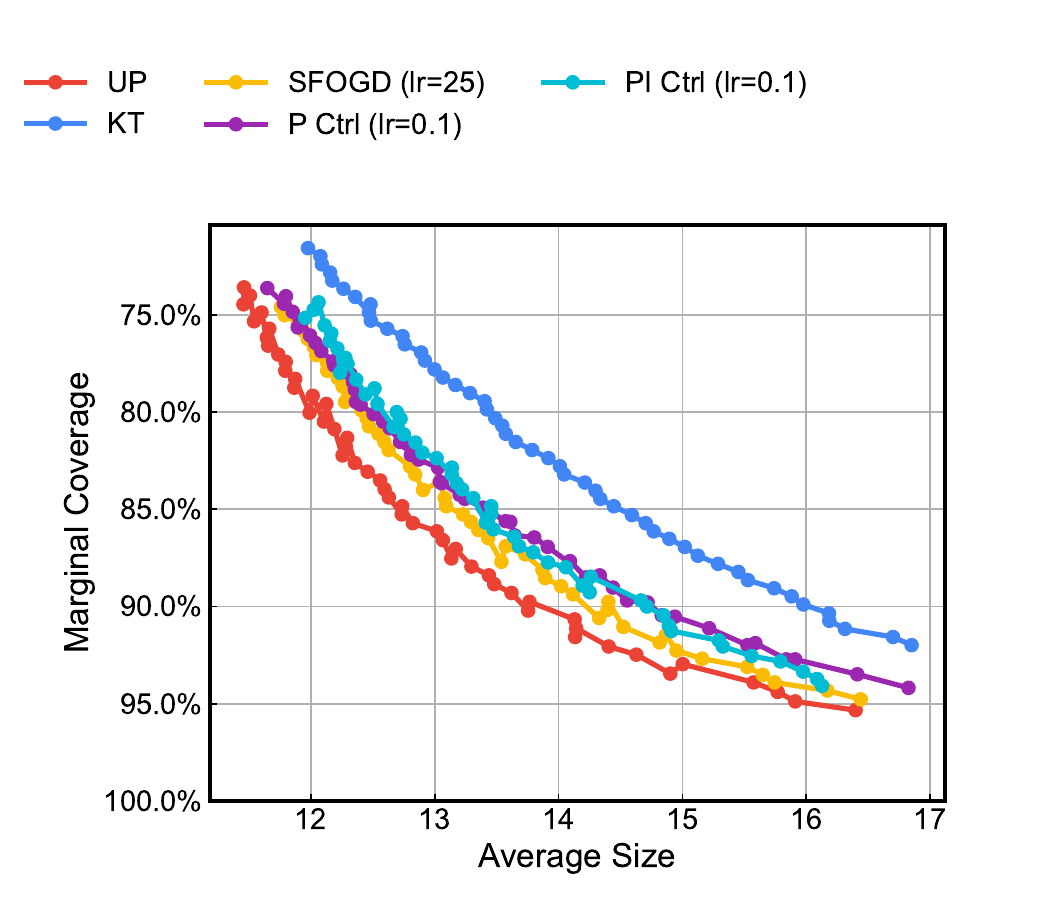}
  \caption{
    Pareto frontiers for average prediction set size on the AXP Dataset, for 50 target miscoverage rates $\alpha$ uniformly from 0.05 to 0.25.
    Better performance is closer to the bottom-left corner.
  }
  \label{fig:AXP-pareto-frontier-mean}
\end{figure}
Figure~\ref{fig:AXP-pareto-frontier-mean} illustrates 
that UP-OCP (red) empirically achieves the best Pareto trade-off. 
UP-OCP consistently achieves the smallest average set size for any given target coverage with in the range $[0.75, 0.95]$. 
The parameter-free KT baseline (blue) is strictly suboptimal.\footnote{DtACI is excluded here, because it produces prediction sets with infinite radius ($b_t = \infty$) to satisfy coverage; see Table~\ref{tab:AXP-1vall-metrics}. We observed this at all target levels, making the mean set size infinite.
}

\textbf{Target Calibration.} 
Beyond efficiency, a reliable OCP algorithm must also track the user-specified target $1-\alpha$.
Figure~\ref{fig:AXP-target-level-tracking} plots the realized versus target coverage for $\alpha \in [0.05, 0.25]$. 
The results confirm that UP-OCP and the baselines maintain calibration within a tight $\pm 0.03$ tolerance band (dashed lines) over all targets.
In Appendix~\ref{app:alpha-correction}, we also provide a heuristic to improve the tracking of any OCP algorithm.

\begin{figure}[H]
  \centering
  \includegraphics[trim={0 0.5cm 0 1.0cm}, clip,width=0.7\columnwidth]{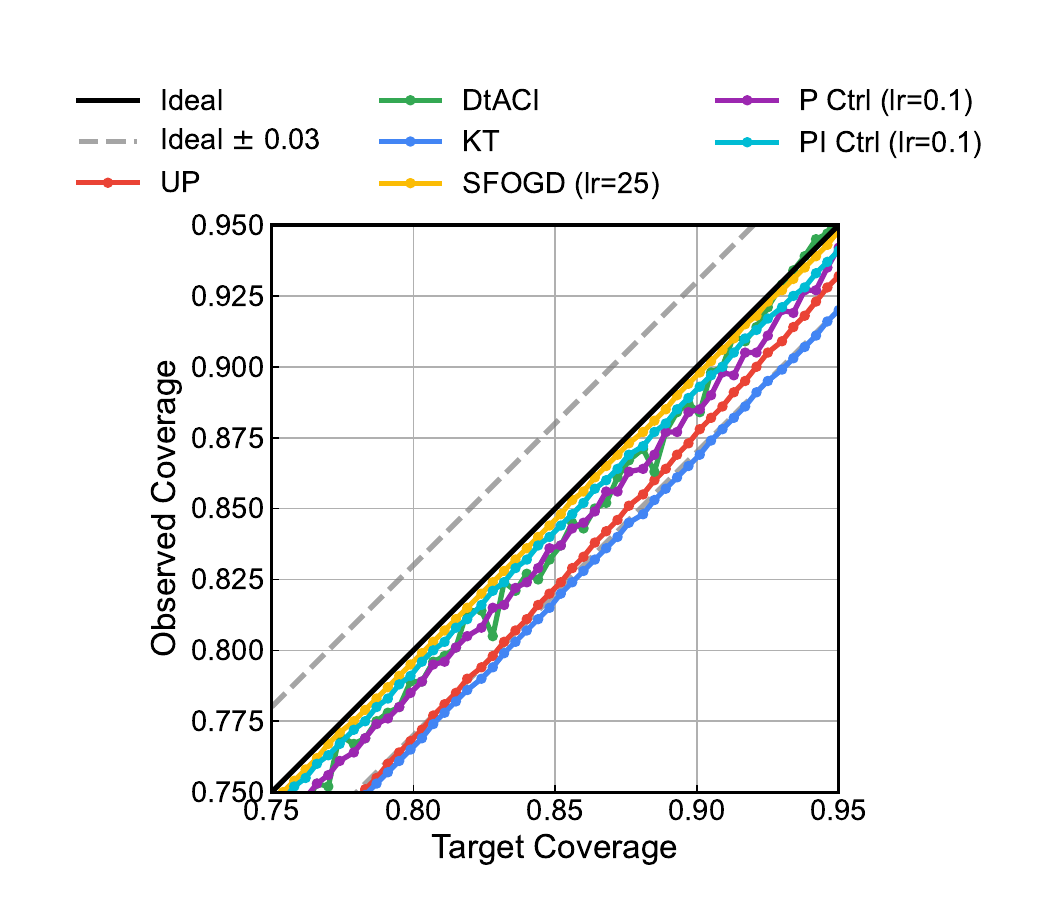}
  \caption{
    Realized vs. target coverage. }
  \label{fig:AXP-target-level-tracking}
\end{figure}

\subsection{Results for Synthetic Sinusoid}

We analyze the performance on a synthetic dataset designed to test adaptivity to periodic volatility. Following \citet[Appendix F.5]{angelopoulos2023conformal}, we generate nonconformity scores $S_t$ as a sinusoid with Gaussian noise:
\(
    S_t
    = \max(0, [\sin((2\pi t)/P) + 0.5] S_{\text{mag}} + S_{\min} + \epsilon_t ),
\)
where $\epsilon_t \mathop{\sim}\limits^{\text{i.i.d.}} \mathcal{N}(0, \sigma^2)$.
We fix the period $P=200$, magnitude $S_{\text{mag}}=10$, minimum offset $S_{\min}=2$, and noise scale $\sigma=0.3$.
The total sequence length is $T=3000$, and we report results averaged over 10 independent trials.

\begin{figure}[ht]
  \centering
  \includegraphics[trim={0 1.2cm 0 1.2cm}, clip,width=0.7\columnwidth]{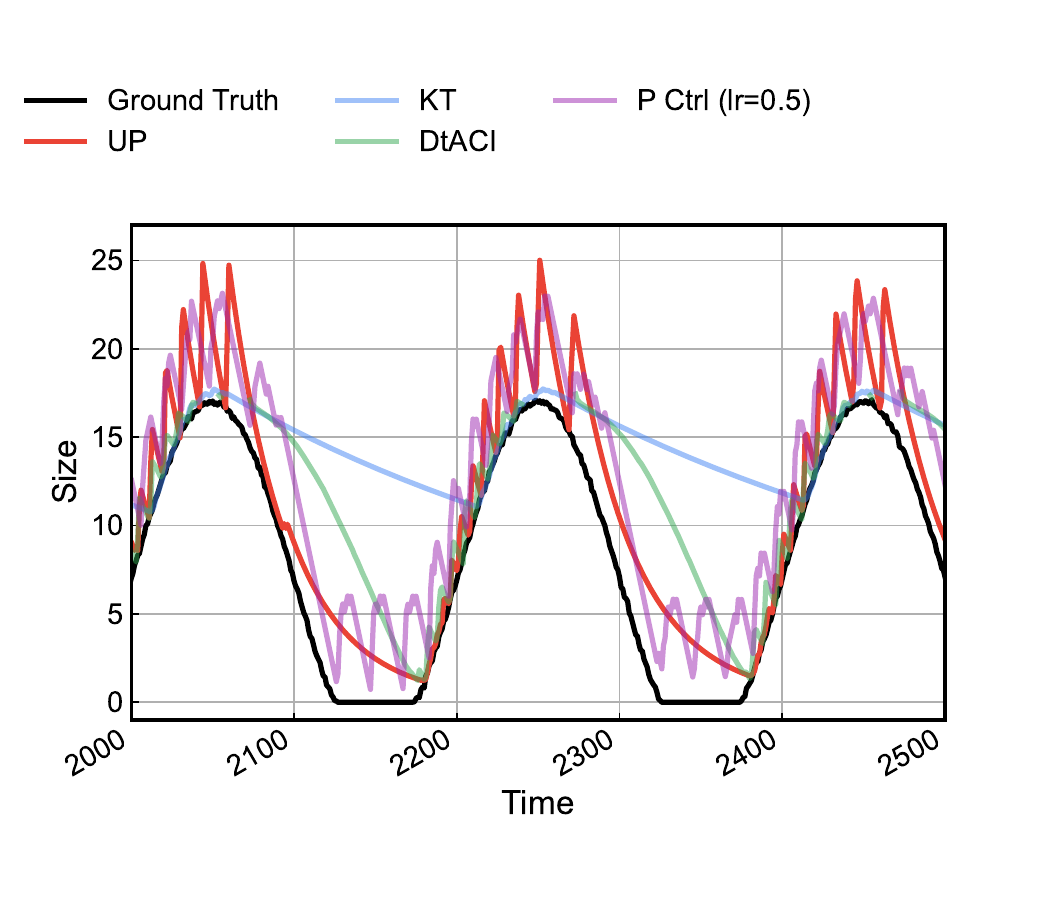}
  \caption{
    Pareto frontiers on the synthetic sinusoid.
    Optimal performance is the bottom-left corner (tightest sets for highest coverage).
  }
  \label{fig:sinusoid-sizes-demo}
\end{figure}

\textbf{Tracking Dynamics.} Figure~\ref{fig:sinusoid-sizes-demo} visualizes the evolution of interval widths over a representative window $t \in [2000, 2500]$. 
The ground truth (black) exhibits a clear periodic pattern.
UP-OCP (red) tracks this,
expanding rapidly during high-volatility phases ($t \approx 2050$) to guarantee coverage, 
then shrinking as the noise variance decreases. 
In contrast, KT (blue) 
shows significant lag and fails to reduce interval widths sufficiently during low-noise periods.
DtACI (green) shows strong instability; the trace terminates after $t > 2400$ (marked by missing values in the plot), indicating the algorithm has diverged and is outputting \emph{infinite} sets to reach coverage.
Again, UP-OCP matches the best tuned baseline, P-Control (purple), 
without hyperparameter tuning.
Other parameterized baselines behave similarly to the P-Controller; 
see Appendix~\ref{app:sinusoid}.
\begin{figure}[H]
  \centering
  \includegraphics[trim={0 0.4cm 0 0.8cm}, clip,width=0.7\columnwidth]{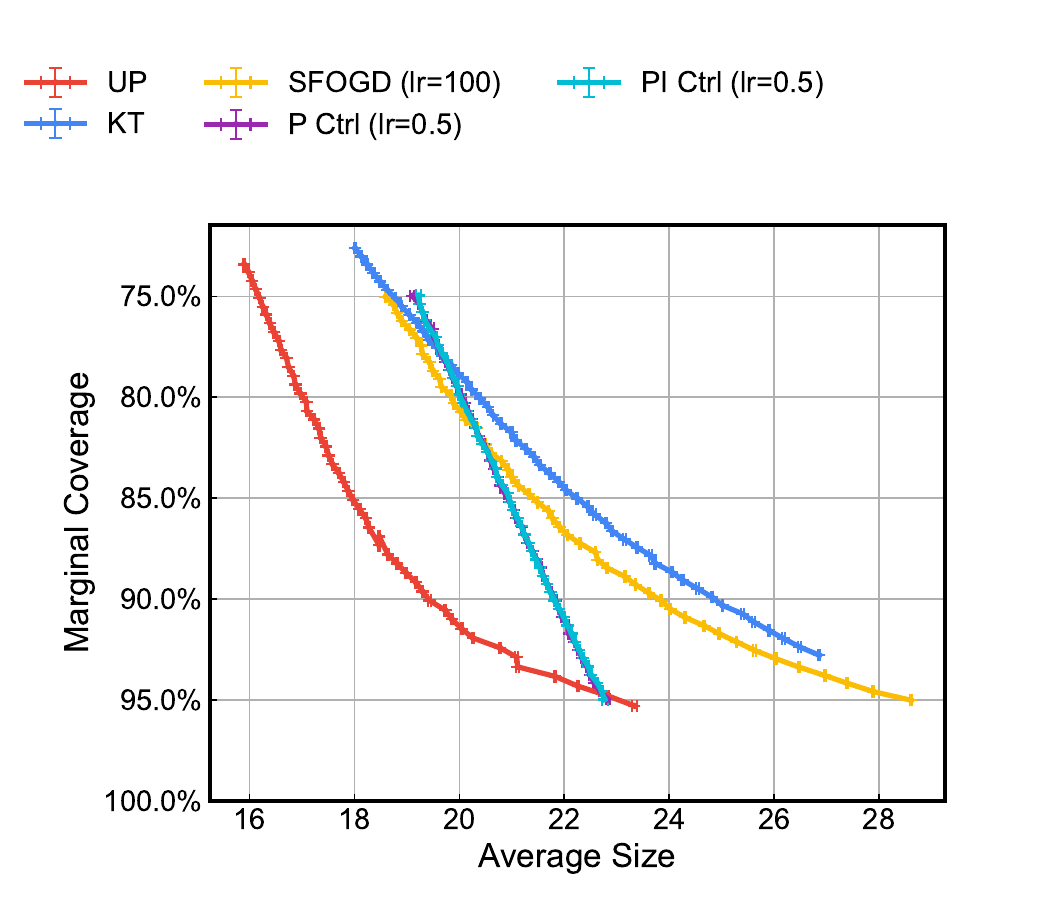}
  \caption{
    Plot of the realized marginal coverage against the average prediction set size, averaged over 10 independent random seeds.
    Error bars indicate the standard error of the mean along both axes.
  }
  \label{fig:sinusoid-pareto-mean}
\end{figure}

\textbf{Pareto Frontiers.}
Figure~\ref{fig:sinusoid-pareto-mean} quantifies the
stability of these findings by showing error bars over random repetitions.
Consistent with the AXP results, UP-OCP (red) 
dominates the baselines.
The vertical error bars are negligible, confirming that all methods 
satisfy validity. 
In contrast, the size metrics show larger oscillations.

\vspace{-1em}
\section{Discussion}
\label{sec:conclusion}

This paper shows that coverage guarantees for online conformal prediction (OCP) can be derived from linearized regret bounds.  
Then it proposes UP-OCP, a parameter-free strategy for OCP 
using Universal Portfolio methods.
Open directions include extending the framework beyond symmetric intervals toward richer prediction sets or conditional/feature-dependent validity.

\section*{Acknowledgements}
ED's work was supported in part by the US NSF, ARO, AFOSR, ONR, the Simons Foundation and the Sloan Foundation.

\newpage
\bibliographystyle{plainnat}
\bibliography{conformal}

\newpage
\appendix
\section{Proof of Theorem~\ref{thm:finite-time-bound}}
\label{sec:proof_main_thm}

Our main result leverage the following simple but fundamental lemma.
This lemma allows us to lower bound the term $g_tb_t$ which arises in the linearized regret, without proving a bound on the iterates of the algorithm.
\begin{lemma}
\label{lemma:bound_reward}
Let $b_t \in \R$, $t\ge 1$ be generated by any online learning algorithm and let $g_t \in \partial \ell_t(b_t)$ denote a subgradient of the pinball loss $\ell^{(1 - \alpha)}(b, S_t)$ at $b=b_t$,  for all $t$.
Then, we have
\begin{equation} \label{eq:wealth_gain}
    -g_t b_t \leq (1 - \alpha) S_t, \qquad \forall t~.
\end{equation}
\end{lemma}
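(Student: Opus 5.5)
The plan is to use the subgradient inequality for the pinball loss at the single comparator point $y=0$. This turns $-g_t b_t$ into a difference of loss values that can be computed in closed form. Since $g_t \in \partial \ell_t(b_t)$, the definition of subgradient gives $\ell_t(y) \ge \ell_t(b_t) + g_t(y - b_t)$ for every $y\in\R$. Taking $y=0$ and rearranging,
\[
-g_t b_t \le \ell_t(0) - \ell_t(b_t).
\]
We then evaluate the right-hand side using \eqref{eq:pinball-loss}. Because $S_t \ge 0$, we have $\ell_t(0) = \max\{(1-\alpha)S_t, -\alpha S_t\} = (1-\alpha)S_t$. The pinball loss is also nonnegative, being a maximum of two terms with opposite signs, so $\ell_t(b_t)\ge 0$. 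Combining these gives $-g_t b_t \le (1-\alpha)S_t$, which is \eqref{eq:wealth_gain}.

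This argument holds for any $b_t\in\R$, including negative values, and for any choice of subgradient, including the boundary case $b_t=S_t$. No property of the algorithm is used beyond $g_t\in\partial\ell_t(b_t)$, which is why the lemma holds for any online learner.

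As a sanity check, and as an alternative argument, one can split into cases using \eqref{eq:subgradients}.
\begin{itemize}
\item If $b_t \ge S_t$, then $g_t=\alpha$ and $-g_t b_t = -\alpha b_t \le -\alpha S_t \le 0 \le (1-\alpha)S_t$.
\item If $b_t < S_t$, then $g_t = -(1-\alpha)$ and $-g_t b_t = (1-\alpha) b_t < (1-\alpha)S_t$.
\end{itemize}
Neither step presents a real obstacle. The only point to verify is $\ell_t(0)=(1-\alpha)S_t$, which relies on the nonnegativity $S_t = |Y_t-\hat Y_t|\ge 0$.
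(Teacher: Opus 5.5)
Your proof is correct, and your main argument differs from the paper's. The paper proves the lemma by a three-way case analysis ($b_t<0$, $0\le b_t<S_t$, $b_t\ge S_t$) using the explicit values $g_t\in\{-(1-\alpha),\alpha\}$, with the tie-breaking convention $g_t=\alpha$ at $b_t=S_t$. This is essentially your ``sanity check'', except that you noticed the first two cases merge into the single case $b_t<S_t$.

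Your primary route instead applies the subgradient inequality at the comparator $y=0$, which gives $-g_tb_t\le \ell_t(0)-\ell_t(b_t)\le \ell_t(0)=(1-\alpha)S_t$. This buys three things:
\begin{itemize}
\item It holds for every element of $\partial\ell_t(b_t)$, including interior points of $[-(1-\alpha),\alpha]$ at $b_t=S_t$. The lemma is stated for an arbitrary subgradient, while the paper's proof relies on the convention $g_t=\alpha$ there.
\item It applies verbatim to any nonnegative convex loss, with $(1-\alpha)S_t$ replaced by $\ell_t(0)$.
\item It yields the sharper intermediate bound $-\sum_t g_tb_t\le\sum_t\ell_t(0)-\sum_t\ell_t(b_t)$. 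Fed into the proof of Theorem~\ref{thm:finite-time-bound}, this would subtract the algorithm's cumulative loss from the right-hand side of \eqref{eq:finite-time-coverage}.
\end{itemize}
In exchange, the paper's case analysis is fully elementary and shows directly that only miscovered rounds with $b_t>0$ contribute a positive term $-g_tb_t$.
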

\begin{proof}
We consider cases.
If $b_t<0$, then $b_t<S_t$ since $S_t\ge 0$, hence $g_t=-(1-\alpha)$ and so
$-g_t b_t = (1-\alpha)b_t \leq 0 \leq (1-\alpha)S_t$.
If $0 \leq b_t < S_t$, then again $g_t=-(1-\alpha)$ and thus
$-g_t b_t = (1-\alpha)b_t \leq (1-\alpha)S_t$.
Finally, if $b_t \ge S_t$, then (including the tie case $b_t=S_t$) we have $g_t=\alpha$, so
$-g_t b_t = -\alpha b_t \leq 0 \leq (1-\alpha)S_t$.
\end{proof}

We can now prove our main result.

\begin{proof}[Proof of Theorem~\ref{thm:finite-time-bound}]
By the assumed linearized-regret bound~\eqref{eq:regret-bound}, for every $u\in\R$ we have
\[
\sum_{t=1}^T g_t(b_t-u)
\leq F_T(u)~.
\]
Rearranging yields
\[
\left(-\sum_{t=1}^T g_t\right)u - F_T(u)
\leq -\sum_{t=1}^T g_t b_t~.
\]
Taking the supremum over $u\in\R$ gives
\[
F_T^\star\left(-\sum_{t=1}^T g_t\right)
\leq -\sum_{t=1}^T g_t b_t~.
\]
Finally, Lemma~\ref{lemma:bound_reward} implies $-g_t b_t \leq (1-\alpha)S_t$ for all $t$, hence $-\sum_{t=1}^T g_t b_t \leq (1-\alpha)\sum_{t=1}^T S_t$.
Combining the two inequalities completes the proof.
\end{proof}

\section{Proof of Coverage of OSD}
\label{sec:proof_coverage_osd}

We show that our analysis can recover the known coverage bound for online subgradient descent \citep{gibbs2021adaptive}.
From the well-known regret guarantee of online subgradient descent---OSD---see, e.g., \citet{orabona2019modern},
for all $u \in \R$, we have the following bound on the linearized regret:
\begin{equation}
\LinRegret_T(u)
\leq \frac{u^2}{2\eta} + \frac{\eta}{2}\sum_{t=1}^T g_t^2 =: F_T(u), \label{eq:osd_regret2}
\end{equation}
where $g_t$ are defined in~\eqref{eq:subgradients}.
It follows that
\(
F^{\star}_T(\theta)
= \sup_{u \in \R} \ \theta u - F_T(u)
= \frac{\eta \theta^2}{2 } - \frac{\eta }{2} \sum_{t=1}^T g_t^2,
\)
Let $D>0$ and $q\geq 0$. Assuming that $S_t \leq D t^q$ for all $t$, by Lemma~\ref{lemma:bound_reward} and Theorem~\ref{thm:finite-time-bound}, the miscoverage can be bounded as
\[
\frac{1}{T} \abs{\sum_{t=1}^{T} g_t}
\leq \frac{1}{\sqrt{T}} \sqrt{\frac{2(1-\alpha)D}{(1+q)\eta}T^{q} + \max(\alpha^2, (1-\alpha)^2)}~.
\]
For the case of $q=0$, \citet{gibbs2021adaptive} proved the better bound of $\mathcal{O}(1/(\eta T)+1/T)$.
The reason is that Lemma~\ref{lemma:bound_reward} is not sharp in this specific case.
Indeed, we can prove the following improved bound for OSD.

\begin{lemma}[Improved Guarantee for OSD]\label{io}
Let $D>0$ and $q\geq 0$, and assume that $S_t \leq D t^q$ for all $t$.
The OSD algorithm with fixed stepsize $\eta$ and $b_1=0$ over a sequence of pinball losses $\ell_t$ guarantees, with  $g_t \in \partial \ell_t(b_t)$,
\[
-\sum_{t=1}^T g_t b_t
\leq \frac{(D T^q+\eta)^2}{2\eta} - \frac{\eta}{2}\sum_{t=1}^T g_t^2~.
\]
\end{lemma}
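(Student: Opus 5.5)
The plan is to exploit the exact telescoping identity of OSD on the squared iterates, and then to bound the final iterate $|b_{T+1}|$ directly from the structure of the pinball subgradients. This avoids the lossy per-round bound of Lemma~\ref{lemma:bound_reward}.

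\textbf{Step 1 (telescoping).} OSD with $b_1=0$ sets $b_{t+1}=b_t-\eta g_t$. Squaring gives $b_{t+1}^2=b_t^2-2\eta g_t b_t+\eta^2 g_t^2$, so
\[
-g_t b_t=\frac{b_{t+1}^2-b_t^2}{2\eta}-\frac{\eta}{2}g_t^2 .
\]
Summing over $t=1,\dots,T$ and using $b_1=0$ yields the identity
\[
-\sum_{t=1}^T g_t b_t=\frac{b_{T+1}^2}{2\eta}-\frac{\eta}{2}\sum_{t=1}^T g_t^2 .
\]
It therefore suffices to show $|b_{T+1}|\le DT^q+\eta$.

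\textbf{Step 2 (upper bound on iterates).} I will prove by induction that $b_{t+1}\le Dt^q+\eta$. The induction uses that $t\mapsto Dt^q$ is nondecreasing because $q\ge 0$. There are two cases, following the subgradient convention \eqref{eq:subgradients}.
\begin{itemize}
\item If $b_t\ge S_t$ (including ties), then $g_t=\alpha>0$. Hence $b_{t+1}<b_t\le D(t-1)^q+\eta\le Dt^q+\eta$; for $t=1$ we simply have $b_1=0$.
\item If $b_t<S_t$, then $g_t=-(1-\alpha)$. Hence $b_{t+1}=b_t+\eta(1-\alpha)<S_t+\eta\le Dt^q+\eta$.
\end{itemize}

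\textbf{Step 3 (lower bound on iterates).} I will show by induction that $b_{t+1}\ge -\eta$.
\begin{itemize}
\item If $b_t<0$, then $b_t<S_t$ because $S_t\ge 0$. So $g_t=-(1-\alpha)$ and the iterate increases: $b_{t+1}>b_t\ge-\eta$.
\item If $b_t\ge 0$, then $b_{t+1}\ge b_t-\eta\alpha\ge-\eta\alpha\ge-\eta$.
\end{itemize}

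Combining Steps 2 and 3 gives $|b_{T+1}|\le DT^q+\eta$. Substituting this into the identity of Step 1 gives the claim.

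The only delicate point is the upper bound in Step 2. It needs the case split on whether coverage held, so that a positive step of size $\eta(1-\alpha)$ is only ever taken from below $S_t$. It also needs the monotonicity of $Dt^q$ to carry the running maximum forward.
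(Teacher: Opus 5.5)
Your proposal is correct and follows essentially the same route as the paper: the telescoping identity $-\sum_{t=1}^T g_t b_t = \frac{b_{T+1}^2}{2\eta} - \frac{\eta}{2}\sum_{t=1}^T g_t^2$, followed by an induction showing $|b_{T+1}| \le DT^q + \eta$. The only difference is cosmetic. You split the induction into two one-sided bounds, case-splitting on coverage for the upper bound, and this gives the slightly sharper uniform lower bound $b_{t+1} \ge -\eta$. The paper instead runs a single induction on $|b_t| \le D(t-1)^q + \eta$ with the cases $b_t<0$, $0\le b_t<Dt^q$, and $b_t\ge Dt^q$.
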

\begin{proof}
The (unconstrained) OSD update is $b_{t+1}=b_t-\eta g_t$. Expanding gives
$b_{t+1}^2 = b_t^2 - 2\eta g_t b_t + \eta^2 g_t^2$.
Summing from $t=1$ to $T$ and using $b_1=0$ yields
\[
\frac{b_{T+1}^2}{2\eta}
= -\sum_{t=1}^T g_t b_t + \frac{\eta}{2}\sum_{t=1}^T g_t^2~.
\]
Thus, it suffices to show $|b_{T+1}|\leq D T^q +\eta$.
Similarly
to \citet{gibbs2021adaptive}, we will show a boundedness property for the iterates $b_t$.
We prove by induction that $|b_t|\leq D (t-1)^q + \eta$ for all $t$. The base case $b_1=0$ is immediate. Assume $|b_t|\leq D (t-1)^q + \eta$ and let's prove that $|b_{t+1}|\leq D t^q + \eta$.
If $0 \leq b_t < D t^q$, then
$|b_{t+1}| = |b_t-\eta g_t| \leq |b_t| + \eta |g_t|\leq D t^q + \eta$ since $|g_t|\leq 1$.
If $b_t\ge D t^q$, then $b_t \ge S_t$ because $S_t\leq D t^q$, hence $g_t=\alpha$ and
$D t^q - \eta \leq b_{t+1} \leq b_t \leq D (t-1)^q +\eta$.
Finally, if $b_t<0$, then $b_t<S_t$ and $g_t=-(1-\alpha)$, so $\eta \geq b_{t+1} \geq b_t \geq - D (t-1)^q -\eta$.
In all cases, $|b_{t+1}|\leq D t^q + \eta$, completing the induction.

Substituting $b_{T+1}^2\leq (D T^q+\eta)^2$ into the identity yields the claim.
\end{proof}

Using this improved guarantee instead of Lemma~\ref{lemma:bound_reward} in the proof of Theorem~\ref{thm:finite-time-bound}, we have that
\[
\frac{1}{T} \abs{\sum_{t=1}^{T} g_t}
\leq \frac{1}{T}\left(\frac{D T^q}{\eta}+1\right)~.
\]

\section{Asymptotic Coverage from Sublinear Regret}
\label{sec:asymptotic}

Now, we show an asymptotic coverage result, for any algorithm whose linearized regret grows sufficiently slowly, as a function of time $t$ and at the radius $|r| = t^p$, when the growth rate of the scores $S_t$ is not too rapid.
\begin{theorem}[No-linearized-regret implies coverage]
\label{thm:asymptotic-coverage}
Consider an online learning algorithm that in each round $t=1, 2, \ldots$
produces $b_t \in \R$. Assume that for all rounds $T\geq 1$ there exists
a function $F_T:[0,\infty) \to [0,\infty)$ such that  the
linearized regret from~\eqref{linreg} on the pinball losses $\seq{\ell_t}_{1 \leq t \leq T}$ is bounded as
\begin{equation} \label{eq:regret-bound}
    \LinRegret_{T}(r) \leq F_{T}(\abs{r}), \quad \forall r \in \R~.
\end{equation}
Moreover, assume that the nonconformity scores have a bounded growth:
for some $D>0$, $q\geq 0$, we have
$0\leq S_t \leq D t^q$ for all $t \ge 1$.
Finally, assume that there exists $q < p < 1$ such that\footnote{These two conditions can be summarized into $\max\{\mathrm{LinRegret}_t(t^p),\mathrm{LinRegret}_t(-t^p)\}=o(t)$.}
\begin{equation} \label{eq:sublinear-regret}
\lim_{t \to \infty} \frac{F_t(t^{p})}{t}
= 0~.
\end{equation}
Then, the algorithm satisfies the long-term coverage guarantee~\eqref{eq:miscoverage-convergence}.
\end{theorem}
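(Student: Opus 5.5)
We prove Theorem~\ref{thm:asymptotic-coverage} without Fenchel conjugates. We evaluate the linearized regret bound at the two comparators $r=\pm T^p$ and combine it with Lemma~\ref{lemma:bound_reward}. Write $G_T:=\sum_{t=1}^T g_t$. By \eqref{eq:miscoverage-gradients-equality}, it suffices to show $|G_T|/T\to 0$.

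\textbf{Step 1: plug in a comparator.} Fix $T$ and any $r\in\R$. From \eqref{eq:regret-bound},
\[
\sum_{t=1}^T g_t b_t - r\,G_T \le F_T(|r|),
\qquad\text{so}\qquad
-r\,G_T \le F_T(|r|) - \sum_{t=1}^T g_t b_t .
\]
Lemma~\ref{lemma:bound_reward} gives $-\sum_t g_t b_t\le(1-\alpha)\sum_t S_t$. Combining,
\[
-r\,G_T \le F_T(|r|) + (1-\alpha)\sum_{t=1}^T S_t .
\]

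\textbf{Step 2: control the score sum.} Using $S_t\le Dt^q$, we get $\sum_{t=1}^T S_t\le DT^{q+1}$; a cruder bound suffices.

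\textbf{Step 3: pick $r=-\sign(G_T)\,T^p$.} With this choice $-rG_T=T^p|G_T|$, and dividing by $T^{p+1}$ yields
\[
\frac{|G_T|}{T}\le \frac{F_T(T^p)}{T}\cdot T^{-p} + (1-\alpha)D\,T^{q-p}.
\]
The second term vanishes because $p>q$. The first term vanishes because $F_T(T^p)/T\to 0$ by \eqref{eq:sublinear-regret} and $T^{-p}\le 1$ since $p\ge q\ge0$. Indeed, even $F_T(T^p)/T\to0$ alone would suffice, without using the extra factor $T^{-p}$. This gives \eqref{eq:miscoverage-convergence}.

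\textbf{Main obstacle.} The proof is mostly bookkeeping, and the only real choice is the scale of the comparator. It must be large, $|r|=T^p$ with $p>q$, so that it dominates the possibly growing reward term $\sum_t S_t\sim T^{q+1}$. It must also be small enough, $p<1$, that the regret at that comparator is still $o(T)$. That tension is exactly what the hypothesis $q<p<1$ encodes.

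A minor point: $F_T$ is only assumed defined on $[0,\infty)$ and bounds $\LinRegret_T(r)$ via $|r|$. Both signs of $r$ are therefore covered, which is what lets the sign choice in Step 3 handle undercoverage and overcoverage simultaneously.
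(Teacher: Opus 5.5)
Your proof is correct and follows the paper's argument step for step: rearrange the linearized-regret bound, apply Lemma~\ref{lemma:bound_reward}, bound $\sum_{t\le T} S_t$ polynomially, take $r=-\sign(G_T)\,T^p$, and divide by $T^{p+1}$. The only difference is cosmetic: you use the cruder bound $\sum_{t=1}^T S_t\le DT^{q+1}$ where the paper uses $D(T+1)^{q+1}/(q+1)$, and this changes nothing in the limit.
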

\begin{proof}
Rearranging~\eqref{eq:regret-bound} gives, for all real $r$
\[
-\sum_{t=1}^{T} g_t b_t \ge \left(- \sum_{t=1}^{T} g_t\right) r - F_{T}(\abs{r})~.
\]
Hence, by Lemma~\ref{lemma:bound_reward}, we have
\begin{equation} \label{eq:duality-regret}
\left(- \sum_{t=1}^{T} g_t\right) r - F_{T}(\abs{r}) \leq (1 - \alpha) \sum_{t=1}^{T} S_t \leq \frac{(1 - \alpha) D (T+1)^{q+1}}{q+1}~.
\end{equation}
Now, set $r = r_T = -\sign \left(\sum_{t=1}^{T} g_t\right) \cdot T^{p}$, where we define $\sign(0)=0$. Invoking~\eqref{eq:sublinear-regret} yields
\begin{align}
\frac{(1 - \alpha) D (T+1)^{q+1}}{q+1}
\ge  T^{p} \abs{\sum_{t=1}^{T} g_t} -F_{T}\left(T^{p}\right),
\end{align}
which implies that
\[
\frac{1}{T} \abs{\sum_{t=1}^{T} g_t}
\leq \frac{F_{T}\left(T^{p}\right)}{T^{p+1}} + \frac{(1 - \alpha) D (T+1)^{q+1} }{(q+1) T^{p+1}} ~.
\]
Now, taking the lim sup of the first term on the r.h.s., we have
\[
\lim\sup_{T\to \infty} \ \frac{F_{T}\left(T^{p}\right)}{T^{p+1}}
= \lim\sup_{T\to \infty} \ \frac{F_{T}\left(T^{p}\right)}{T} T^{-p}
= 0~.
\]
For the second term, given that $q<p$, we have that
\[
\lim\sup_{T\to \infty} \ \frac{(1 - \alpha) D (T+1)^{q+1}}{(q+1) T^{p+1}} = 0~.
\]

We conclude by the equivalence~\eqref{eq:miscoverage-gradients-equality}.
\end{proof}

This result shows that for conformal prediction, the optimal Universal Portfolio strategy is computationally efficient, requiring only $\mathcal{O}(1)$ time per step. Alternatively,
using a uniform prior yields the Laplace rule
\[
    \frac{\sum_{i=1}^{t-1} \bone\{g_i = -(1-\alpha)\} + 1}{t + 1},
\]
which is known to have slightly higher regret~\citep[Theorem~13.1]{orabona2019modern}.

\section{The Average Size in Stochastic Settings}
\label{sec:stochastic}

While the previous sections established coverage guarantees under adversarial settings, characterizing the \emph{efficiency}, i.e., the size of the prediction sets relative to the optimum, seems to require additional assumptions on the data generating process.
In a fully adversarial setting, a globally optimal interval width is ill-defined, as there is no guarantee that the $Y_t$ share any common behaviour over the course of the $T$ rounds.

Therefore, we consider a stochastic setting where the nonconformity scores $S_t$
are sampled i.i.d. from a fixed distribution $\sD$.
In this setting, the optimal fixed prediction interval of the form $[0,b^\star]$
is achieved when $b^\star$ is any $(1 - \alpha)$-th quantile of the distribution.
Our goal is to show that regret-minimizing algorithms do not merely satisfy coverage constraints, but also converge to this optimal radius $b^\star$ at a rate characterized by their regret.

%Let $\ell^{(1-\alpha)}(b, s)$ denote the instantaneous pinball loss.
We define the expected loss $L(b)$ with respect to $\sD$ as
$L(b)
    := \E_{S \sim \sD}\left[ \ell^{(1-\alpha)}(b, S) \right]$.
We denote the optimal radius as $b^\star := \argmin_{b \geq 0} L(b)$; our conditions below will ensure that this is uniquely defined.
To rigorously connect the regret $R_T$ to the convergence of the distance $\abs{b_t - b^\star}$, we require mild regularity conditions on $\sD$.

\begin{assumption}[Regularity of $\sD$]
\label{ass:regularity}
The nonconformity scores $S_1, S_2, \dots$ are i.i.d.~draws from a distribution $\sD$ supported on a compact interval $\sB \subseteq [0,\infty)$, such
 that:
\begin{itemize}
    \item $\sD$ admits a Probability Density Function (PDF) $\phi$ and a Cumulative Distribution Function (CDF) $\tilde\Phi$;
    \item The density $\phi$ is uniformly bounded from below, so there is 
    a constant $\kappa > 0$ such that
    \(
        \phi(b) \geq \kappa\), \(\forall b \in \sB~.
    \)
\end{itemize}
\end{assumption}
In the above notation,  $b^\star := \tilde\Phi^{-1}(1-\alpha)$.
\begin{remark}
Clearly, the continuity of $\phi$ is not necessary for the existence of the derivatives of $L$. The expected loss $L(b) = \E[\ell^{(1-\alpha)}(b, S)]$ 
is a convolution of the continuous pinball loss with the measure $\sD$. 
This  acts as a smoothing operator: as long as $\sD$ has no point masses, $L(b)$ is continuously differentiable~\citep[see, e.g.,][]{joshi2025conformal}. Assumption~\ref{ass:regularity}.2 is imposed primarily to ensure \emph{strong convexity} (positive curvature), which allows us to convert the regret bound into a variance bound.
\end{remark}

Under these conditions, we prove that a sublinear regret implies the convergence of the average radius $\bar{b}_T = \frac{1}{T}\sum_{t=1}^T b_t$ to the optimal oracle radius $b^\star$.

\begin{theorem}[Width Convergence via Regret]
\label{thm:width-convergence}
Let an online algorithm generate radii $b_1, b_2, \dots, b_T$ with $b_t \in \sB$ for all $t$ such that the expected regret is bounded by $R_T$:
\begin{equation}
\E\left[\sum_{t=1}^T \ell^{(1-\alpha)}(b_t, S_t) - \sum_{t=1}^T \ell^{(1-\alpha)}(b^\star, S_t)\right]
\le R_T~.
\end{equation}
Under Assumption~\ref{ass:regularity}, the squared distance between the average radius $\bar{b}_T = \frac{1}{T}\sum_{t=1}^T b_t$ and the true $(1-\alpha)$-quantile is bounded by
\(
\E\left[ \left( \bar{b}_T - b^\star \right)^2 \right] \le \frac{2}{\kappa} \cdot \frac{R_T}{T}~.
\)
\end{theorem}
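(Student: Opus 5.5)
The plan is the standard online-to-batch argument combined with strong convexity of the expected pinball loss around $b^\star$. First, I would establish a quadratic growth property of $L$ on $\sB$. Under Assumption~\ref{ass:regularity}, $L$ is differentiable with $L'(b)=\tilde\Phi(b)-(1-\alpha)$. Since $\phi\ge\kappa$ on $\sB$, $L'$ is $\kappa$-strongly monotone there: $(L'(b)-L'(b'))(b-b')\ge\kappa(b-b')^2$ for $b,b'\in\sB$. Because $L'(b^\star)=0$ and $b^\star\in\sB$ is the minimizer, integrating gives
\[
L(b)-L(b^\star)=\int_{b^\star}^{b}\big(\tilde\Phi(s)-\tilde\Phi(b^\star)\big)\,\mathrm{d}s\ \ge\ \frac{\kappa}{2}(b-b^\star)^2,\qquad \forall b\in\sB,
\]
where the bound $\tilde\Phi(s)-\tilde\Phi(b^\star)\ge\kappa(s-b^\star)$ for $s\ge b^\star$ (and symmetrically below) comes from integrating $\phi\ge\kappa$ over the interval between $b^\star$ and $s$, which lies in $\sB$ because $\sB$ is an interval.

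Second, I would convert the expected regret into excess risk. Since $b_t$ depends only on $S_1,\dots,S_{t-1}$ and $S_t$ is independent of these, the tower rule gives $\E[\ell^{(1-\alpha)}(b_t,S_t)]=\E[L(b_t)]$. Likewise, $\E[\ell^{(1-\alpha)}(b^\star,S_t)]=L(b^\star)$. Hence
\[
\E\Big[\sum_{t=1}^T \big(L(b_t)-L(b^\star)\big)\Big]\le R_T .
\]

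Third, I would pass from the individual iterates to the average. By convexity of $b\mapsto(b-b^\star)^2$ (Jensen's inequality),
\[
(\bar b_T-b^\star)^2\le\frac1T\sum_{t=1}^T(b_t-b^\star)^2\le\frac{2}{\kappa T}\sum_{t=1}^T\big(L(b_t)-L(b^\star)\big),
\]
and taking expectations yields the claimed bound. One could alternatively use convexity of $L$ together with quadratic growth at $\bar b_T$, which gives the same constant.

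The main obstacle is the first step: justifying that $L$ is differentiable, that $b^\star$ is interior or at least a stationary point, and that the quadratic growth holds with constant $\kappa/2$ uniformly on $\sB$. If $b^\star$ sits at the boundary of $\sB$, I would rely on the integral representation above, which only uses $\phi\ge\kappa$ between $b^\star$ and $b$. The assumption $b_t\in\sB$ is exactly what keeps every comparison inside the region where the density lower bound applies.
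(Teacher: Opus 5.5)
Your proposal is correct and is essentially the paper's proof: quadratic growth of $L$ on $\sB$ from $\phi\ge\kappa$, the online-to-batch step that turns the expected regret into $\E\bigl[\sum_{t}(L(b_t)-L(b^\star))\bigr]\le R_T$, and Jensen's inequality. The only difference is the order of the last two steps. The paper applies Jensen to $L$ and quadratic growth at $\bar b_T$, which is the alternative you mention. You instead apply quadratic growth at each $b_t$ and Jensen to $(b-b^\star)^2$, which additionally bounds $\frac1T\sum_t\E[(b_t-b^\star)^2]$.
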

\begin{proof}
The proof proceeds in
two parts:
a first- and second-order analysis
of the expected loss,
 and applying an online-to-batch conversion.

\textbf{First and second-order analysis.}
This part is a review of well-known results, which were used either explicitly or implicitly to various degrees in a number of prior works,
  see e.g., \citet{gibbs2021adaptive,joshi2025conformal}, etc; which we include here only for the sake of being self-contained.
Recall the $(1-\alpha)$-pinball loss: $\ell^{(1-\alpha)}(b, S) = \max\{(1-\alpha)(S-b), \alpha(b-S)\}$. The expected loss is given by
\[
L(b)
= \int_{0}^{b} \! \alpha(b-s)\phi(s) \, \mathrm{d}s + \int_{b}^{\infty} (1-\alpha)(s-b)\phi(s) \, \mathrm{d}s~.
\]
Differentiating with respect to $b$ yields
\[
L'(b)
= \alpha \tilde\Phi(b) - (1-\alpha)(1 - \tilde\Phi(b))
= \tilde\Phi(b) - (1-\alpha)~.
\]
Setting $L'(b) = 0$, we confirm that the minimizer is unique and satisfies $\tilde\Phi(b^\star) = 1-\alpha$. Thus, $b^\star$ is exactly the $(1-\alpha)$-quantile of $\sD$.
Differentiating $L'(b)$ again, we obtain the Hessian of the expected loss:
\(
L''(b)
= \frac{d}{db}(\tilde\Phi(b) - 1 + \alpha)
= \phi(b)~.
\)
By Assumption \ref{ass:regularity}, we have $L''(b)=\phi(b)\ge \kappa$ for all $b\in\sB$, hence $L$ is $\kappa$-strongly convex on $\sB$. In particular,
\begin{equation}\label{eq:strong_convexity}
L(b) - L(b^\star) \ge \frac{\kappa}{2}(b-b^\star)^2, \quad \forall b \in \sB~.
\end{equation}

% By Assumption \ref{ass:regularity}, we have $L''(b) = \phi(b) \geq \kappa$ uniformly in the neighborhood of $b^\star$. This implies that $L(b)$ is locally $\kappa$-strongly convex. A standard property of strongly convex functions states that the sub-optimality of the loss bounds the squared distance to the minimizer:
% \begin{equation}
% \label{eq:strong_convexity}
% L(b) - L(b^\star)
% \geq \frac{\kappa}{2} (b - b^\star)^2, \quad \forall b \geq 0~.
% \end{equation}

\textbf{Online-to-batch conversion.}
Since the loss function $L(b)$ is convex, Jensen's inequality implies $L(\bar{b}_T) \le \frac{1}{T}\sum_{t=1}^T L(b_t)$. Using standard online-to-batch conversion results \citep[see, e.g.,][Theorem 3.1]{orabona2019modern}, the average regret upper bounds the excess risk:
\[
\E[L(\bar{b}_T)] - L(b^\star)
\le \E\left[ \frac{1}{T}\sum_{t=1}^T L(b_t) - L(b^\star) \right]
\le \frac{R_T}{T}~.
\]
Combining this with the strong convexity bound in~\eqref{eq:strong_convexity}, we have
\[
\frac{\kappa}{2} \E\left[ (\bar{b}_T - b^\star)^2 \right]
\le \E[L(\bar{b}_T)] - L(b^\star)
\le \frac{R_T}{T}~.
\]
Rearranging the terms yields the claim.
\end{proof}

Theorem~\ref{thm:width-convergence} implies that \emph{any} algorithm minimizing pinball loss regret \emph{automatically} converges to the statistically efficient oracle width $b^\star$. %Thus, 
%rather than designing specialized algorithms to optimize length directly, we show that efficiency is a consequence of regret minimization
%for an appropriate loss function.
These 
results are different in nature from those of \citet{srinivas2026online}.
\citet{srinivas2026online} frames the problem as direct length optimization, 
aiming to compete with the best fixed interval length in hindsight. In contrast, our framework specifically leverages the pinball loss as a proper scoring rule \cite{gneiting2007strictly}.
%\end{remark}
% .
% While they also study the efficiency of online conformal prediction, 
% the two approaches have no overlap. 

Recent work by \citet{arecesonline} also investigates the efficiency of online conformal prediction in stochastic settings. Their main result (Theorem 6.1) establishes the convergence of the \emph{last iterate} to the optimal parameters. However, obtaining this guarantee requires a specific decaying schedule for the learning rate ($\eta_t \propto t^{-c}$), which creates an explicit trade-off: faster decay improves efficiency but renders the adversarial coverage bounds vacuous \citep[Section 6]{arecesonline}. In contrast, our width convergence result (Theorem~\ref{thm:width-convergence}) is more general, showing that efficiency is an automatic consequence of regret minimization for \emph{any} algorithm, regardless of its specific update rule. Furthermore, because UP-OCP is parameter-free, it naturally achieves this efficiency through low regret without requiring manual step-size tuning or sacrificing robust coverage guarantees under adversarial distribution shifts.

\section{Proof of Theorem~\ref{thm:regret_up_ocp}}
\label{sec:appendixC}

This appendix details the reduction used in Section~\ref{sec:universal-portfolio}.
The main point is that, once we express the pinball subgradients as an asymmetric coin sequence, our two-stock conformal market in Definition~\ref{def:market} is exactly a two-asset encoding of an asymmetric coin-betting game. Standard coin-betting duality then translates wealth guarantees into linearized-regret guarantees, which in turn control the pinball-loss regret.

\subsection{From pinball subgradients to an asymmetric coin}

Recall that for the pinball loss $\ell_t(b) = \ell^{(1-\alpha)}(b,S_t)$ we use the subgradient
$g_t = \bone\{b_t \ge S_t\}-(1-\alpha)$, as in~\eqref{eq:subgradients}. Hence, with our tie-breaking convention at $b_t=S_t$, we have $g_t \in \{-(1-\alpha),\alpha\}$.

It is convenient to flip signs and work with the bounded outcome of a coin:
\begin{equation}
\label{eq:coin-outcome}
c_t := -g_t \in \{-\alpha,1-\alpha\}~.
\end{equation}
Under our tie-breaking, $c_t$ takes only the two values: $c_t=1-\alpha$ if $b_t<S_t$ (miscoverage) and $c_t=-\alpha$ if $b_t\ge S_t$ (coverage).

Consider now the following asymmetric coin-betting game.
At each round $t$, a bettor chooses a \emph{signed betting fraction} $\beta_t$ and then the outcome $c_t$ is revealed. We define the wealth process as
\begin{equation}
\label{eq:coin-wealth}
W_t := W_{t-1}(1+\beta_t c_t), \qquad W_0:=1~.
\end{equation}
To ensure \emph{no bankruptcy} for all outcomes $c_t \in [-\alpha,1-\alpha]$, the betting fraction must satisfy
\begin{equation}
\label{eq:v-safe}
\beta_t \in \left[-\frac{1}{1-\alpha},\frac{1}{\alpha}\right].
\end{equation}
Indeed, the condition $1+\beta_t c_t \ge 0$ for all $c_t \in \{-\alpha,1-\alpha\}$ is equivalent to simultaneously requiring $1-\alpha \beta_t \ge 0$ and $1+(1-\alpha)\beta_t \ge 0$, which is exactly~\eqref{eq:v-safe}.

We also define the \emph{bet amount}
\begin{equation}
\label{eq:bet-amount}
b_t := \beta_t W_{t-1}~.
\end{equation}
Plugging~\eqref{eq:bet-amount} into~\eqref{eq:coin-wealth} yields the additive form
\begin{equation}
\label{eq:additive-wealth}
W_t
= W_{t-1} + c_t b_t
= W_{t-1} - g_t b_t~.
\end{equation}

\subsection{The conformal market is a two-stock encoding of coin betting}

We now show that the conformal market in Definition~\ref{def:market} is precisely a two-asset representation of the asymmetric coin game above.

Using $c_t=-g_t$, Definition~\ref{def:market} can be rewritten as
\begin{equation}
\label{eq:market-returns-x}
w_{t,1} = \frac{c_t+\alpha}{\alpha}, \qquad
w_{t,2} = \frac{(1-\alpha)-c_t}{1-\alpha}~.
\end{equation}
These returns are nonnegative for every $c_t \in [-\alpha,1-\alpha]$.
Moreover, when $c_t=1-\alpha$ (miscoverage), we have $w_{t,1}=1/\alpha$ and $w_{t,2}=0$, while when $c_t=-\alpha$ (coverage), we have $w_{t,1}=0$ and $w_{t,2}=1/(1-\alpha)$.

Given a portfolio weight $\lambda_t \in [0,1]$ (fraction of capital invested in Stock~1), the wealth update in Definition~\ref{def:wealth} is
$W_t = W_{t-1}(\lambda_t w_{t,1} + (1-\lambda_t)w_{t,2})$.
The next lemma shows that this is exactly~\eqref{eq:coin-wealth} under an affine reparameterization of $\lambda_t$.

\begin{lemma}[Portfolio-to-coin equivalence]
\label{lem:portfolio-coin}
Fix $\alpha \in (0,1)$ and let $c_t \in [-\alpha,1-\alpha]$.
Define the two-stock returns by~\eqref{eq:market-returns-x}. For any $\lambda_t \in [0,1]$, define
\begin{equation}
\label{eq:lambda-to-v}
\beta_t
:= -\frac{1}{1-\alpha} + \frac{\lambda_t}{\alpha(1-\alpha)}
= \frac{\lambda_t-\alpha}{\alpha(1-\alpha)}~.
\end{equation}
Then,
\begin{equation}
\label{eq:mix-equals-coin}
\lambda_t w_{t,1} + (1-\lambda_t) w_{t,2}
= 1 + \beta_t c_t~.
\end{equation}
Consequently, the wealth recursion of Definition~\ref{def:wealth} is identical to the coin-betting recursion~\eqref{eq:coin-wealth}, and the bet amount $b_t=\beta_t W_{t-1}$ equals the mapping~\eqref{eq:radius-mapping}.
\end{lemma}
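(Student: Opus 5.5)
The identity \eqref{eq:mix-equals-coin} is affine in both $\lambda_t$ and $c_t$, so I will prove it by expanding the left-hand side with the returns \eqref{eq:market-returns-x} and matching coefficients. Substituting gives
\[
\lambda_t \frac{c_t+\alpha}{\alpha} + (1-\lambda_t)\frac{(1-\alpha)-c_t}{1-\alpha}
= \lambda_t + \frac{\lambda_t c_t}{\alpha} + (1-\lambda_t) - \frac{(1-\lambda_t)c_t}{1-\alpha}.
\]
The constant terms add up to $1$. The coefficient of $c_t$ is $\frac{\lambda_t}{\alpha} - \frac{1-\lambda_t}{1-\alpha}$. Over the common denominator this becomes $\frac{\lambda_t(1-\alpha) - \alpha(1-\lambda_t)}{\alpha(1-\alpha)} = \frac{\lambda_t-\alpha}{\alpha(1-\alpha)}$, which is exactly $\beta_t$ as defined in \eqref{eq:lambda-to-v}. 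I will also check the second form in \eqref{eq:lambda-to-v}: $-\frac{1}{1-\alpha}+\frac{\lambda_t}{\alpha(1-\alpha)} = \frac{\lambda_t-\alpha}{\alpha(1-\alpha)}$. This establishes \eqref{eq:mix-equals-coin}.

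For the consequences I will argue by induction on $t$. Both recursions start from $W_0=1$. Given a common $W_{t-1}$, the factors $\lambda_t w_{t,1}+(1-\lambda_t)w_{t,2}$ and $1+\beta_t c_t$ are equal by \eqref{eq:mix-equals-coin}. Hence the wealth of Definition~\ref{def:wealth} coincides with \eqref{eq:coin-wealth} for every $t$.

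Finally, $b_t=\beta_t W_{t-1}$ with $\beta_t$ taken in its first form from \eqref{eq:lambda-to-v} is literally \eqref{eq:radius-mapping}. As a consistency check, $\lambda_t\in[0,1]$ maps onto $\beta_t\in[-\frac{1}{1-\alpha},\frac{1}{\alpha}]$, which is the no-bankruptcy range \eqref{eq:v-safe}.

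None of these steps is a real obstacle. The only thing needing care is the algebra for the $c_t$ coefficient, specifically the common-denominator simplification shown above.
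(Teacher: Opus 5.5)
Your proposal is correct and matches the paper's proof: you expand with the returns \eqref{eq:market-returns-x}, collect the constant $1$ and the $c_t$-coefficient $\frac{\lambda_t}{\alpha}-\frac{1-\lambda_t}{1-\alpha}$, simplify it to $\beta_t$, and substitute into the wealth recursion. Your explicit induction on $W_t$ and your check that $\beta_t\in\bigl[-\frac{1}{1-\alpha},\frac{1}{\alpha}\bigr]$ spell out what the paper states in one line, and the paper records the same range check as a remark right after the lemma.
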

\begin{proof}
Using~\eqref{eq:market-returns-x},
\begin{align*}
\lambda_t w_{t,1} + (1-\lambda_t) w_{t,2}
&= \lambda_t\left(\frac{c_t+\alpha}{\alpha}\right) + (1-\lambda_t)\left(\frac{(1-\alpha)-c_t}{1-\alpha}\right) \\
&= \lambda_t\left(1+\frac{c_t}{\alpha}\right) + (1-\lambda_t)\left(1-\frac{c_t}{1-\alpha}\right) \\
&= 1 + c_t\left(\frac{\lambda_t}{\alpha} - \frac{1-\lambda_t}{1-\alpha}\right)~.
\end{align*}
The coefficient of $c_t$ simplifies as
$\frac{\lambda_t}{\alpha} - \frac{1-\lambda_t}{1-\alpha}
= -\frac{1}{1-\alpha} + \frac{\lambda_t}{\alpha(1-\alpha)} = \beta_t$,
which proves~\eqref{eq:mix-equals-coin}.
Substituting into the portfolio recursion gives $W_t=W_{t-1}(1+\beta_t c_t)$, and $b_t=\beta_t W_{t-1}$ is exactly~\eqref{eq:radius-mapping}.
\end{proof}

Two immediate consequences are worth recording.
First, since $\lambda_t \in [0,1]$, the induced $\beta_t$ in~\eqref{eq:lambda-to-v} always lies in the safe interval~\eqref{eq:v-safe}.
Second, the one-dimensional action $b_t$ in~\eqref{eq:radius-mapping} is simply the coin-betting bet amount associated with the portfolio choice $\lambda_t$.

\subsection{From wealth lower bounds to linearized-regret bounds}

We now connect wealth to the linearized regret on the pinball loss.
Recall that linearized regret is
$\LinRegret_T(u) = \sum_{t=1}^T g_t(b_t-u)$.
Using $c_t=-g_t$ from~\eqref{eq:coin-outcome}, this can be rewritten as
\begin{equation}
\label{eq:linreg-coin}
\LinRegret_T(u) = \sum_{t=1}^T c_t(u-b_t)~.
\end{equation}
Moreover, by telescoping~\eqref{eq:additive-wealth}, we have
\begin{equation}
\label{eq:wealth-telescope}
W_T = 1 + \sum_{t=1}^T c_t b_t~.
\end{equation}

The standard coin-betting duality is that a \emph{lower bound} on the achievable wealth as a function of the cumulative outcome sum implies an \emph{upper bound} on linearized regret via Fenchel conjugacy~\citep[see, e.g.,][]{orabona2019modern}.

\begin{lemma}[Wealth lower bound $\Rightarrow$ linearized regret bound]
\label{lem:wealth-to-regret}
Let $\Psi_T:\R \to (-\infty,+\infty]$ be a function.
Assume that an algorithm produces $\seq{b_t}_{t=1}^T$ and wealth $W_T$ satisfying
\begin{equation}
\label{eq:wealth-lower-potential}
W_T - 1 \ge \Psi_T\left(\sum_{t=1}^T c_t\right)
\end{equation}
for every sequence $\seq{c_t}_{t=1}^T \subseteq [-\alpha,1-\alpha]$.
Then, for every comparator $u \in \R$, its linearized regret satisfies
\begin{equation}
\label{eq:linreg-bound-from-psi}
\LinRegret_T(u)
\leq \Psi_T^\star(u),
\end{equation}
where $\Psi_T^\star$ is the Fenchel conjugate of $\Psi_T$.
\end{lemma}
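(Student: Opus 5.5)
The proof is a direct Fenchel–Young argument, mirroring the proof of Theorem~\ref{thm:finite-time-bound} but run in the opposite direction. Fix a comparator $u\in\R$ and an arbitrary outcome sequence $\seq{c_t}_{t=1}^T\subseteq[-\alpha,1-\alpha]$, and write $C_T:=\sum_{t=1}^T c_t$. The plan is to write $\LinRegret_T(u)$ as a linear term in $C_T$ minus the algorithm's gain. The gain is then replaced by the wealth, and the assumed lower bound~\eqref{eq:wealth-lower-potential} leaves a quantity that is bounded by a supremum defining $\Psi_T^\star$.

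Concretely, by~\eqref{eq:linreg-coin},
\[
\LinRegret_T(u)=u\sum_{t=1}^T c_t-\sum_{t=1}^T c_t b_t = uC_T-\sum_{t=1}^T c_tb_t~.
\]
The telescoped wealth identity~\eqref{eq:wealth-telescope} gives $\sum_{t=1}^T c_t b_t = W_T-1$. This identity is exactly where the bet-amount parametrization $b_t=\beta_tW_{t-1}$ from~\eqref{eq:bet-amount} and Lemma~\ref{lem:portfolio-coin} is used. Substituting, we get $\LinRegret_T(u)=uC_T-(W_T-1)$. By hypothesis~\eqref{eq:wealth-lower-potential}, $W_T-1\ge\Psi_T(C_T)$, so
\[
\LinRegret_T(u)\le uC_T-\Psi_T(C_T)\le\sup_{\theta\in\R}\bigl(u\theta-\Psi_T(\theta)\bigr)=\Psi_T^\star(u)~,
\]
which is~\eqref{eq:linreg-bound-from-psi}.

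Two routine checks remain. First, if $\Psi_T(C_T)=+\infty$, the hypothesis cannot hold because $W_T$ is finite, so this case is vacuous. In all other cases $uC_T-\Psi_T(C_T)$ is a well-defined extended real that is at most the supremum. Second, the bound must hold for every sequence, including adaptively or adversarially generated ones. This is immediate, because the argument is pointwise in the realized sequence and the hypothesis is assumed for all sequences in $[-\alpha,1-\alpha]$.

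I do not expect a genuine obstacle here. The only delicate point is making sure the telescoping identity applies to the algorithm's actual actions $b_t$. In other words, the $b_t$ appearing in $\LinRegret_T$ must be the same bet amounts that generate $W_T$ additively via~\eqref{eq:additive-wealth}. This holds by construction through~\eqref{eq:radius-mapping} and Lemma~\ref{lem:portfolio-coin}.
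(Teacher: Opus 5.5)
Your proof is correct and follows the paper's argument essentially verbatim. You rewrite $\LinRegret_T(u)=u\sum_t c_t-\sum_t c_t b_t$, use the telescoped identity $\sum_t c_t b_t=W_T-1\ge\Psi_T\bigl(\sum_t c_t\bigr)$, and conclude by Fenchel--Young, i.e.\ the definition of $\Psi_T^\star$ as a supremum; your side remarks on the vacuous case $\Psi_T=+\infty$ and on the $b_t$ having to be the same bet amounts that generate $W_T$ only make explicit what the paper uses implicitly.
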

\begin{proof}
Let $\theta_T := \sum_{t=1}^T c_t$.
By Fenchel-Young duality, for every $u$ we have $u \theta_T \leq \Psi_T(\theta_T) + \Psi_T^\star(u)$.
By the assumption~\eqref{eq:wealth-lower-potential} and the identity~\eqref{eq:wealth-telescope}, we have
$\sum_{t=1}^T c_t b_t = W_T - 1 \ge \Psi_T(\theta_T)$.
Therefore,
\[
\sum_{t=1}^T c_t(u-b_t)
= u \theta_T - \sum_{t=1}^T c_t b_t
\leq u \theta_T - \Psi_T(\theta_T)
\leq \Psi_T^\star(u),
\]
where in the last equality we used Fenchel-Young inequality.
This is exactly~\eqref{eq:linreg-bound-from-psi} using~\eqref{eq:linreg-coin}.
\end{proof}

Lemma~\ref{lem:wealth-to-regret} is the precise mathematical sense in which ``maximizing wealth'' (in the coin-betting/portfolio game) corresponds to ``minimizing linearized regret'' (in online convex optimization). The only remaining ingredient is to identify an explicit lower bound $\Psi_T$ for $W_T-1$ and calculate (an upper bound to) its Fenchel conjugate.

\subsection{Instantiating the potential via the best constant rebalanced portfolio}

We now connect the portfolio regret guarantee to a wealth lower bound of the form~\eqref{eq:wealth-lower-potential}.

Let $W_T^\star$ denote the wealth of the best constant rebalanced portfolio in the conformal market, i.e.,
$W_T^\star = \max_{\lambda \in [0,1]} \prod_{t=1}^T (\lambda w_{t,1} + (1-\lambda)w_{t,2})$.
By Lemma~\ref{lem:portfolio-coin}, this is equivalently the best constant betting fraction in the asymmetric coin game: $W_T^\star = \max_{\beta \in [-1/(1-\alpha),1/\alpha]} \prod_{t=1}^T (1+\beta c_t)$.

When we use the extreme subgradient convention (so $c_t \in \{-\alpha,1-\alpha\}$), $W_T^\star$ depends on the data only through the number of miscoverages.
Let $M_T := \sum_{t=1}^T \bone\{c_t = 1-\alpha\}$ and $C_T := T-M_T$.
Then, for a fixed $\lambda \in [0,1]$, we have
\[
W_T(\lambda)
= \left(\frac{\lambda}{\alpha}\right)^{M_T}\left(\frac{1-\lambda}{1-\alpha}\right)^{C_T}~.
\]
Maximizing over $\lambda$ yields $\lambda^\star = M_T/T$, and hence
\begin{equation}
\label{eq:WT-star-closed-form}
W_T^\star
= \left(\frac{M_T}{\alpha T}\right)^{M_T}\left(\frac{C_T}{(1-\alpha)T}\right)^{C_T}
= \exp\left(T \cdot \mathrm{KL}\left(\frac{M_T}{T}\middle\Vert \alpha\right)\right),
\end{equation}
where $\mathrm{KL}(p\Vert q)=p\log\frac{p}{q}+(1-p)\log\frac{1-p}{1-q}$ is the Bernoulli KL divergence.

Moreover, $M_T$ can be expressed directly in terms of $\theta_T = \sum_{t=1}^T c_t$:
since $c_t$ equals $1-\alpha$ on miscoverage and $-\alpha$ on coverage, we have
$\theta_T = (1-\alpha)M_T - \alpha C_T = M_T - \alpha T$, hence $M_T = \alpha T + \theta_T$.
Substituting into~\eqref{eq:WT-star-closed-form} gives an explicit function of $\theta_T$:
\begin{equation}
\label{eq:WT-star-as-function-of-G}
W_T^\star
= \exp\left(T \cdot \mathrm{KL}\left(\alpha+\frac{\theta_T}{T}\middle\Vert \alpha\right)\right)~.
\end{equation}

From the assumption that the portfolio algorithm guarantees $\log W_T \ge \log W_T^\star - \mathcal{R}_T$, exponentiating yields
\begin{equation}
\label{eq:wealth-lower-from-portfolio}
W_T
\ge \exp\left(-\mathcal{R}_T\right) W_T^\star
= \exp\left(-\mathcal{R}_T\right) \exp\left(T \cdot \mathrm{KL}\left(\alpha+\frac{\theta_T}{T}\middle\Vert \alpha\right)\right)
:= \Psi_T(\theta_T) +1 ~.
\end{equation}

Using Lemma~\ref{lem:wealth-to-regret}, we need to calculate the Fenchel conjugate of $\Psi_T$. Unfortunately, it does not have a closed form expression. Hence, we use Lemma~\ref{lemma:kl_lower_bound}, to obtain an easier lower bound:
\begin{equation}
\exp\left(T \cdot \mathrm{KL}\left(\alpha+\frac{\theta_T}{T}\middle\Vert \alpha\right)\right)
\geq \exp\left(T \cdot \frac{\theta_T^2/T^2}{2\alpha (1-\alpha) + 2/3|\theta_T|/T}\right)
\geq \min\left\{\exp\left(\frac{\theta_T^2}{4\alpha (1-\alpha)}\right),\exp\left(\frac{3}{4} |\theta_T|\right)\right\}~. \label{eq:lower_bound_wealth}
\end{equation}
Now, observe that if $f(x)=\min\{h_1(x),h_2(x)\}$, then we have
\begin{align*}
f^\star(y)
&= \sup_y x y - f(x)
= \sup_y x y - \min\{h_1(x),h_2(x)\}
= \sup_y \max\{x y - h_1(x), xy - h_2(x)\} \\
&\leq \max\{\sup_y x y - h_1(x), \sup_y x y - h_2(x)\}
= \max\{h_1^\star(y), h_2^\star(y)\}~.
\end{align*}
Hence, it suffices to find the Fenchel conjugates (or upper bounds) of the two functions in the min in the right-hand side of~\eqref{eq:lower_bound_wealth}.
This can be done immediately by \citet[Example 6.18, Lemma 6.24, and Theorems C.3 and C.4]{orabona2019modern}:
\[
\Psi^\star_T(u)
\leq \max\left\{|u| \sqrt{2T \alpha(1-\alpha) \ln (2T \alpha(1-\alpha) u^2 \exp\left(\mathcal{R}_T\right)+1)},
\frac{4}{3}|u|\left(\ln \frac{4 |u| \exp(\mathcal{R}_T)}{3}-1\right)
\right\}~.
\]

Using the regret of universal portfolio of $\frac{1}{2}\ln (\pi(T+1))$ and overapproximating completes the proof.

\section{Regret Guarantee of UP-OCP vs KT}
\label{sec:up_vs_kt}

The proof the regret guarantee of \citet{podkopaev2024adaptive} goes exactly through the same steps of the one of UP-OCP. Moreover, the KT approach can also be written as a universal portfolio algorithm, with the same prior. The only difference is the transformation of the subgradients into 2 stocks, that will change the wealth of the best constant rebalanced portfolio, $W^\star_T$.

The transformation for KT is the following one:
\[
    w_{t,1} = 1-g_t,
    \quad w_{t,2} = 1+g_t~.
\]
Given that $g_t\in \{\alpha-1, \alpha\}\subset \{-1,1\}$, we have that $w_{t,1}\geq0$ and $w_{t,2}\geq 0$.
In the coin-betting view, this corresponds to the wealth process
\[
W_t=W_{t-1}(1-g_t\beta_t),
\]
where we constrain $\beta_t \in [-1,1]$. This means that $W_t^\star = \max_{\beta \in [-1,1]} \ \prod_{t=1}^T (1-g_t \beta)$. Contrast this with the one we derived for UP-OCP:
\[
\max_{\beta \in [-1/(1-\alpha),1/\alpha]} \prod_{t=1}^T (1-\beta g_t)~.
\]
Given that $[-1,1] \subset [-1/(1-\alpha),1/\alpha]$, the wealth of the best constant rebalanced portfolio in the UP-OCP reduction is always at least the same of the KT one, but it is potentially much larger.

\subsection{Missing proofs in Section~\ref{sec:universal-portfolio}}

\begin{lemma}
\label{lemma:kl_lower_bound}
Let $p, q \in [0,1]$, then
\[
\KL(p\|q)
= p \ln \frac{p}{q}+(1-p)\ln\frac{1-p}{1-q}
\geq \frac{(p-q)^2}{2 q(1-q)+\frac{2}{3}|p-q|}~.
\]
\end{lemma}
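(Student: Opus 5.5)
Writing $\delta := p-q$, I plan to fix $q\in(0,1)$ and study $\KL(q+\delta\|q)$ as a function of $\delta\in[-q,1-q]$; the endpoint cases $q\in\{0,1\}$ make the left side $+\infty$ unless $p=q$, and are handled separately. The approach is the standard one for Bernstein/Bennett-type bounds: write the KL divergence in integral (Taylor-with-remainder) form and compare it to the right-hand side term by term. Setting $f(\delta)=\KL(q+\delta\|q)$, we have $f(0)=0$, $f'(0)=0$, and $f''(\delta)=\frac{1}{(q+\delta)(1-q-\delta)}$. Hence
\[
f(\delta)=\int_0^{\delta}(\delta-s)\,\frac{1}{(q+s)(1-q-s)}\,\mathrm{d}s .
\]

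The key inequality I will aim for is a pointwise lower bound on the curvature: for $s$ between $0$ and $\delta$,
\[
(q+s)(1-q-s)=q(1-q)+s(1-2q)-s^2\le q(1-q)+|s| .
\]
Using this directly would give $f(\delta)\ge\int_0^{|\delta|}\frac{|\delta|-s}{q(1-q)+s}\,\mathrm{d}s$, which is a Bennett-type function $h$. I would then invoke the classical inequality $h(x)\ge\frac{x^2}{2(1+x/3)}$, suitably rescaled, to reach the form with $\frac{2}{3}|p-q|$. This is exactly where the constant $2/3$ comes from.

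To keep the argument clean, I would instead prove the inequality directly. Define
\[
g(\delta)=\frac{\delta^2}{2q(1-q)+\frac23|\delta|},
\]
and show $f-g\ge 0$ on each side of $0$ by checking three facts: $f(0)=g(0)$, $f'(0)=g'(0)=0$, and then comparing derivatives further, ideally via $f''\ge g''$ or via a third-derivative argument. The cases $\delta>0$ and $\delta<0$ are symmetric upon swapping $q\leftrightarrow 1-q$ and $p\leftrightarrow 1-p$, since both sides are invariant under that swap. So it suffices to treat $\delta\ge 0$.

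The main obstacle is that $f''\ge g''$ need not hold globally with the crude bound above. The fallback is the integral route: write $v=q(1-q)$ and bound
\[
f(\delta)\ge\int_0^\delta\frac{\delta-s}{v+s(1-2q)}\,\mathrm{d}s\ge\int_0^\delta\frac{\delta-s}{v+s}\,\mathrm{d}s=(v+\delta)\ln\Bigl(1+\frac{\delta}{v}\Bigr)-\delta .
\]
I would then apply the elementary inequality $(1+x)\ln(1+x)-x\ge\frac{x^2}{2(1+x/3)}$ with $x=\delta/v$. Multiplying through by $v$ gives
\[
\frac{\delta^2}{2v+\frac23\delta},
\]
which is the claim. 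The elementary inequality itself is proved by showing that the difference vanishes together with its first derivative at $0$ and has nonnegative second derivative, a routine check.
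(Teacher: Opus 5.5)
Your argument is correct, and the route you actually carry out (the integral ``fallback'') is genuinely different from the paper's. The paper starts from the exact identity $\KL(p\|q)=q\,h(\Delta/q)+(1-q)\,h(-\Delta/(1-q))$, with $h(x)=(1+x)\ln(1+x)-x$ and $\Delta=p-q$. It applies the scalar bound $h(x)\ge x^2/(2+\frac23|x|)$ to each term separately, so it needs that bound on all of $[-1,\infty)$, including negative arguments. It must then recombine $\Delta^2/(2q+\frac23|\Delta|)+\Delta^2/(2(1-q)+\frac23|\Delta|)$ through a further algebraic inequality. You instead use the Taylor-remainder form with the exact curvature $1/((q+s)(1-q-s))$, bound this ``local variance'' by $q(1-q)+|s|$, and integrate to get the Bennett-type bound $\KL(p\|q)\ge v\,h(|p-q|/v)$ with $v=q(1-q)$. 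A single application of the scalar bound for $x\ge 0$ then finishes, with no recombination step. The paper's proof is purely algebraic and starts from an identity. Yours is the standard Bernstein-style argument and produces a natural intermediate inequality along the way. Two small remarks. First, the obstacle you anticipate for the direct route does not arise: $g''(\delta)=27v^2/(3v+|\delta|)^3\le 1/(v+|\delta|)\le f''(\delta)$, where the first inequality reduces to $9v\delta^2+|\delta|^3\ge 0$. So comparing second derivatives works, and it is in fact the same argument as your integral route. Second, two boundary cases each need a one-line continuity or convention remark, which the paper also omits. When $p\in\{0,1\}$, $f''$ blows up at the endpoint of integration. When $p=q\in\{0,1\}$, the right-hand side is $0/0$.
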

\begin{proof}[Proof of Lemma~\ref{lemma:kl_lower_bound}]
Define $h(x)=(1+x) \ln (1+x)-x$ for $x>-1$ and extend it by continuity in $x=-1$ with $h(-1):=1$.
Also, define $\Delta=p-q \in [-1,1]$.

We have that
\[
p \ln \frac{p}{q}
=  (q+\Delta) \ln \left(1+\frac{\Delta}{q}\right)
= q \left( h\left(\frac{\Delta}{q}\right) + \frac{\Delta}{q}\right)
= q \, h\left(\frac{\Delta}{q}\right) + \Delta~.
\]
Similarly, we have
\[
(1-p) \ln \frac{1-p}{1-q}
= (1-q-\Delta) \ln \left(1-\frac{\Delta}{1-q}\right)
= (1-q)\left(1-\frac{\Delta}{1-q}\right) \ln \left(1-\frac{\Delta}{1-q}\right)
= (1-q) \, h\left(-\frac{\Delta}{1-q}\right)-\Delta~.
\]
Hence, overall we have
\[
\KL(p\|q)
= q \, h\left(\frac{\Delta}{q}\right) + (1-q) \, h\left(-\frac{\Delta}{1-q}\right)~.
\]
Observe that $\frac{\Delta}{q}\geq -1$ and $-\frac{\Delta}{1-q}\geq -1$.
Hence, we use the elementary inequality $h(x)\geq \frac{x^2}{2+\frac{2}{3}|x|}$ for $x\geq-1$, to have
\[
\KL(p\|q)
\geq \Delta^2\left(\frac{1}{2q+\frac{2}{3}|\Delta|}+\frac{1}{2(1-q)+\frac{2}{3}|\Delta|}\right)
= \Delta^2 \frac{2+\frac{4}{3}|\Delta|}{4 q(1-q)+\frac{4}{3}|\Delta|+\frac{4}{9}\Delta^2}
\geq \frac{\Delta^2}{2 q(1-q)+\frac{2}{3}|\Delta|}~.
\]
Using the value of $\Delta$ finishes the proof.
\end{proof}

\begin{proof}[Proof of Theorem~\ref{thm:up-coverage}]
Let $k:=|\{t\in[T]: g_t<0\}|$ and $\hat p:=k/T$.
Since $g_t=\alpha$ on coverage rounds and $g_t=-(1-\alpha)$ on miscoverage rounds,
\[
\sum_{t=1}^T g_t=\alpha(T-k)-(1-\alpha)k=\alpha T-k,
\qquad
\frac{1}{T}\sum_{t=1}^T g_t=\alpha-\hat p~.
\]

\paragraph{Best-constant wealth equals an exact KL term.}
For a constant betting strategy $\lambda\in[0,1]$, the wealth is
\[
W_T(\lambda)
=\left(\frac{\lambda}{\alpha}\right)^k\left(\frac{1-\lambda}{1-\alpha}\right)^{T-k}~.
\]
The maximizer is $\lambda^\star=\hat p$, and substituting yields
\begin{align}
\ln W_T^\star
&=k\ln\frac{\hat p}{\alpha}+(T-k)\ln\frac{1-\hat p}{1-\alpha} \nonumber\\
&=T\left(\hat p\ln\frac{\hat p}{\alpha}+(1-\hat p)\ln\frac{1-\hat p}{1-\alpha}\right)
=T \cdot \KL(\hat p\|\alpha)~. \label{eq:up-lnWstar-KL}
\end{align}

For two assets with Jeffreys prior, the Universal Portfolio wealth is the Beta$(1/2,1/2)$ mixture, whose regret is known~\citep{cover2002universal}:
\begin{equation}
\label{eq:up-regret-constant}
\ln W_T\ge \ln W_T^\star-\frac12\ln(\pi(T+1))~.
\end{equation}
Moreover, by Lemma~\ref{lemma:bound_reward} and the assumption that $S_t\leq D t^q$, we have that
\[
W_T
= 1 - \sum_{t=1}^T b_t g_t
\leq 1 + \frac{(1-\alpha) D}{q+1} (T+1)^{q+1}~.
\]

Combining~\eqref{eq:up-lnWstar-KL} and~\eqref{eq:up-regret-constant} gives
\begin{equation}
\label{eq:up-KL-upper}
\KL(\hat p\|\alpha)\leq \varepsilon_T
:=\frac{1}{T}\left[\ln\left(1+\frac{(1-\alpha)D (T+1)^{q+1}}{q+1}\right)+\frac12\ln(\pi(T+1))\right]~.
\end{equation}

\paragraph{Explicit inversion from KL to miscoverage deviation.}
Let $\delta:=|\hat p-\alpha|$.
From Lemma~\ref{lemma:kl_lower_bound}, we have
\begin{equation}
\label{eq:kl-bernstein}
\KL(a\|b)\ge \frac{(a-b)^2}{2b(1-b)+\frac23|a-b|}~.
\end{equation}
Applying~\eqref{eq:kl-bernstein} with $a=\hat p$ and $b=\alpha$ and using~\eqref{eq:up-KL-upper} yields
\[
\varepsilon_T\ge \frac{\delta^2}{2\alpha(1-\alpha)+\frac23 \delta},
\]
equivalently
\[
\delta^2-\frac{2\varepsilon_T}{3}\delta-2\alpha(1-\alpha)\varepsilon_T
\leq 0~.
\]
Solving this quadratic inequality for the nonnegative root gives
\[
\delta
\leq \frac{\varepsilon_T}{3}+\sqrt{2\alpha(1-\alpha)\varepsilon_T+\frac{\varepsilon_T^2}{9}}
\leq \varepsilon_T+\sqrt{2\alpha(1-\alpha)\varepsilon_T}~.
\]
Since $\delta=|\hat p-\alpha|=\bigl|\frac{1}{T}\sum_{t=1}^T g_t\bigr|$, this proves the stated bound.
\end{proof}

\subsection{Radius Clipping}
\label{app:clipping}

We slightly abuse notation here by letting $\tilde{b}_t$ be the \emph{raw} output of the wealth mapping~\eqref{eq:radius-mapping}. 
In Algorithm~\ref{alg:up_ocp}, the prediction radius is truncated to be non-negative: $b_t \leftarrow \max(0, \tilde{b}_t)$. Here we show that this operation preserves the validity of the regret guarantees.

First, we examine the consistency of the gradients used for the wealth update. The algorithm updates the wealth based on whether coverage was attained:
\begin{itemize}
    \item Case 1: If $\tilde{b}_t \ge 0$, then trivially the clipping step is not active and hence $b_t = \tilde{b}_t$. The subgradient $g_t$ is computed at the same point as in the unclipped case.
    \item Case 2: If $\tilde{b}_t < 0$, then $\tilde{b}_t < S_t$ (since $S_t \ge 0$) and hence the algorithm receives the subgradient $g_t = -(1-\alpha)$. In terms of the clipped value $b_t = 0$, we also have $b_t \leq S_t$, yielding the same gradient $g_t = -(1-\alpha)$.
\end{itemize}

We conclude that the clipping does not alter the subgradient sequence seen by the algorithm in the standard case.

Now, we compare the linearized regret terms. Let $\LinRegret_t(u) = g_t(b_t - u)$ and $\widetilde{\LinRegret}_t(u) = \tilde{g}_t(\tilde{b}_t - u)$. The difference is:
\[
\LinRegret_t(u) - \widetilde{\LinRegret}_t(u) = g_t(b_t - u) - g_t(\tilde{b}_t - u) = g_t(b_t - \tilde{b}_t)~.
\]
Again, we do a case analysis:
\begin{itemize}
    \item If $\tilde{b}_t \ge 0$, then $b_t = \tilde{b}_t$, so the difference is $0$.
    \item If $\tilde{b}_t < 0$, then $b_t = 0$ and $g_t = -(1-\alpha)$. The difference is:
    \[
    -(1-\alpha)(0 - \tilde{b}_t) = (1-\alpha)\tilde{b}_t < 0~.
    \]
\end{itemize}
In all cases, $g_t(b_t - u) \le \tilde{g}_t(\tilde{b}_t - u)$. Summing over $t$ confirms that the linearized regret of the truncated algorithm satisfies the same bound as the original wealth process.
\section{Closed-Form UP Update}
\label{sec:closed_form}

\begin{theorem}[Closed-Form Update (Jeffreys prior)]\label{thm:closed-form}
For the universal portfolio update~\eqref{eq:up-update} with the Jeffreys prior on $\Delta=[0,1]$ (equivalently a $\mathrm{Beta}(1/2,1/2)$ prior in the two-asset case), the weights admit the closed-form update
\begin{equation} \label{lamo}
\lambda_t
= \frac{1}{t}\left(\sum_{i=1}^{t-1} \bone\set{g_i = -(1-\alpha)} +\frac12\right)~.
\end{equation}
\end{theorem}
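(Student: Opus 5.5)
The plan is to evaluate the ratio of integrals in \eqref{eq:up-update} directly, using the special structure of the conformal market. First, observe from Definition~\ref{def:market} (equivalently \eqref{eq:market-returns-x}) that each return vector is degenerate. On a miscoverage round ($g_i=-(1-\alpha)$) we have $\bw_i=(1/\alpha,0)$. On a coverage round ($g_i=\alpha$) we have $\bw_i=(0,1/(1-\alpha))$. Hence for every constant portfolio $\lambda$, with $k_{t-1}:=\sum_{i=1}^{t-1}\bone\set{g_i=-(1-\alpha)}$,
\[
W_{t-1}(\lambda)=\alpha^{-k_{t-1}}(1-\alpha)^{-(t-1-k_{t-1})}\,\lambda^{k_{t-1}}(1-\lambda)^{t-1-k_{t-1}},
\]
exactly as in the computation preceding \eqref{eq:WT-star-closed-form}. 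The prefactor does not depend on $\lambda$, so it cancels between numerator and denominator of \eqref{eq:up-update}.

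What remains is a Beta-integral computation with the Jeffreys density $\mu(\lambda)=1/(\pi\sqrt{\lambda(1-\lambda)})$. The numerator becomes $\pi^{-1}B(k_{t-1}+3/2,\,t-1-k_{t-1}+1/2)$ and the denominator $\pi^{-1}B(k_{t-1}+1/2,\,t-1-k_{t-1}+1/2)$, where $B$ is the Beta function. Equivalently, $\lambda_t$ is the posterior mean of a Bernoulli parameter under a $\mathrm{Beta}(1/2,1/2)$ prior after $k_{t-1}$ successes in $t-1$ trials.

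I then use $B(a+1,b)/B(a,b)=a/(a+b)$, which follows from $\Gamma(x+1)=x\Gamma(x)$. With $a=k_{t-1}+1/2$ and $b=t-1-k_{t-1}+1/2$, this gives $a+b=t$, so $\lambda_t=(k_{t-1}+1/2)/t$, which is \eqref{lamo}. The case $t=1$ gives $\lambda_1=1/2$, consistent with the prior mean.

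There is no real obstacle. The only points needing care are:
\begin{itemize}
\item confirming that the tie-breaking convention $g_t\in\{-(1-\alpha),\alpha\}$ makes one return coordinate vanish exactly, which is what reduces the integrand to a pure monomial in $\lambda$ and $1-\lambda$;
\item checking that the integrals are finite, which holds since all Beta parameters are at least $1/2>0$.
\end{itemize}
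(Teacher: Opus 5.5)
Your proposal is correct and follows the paper's proof essentially step for step. Both arguments use the degenerate returns $(1/\alpha,0)$ and $(0,1/(1-\alpha))$ to reduce the integrand to $\lambda^{k}(1-\lambda)^{t-1-k}$ after cancelling the $\lambda$-independent constant, and then evaluate the ratio of Beta integrals via $\Gamma(x+1)=x\Gamma(x)$ to get $(k+1/2)/t$.
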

\begin{proof}
In the conformal market of Definition~\ref{def:market}, at round $t$ the two synthetic stocks have gross returns
\[
(w_{t,1},w_{t,2})
=\left(1-\frac{g_t}{\alpha},\,1+\frac{g_t}{1-\alpha}\right)~.
\]
A constant-rebalanced portfolio that invests fraction $\lambda\in[0,1]$ in stock 1 and $1-\lambda$ in stock 2 achieves one-step gross return
$\lambda\,w_{t,1}+(1-\lambda)\,w_{t,2}$.
Hence, the universal portfolio weight is
\begin{align*}
\lambda_{T+1}
&=\frac{1}{K}\int_0^1 \! \lambda\prod_{t=1}^T\left[\lambda\left(1-\frac{g_t}{\alpha}\right)+(1-\lambda)\left(1+\frac{g_t}{1-\alpha}\right)\right]\mu(\lambda)\,\mathrm{d}\lambda,
\end{align*}
where
\[
K
=\int_0^1 \! \prod_{t=1}^T\left[\lambda\left(1-\frac{g_t}{\alpha}\right)+(1-\lambda)\left(1+\frac{g_t}{1-\alpha}\right)\right]\mu(\lambda)\,\mathrm{d}\lambda~.
\]

Now use that $g_t\in\{-(1-\alpha),\alpha\}$.
Let $a$ be the number of rounds with $g_t=-(1-\alpha)$ (miscoverage), so $T-a$ is the number of rounds with $g_t=\alpha$.
For a miscoverage round $g_t=-(1-\alpha)$, we have
$1-g_t/\alpha=1+(1-\alpha)/\alpha=1/\alpha$ and $1+g_t/(1-\alpha)=1-1=0$, hence
\[
\lambda\left(1-\frac{g_t}{\alpha}\right)+(1-\lambda)\left(1+\frac{g_t}{1-\alpha}\right)
=\frac{\lambda}{\alpha}~.
\]
For a coverage round $g_t=\alpha$, we have
$1-g_t/\alpha=0$ and $1+g_t/(1-\alpha)=1+\alpha/(1-\alpha)=1/(1-\alpha)$, hence
\[
\lambda\left(1-\frac{g_t}{\alpha}\right)+(1-\lambda)\left(1+\frac{g_t}{1-\alpha}\right)
=\frac{1-\lambda}{1-\alpha}~.
\]
Therefore,
\[
\prod_{t=1}^T\left[\lambda\left(1-\frac{g_t}{\alpha}\right)+(1-\lambda)\left(1+\frac{g_t}{1-\alpha}\right)\right]
=\left(\frac{\lambda}{\alpha}\right)^a\left(\frac{1-\lambda}{1-\alpha}\right)^{T-a}~.
\]
Plugging this into numerator and denominator gives
\begin{align*}
\lambda_{T+1}
&=\frac{\int_0^1 \! \lambda\left(\frac{\lambda}{\alpha}\right)^a\left(\frac{1-\lambda}{1-\alpha}\right)^{T-a}\mu(\lambda)\,\mathrm{d}\lambda}
{\int_0^1 \! \left(\frac{\lambda}{\alpha}\right)^a\left(\frac{1-\lambda}{1-\alpha}\right)^{T-a}\mu(\lambda)\,\mathrm{d}\lambda}\\
&=\frac{\int_0^1 \! \lambda^{a+1}(1-\lambda)^{T-a}\mu(\lambda)\,\mathrm{d}\lambda}{\int_0^1 \! \lambda^{a}(1-\lambda)^{T-a}\mu(\lambda)\,\mathrm{d}\lambda},
\end{align*}
since the constant factor $\alpha^{-a}(1-\alpha)^{a-T}$ cancels.

Now, let's specialize it to the $\mathrm{Dirichlet}(1/2,1/2)$ prior, whose marginal density is $\mu(\lambda)=\frac{\Gamma(1)}{\Gamma(1/2)^2}\lambda^{-1/2}(1-\lambda)^{-1/2}$ on $(0,1)$.
Define, for $p,q>-1/2$,
\[
I(p,q)
=\int_0^1 \! \lambda^p(1-\lambda)^q\mu(\lambda)\,\mathrm{d}\lambda~.
\]
Then,
\begin{align*}
I(p,q)
&=\frac{\Gamma(1)}{\Gamma(1/2)^2}\int_0^1 \! \lambda^{p-1/2}(1-\lambda)^{q-1/2}\,\mathrm{d}\lambda
=\frac{\Gamma(1)}{\Gamma(1/2)^2}\cdot \frac{\Gamma(p+1/2)\Gamma(q+1/2)}{\Gamma(p+q+1)}~.
\end{align*}
Using $\Gamma(u+1)=u\Gamma(u)$,
\[
\frac{I(p+1,q)}{I(p,q)}
=\frac{\Gamma(p+3/2)}{\Gamma(p+1/2)}\cdot\frac{\Gamma(p+q+1)}{\Gamma(p+q+2)}
=\frac{p+1/2}{p+q+1}~.
\]
Finally, taking $p=a$ and $q=T-a$ yields
\[
\lambda_{T+1}
=\frac{I(a+1,T-a)}{I(a,T-a)}=\frac{a+1/2}{T+1}~.
\]

For a uniform distribution as the prior, a similar derivation yields
\(
    \lambda_{T+1} = \frac{a + 1}{T + 2}~.
\)
\end{proof}

\section{Baseline Update Rules}
\label{app:baselines}

\subsection{Krichevsky-Trofimov}
The Krichevsky-Trofimov (KT) bettor is a parameter-free approach for adaptive conformal inference that addresses the sensitivity of traditional methods to learning rate tuning~\citep{podkopaev2024adaptive}. By framing the selection of the conformal radius $b_t$ as a coin-betting game, the algorithm avoids the need for a manually specified learning rate. In this framework, the algorithm starts with an initial wealth $W_0 = 1$ and places bets on the outcome of coins $c_t \in [-1, 1]$, which are defined as the negative subgradients of the pinball loss: $c_t = -g_t$. The radius at each step $t$ is determined by the betting fraction $\beta_t$ and the current wealth:
\begin{equation}
    b_t = \beta_t W_{t-1}~.
\end{equation}
The wealth is updated recursively based on the bet's success:
\begin{equation}
    W_t = W_{t-1} + b_t c_t = W_{t-1} - g_t b_t~.
\end{equation}
The KT estimator provides a practical betting scheme that adapts to the observed sequence of subgradients by updating the betting fraction as follows:
\begin{equation}
    \beta_{t+1} = \frac{t}{t+1} \beta_t - \frac{1}{t+1} g_t~.
\end{equation}
This strategy is proven to control long-term miscoverage frequency at the nominal level $\alpha$ provided the nonconformity scores are bounded~\citep{podkopaev2024adaptive}.

\begin{algorithm}[H]
  \caption{KT-based Adaptive Conformal Predictor}
  \label{alg:KT}
  \begin{algorithmic}
    \STATE {\bfseries Initialize:} $\alpha \in (0, 1)$, $W_0 = 1$, $\beta_1 = 0$, $b_1 = 0$.
    \FOR{$t = 1, 2, \dots$}
    \STATE Produce a forecast $\hat{Y}_t$ and output a set: $\hat{C}_t = [\hat{Y}_t - b_t, \hat{Y}_t + b_t]$
    \STATE Observe $Y_t$ and compute error: $S_t = \abs{Y_t - \hat{Y}_t}$
    \STATE Compute $g_t \in \partial \ell^{(1-\alpha)}(b, S_t)|_{b = b_t}$ as per \eqref{eq:subgradients};
    \STATE Set $W_t = W_{t-1} - g_t b_t$
    \STATE Set $\beta_{t+1} = \frac{t}{t+1} \beta_t - \frac{1}{t+1} g_t$
    \STATE Set $b_{t+1} = \beta_{t+1} W_t$
    \ENDFOR
  \end{algorithmic}
\end{algorithm}

\subsection{Dynamically-tuned Adaptive Conformal Inference}
Dynamically-tuned Adaptive Conformal Inference (DtACI) is an extension of ACI designed to eliminate the sensitivity to the fixed step-size parameter $\gamma$, which governs the adaptation rate to distribution shifts~\citep{gibbs2024conformal}. Instead of relying on a single $\gamma$, DtACI maintains a set of expert ACI instances running in parallel, each using a different step size $\gamma_k$ from a candidate grid. At each time step $t$, the algorithm aggregates the predictions of these experts using an online learning procedure (e.g., exponentially weighted average) to produce a robust conformal radius $b_t$.
The aggregation relies on the empirical quantiles of the current score, defined as $\beta_t := \hat{F}_{t-1}(S_t)$, where $\hat{F}_{t-1}$ is the empirical distribution of the past nonconformity scores. The experts are weighted based on their performance with respect to the pinball loss evaluated at these levels. Specifically, the loss for an expert proposing target level $\alpha_t^i$ is given by:
\[
    \ell(\beta_t, \alpha_t^i) = 
    \begin{cases} 
        \alpha(\beta_t - \alpha_t^i) & \text{if } \beta_t \ge \alpha_t^i \\
        (\alpha - 1)(\beta_t - \alpha_t^i) & \text{if } \beta_t < \alpha_t^i.
    \end{cases}
\]
\begin{algorithm}[H]
  \caption{Dynamically-tuned Adaptive Conformal Inference}
  \label{alg:DtACI}
  \begin{algorithmic}
    \STATE {\bfseries Input:} Observed values $\set{\beta_t}_{1 \leq t \leq T}$, set of candidate $\gamma$ values $\set{\gamma_i}_{1 \leq i \leq k}$, starting points $\set{\alpha_1^i}_{1 \leq i \leq k}$, and parameters $\sigma$ and $\eta$.
    \STATE {\bfseries Initialize:} $w_1^i = 1$, $1 \leq i \leq k$.
    \FOR{$t=1, 2, \dots, T$}
    \STATE Define the probabilities $p_t^i = w_t^i / \sum_{1 \leq j \leq k} w_t^j$, $\forall 1 \leq i \leq k$
    \STATE Output $\overline{\alpha}_t = \sum_{1 \leq i \leq k} p_t^i \alpha_t^i$
    \STATE $\overline{w}_t^i = w_t^i\exp(-\eta \ell(\beta_t, \alpha_t^i))$, $\forall 1 \leq i \leq k$
    \STATE $\overline{W}_t = \sum_{1 \leq i \leq k} \overline{w}_t^i$
    \STATE $w_{t+1}^i = (1 - \sigma) \overline{w}_t^i + \overline{W}_t \sigma / k$
    \STATE $err_t^i = \bone\set{Y_t \notin \hat{C}_t(\alpha_t^i)}$, $\forall 1 \leq i \leq k$
    \STATE $err_t = \bone\set{Y_t \notin \hat{C}_t(\overline{\alpha}_t)}$
    \STATE $\alpha_{t+1}^i = \alpha_t^i + \gamma_i(\alpha - err_t^i)$, $\forall 1 \leq i \leq k$
    \ENDFOR
  \end{algorithmic}
\end{algorithm}

The default hyperparameters for DtACI follow the experimental setup detailed in Section 4 of \citet{gibbs2024conformal}. The candidate step sizes are set to be on a logarithmic grid $\Gamma = \{0.001 \cdot 2^k\}_{k=0}^7$, ranging from $0.001$ to $0.128$, designed to cover a spectrum of regimes from stable to highly reactive. The fixed share parameter is set to $\sigma = 0.001$, and the learning rate $\eta$ for the expert algorithm is chosen to be $e$. The starting points $\set{\alpha_1^i}_{1 \leq i \leq k}$ are all initialized to the nominal miscoverage level $\alpha$.

\subsection{Scale-Free Online Gradient Descent}
As mentioned in the main text, standard Online Gradient Descent (OGD) is sensitive to the scale of the nonconformity scores, often requiring careful tuning of the learning rate to ensure reasonable performance. To address this, \citet{bhatnagar2023improved} adapted Scale-Free Online Gradient Descent (SF-OGD) \citep{orabona2018scale} for conformal prediction. SF-OGD dynamically adjusts the step size by normalizing the current gradient by the Euclidean norm of the history of past gradients. This normalization makes the algorithm robust to the magnitude of the scores without requiring prior knowledge of their bounds. The update rule for the conformal radius $b_t$ is given by:
\begin{equation}
    b_{t+1} = b_t - \eta \frac{g_t}{\sqrt{\epsilon + \sum_{i=1}^t g_i^2}} ~,
\end{equation}
where $g_t$ is the subgradient of the pinball loss, $\eta$ is a scalar multiplier controlling the learning rate, and $\epsilon$ is a small constant (e.g., $10^{-6}$) to ensure numerical stability.

\begin{algorithm}[H]
  \caption{SF-OGD Adaptive Conformal Predictor}
  \label{alg:sfogd}
  \begin{algorithmic}
    \STATE {\bfseries Input:} Target miscoverage $\alpha \in (0,1)$, learning rate $\eta > 0$.
    \STATE {\bfseries Initialize:} Radius $b_1 = 0$, sum of squared gradients $G_0 = 0$, $\epsilon=10^{-6}$.
    \FOR{$t = 1, 2, \dots$}
    \STATE Output set $\hat{C}_t = [\hat{Y}_t - b_t, \hat{Y}_t + b_t]$
    \STATE Observe $Y_t$ and compute score $S_t = \abs{Y_t - \hat{Y}_t}$
    \STATE Compute gradient $g_t \in \partial \ell^{(1-\alpha)}(b, S_t)|_{b = b_t}$ as per \eqref{eq:subgradients};
    \STATE Update sum of squares: $G_t = G_{t-1} + g_t^2$
    \STATE Update radius: $b_{t+1} = b_t - \eta \frac{g_t}{\sqrt{G_t + \epsilon}}$
    \ENDFOR
  \end{algorithmic}
\end{algorithm}

While robust to scale, SF-OGD still requires the selection of the global learning rate $\eta$. In our experiments, we select $\eta$ via grid search from the set $\set{0.01, 0.1, 0.25, 1, 10, 25, 100}$. These candidate values are adopted from the well-tuned choices reported in \citet{podkopaev2024adaptive}, ensuring the method can accommodate selected datasets.

\subsection{Conformal P/PI Control}
Conformal P/PI Control frames the selection of the conformal radius $b_t$ as a feedback control problem, where the goal is to calibrate the coverage error $err_t = \bone(y_t \notin \hat{C}_t)$ to the set point $\alpha$~\citep{angelopoulos2023conformal}. The algorithm combines two components: a Proportional (P) controller, also known as Quantile Tracking, and an Integral (I) controller. The P controller updates the radius using online gradient descent on the quantile loss, adjusting $b_t$ proportional to the instantaneous error $err_t - \alpha$. While effective, P control can suffer from steady-state error. To mitigate this, the PI controller adds an integrator term $r_t(\cdot)$ that acts on the cumulative sum of past errors, $E_t = \sum_{i=1}^t (err_i - \alpha)$. The update rule for the radius at time $t+1$ is given by combining these terms with the previous radius:
\begin{equation}
    b_{t+1} = b_t + \eta (err_t - \alpha) + r_t(E_t),
\end{equation}
where $\eta$ is the learning rate and $r_t$ is a saturation function (e.g., a tangent function) designed to stabilize coverage under arbitrary distribution shifts~\citep{angelopoulos2023conformal}.

\begin{algorithm}[H]
  \caption{Conformal P/PI Control}
  \label{alg:conformal_pi}
  \begin{algorithmic}
    \STATE {\bfseries Input:} Target miscoverage $\alpha$, learning rate $\eta > 0$, integrator function $r_t(\cdot)$.
    \STATE {\bfseries Note:} For P Control, set $r_t(x) = 0$. 
    \STATE {\bfseries Initialize:} Radius $b_1 = 0$, cumulative error $E_0 = 0$.
    \FOR{$t=1, 2, \dots$}
    \STATE Receive input $x_t$ and forecast $\hat{Y}_t$
    \STATE Output set $\hat{C}_t = [\hat{Y}_t - b_t, \hat{Y}_t + b_t]$
    \STATE Observe $Y_t$ and compute error indicator $err_t = \bone\{Y_t \notin \hat{C}_t\}$
    \STATE Update cumulative error: $E_t = E_{t-1} + (err_t - \alpha)$
    \STATE Compute P-step: $\delta_P = \eta (err_t - \alpha)$
    \STATE Compute I-step: $\delta_I = r_t(E_t)$
    \STATE Update radius: $b_{t+1} = b_t + \delta_P + \delta_I$
    \ENDFOR
  \end{algorithmic}
\end{algorithm}

The hyperparameters for the P/PI controller are set following the heuristics provided in Appendix B of \citet{angelopoulos2023conformal}. The proportional gain (learning rate) is typically set adaptively as $\eta = \lambda \hat{B}_t$, where $\hat{B}_t$ is the maximum score observed in a trailing window (or a global bound) and $\lambda \in (0, 1]$ is a scaling factor (0.1 was recommended as a good default). For the PI controller, the integrator is defined as $r_t(E) = K_I \tan(E \log(T) / (T C_{sat}))$. The constant $K_I$ aligns the integrator's output with the scale of the nonconformity scores; it is recommended to be set to a hypothesized upper bound on the scores (e.g., $K_I \approx \max S_t$). The parameter $C_{sat}$ controls the saturation point of the integrator and is derived from a theoretical guarantee to ensure the miscoverage does not exceed a tolerance $\delta$ by time $T$ (e.g., $C_{sat} \approx \frac{2}{\pi}\log(T\delta)$). In the original implementation, both $K_I$ and $C_{sat}$ are often pre-tuned or fixed heuristically for specific datasets to ensure stability.

Specifically in our experiments, the scalar multipliers $\lambda$ are uniformly selected from the fixed grid $\set{0, 0.05, 0.1, 0.5, 1}$\footnote{In fact, 0.1 was recommended by \citet{angelopoulos2023conformal} as a good default.} $\set{0, 0.05, 0.1, 0.5, 1}$.  for all datasets. For the PI controller, the integrator parameters are pre-tuned heuristically as described previously, while the same grid of $\lambda$ values is used for the proportional component.

Note that we do not compare against the full Conformal PID framework, specifically its derivative (D) component known as \emph{Scorecasting} \citep{angelopoulos2023conformal}.
Scorecasting introduces a secondary modeling layer that fundamentally alters the score distribution by predicting and residualizing errors, effectively transforming the problem into an easier one. To ensure a fair assessment of the conformal update rules themselves and to maintain consistency with the standard evaluation established in prior literature \citep{gibbs2021adaptive, gibbs2024conformal}, we restrict our comparison to methods that adapt to the original sequence of nonconformity scores without modification.

\subsection{Trivial Predictor}
The Trivial Predictor serves as a minimal baseline designed to verify the validity of coverage metrics. It guarantees that the empirical coverage exactly matches the target level $1-\alpha$ at fixed periodic intervals, completely independent of the data distribution. The method approximates the target coverage probability as a rational fraction $K/N \approx 1-\alpha$. It then generates a deterministic, periodic sequence of prediction sets consisting solely of infinite sets ($b_t = \infty$) and zero-radius sets ($b_t = 0$). By distributing the $K$ infinite sets as evenly as possible over each cycle of length $N$, the predictor ensures that the cumulative coverage error returns to exactly zero at the end of every cycle, providing perfect validity but practically useless set predictions.

\begin{algorithm}[H]
  \caption{Trivial Predictor (Deterministic Cyclic Coverage)}
  \label{alg:trivial}
  \begin{algorithmic}
    \STATE {\bfseries Input:} Target miscoverage $\alpha \in (0,1)$.
    \STATE {\bfseries Initialize:} Rational approximation $1 - \alpha \approx \frac{K}{N}$ (e.g., via continued fractions).
    \STATE {\bfseries Initialize:} Time step $t = 0$.
    \FOR{$t=1, 2, \dots$}
    \STATE Determine current radius $b_t$ based on previous update.
    \STATE Output set $\hat{C}_t = [\hat{Y}_t - b_t, \hat{Y}_t + b_t]$    
    \STATE Calculate position in cycle: $i = t \pmod N$
    \STATE Compute accumulated coverage credits:
    \STATE \quad $acc_{curr} = \lfloor \frac{i \cdot K}{N} \rfloor$
    \STATE \quad $acc_{next} = \lfloor \frac{(i + 1) \cdot K}{N} \rfloor$
    \IF{$acc_{next} > acc_{curr}$}
        \STATE Set next radius: $b_{t+1} = \infty$ \COMMENT{Output Full Set}
    \ELSE
        \STATE Set next radius: $b_{t+1} = 0$ \COMMENT{Output Empty Set}
    \ENDIF
    \ENDFOR
  \end{algorithmic}
\end{algorithm}

\section{Score Growth Demonstration}
\label{app:growth-demo}

In Section~\ref{sec:guarantees}, we introduced the polynomial growth assumption $S_t \le D t^q$, which relaxes the standard boundedness ($S_t \le C$) used in prior literature. Figure~\ref{fig:growth-demo} illustrates the practical implication of this relaxation. To see why this matters in practice, consider the Apple (AAPL) dataset used in our experiments. It is clear that the nonconformity scores follow a roughly quadratic trend ($q \approx 2$). Financial time series frequently exhibit volatility clustering and price drift that violate static bounds. Our analysis extends guarantees to the polynomial-growth regime, which provides a tighter fit for such non-stationary real-world data.

\begin{figure}[H]
    \centering
    \includegraphics[trim={0 1.0cm 0 1.5cm}, clip, width=0.7\columnwidth]{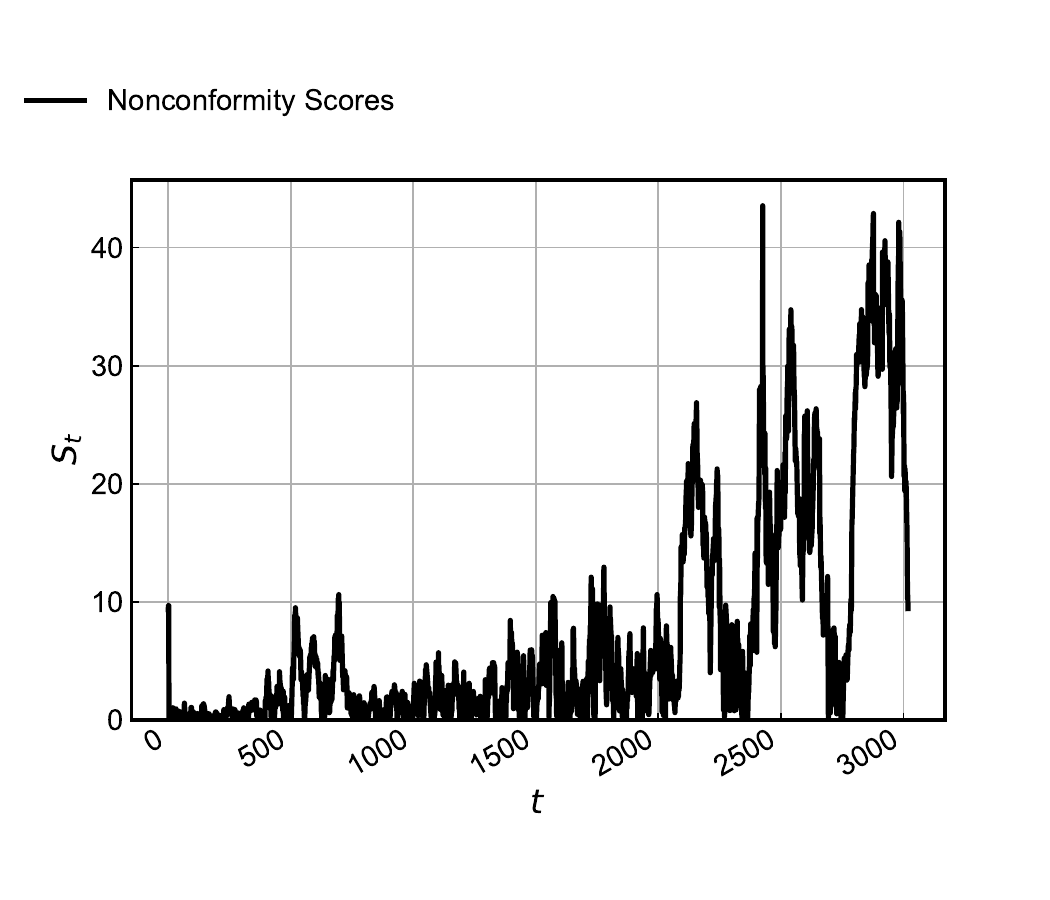}
    \caption{Observed nonconformity scores for AAPL stock returns.}
    \label{fig:growth-demo}
\end{figure}

\section{$\alpha$-Correction}
\label{app:alpha-correction}

In this section, we show how to minimally change \emph{any} OCP algorithm to obtain a possibly better tracking of the desired coverage level, without changing the Pareto frontier curves or the rate of the theoretical guarantees.

First of all, in the online setting \emph{we cannot guarantee coverage at any time step $t$, but only asymptotically}. Hence, what we propose is only a heuristic, but a theoretically principled one. The basic idea is that, from the plots, it seems that most of the algorithms tend to undercover. Hence, one can run the algorithm with a slightly inflated parameter $\alpha$, to obtain a better coverage.

Now, by how much do we inflate $\alpha$? Setting the desired coverage to $\alpha+ \frac{k}{\sqrt{T}}$, for any constant $k>0$, and assuming that the algorithm guarantees a convergence of the coverage of $\mathcal{O}(1/\sqrt{T})$ or worse, will produce exactly the same rate of convergence for the coverage. Also, the Pareto frontier will not change as well, because we are simply using a different setting of $\alpha$, so we are just moving along the Pareto frontier curve.

\section{Full Results for AXP Dataset}
\label{app:AXP-full-results}

In this section, we provide a comprehensive analysis of the American Express (AXP) dataset, complementing the efficiency and calibration results presented in Section~\ref{sec:experiments}. Note that an initial warm-up period of 100 days is used for training the initial base forecaster. Additionally, for all plots presented in this section, we discard the first 50 days of the evaluation period as a burn-in.

\textbf{Local Adaptivity.} In the main text, we noted that aggregate metrics like marginal coverage can mask significant local failures, such as error clustering or instability. To visualize these behaviors, Figures~\ref{fig:AXP-UP-vs-DtACI-local} through \ref{fig:AXP-UP-vs-PI_01-local} provide 1-vs-1 comparisons of local adaptivity between UP-OCP and all parameter-free and tuned baselines.

In each figure, the top left panel shows the local coverage computed over a rolling window of 100 days, with the dashed black line indicating the target coverage level ($1 - \alpha=95\%$). The top right panel displays the local width of the prediction intervals ($2 b_t$) using the same window size. The bottom panel illustrates the raw prediction sets $\hat{C}_t$ around the true observation (central black line).

We observe that UP-OCP maintains local coverage tightly around the $95\%$ target, with no significant swings. 
In contrast, KT exhibits volatility, where the local coverage drops below $75\%$ (e.g., during 2009 and 2015).

\begin{figure}[H]
  \centering
  \includegraphics[trim={0 0 0 1.2cm}, clip, width=0.85\columnwidth]{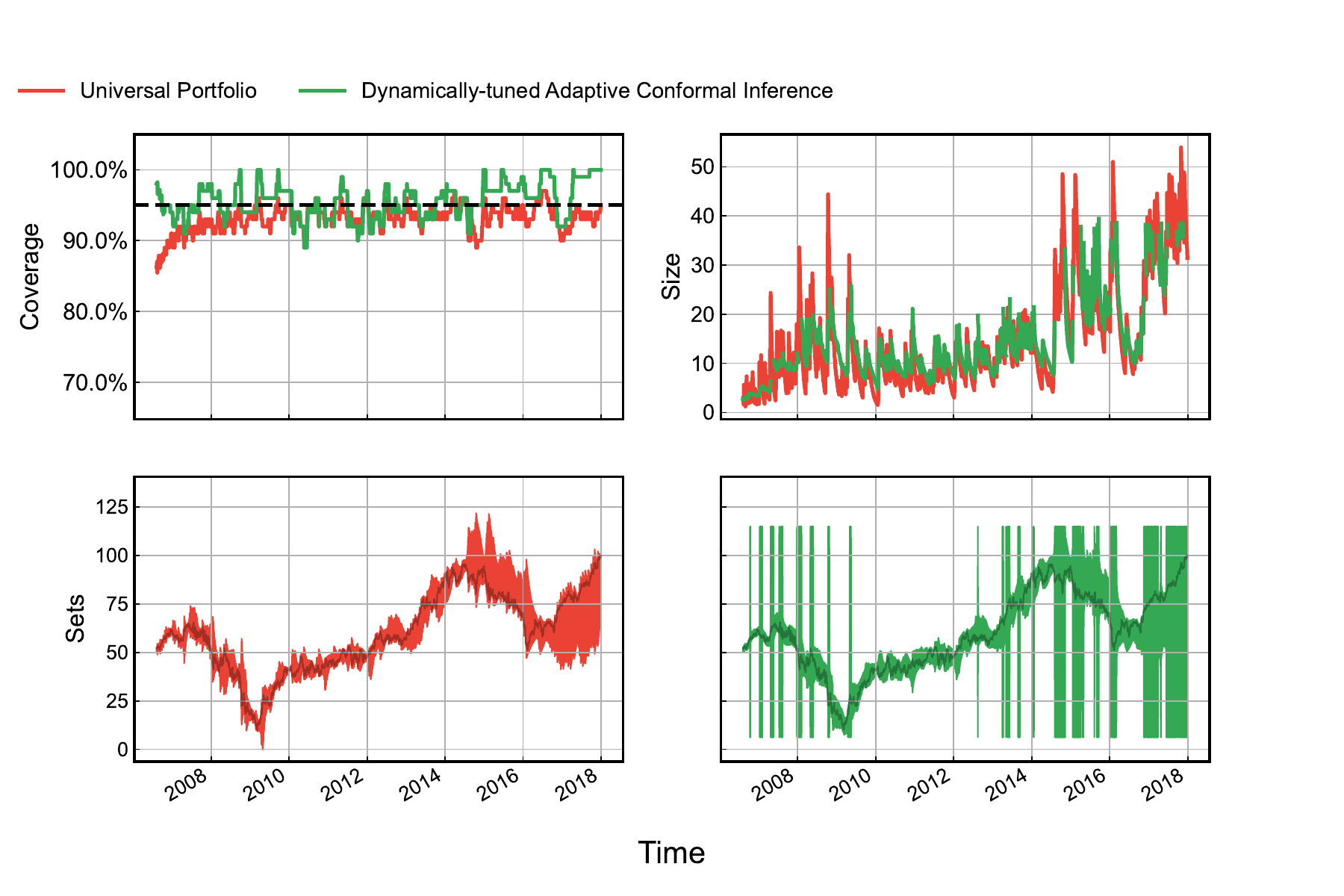}
  \caption{UP-OCP vs. DtACI for forecasting AXP stock return.}
  \label{fig:AXP-UP-vs-DtACI-local}
\end{figure}

\begin{figure}[H]
  \centering
  \includegraphics[trim={0 0 0 1.2cm}, clip, width=0.85\columnwidth]{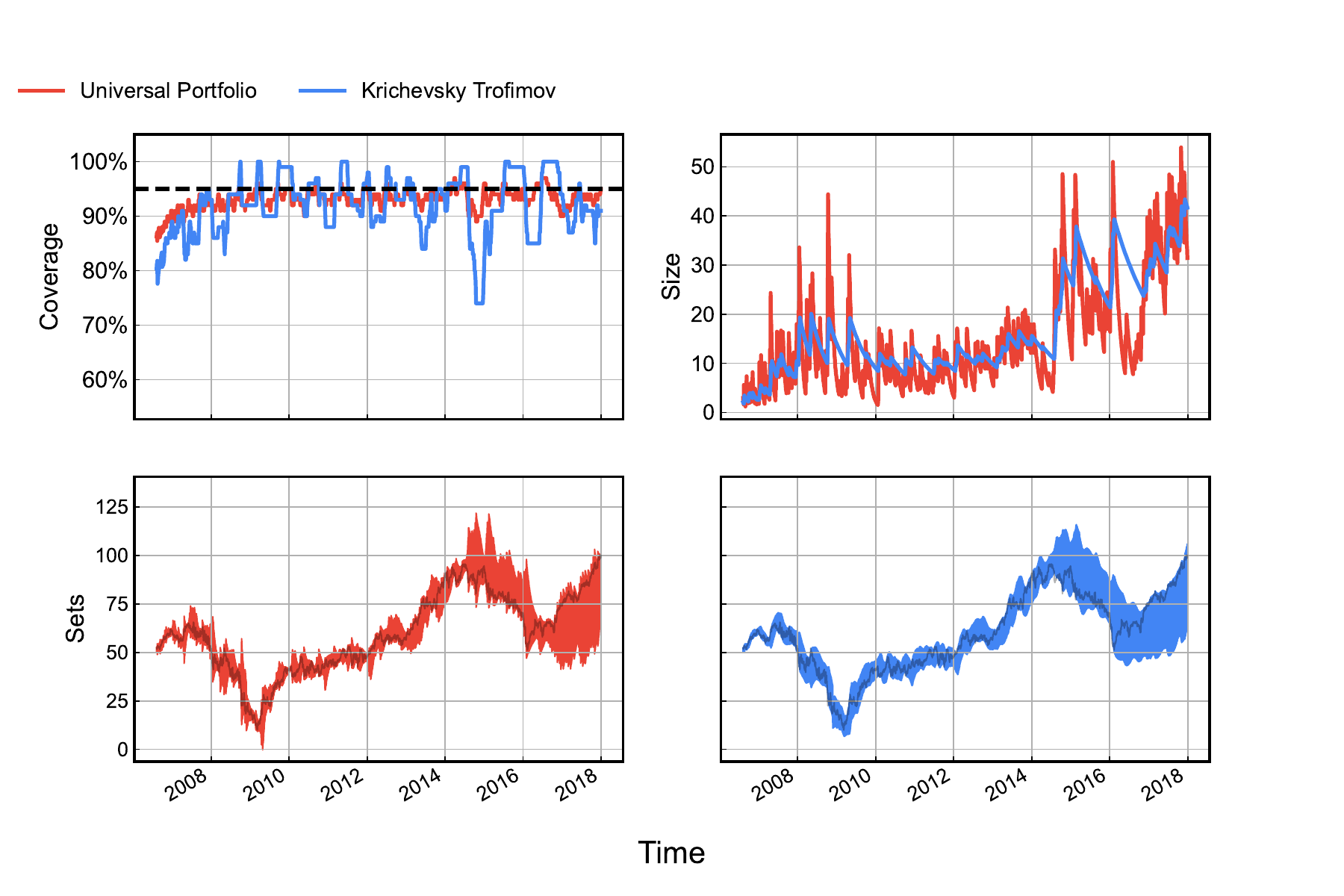}
  \caption{As in Figure~\ref{fig:AXP-UP-vs-DtACI-local}, UP-OCP vs. KT.}
  \label{fig:AXP-UP-vs-KT-local}
\end{figure} 

\begin{figure}[H]
  \centering
  \includegraphics[trim={0 0 0 1.2cm}, clip, width=0.85\columnwidth]{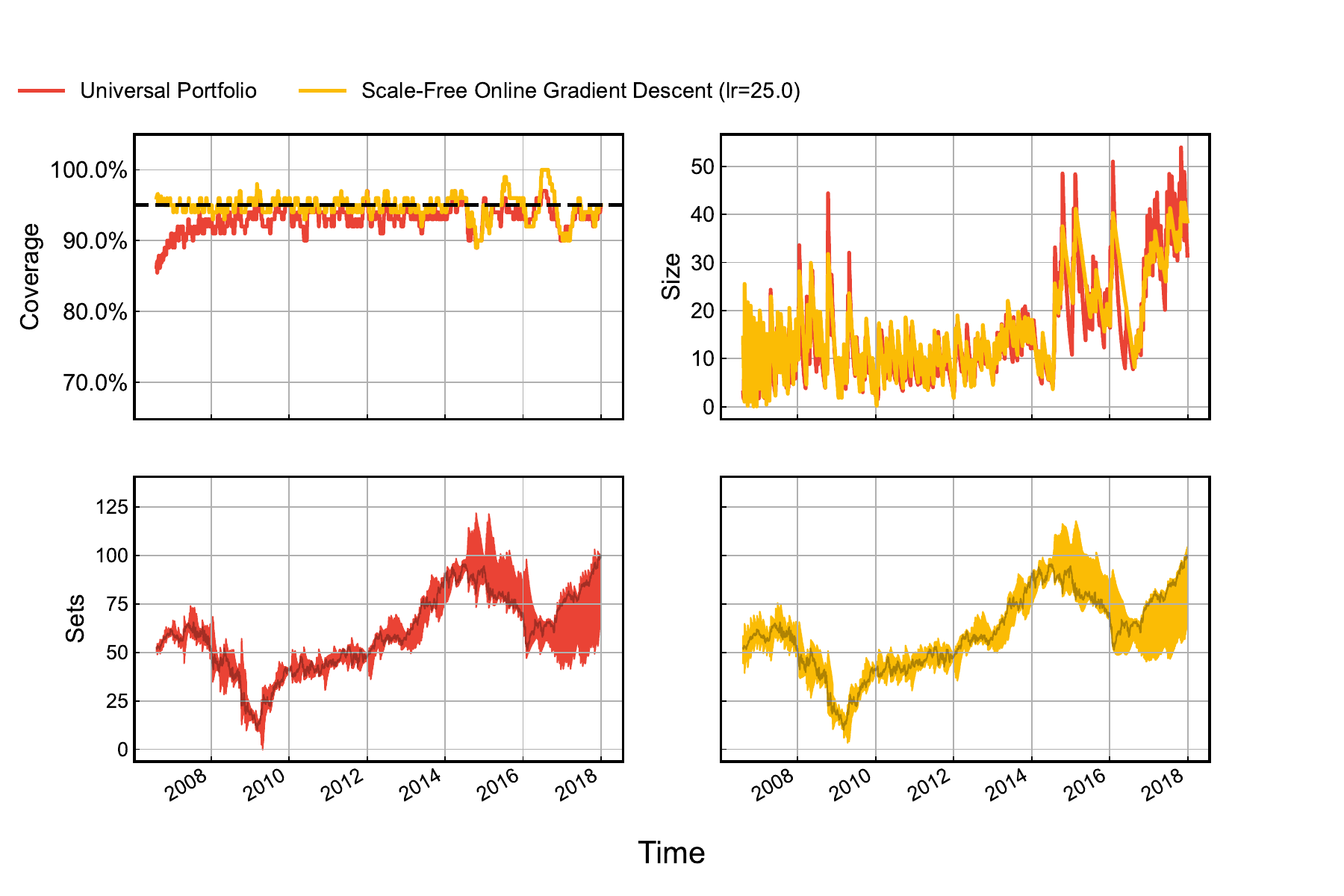}
  \caption{As in Figure~\ref{fig:AXP-UP-vs-DtACI-local}, UP-OCP vs. SFOGD (lr=25).}
  \label{fig:AXP-UP-vs-SGD-25-local}
\end{figure}

\begin{figure}[H]
  \centering
  \includegraphics[trim={0 0 0 1.2cm}, clip, width=0.85\columnwidth]{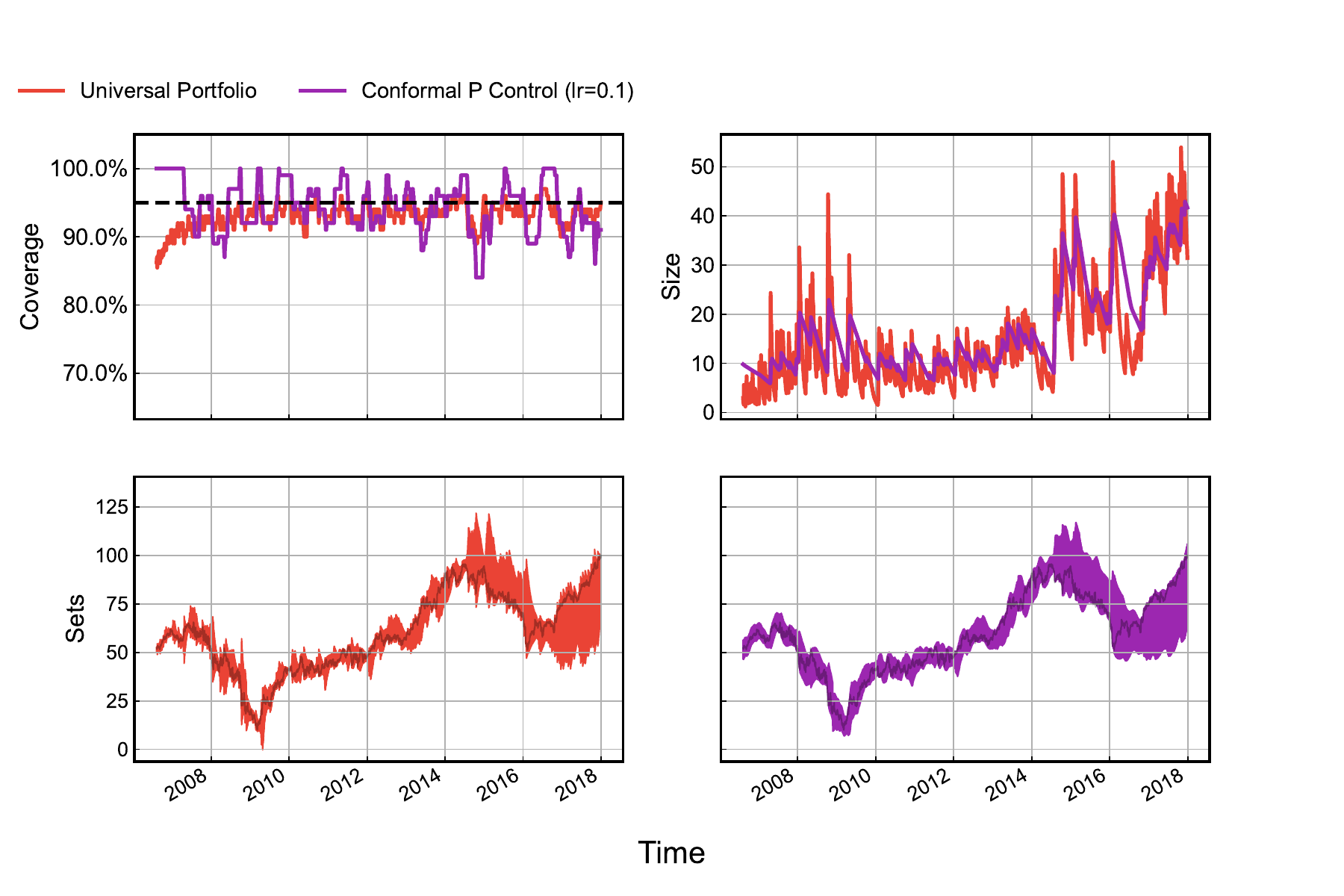}
  \caption{As in Figure~\ref{fig:AXP-UP-vs-DtACI-local}, UP-OCP vs. P Ctrl (lr=0.1).}
  \label{fig:AXP-UP-vs-P_01-local}
\end{figure}

\begin{figure}[H]
  \centering
  \includegraphics[trim={0 0 0 1.2cm}, clip, width=0.85\columnwidth]{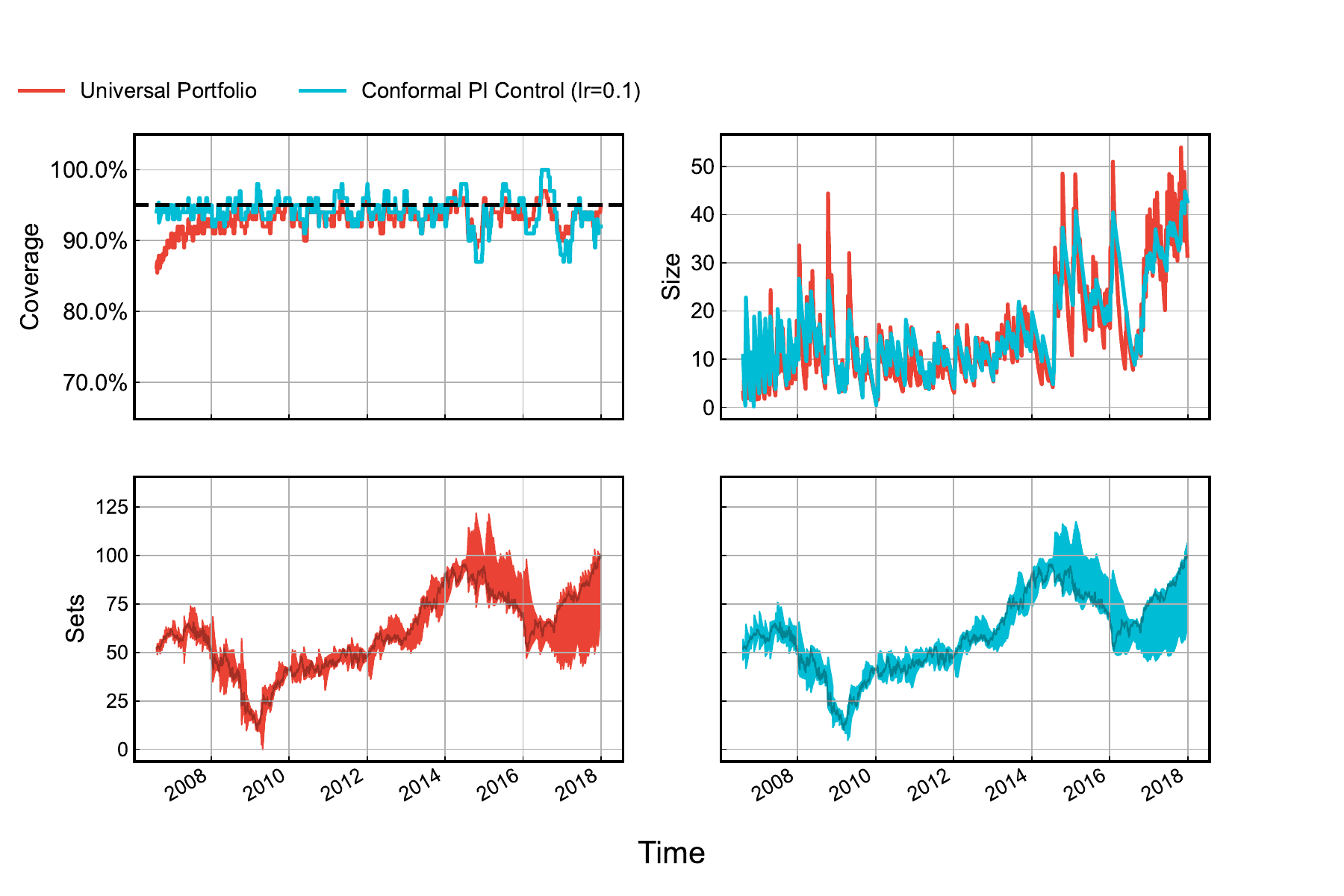}
  \caption{As in Figure~\ref{fig:AXP-UP-vs-DtACI-local}, UP-OCP vs. PI Ctrl (lr=0.1).}
  \label{fig:AXP-UP-vs-PI_01-local}
\end{figure}

For completeness, Table~\ref{tab:AXP-1vall-metrics-full} extends the results from the main text to include the heuristic Conformal P and PI Controllers~\citep{angelopoulos2023conformal}. These methods were tuned via grid search to maximize performance.

\begin{table}[H]
  \caption{Quantitative Comparison on the AXP Dataset. 
  }
  \label{tab:AXP-1vall-metrics-full}
  \centering
  \begin{tabular}{lcccccc}
    \toprule
    & UP & KT & DtACI & SFOGD (lr=25) & P Ctrl (lr=0.1) & PI Ctrl (lr=0.1) \\
    \midrule
    Marginal coverage & 0.932 & 0.92 & 0.956 & 0.948 & 0.942 & 0.941 \\
    Longest err sequence & \textbf{4} & 15 & 6 & \textbf{3} & 7 & \textbf{5} \\
    Average set size & \textbf{14.8} & 16.9 & $\infty$ & 16.4 & 16.8 & 16.1 \\
    Median set size & \textbf{11.5} & 12.9 & 12.6 & 13.8 & 13.1 & 13.2 \\
    75\% quantile set size & \textbf{18.8} & 24.9 & 21.8 & 21.5 & 21.5 & 20.9 \\
    90\% quantile set size & \textbf{32.3} & 32.5 & $\infty$ & 32.3 & 32.4 & \textbf{32} \\
    95\% quantile set size & 38 & 36.1 & $\infty$ & 36.1 & 36.6 & 35.9 \\
    \bottomrule
  \end{tabular}
\end{table}

\textbf{More Pareto Frontiers and Target-level Tracking.} 
In Section~\ref{sec:experiments}, we prioritized the presentation of \emph{average} prediction set size to penalize the infinite sets produced by algorithms like DtACI. However, robust statistics such as the median and 75\% quantile provide insight into the typical performance of the algorithms, ignoring the heavy tails.  We present additional Pareto frontiers for these metrics in Figures~\ref{fig:AXP-Pareto-average} through \ref{fig:AXP-Pareto-q75}.

These results clarify that DtACI is not inefficient on average days; its median performance overlaps with UP-OCP. The inefficiency is driven entirely by its inability to handle tail events properly. UP-OCP, however, dominates or matches all baselines across both mean and robust metrics, proving it is both stable in the worst case and efficient in the typical case.

Figure~\ref{fig:AXP-Pareto-median} clarifies that DtACI is not inefficient on average days; its median performance overlaps with UP-OCP. The inefficiency is driven entirely by its inability to handle tail events properly, which is shown by the divergence in the mean (Figure~\ref{fig:AXP-Pareto-average}) and 75\% quantile metrics (Figure~\ref{fig:AXP-Pareto-q75}). UP-OCP, however, dominates or matches all baselines across both mean and robust metrics, proving it is both stable in the worst case and efficient in the typical case.

\textbf{Target-level Tracking.} 
Finally, Figure~\ref{fig:AXP-tracking-targets} illustrates the ability of the algorithms to track the user-specified target coverage across a spectrum of $\alpha$ values.

\begin{figure}[H]
  \centering
  % --- Top Row: 1x2 ---
  \begin{minipage}[t]{0.49\columnwidth}
    \centering
    \includegraphics[trim={0 0 0 0.5cm}, clip, width=\linewidth]{AXP_mean_burnin50_pareto_frontier.pdf}
    \caption{Mean prediction set sizes on AXP.}
    \label{fig:AXP-Pareto-average}
  \end{minipage}
  \hfill
  \begin{minipage}[t]{0.49\columnwidth}
    \centering
    \includegraphics[trim={0 0 0 0.5cm}, clip, width=\linewidth]{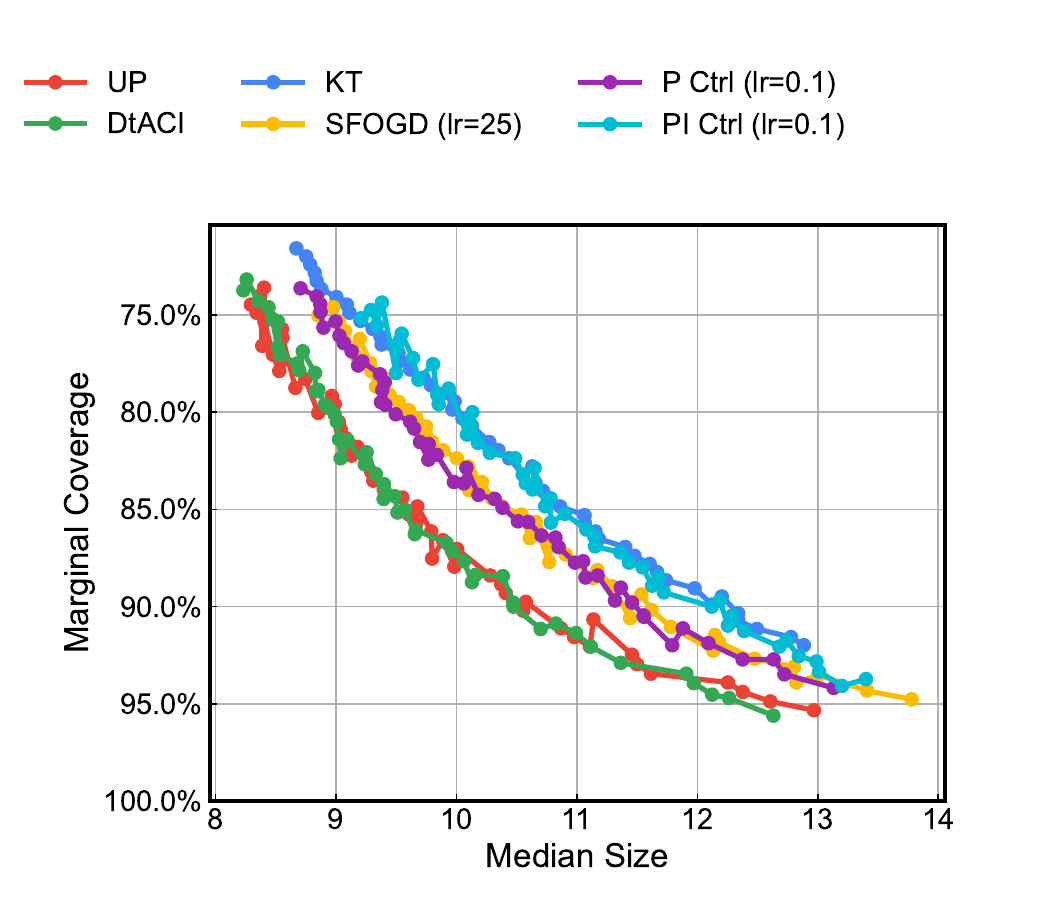}
    \caption{Median prediction set sizes on AXP.}
    \label{fig:AXP-Pareto-median}
  \end{minipage}
  
  \vspace{0.5cm}

  \begin{minipage}[t]{0.49\columnwidth}
    \centering
    \includegraphics[trim={0 0 0 0.5cm}, clip, width=\linewidth]{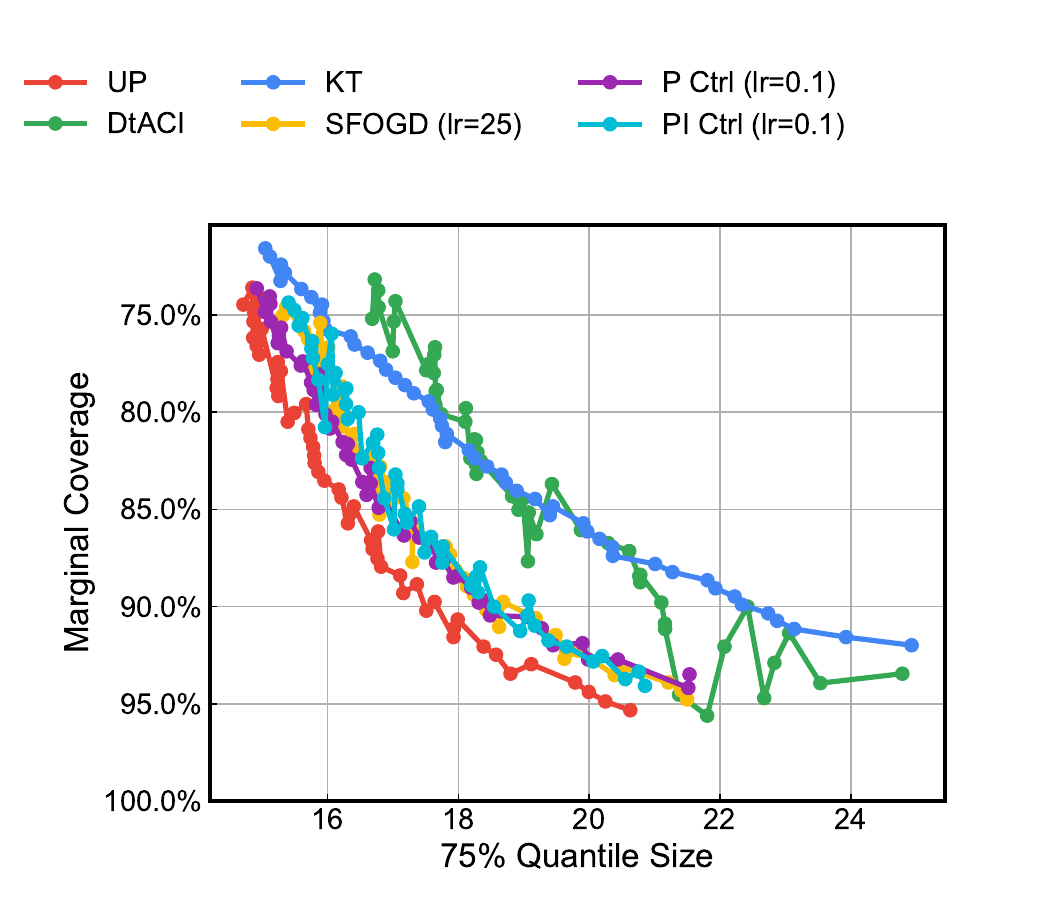}
    \caption{75\% quantile prediction set sizes on AXP.}
    \label{fig:AXP-Pareto-q75}
  \end{minipage}
  \hfill
  \begin{minipage}[t]{0.49\columnwidth}
    \centering
    \includegraphics[trim={0 0cm 0 1cm}, clip, width=\linewidth]{AXP_target_level_tracking.pdf}
    \caption{Realized vs. target coverage on AXP. Most methods track the diagonal within a small tolerance ($\pm$ 0.03).}
    \label{fig:AXP-tracking-targets}
  \end{minipage}
\end{figure}

\newpage
\section{Additional Experiments}
\label{app:additional-experiments}
To demonstrate generalization of our method, we extend our evaluation to a diverse set of real-world and synthetic benchmarks. These include three additional major stocks (AAPL, AMZN, GOOGL), an electricity demand dataset (NSW), and three synthetic environments designed to test adaptivity (sinusoid, stationary wavelet, and quadratic drift). The observed behaviors across these experiments remain qualitatively consistent with the findings from the AXP dataset. Therefore, we omit a detailed discussion to avoid redundancy and present the following figures and tables for completeness. We will pinpoint interesting observations wherever applicable.

\textbf{More on Experimental Setup.}
For the financial datasets (AAPL, AMZN, GOOGL), we follow the same protocol as the AXP experiment: we employ the Prophet model \citep{taylor2018forecasting} as the base forecaster, targeting a miscoverage rate of $\alpha=0.05$ with an initial burn-in period of $T_{\text{burnin}}=100$ days.
For the electricity demand dataset, we utilize a standard Autoregressive (AR) model with a burn-in of $T_{\text{burnin}}=300$ steps to capture the high-frequency intraday seasonality.
The synthetic experiments generate nonconformity scores directly to isolate specific distributional shifts (periodicity, sparse spikes, and drift), also using a burn-in of 300 steps.
To ensure statistical significance, all synthetic results are reported as averages over 10 independent random seeds, with error bars in figures denoting the standard error of the mean.
In all comparisons, we evaluate UP-OCP against the full suite of parameter-free (KT, DtACI) and optimized baselines (SF-OGD, P-Control, PI-Control) described in Section~\ref{sec:experiments}.

\textbf{Take Away.} It is worth noting that the optimal hyperparameters for the baseline methods vary across datasets. We illustrate this variability in Appendix~\ref{app:additional-experiments}, highlighting that no single hyperparameter configuration yields consistent performance across all benchmarks. This sensitivity necessitates dataset-specific tuning, a requirement that our parameter-free UP-OCP method avoids.

\subsection{AAPL Dataset}

\textbf{Local Adaptivity.}

\begin{figure}[H]
  \centering
  \includegraphics[trim={0 0 0 1.2cm}, clip, width=0.85\columnwidth]{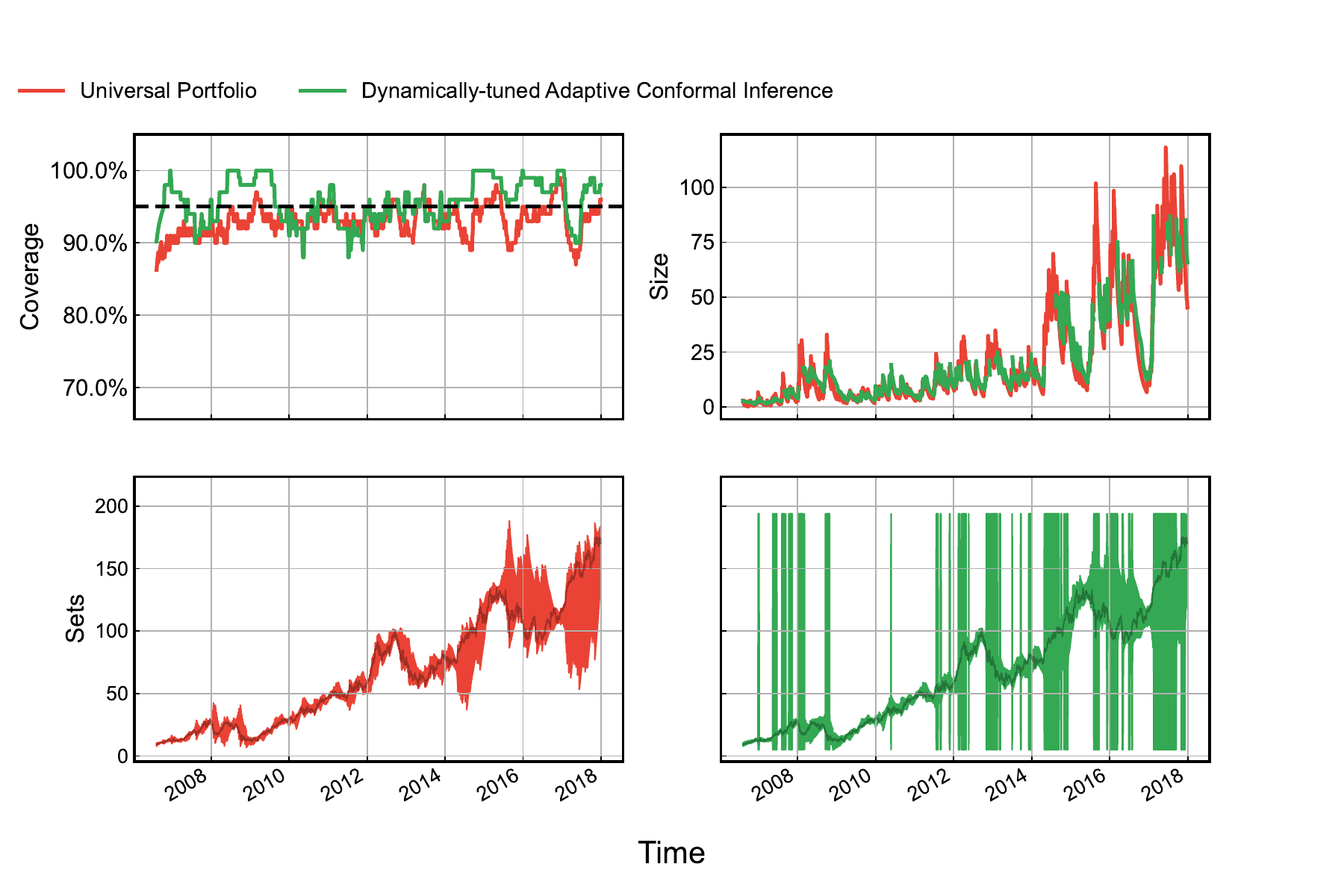}
  \caption{UP-OCP vs. DtACI for forecasting AAPL stock return.}
  \label{fig:AAPL-UP-vs-DtACI-local}
\end{figure} 

\begin{figure}[H]
  \centering
  \includegraphics[trim={0 0 0 1.2cm}, clip, width=0.85\columnwidth]{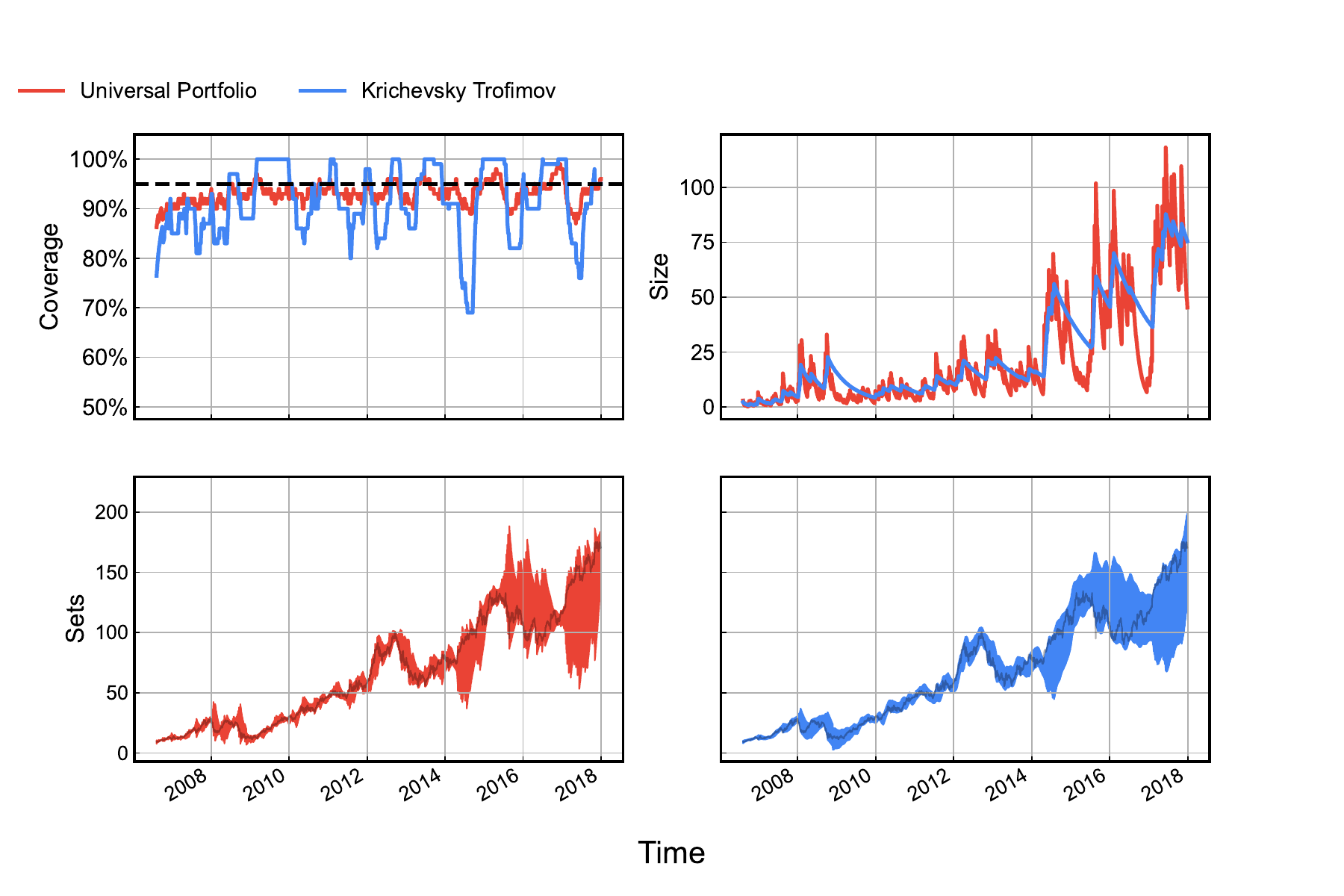}
  \caption{As in Figure~\ref{fig:AAPL-UP-vs-DtACI-local}, UP-OCP vs. KT.}
  \label{fig:AAPL-UP-vs-KT-local}
\end{figure}  

\begin{figure}[H]
  \centering
  \includegraphics[trim={0 0 0 1.2cm}, clip, width=0.85\columnwidth]{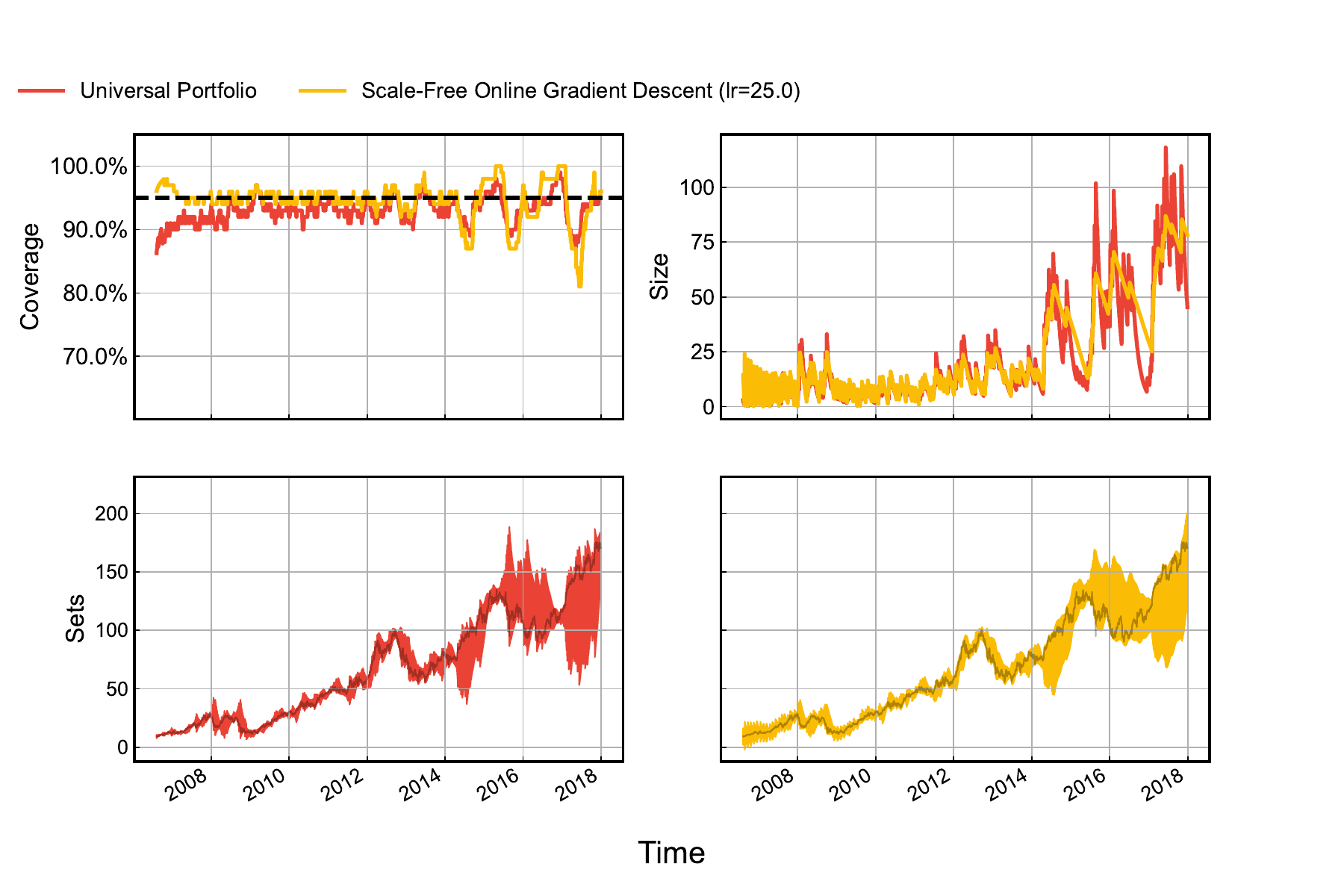}
  \caption{As in Figure~\ref{fig:AAPL-UP-vs-DtACI-local}, UP-OCP vs. SFOGD (lr=25).}
  \label{fig:AAPL-UP-vs-SGD-25-local}
\end{figure}

\begin{figure}[H]
  \centering
  \includegraphics[trim={0 0 0 1.2cm}, clip, width=0.85\columnwidth]{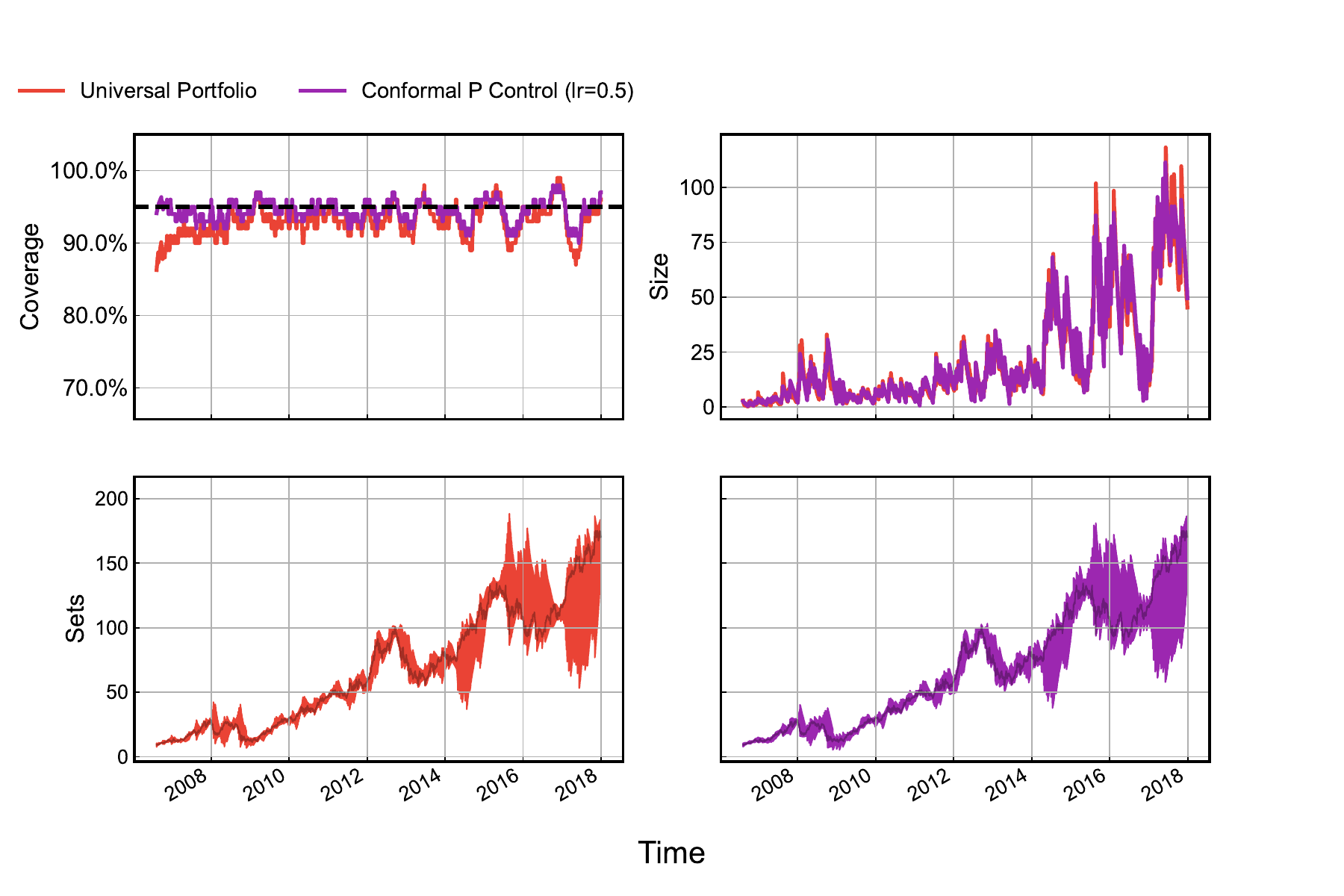}
  \caption{As in Figure~\ref{fig:AAPL-UP-vs-DtACI-local}, UP-OCP vs. P Ctrl (lr=0.5).}
  \label{fig:AAPL-UP-vs-P_05-local}
\end{figure}

\begin{figure}[H]
  \centering
  \includegraphics[trim={0 0 0 1.2cm}, clip, width=0.85\columnwidth]{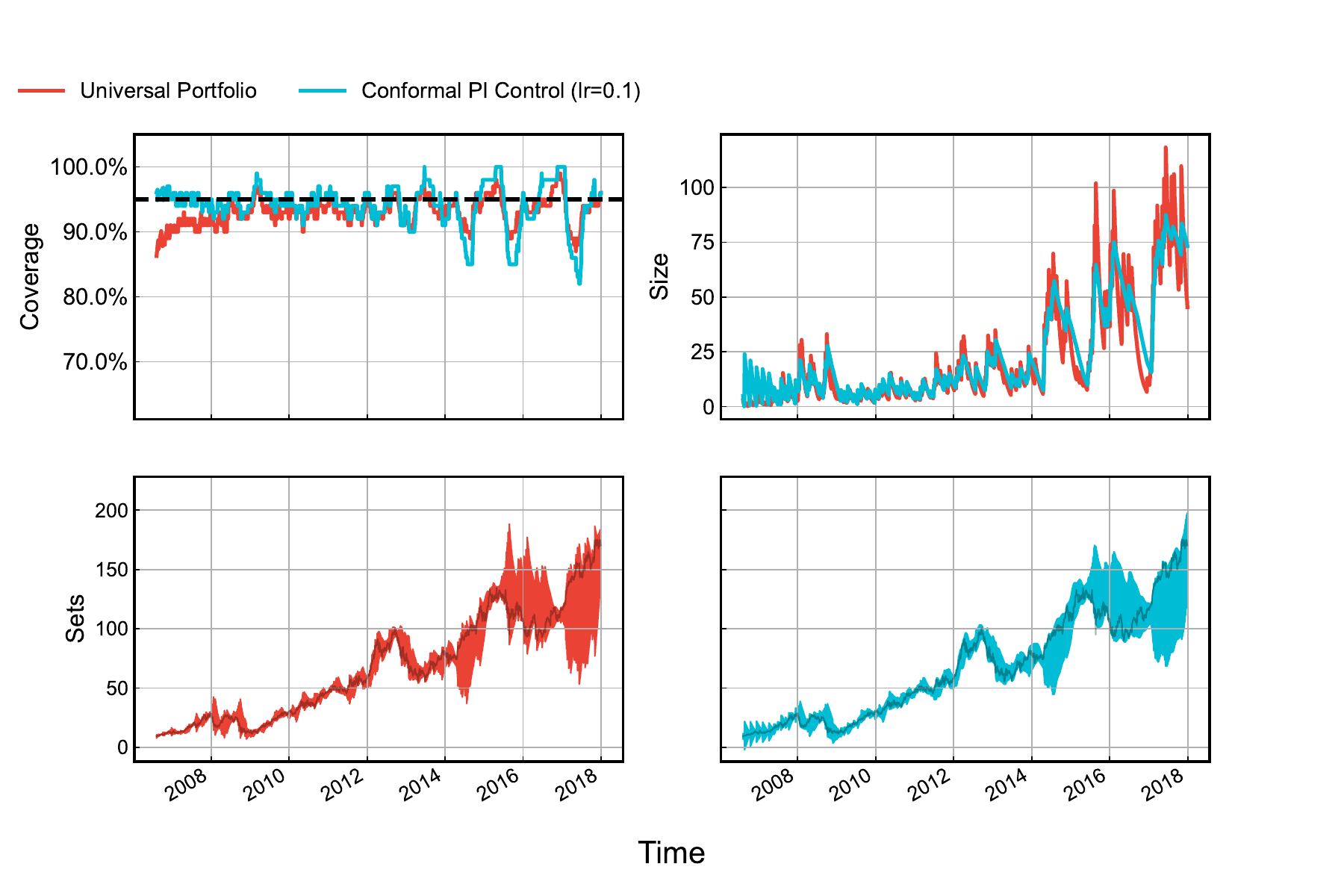}
  \caption{As in Figure~\ref{fig:AAPL-UP-vs-DtACI-local}, UP-OCP vs. PI Ctrl (lr=0.1).}
  \label{fig:AAPL-UP-vs-PI_01-local}
\end{figure}

\begin{table}[H]
  \caption{Quantitative Comparison on the AAPL Dataset.}
  \label{tab:AAPL-1vall-metrics-full}
  \centering
  \begin{tabular}{lcccccc}
    \toprule
    & UP & KT & DtACI & SFOGD (lr=25) & P Ctrl (lr=0.5) & PI Ctrl (lr=0.1) \\
    \midrule
    Marginal coverage      & 0.932 & 0.915 & 0.958    & 0.945 & 0.945 & 0.941 \\
    Longest err sequence   & \textbf{2}     & 16    & 4        & 4     & \textbf{2}     & 5     \\
    Average set size       & \textbf{21.6}  & 24.4  & $\infty$ & 23.7  & 22.8  & 23.2  \\
    Median set size        & \textbf{11.7}  & 14.5  & 14       & 13.8  & 13    & 13.7  \\
    75\% quantile set size & \textbf{27.3}  & 39.8  & 55.8     & 36.9  & 29.2  & 33.3  \\
    90\% quantile set size & 60.1  & 58.8  & $\infty$ & 59    & 63.7  & 59.8  \\
    95\% quantile set size & 75.9  & 75.9  & $\infty$ & 74    & 77.4  & 74.5  \\
    \bottomrule
  \end{tabular}
\end{table}

\textbf{More Pareto Frontiers and Target-level Tracking.} 

\begin{figure}[H]
  \centering
  % --- Top Row: 1x2 ---
  \begin{minipage}[t]{0.49\columnwidth}
    \centering
    \includegraphics[trim={0 0 0 0.5cm}, clip, width=\linewidth]{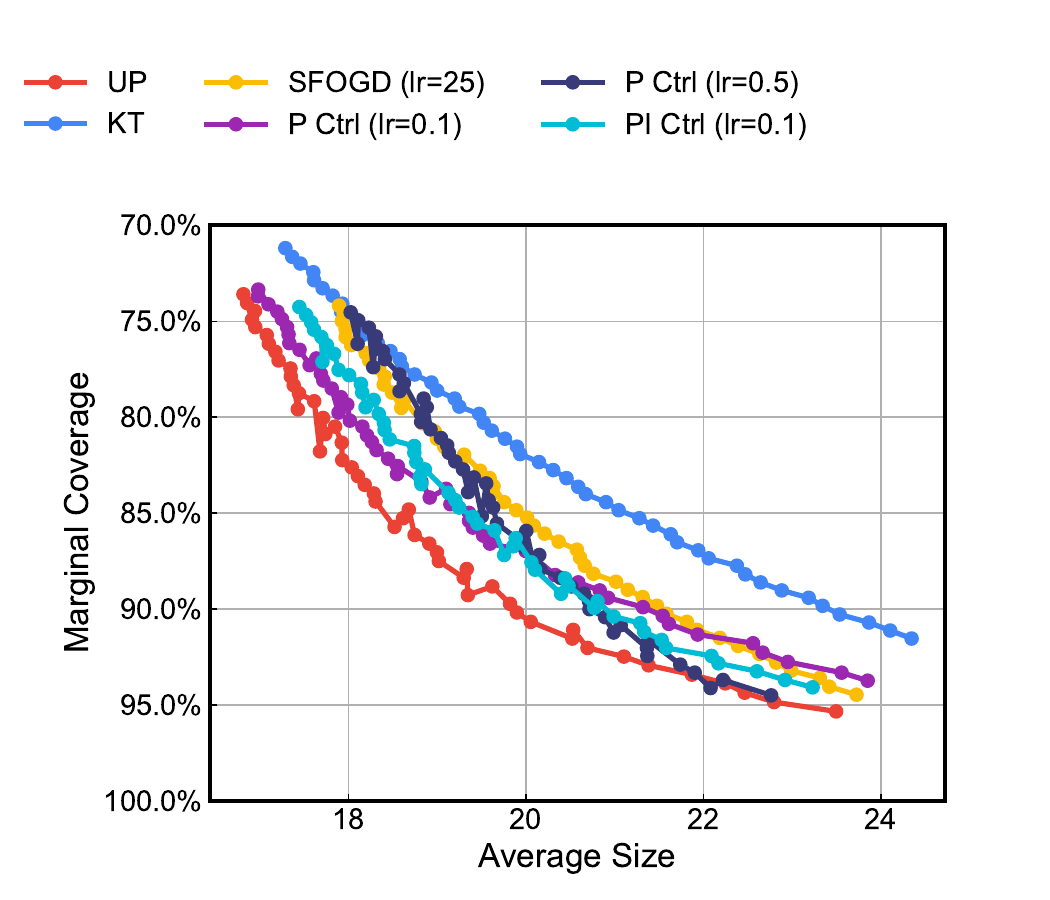}
    \caption{Mean prediction set sizes on AAPL.}
    \label{fig:AAPL-Pareto-average}
  \end{minipage}
  \hfill
  \begin{minipage}[t]{0.49\columnwidth}
    \centering
    \includegraphics[trim={0 0 0 0.5cm}, clip, width=\linewidth]{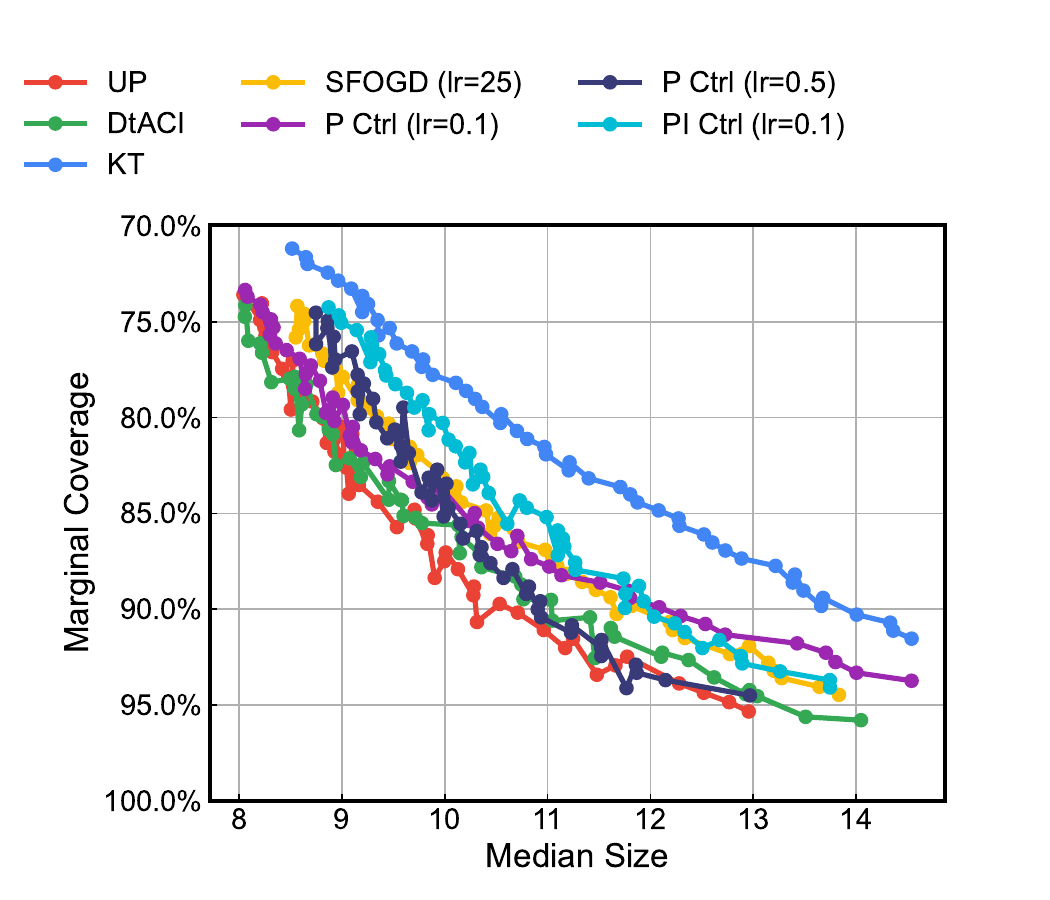}
    \caption{Median prediction set sizes on AAPL.}
    \label{fig:AAPL-Pareto-median}
  \end{minipage}
  
  \vspace{0.5cm}

  \begin{minipage}[t]{0.49\columnwidth}
    \centering
    \includegraphics[trim={0 0 0 0.5cm}, clip, width=\linewidth]{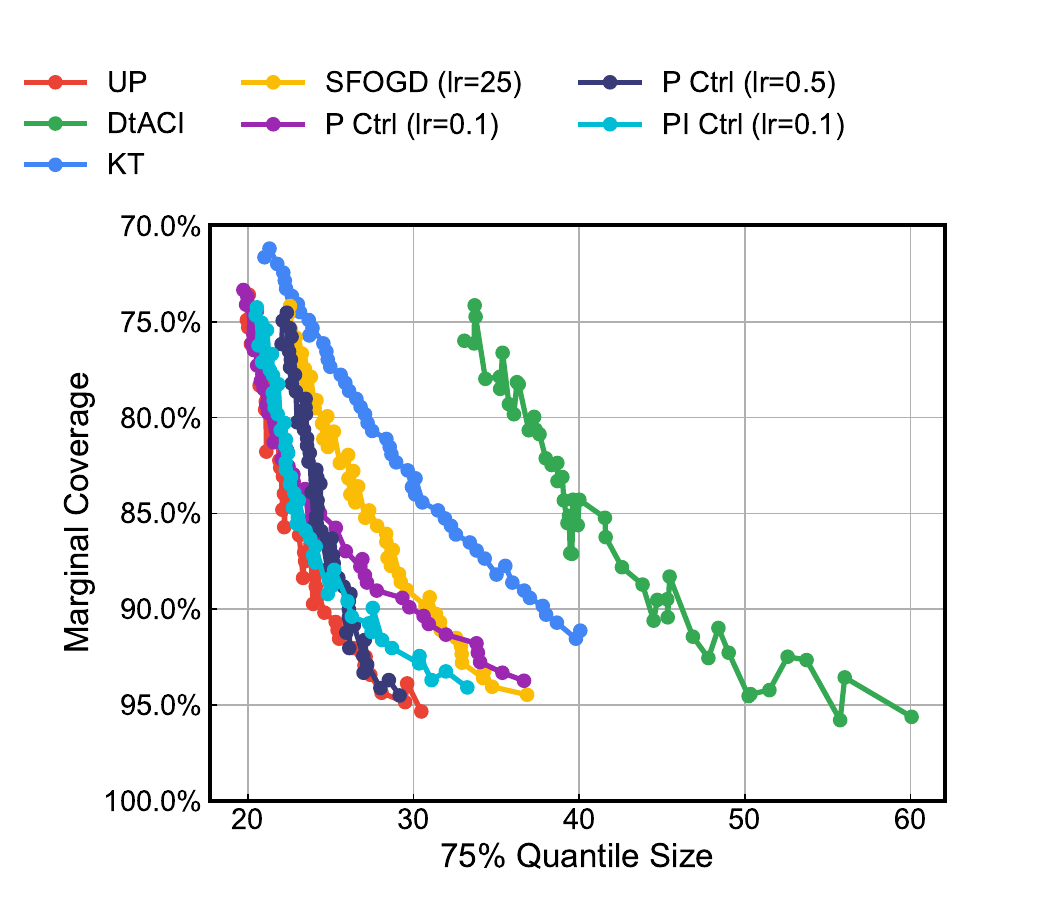}
    \caption{75\% quantile prediction set sizes on AAPL.}
    \label{fig:AAPL-Pareto-q75}
  \end{minipage}
  \hfill
  \begin{minipage}[t]{0.49\columnwidth}
    \centering
    \includegraphics[trim={0 0 0 1cm}, clip, width=\linewidth]{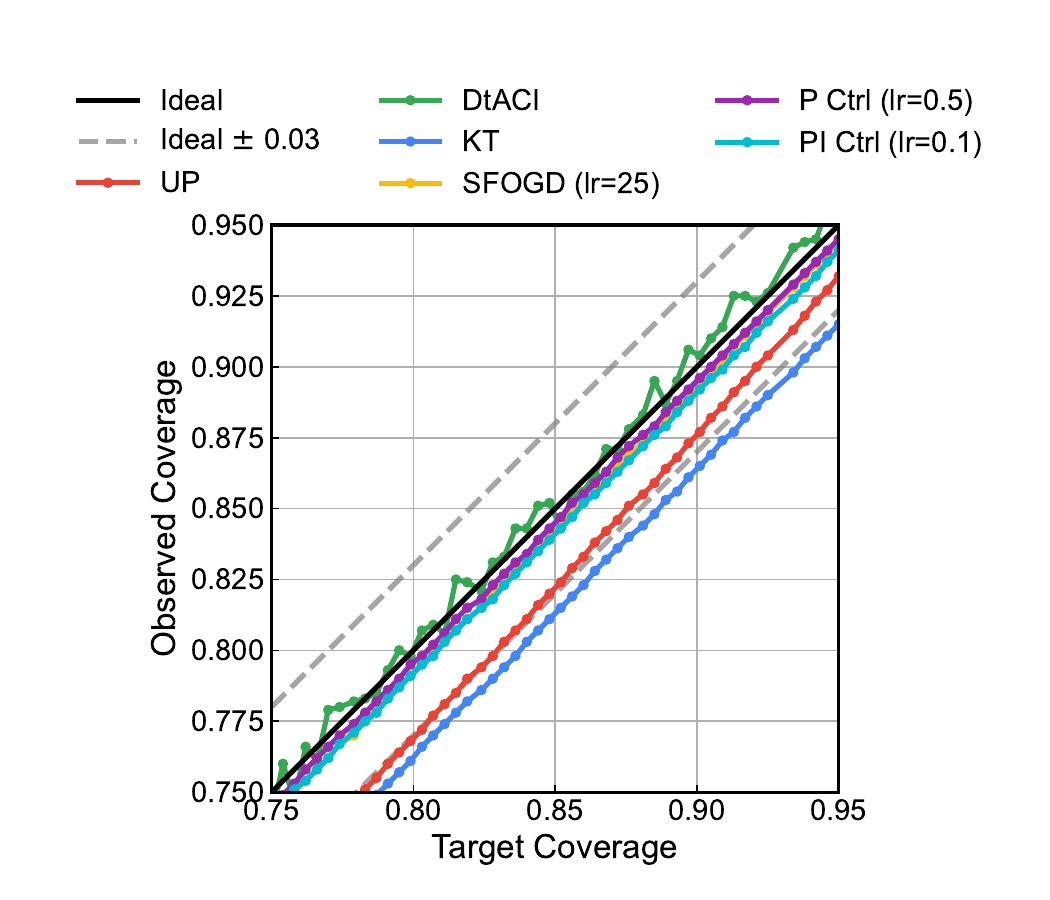}
    \caption{Realized vs. target coverage on AAPL. Most methods track the diagonal within a small tolerance ($\pm$ 0.03).}
    \label{fig:AAPL-tracking-targets}
  \end{minipage}
\end{figure}

\newpage
\begin{remark}
The results here on all three metric offer a compelling demonstration of adaptivity. UP-OCP (red) does not simply outperform a single baseline; rather, it effectively automates the hyperparameter selection process across the target levels.

Observe the behavior of the tuned P-Controllers (lr=0.1, purple): at moderate targets (75--85\% coverage), the controller is optimal, while the higher-gain controller (lr=0.5, dark blue) is less efficient. Conversely, at high targets (90--95\%), the high-gain controller becomes necessary to maintain tight sets, while the low-gain version falls behind. UP-OCP remarkably matches the performance of the \emph{best-tuned} baseline in \emph{each} specific regime. It is able to align with the purple curve at lower targets and the dark blue curve at higher targets, without requiring any manual tuning or gain scheduling.
\end{remark}

\newpage
\subsection{AMZN Dataset}

\textbf{Local Adaptivity.}
\begin{figure}[H]
  \centering
  \includegraphics[trim={0 0 0 1.2cm}, clip, width=0.85\columnwidth]{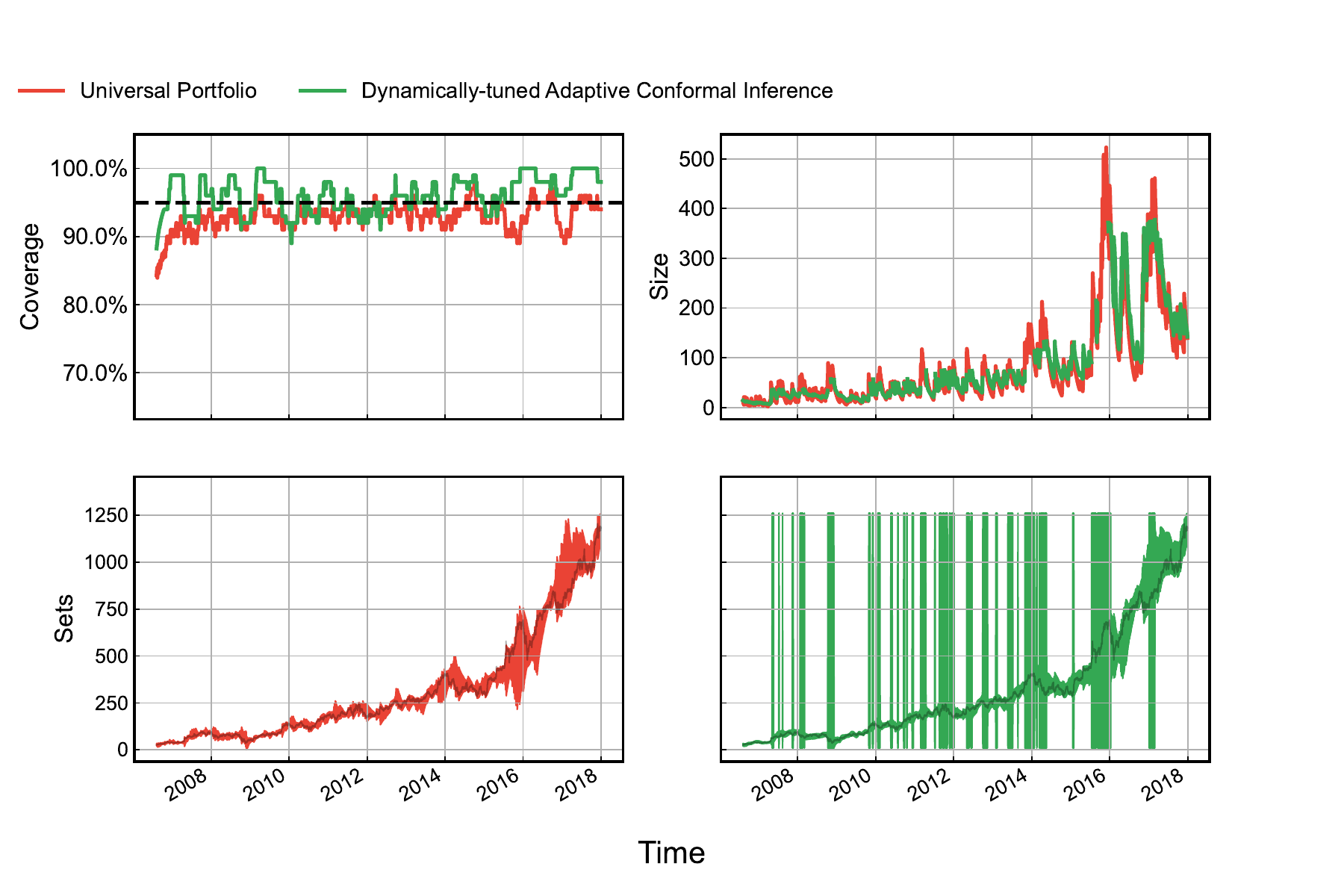}
  \caption{UP-OCP vs. DtACI for forecasting AMZN stock return.}
  \label{fig:AMZN-UP-vs-DtACI-local}
\end{figure} 

\begin{figure}[H]
  \centering
  \includegraphics[trim={0 0 0 1.2cm}, clip, width=0.85\columnwidth]{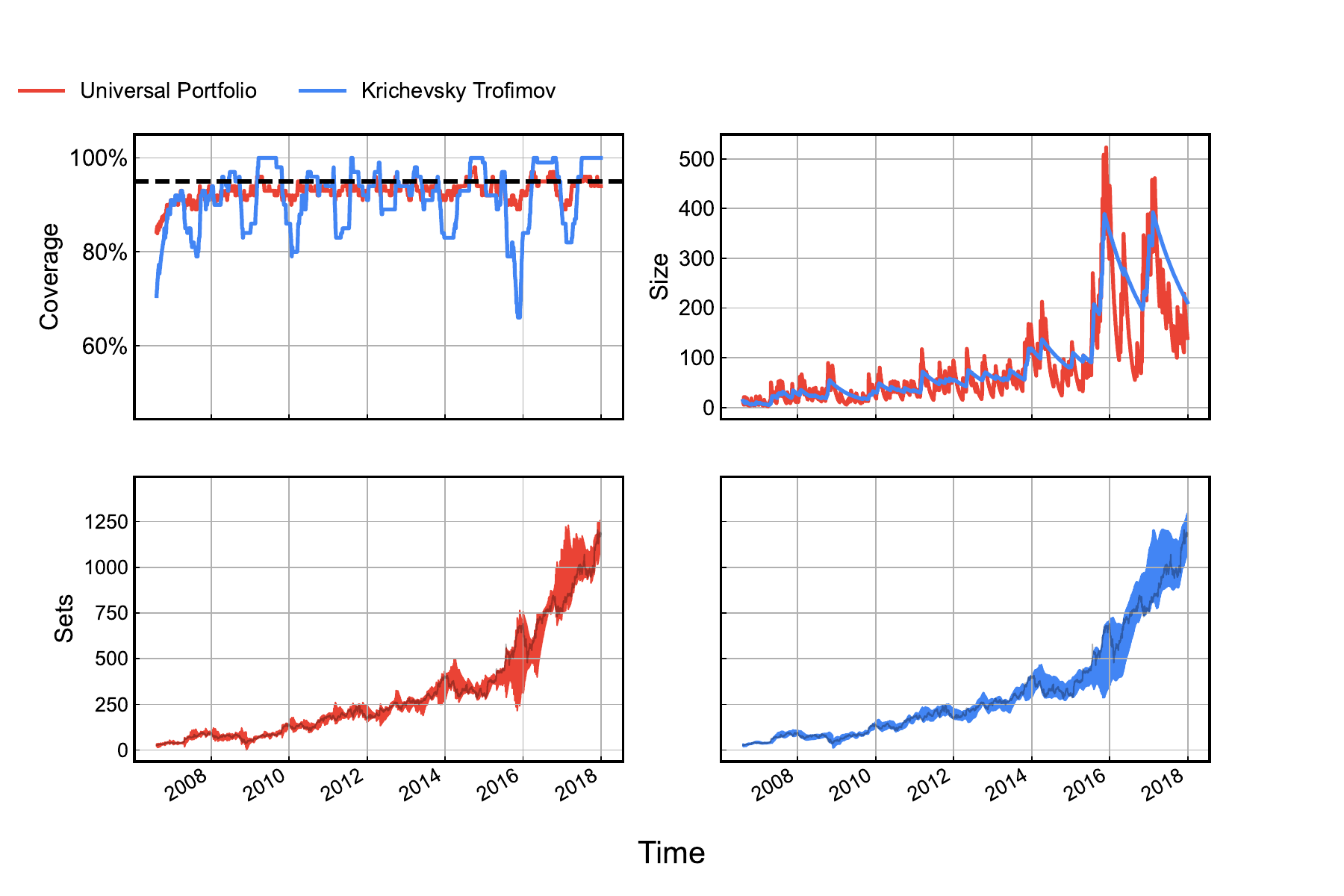}
  \caption{As in Figure~\ref{fig:AMZN-UP-vs-DtACI-local}, UP-OCP vs. KT.}
  \label{fig:AMZN-UP-vs-KT-local}
\end{figure} 

\begin{figure}[H]
  \centering
  \includegraphics[trim={0 0 0 1.2cm}, clip, width=0.85\columnwidth]{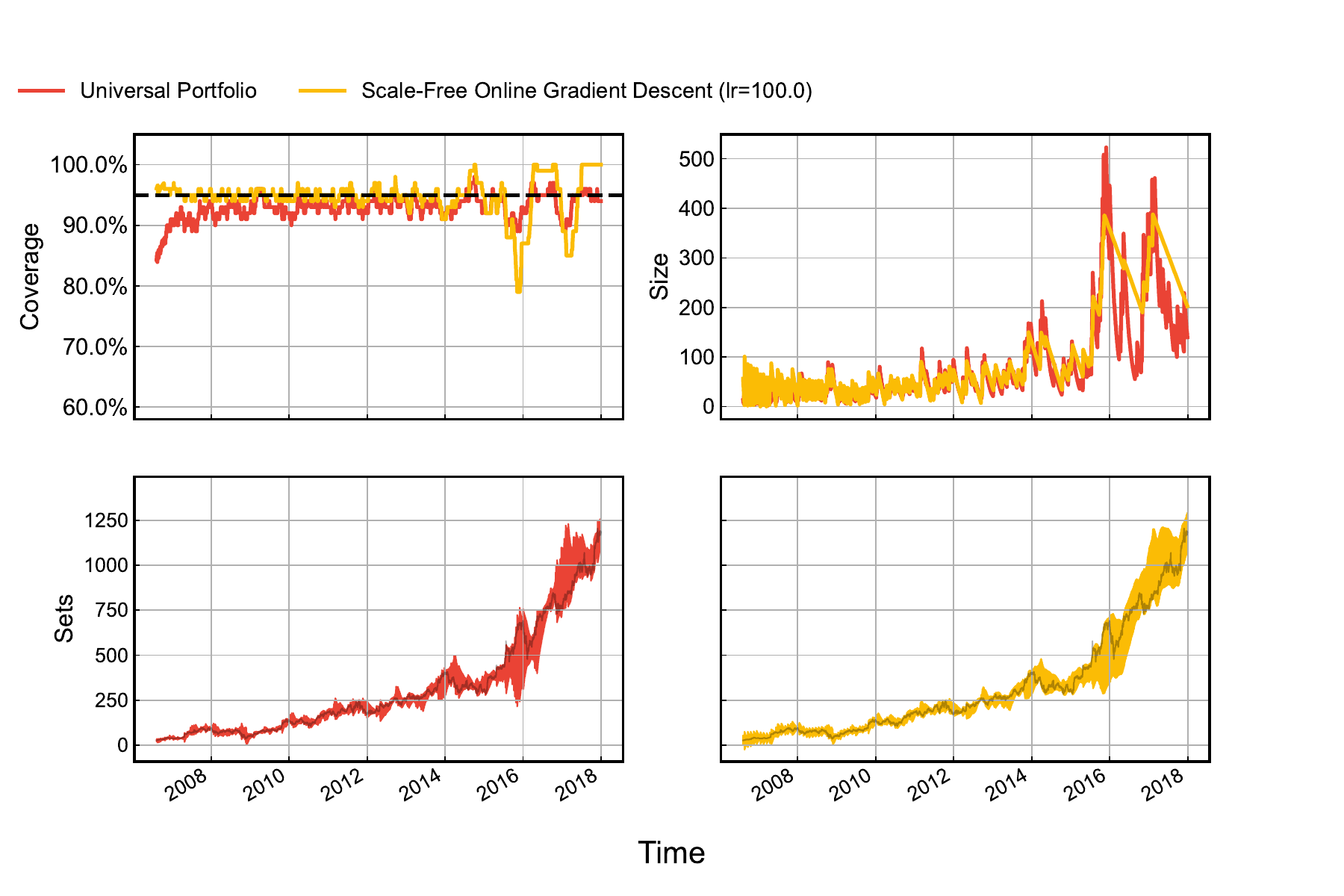}
  \caption{As in Figure~\ref{fig:AMZN-UP-vs-DtACI-local}, UP-OCP vs. SFOGD (lr=100).}
  \label{fig:AMZN-UP-vs-SGD-100-local}
\end{figure}

\begin{figure}[H]
  \centering
  \includegraphics[trim={0 0 0 1.2cm}, clip, width=0.85\columnwidth]{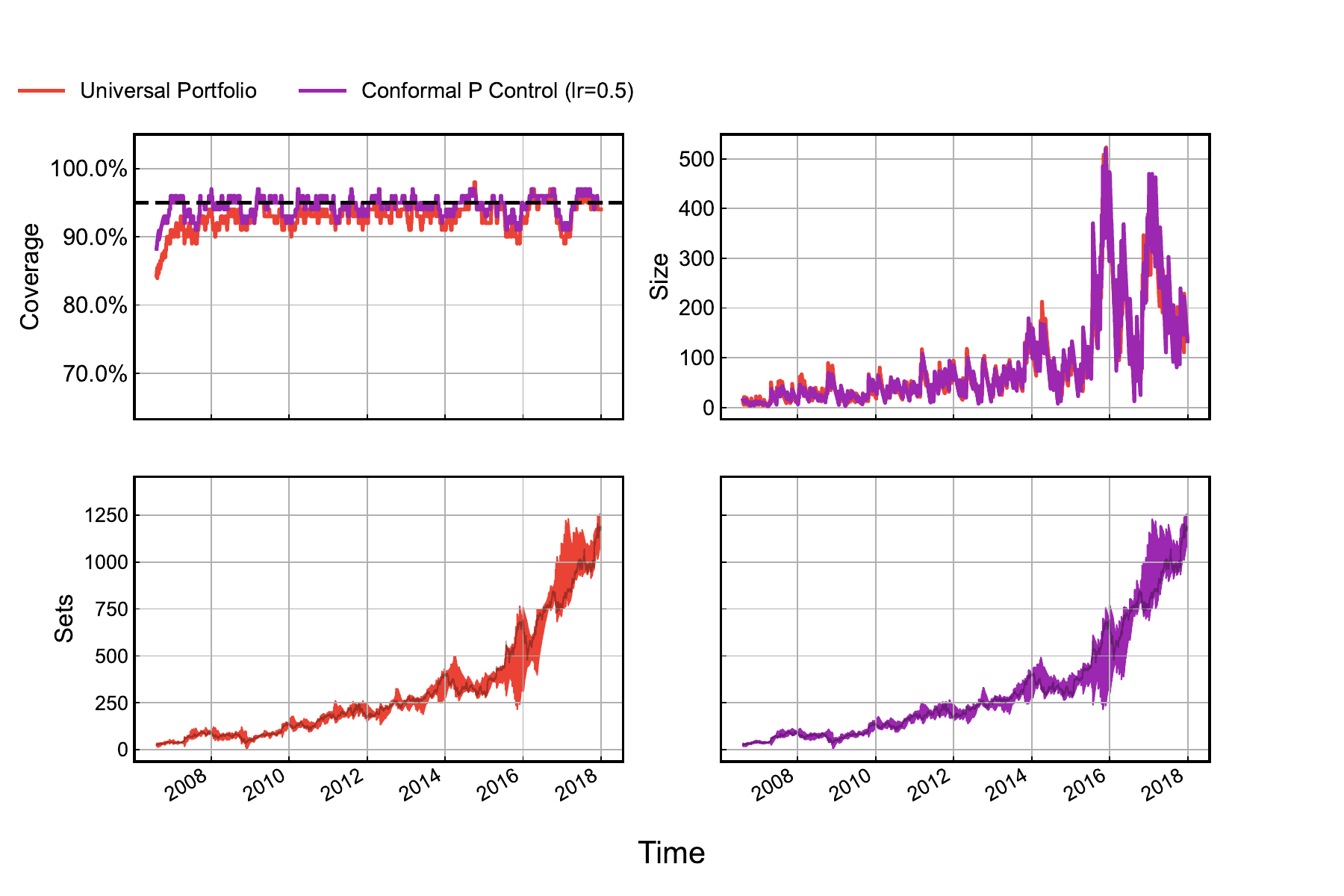}
  \caption{As in Figure~\ref{fig:AMZN-UP-vs-DtACI-local}, UP-OCP vs. P Ctrl (lr=0.5).}
  \label{fig:AMZN-UP-vs-P_05-local}
\end{figure}

\begin{figure}[H]
  \centering
  \includegraphics[trim={0 0 0 1.2cm}, clip, width=0.85\columnwidth]{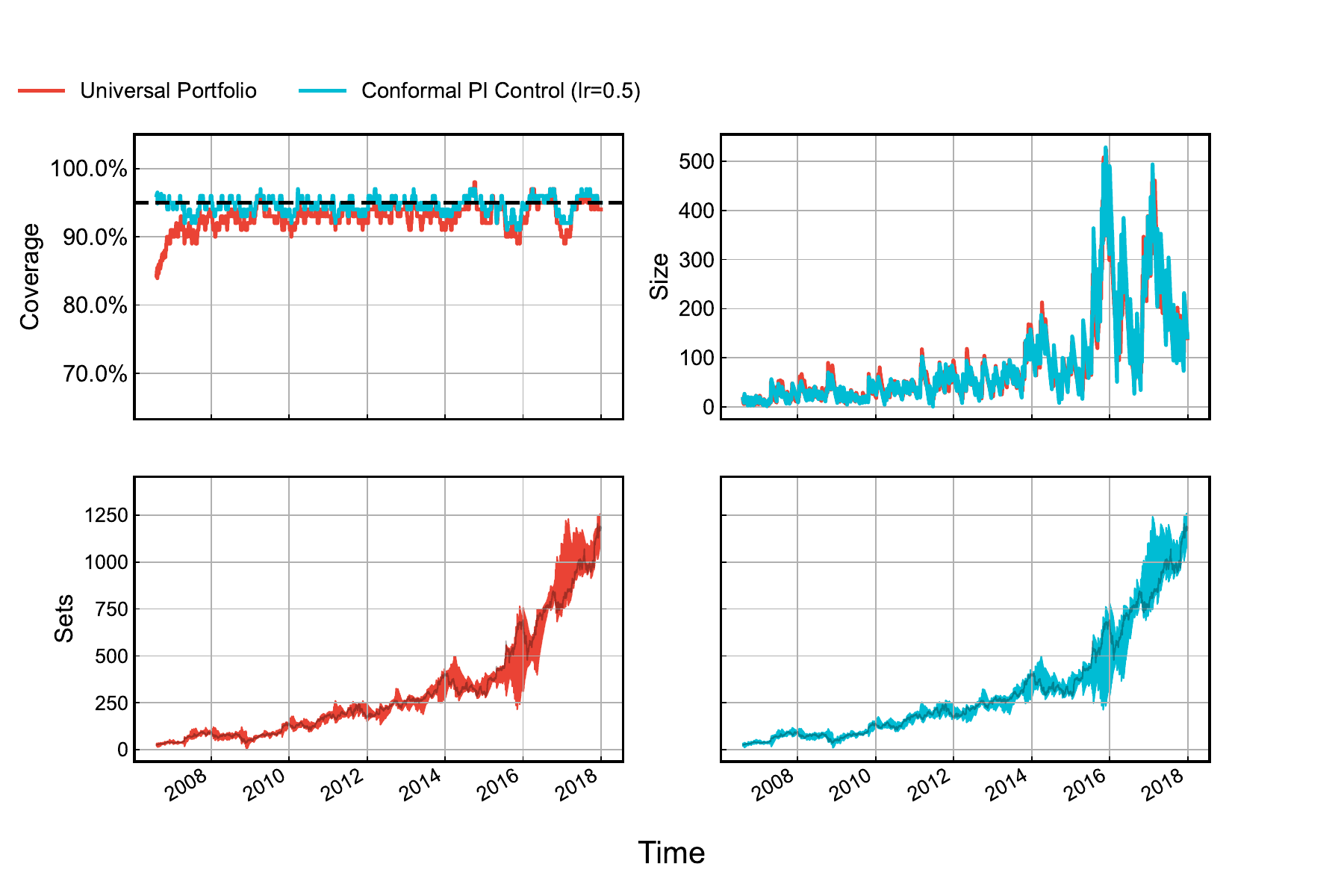}
  \caption{As in Figure~\ref{fig:AMZN-UP-vs-DtACI-local}, UP-OCP vs. PI Ctrl (lr=0.5).}
  \label{fig:AMZN-UP-vs-PI_05-local}
\end{figure}

\begin{table}[ht]
  \caption{Quantitative Comparison on the AMZN Dataset.}
  \label{tab:AMZN-1vall-metrics-full}
  \centering
  \begin{tabular}{lcccccc}
    \toprule
    & UP & KT & DtACI & SFOGD (lr=100) & P Ctrl (lr=0.5) & PI Ctrl (lr=0.5) \\
    \midrule
    Marginal coverage      & 0.931 & 0.919 & 0.962    & 0.947 & 0.946 & 0.946 \\
    Longest err sequence   & \textbf{3}     & 18    & 4        & 4     & \textbf{3}     & \textbf{3}     \\
    Average set size       & \textbf{82.8}  & 99.4  & $\infty$ & 102   & 88.9  & 89.3  \\
    Median set size        & \textbf{49.1}  & 57.2  & 60.4     & 57.5  & 51.5  & 51    \\
    75\% quantile set size & \textbf{97.4}  & 113   & 274      & 117   & 107   & 108   \\
    90\% quantile set size & \textbf{208}   & 276   & $\infty$ & 288   & 234   & 239   \\
    95\% quantile set size & \textbf{292}   & 330   & $\infty$ & 335   & 324   & 321   \\
    \bottomrule
  \end{tabular}
\end{table}

\newpage
\textbf{More Pareto Frontiers and Target-level Tracking.}
\begin{figure}[H]
  \centering
  % --- Top Row: 1x2 ---
  \begin{minipage}[t]{0.49\columnwidth}
    \centering
    \includegraphics[trim={0 0 0 0.5cm}, clip, width=\linewidth]{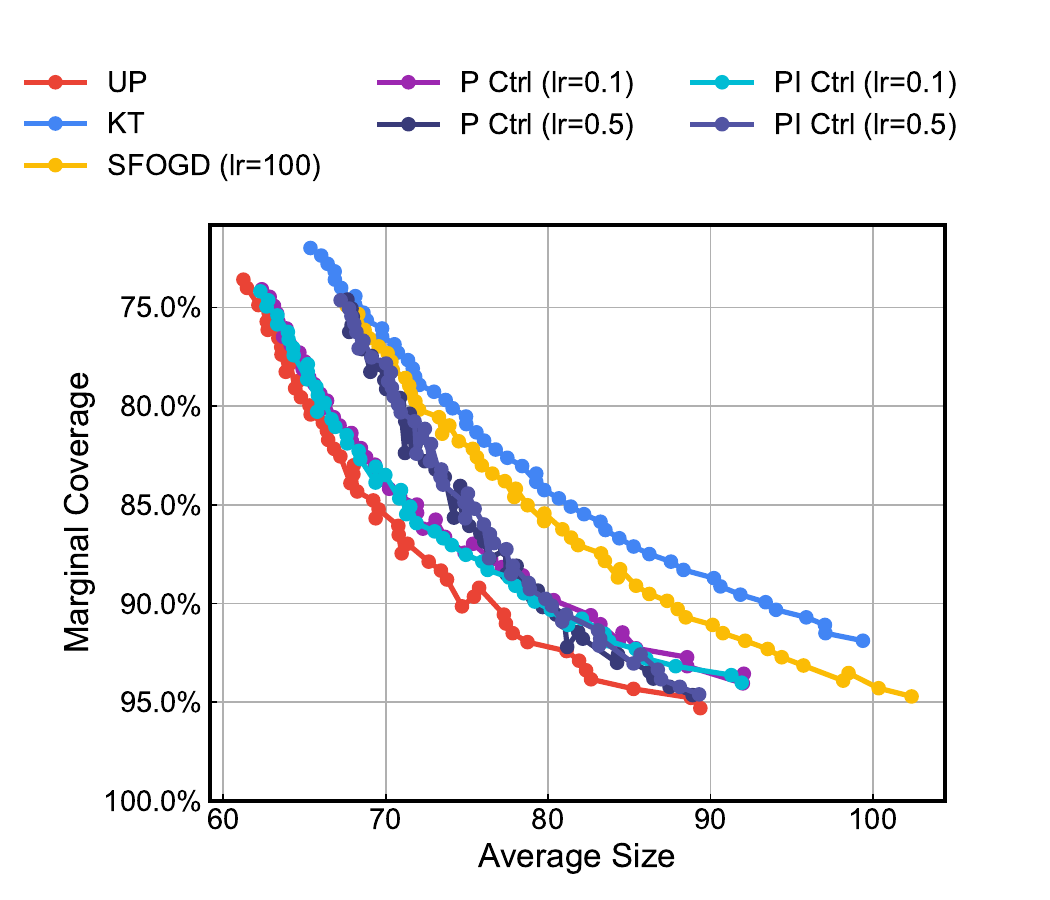}
    \caption{Mean prediction set sizes on AMZN.}
    \label{fig:AMZN-Pareto-average}
  \end{minipage}
  \hfill
  \begin{minipage}[t]{0.49\columnwidth}
    \centering
    \includegraphics[trim={0 0 0 0.5cm}, clip, width=\linewidth]{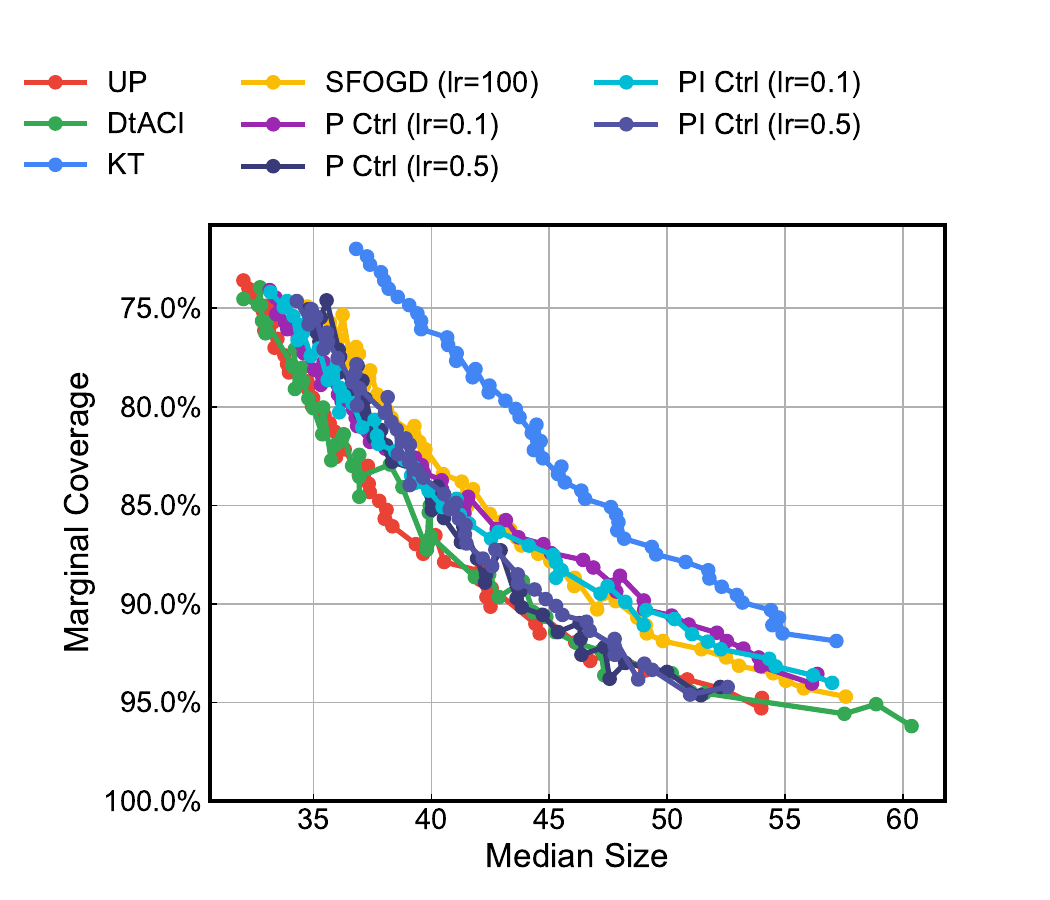}
    \caption{Median prediction set sizes on AMZN.}
    \label{fig:AMZN-Pareto-median}
  \end{minipage}
  
  \vspace{0.5cm}

  \begin{minipage}[t]{0.49\columnwidth}
    \centering
    \includegraphics[trim={0 0 0 0.5cm}, clip, width=\linewidth]{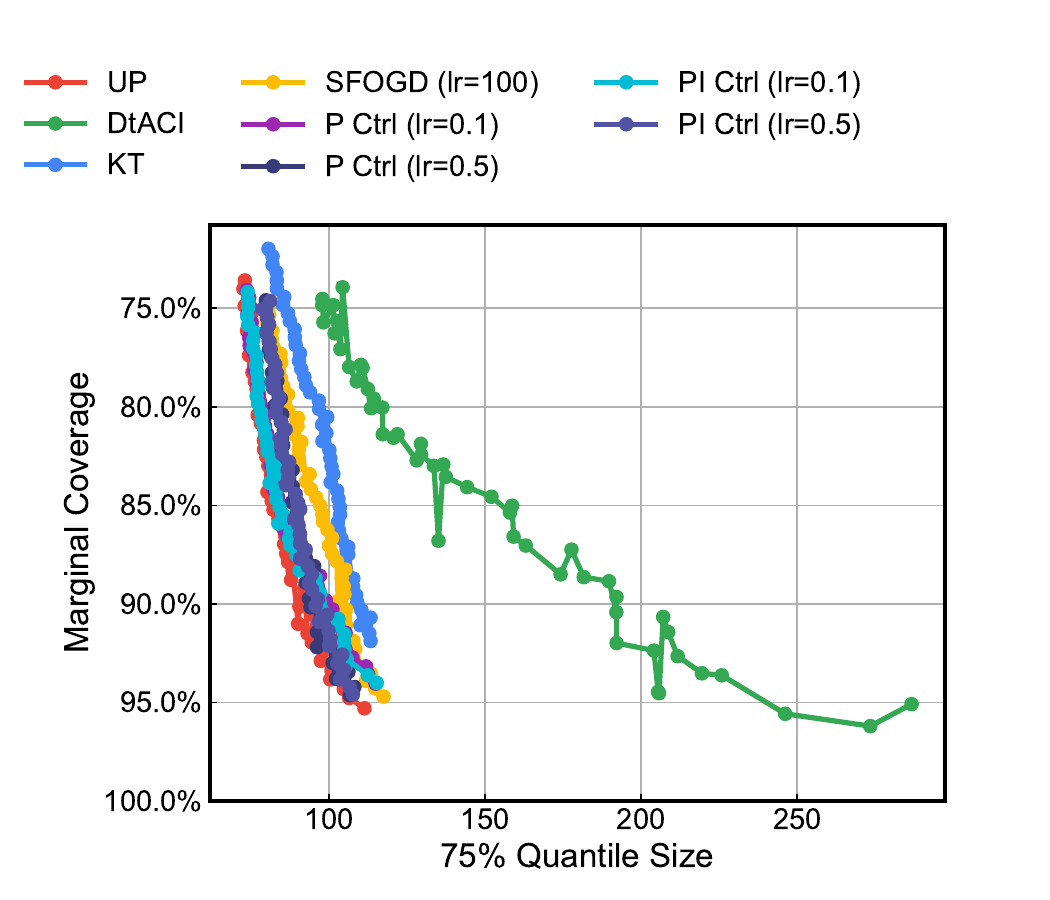}
    \caption{75\% quantile prediction set sizes on AMZN.}
    \label{fig:AMZN-Pareto-q75}
  \end{minipage}
  \hfill
  \begin{minipage}[t]{0.49\columnwidth}
    \centering
    \includegraphics[trim={0 0 0 1cm}, clip, width=\linewidth]{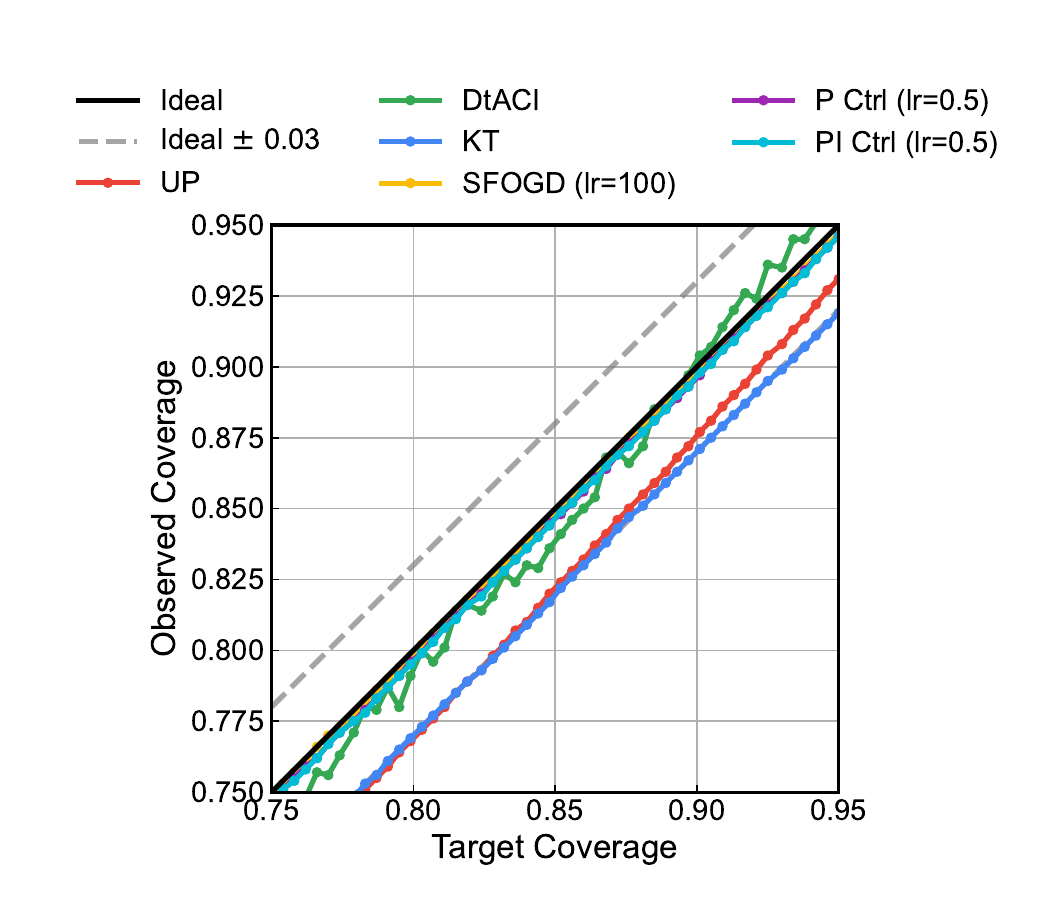}
    \caption{Realized vs. target coverage on AMZN. Most methods track the diagonal within a small tolerance ($\pm$ 0.03).}
    \label{fig:AMZN-tracking-targets}
  \end{minipage}
\end{figure}

\newpage
\subsection{GOOGL Dataset}
\textbf{Local Adaptivity.}
\begin{figure}[H]
  \centering
  \includegraphics[trim={0 0 0 1.2cm}, clip, width=0.85\columnwidth]{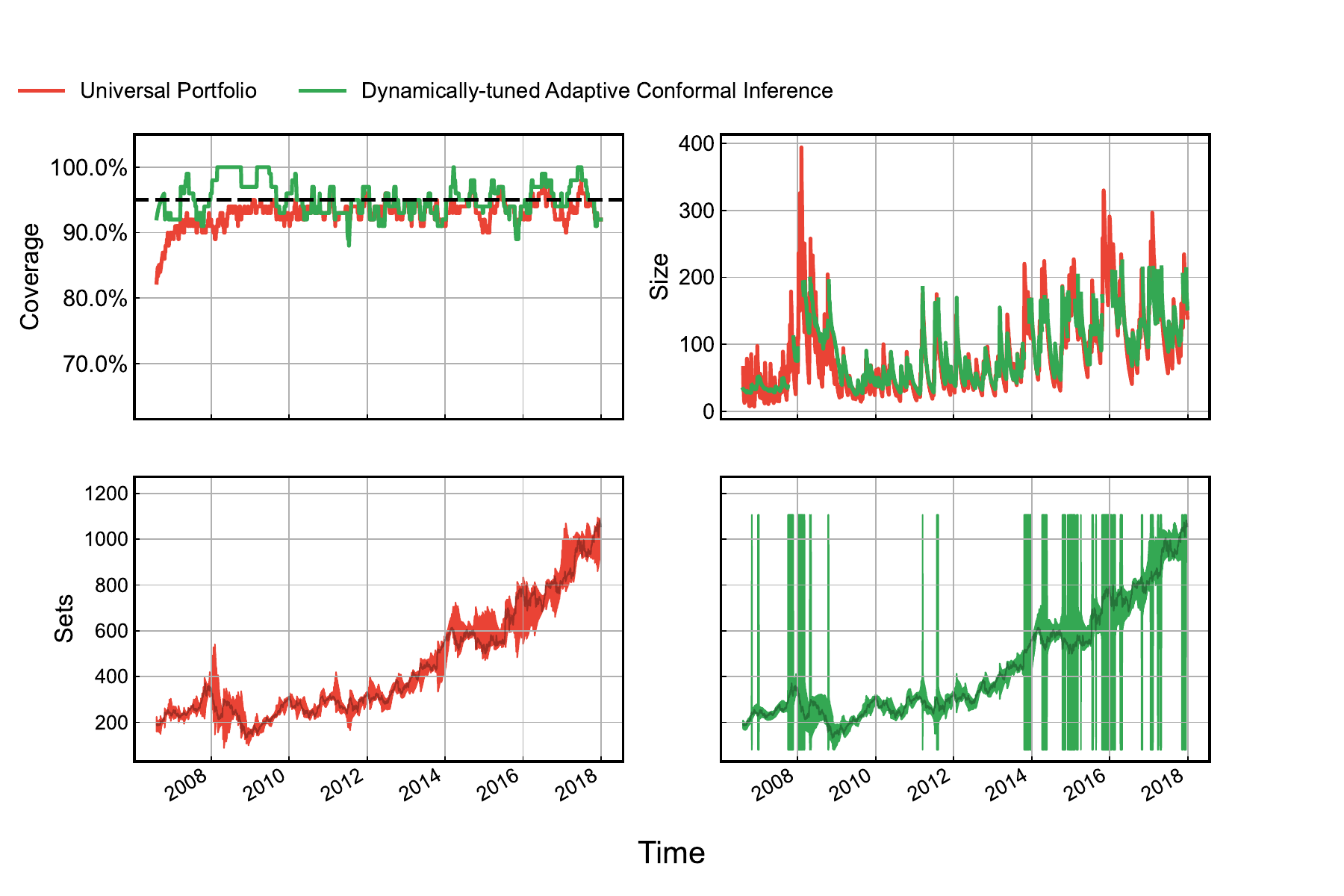}
  \caption{UP-OCP vs. DtACI for forecasting GOOGL stock return.}
  \label{fig:GOOGL-UP-vs-DtACI-local}
\end{figure} 

\begin{figure}[H]
  \centering
  \includegraphics[trim={0 0 0 1.2cm}, clip, width=0.85\columnwidth]{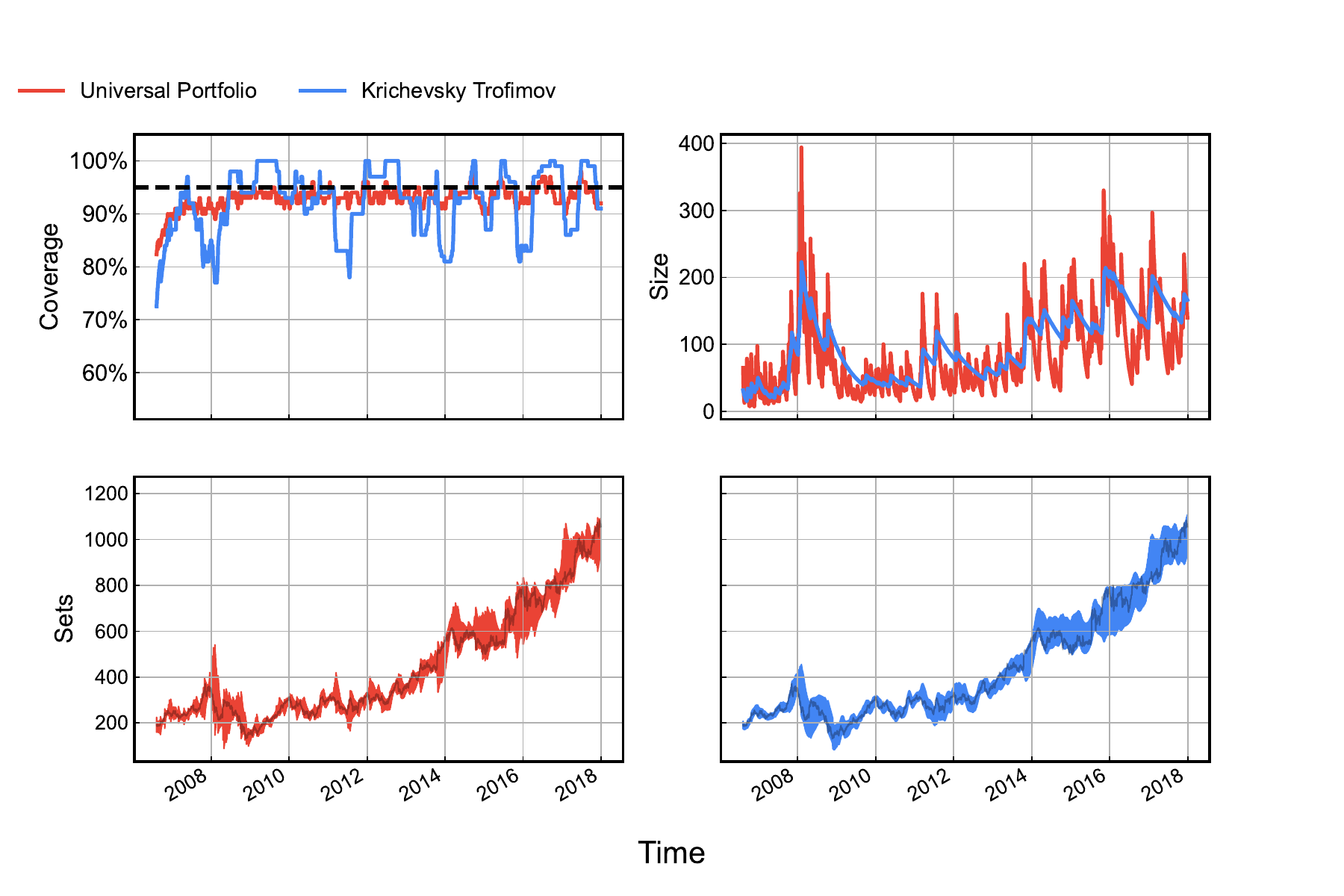}
  \caption{As in Figure~\ref{fig:GOOGL-UP-vs-DtACI-local}, UP-OCP vs. KT.}
  \label{fig:GOOGL-UP-vs-KT-local}
\end{figure} 

\begin{figure}[H]
  \centering
  \includegraphics[trim={0 0 0 1.2cm}, clip, width=0.85\columnwidth]{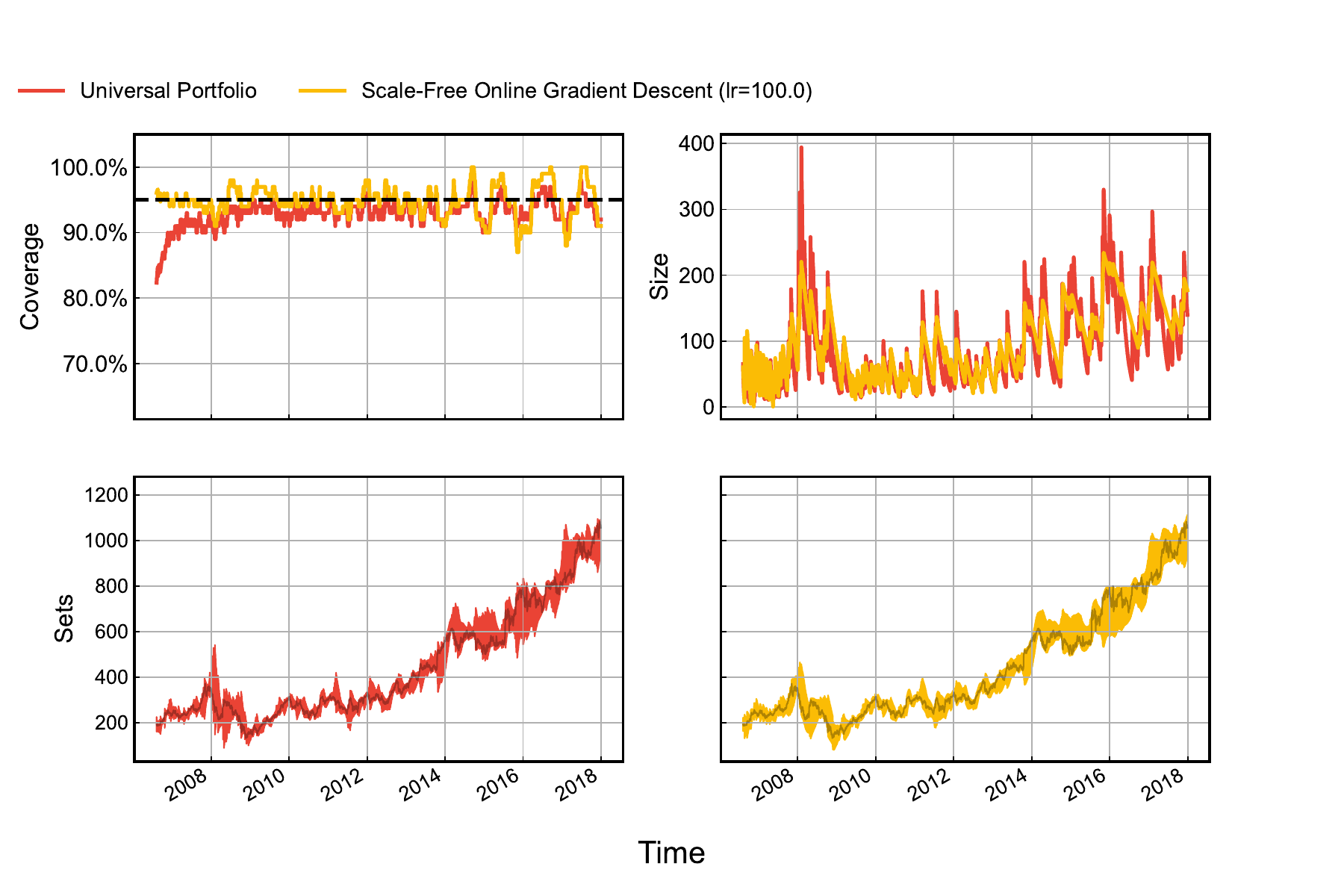}
  \caption{As in Figure~\ref{fig:GOOGL-UP-vs-DtACI-local}, UP-OCP vs. SFOGD (lr=100).}
  \label{fig:GOOGL-UP-vs-SGD-100-local}
\end{figure}

\begin{figure}[H]
  \centering
  \includegraphics[trim={0 0 0 1.2cm}, clip, width=0.85\columnwidth]{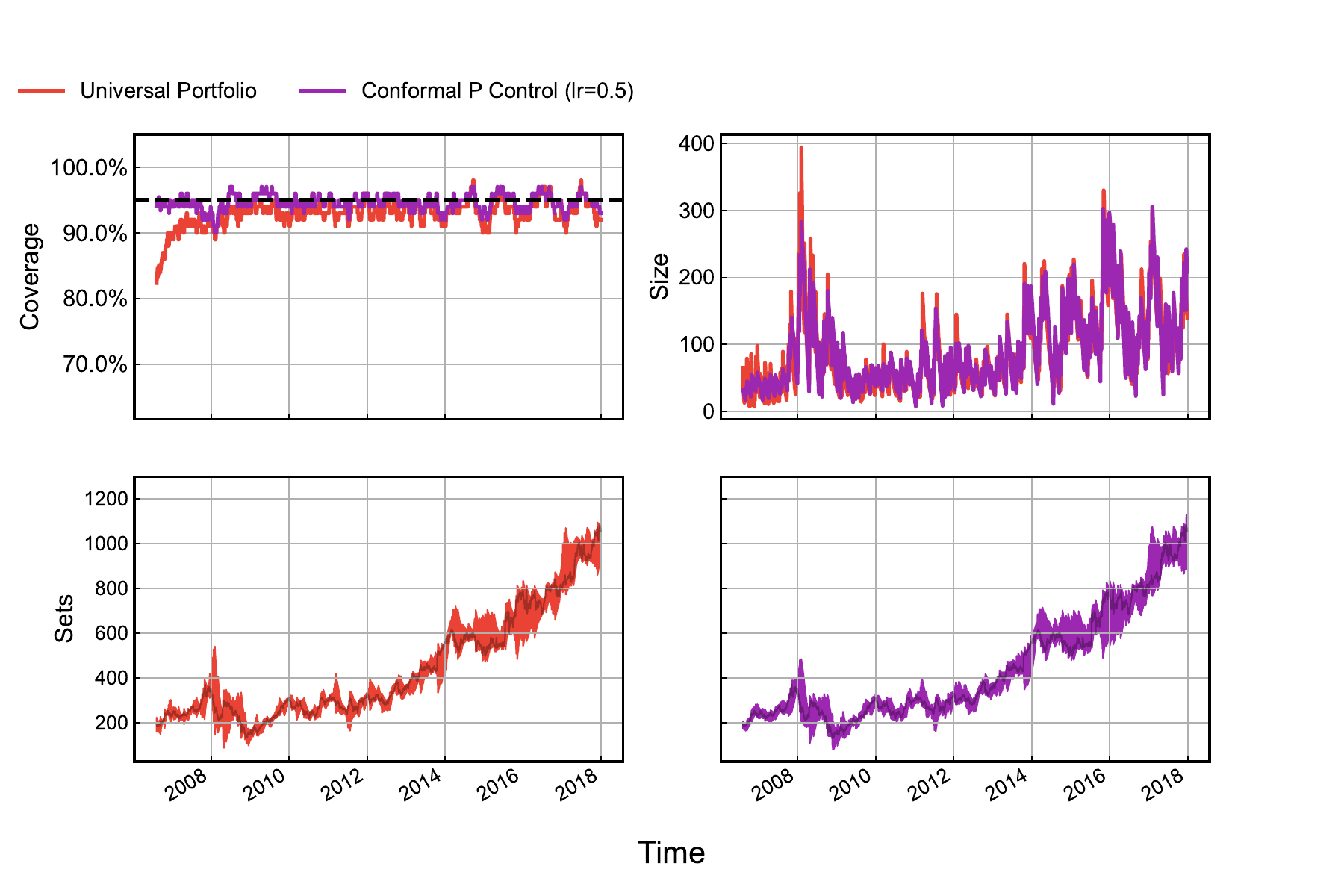}
  \caption{As in Figure~\ref{fig:GOOGL-UP-vs-DtACI-local}, UP-OCP vs. P Ctrl (lr=0.5).}
  \label{fig:GOOGL-UP-vs-P_05-local}
\end{figure}

\begin{figure}[H]
  \centering
  \includegraphics[trim={0 0 0 1.2cm}, clip, width=0.85\columnwidth]{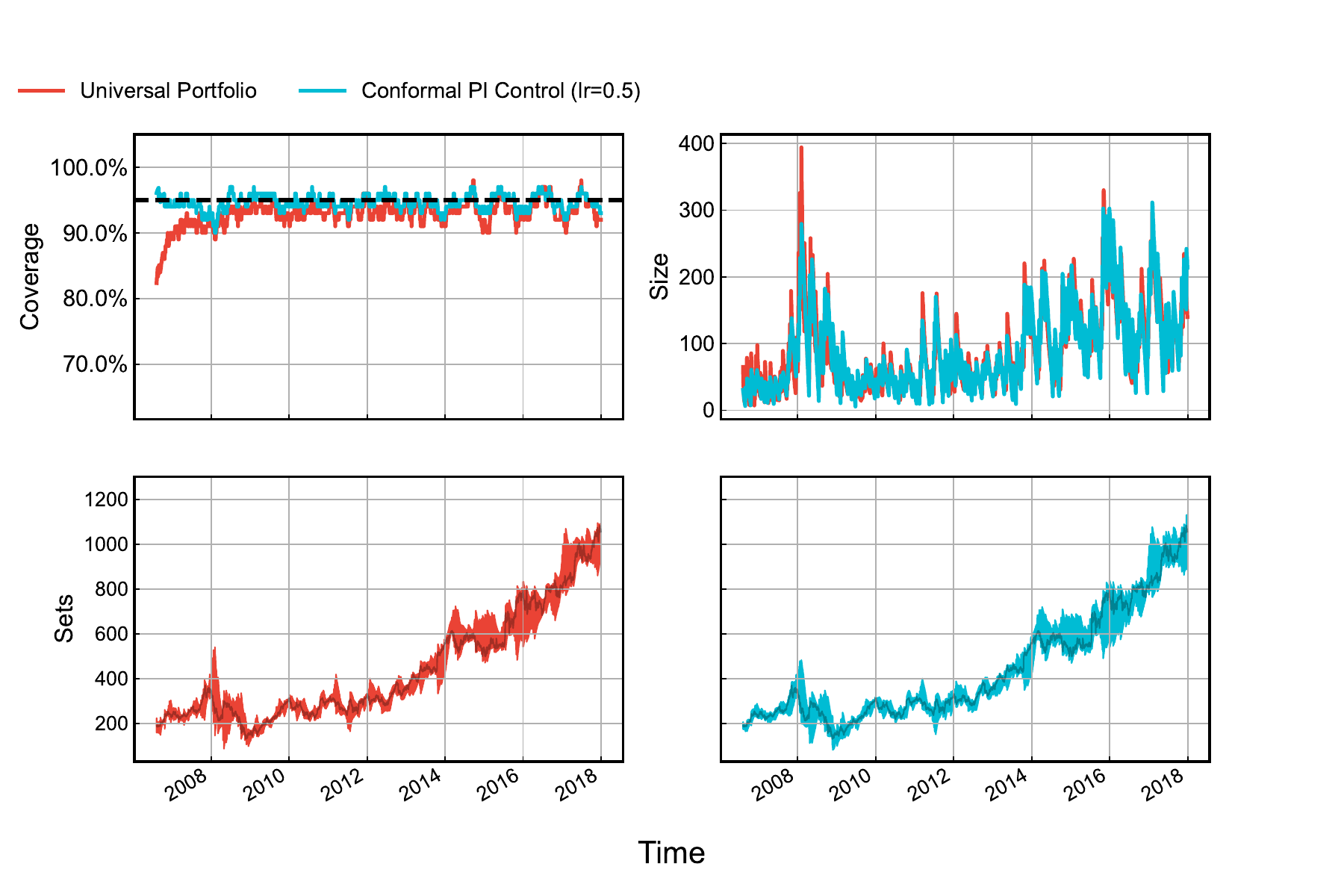}
  \caption{As in Figure~\ref{fig:GOOGL-UP-vs-DtACI-local}, UP-OCP vs. PI Ctrl (lr=0.5).}
  \label{fig:GOOGL-UP-vs-PI_05-local}
\end{figure}

\begin{table}[ht]
  \caption{Quantitative Comparison on the GOOGL Dataset.}
  \label{tab:GOOGL-1vall-metrics-full}
  \centering
  \begin{tabular}{lcccccc}
    \toprule
    & UP & KT & DtACI & SFOGD (lr=100) & P Ctrl (lr=0.5) & PI Ctrl (lr=0.5) \\
    \midrule
    Marginal coverage      & 0.932 & 0.925 & 0.952    & 0.948 & 0.946 & 0.946 \\
    Longest err sequence   & \textbf{2}     & 17    & 5        & 4     & \textbf{2}     & \textbf{2}     \\
    Average set size       & \textbf{86.7}  & 98.5  & $\infty$ & 95.8  & 91.8  & 90.4  \\
    Median set size        & \textbf{67.8}  & 90.6  & 79.2     & 86.1  & 74.4  & 72    \\
    75\% quantile set size & \textbf{121}   & 137   & 133      & 132   & 124   & 126   \\
    90\% quantile set size & \textbf{173}   & 169   & $\infty$ & \textbf{171}   & 176   & 178   \\
    95\% quantile set size & 204   & 188   & $\infty$ & 193   & 207   & 211   \\
    \bottomrule
  \end{tabular}
\end{table}

\newpage
\textbf{More Pareto Frontiers and Target-level Tracking.}

\begin{figure}[H]
  \centering
  % --- Top Row: 1x2 ---
  \begin{minipage}[t]{0.49\columnwidth}
    \centering
    \includegraphics[trim={0 0 0 0.5cm}, clip, width=\linewidth]{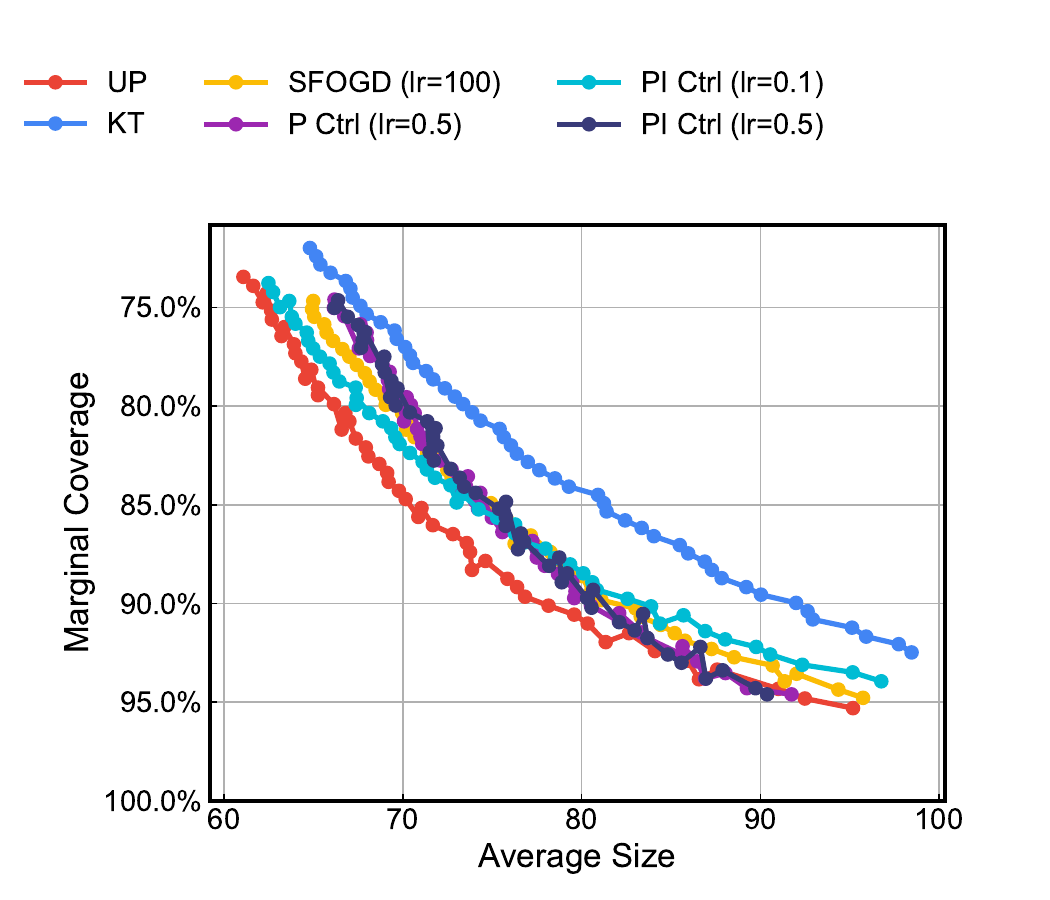}
    \caption{Mean prediction set sizes on GOOGL.}
    \label{fig:GOOGL-Pareto-average}
  \end{minipage}
  \hfill
  \begin{minipage}[t]{0.49\columnwidth}
    \centering
    \includegraphics[trim={0 0 0 0.5cm}, clip, width=\linewidth]{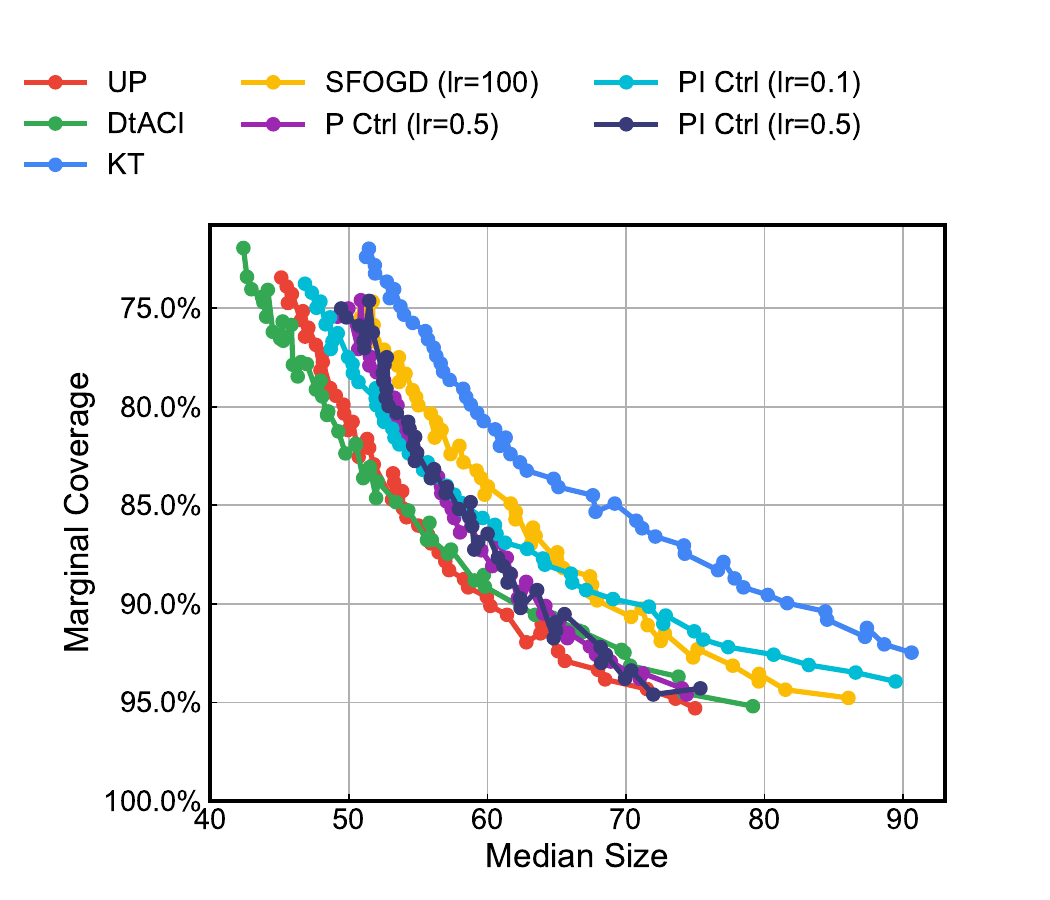}
    \caption{Median prediction set sizes on GOOGL.}
    \label{fig:GOOGL-Pareto-median}
  \end{minipage}
  
  \vspace{0.5cm}

  \begin{minipage}[t]{0.49\columnwidth}
    \centering
    \includegraphics[trim={0 0 0 0.5cm}, clip, width=\linewidth]{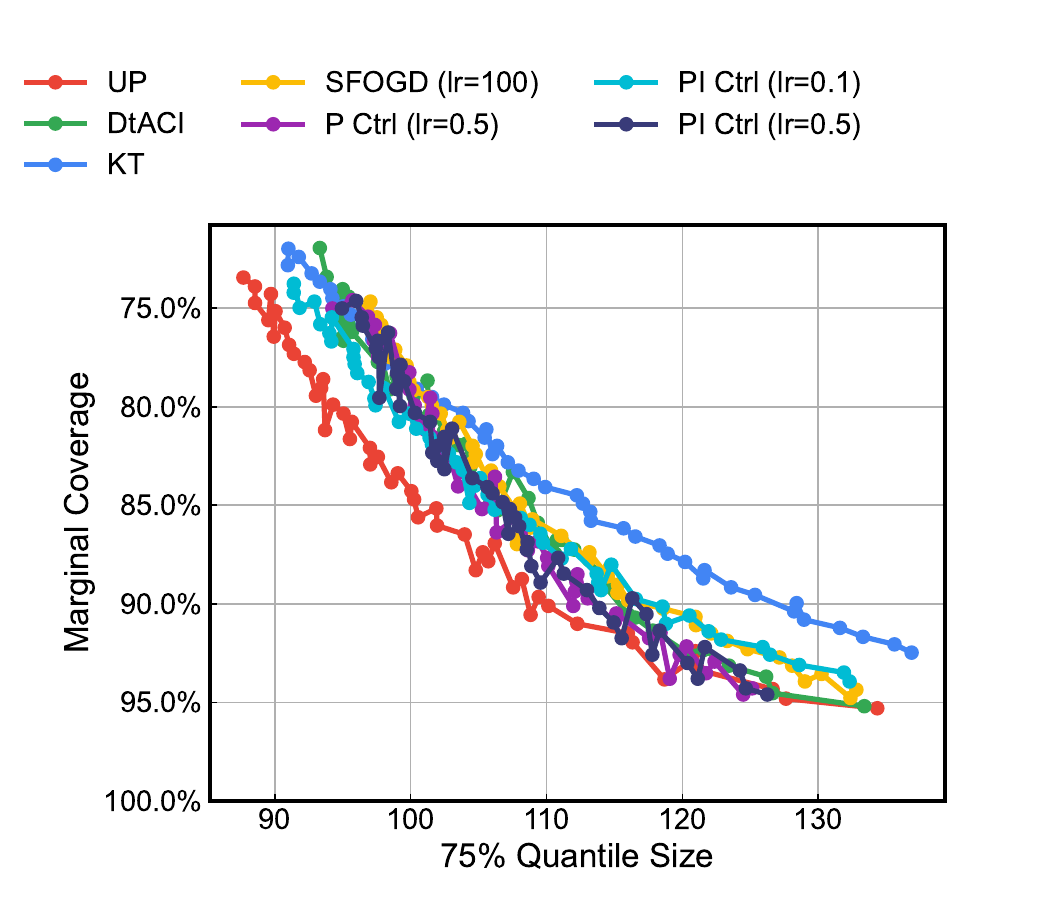}
    \caption{75\% quantile prediction set sizes on GOOGL.}
    \label{fig:GOOGL-Pareto-q75}
  \end{minipage}
  \hfill
  \begin{minipage}[t]{0.49\columnwidth}
    \centering
    \includegraphics[trim={0 0 0 1cm}, clip, width=\linewidth]{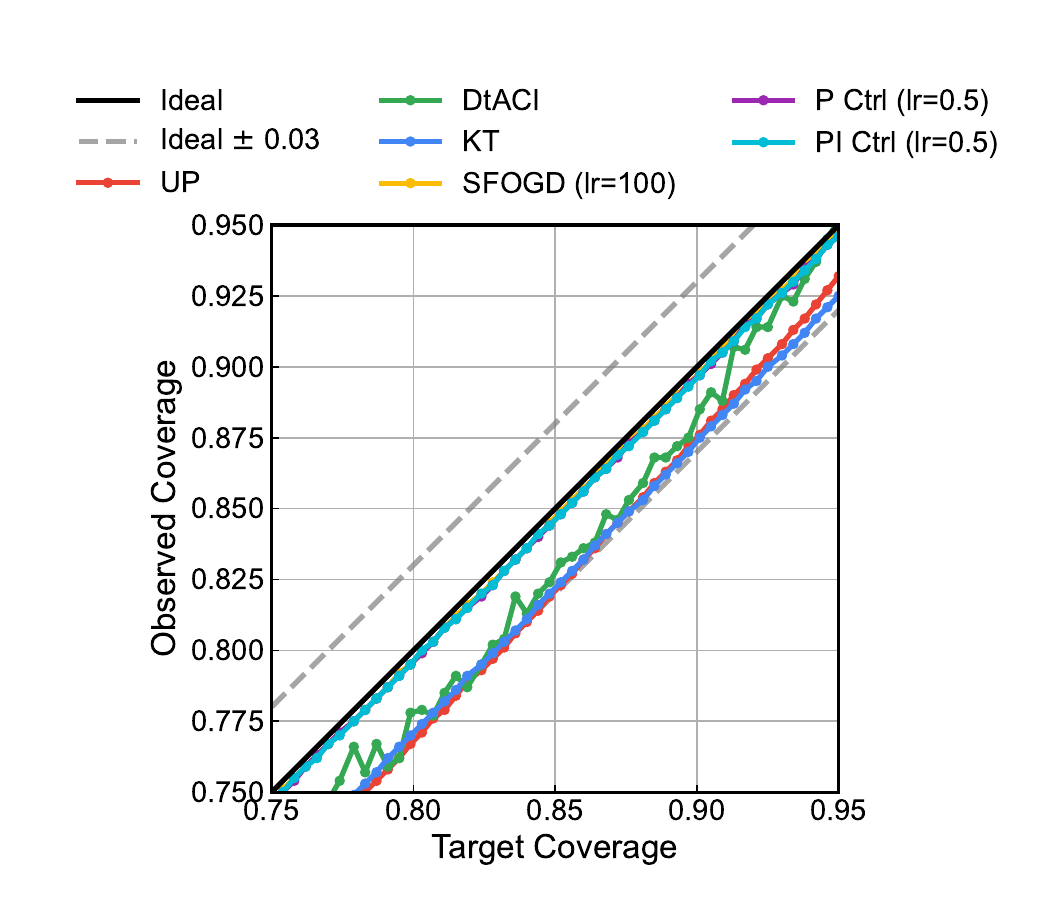}
    \caption{Realized vs. target coverage on GOOGL. Most methods track the diagonal within a small tolerance ($\pm$ 0.03).}
    \label{fig:GOOGL-tracking-targets}
  \end{minipage}
\end{figure}

\newpage
\subsection{Electricity Demand Dataset}

This data set measures electricity demand in New South Wales collected at half-hour increments from May
7th, 1996 to December 5th, 1998 (we zoom in on the first 2000 time points).

\textbf{Local Adaptivity.}
\begin{figure}[H]
  \centering
  \includegraphics[trim={0 0.6cm 0 1.7cm}, clip, width=0.85\columnwidth]{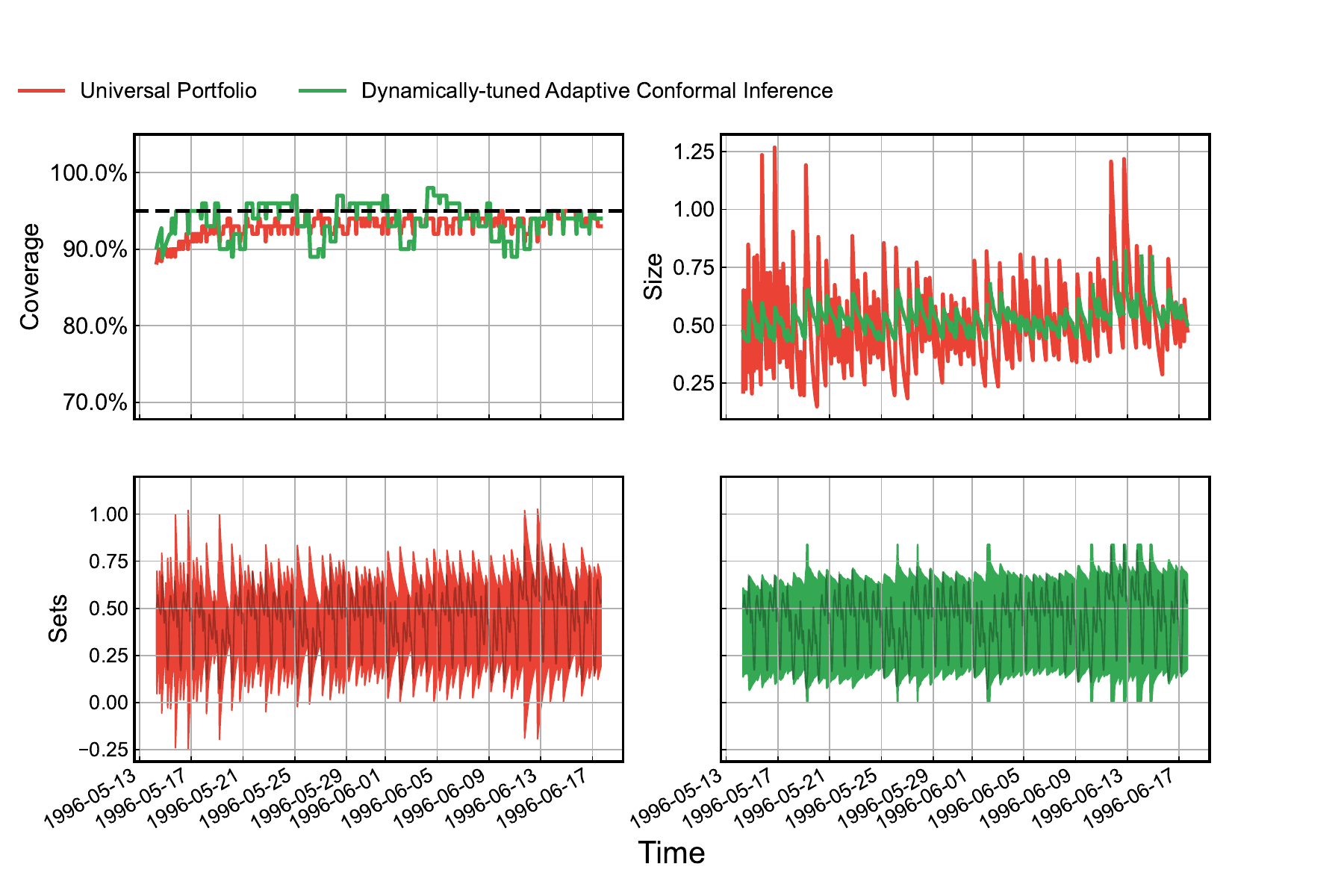}
  \caption{UP-OCP vs. DtACI for forecasting electricity demand.}
  \label{fig:elec2-UP-vs-DtACI-local}
\end{figure} 

\begin{figure}[H]
  \centering
  \includegraphics[trim={0 0.6cm 0 1.7cm}, clip, width=0.85\columnwidth]{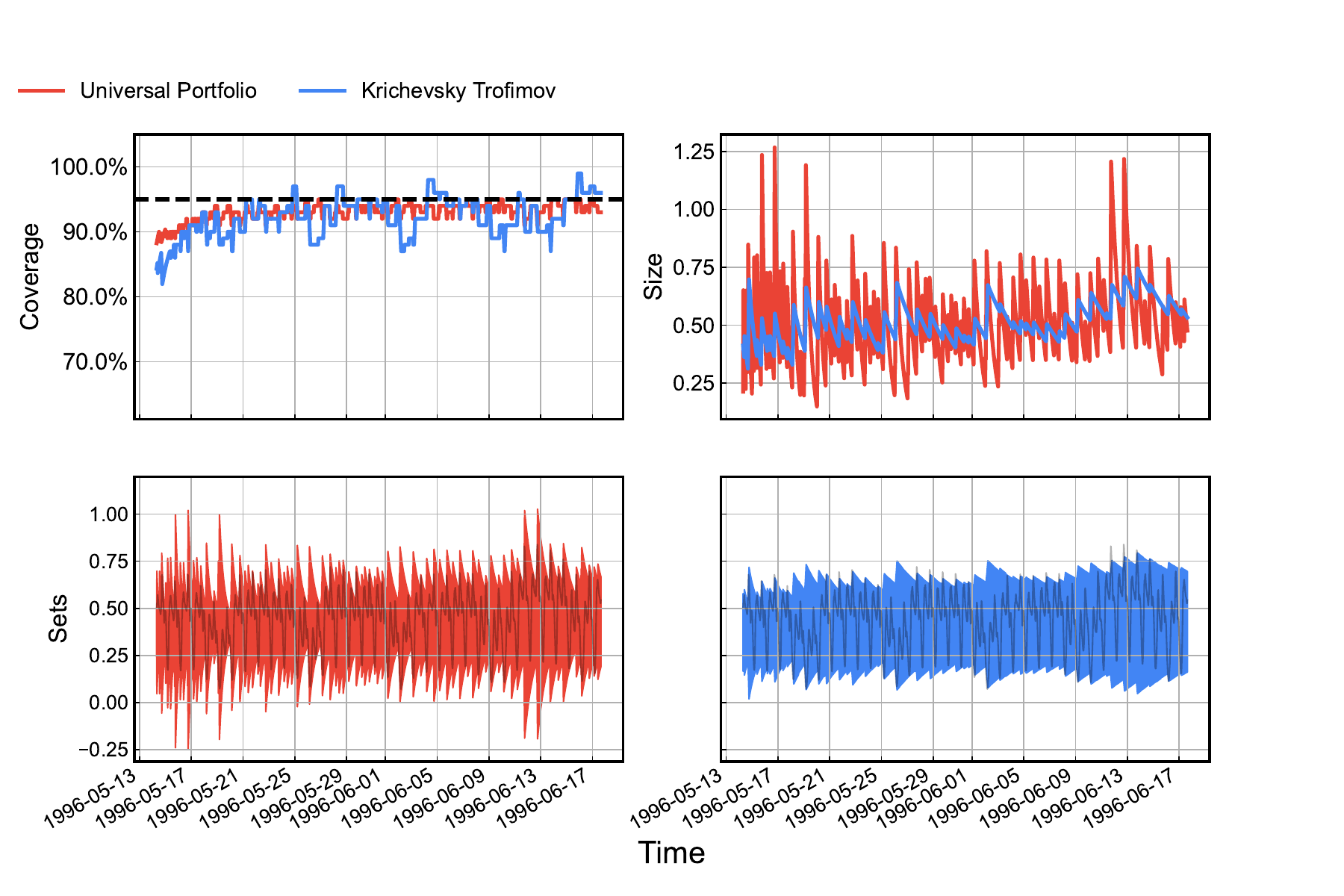}
  \caption{As in Figure~\ref{fig:elec2-UP-vs-DtACI-local}, UP-OCP vs. KT.}
  \label{fig:elec2-UP-vs-KT-local}
\end{figure} 

\begin{figure}[H]
  \centering
  \includegraphics[trim={0 0 0 1.2cm}, clip, width=0.85\columnwidth]{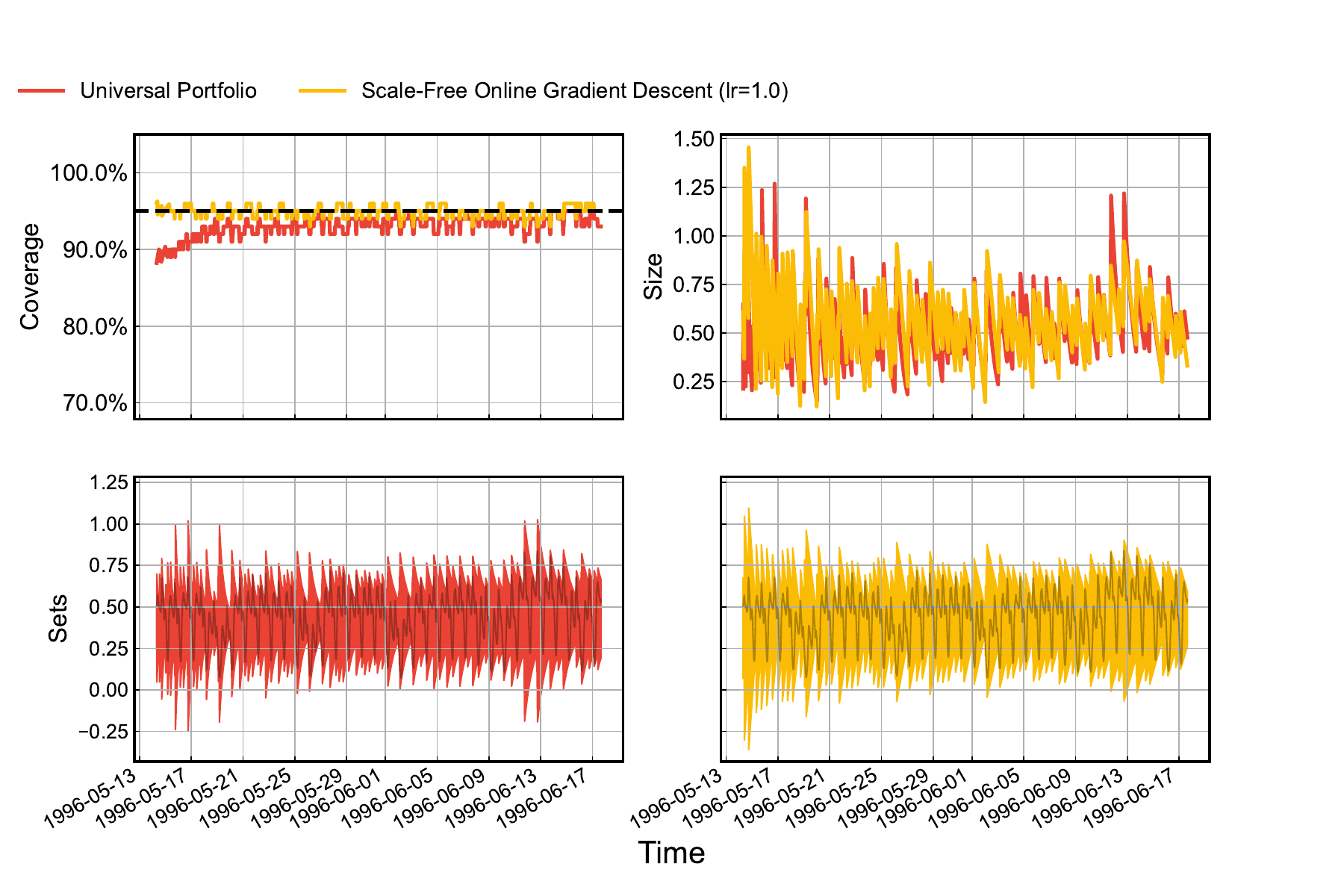}
  \caption{As in Figure~\ref{fig:elec2-UP-vs-DtACI-local}, UP-OCP vs. SFOGD (lr=1.0).}
  \label{fig:elec2-UP-vs-SGD-1-local}
\end{figure}

\begin{figure}[H]
  \centering
  \includegraphics[trim={0 0 0 1.2cm}, clip, width=0.85\columnwidth]{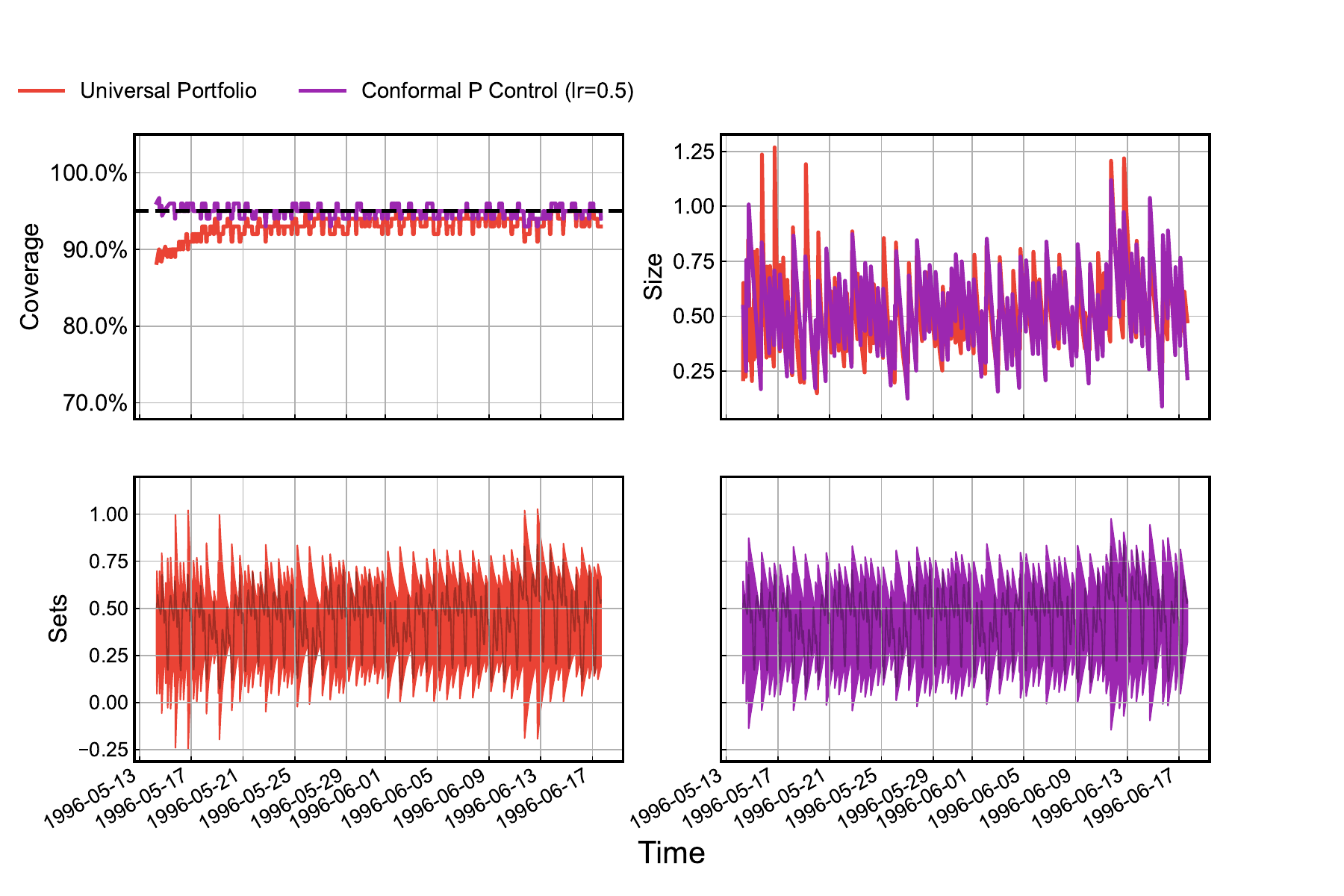}
  \caption{As in Figure~\ref{fig:elec2-UP-vs-DtACI-local}, UP-OCP vs. P Ctrl (lr=0.5).}
  \label{fig:elec2-UP-vs-P_05-local}
\end{figure}

\begin{figure}[H]
  \centering
  \includegraphics[trim={0 0 0 1.2cm}, clip, width=0.85\columnwidth]{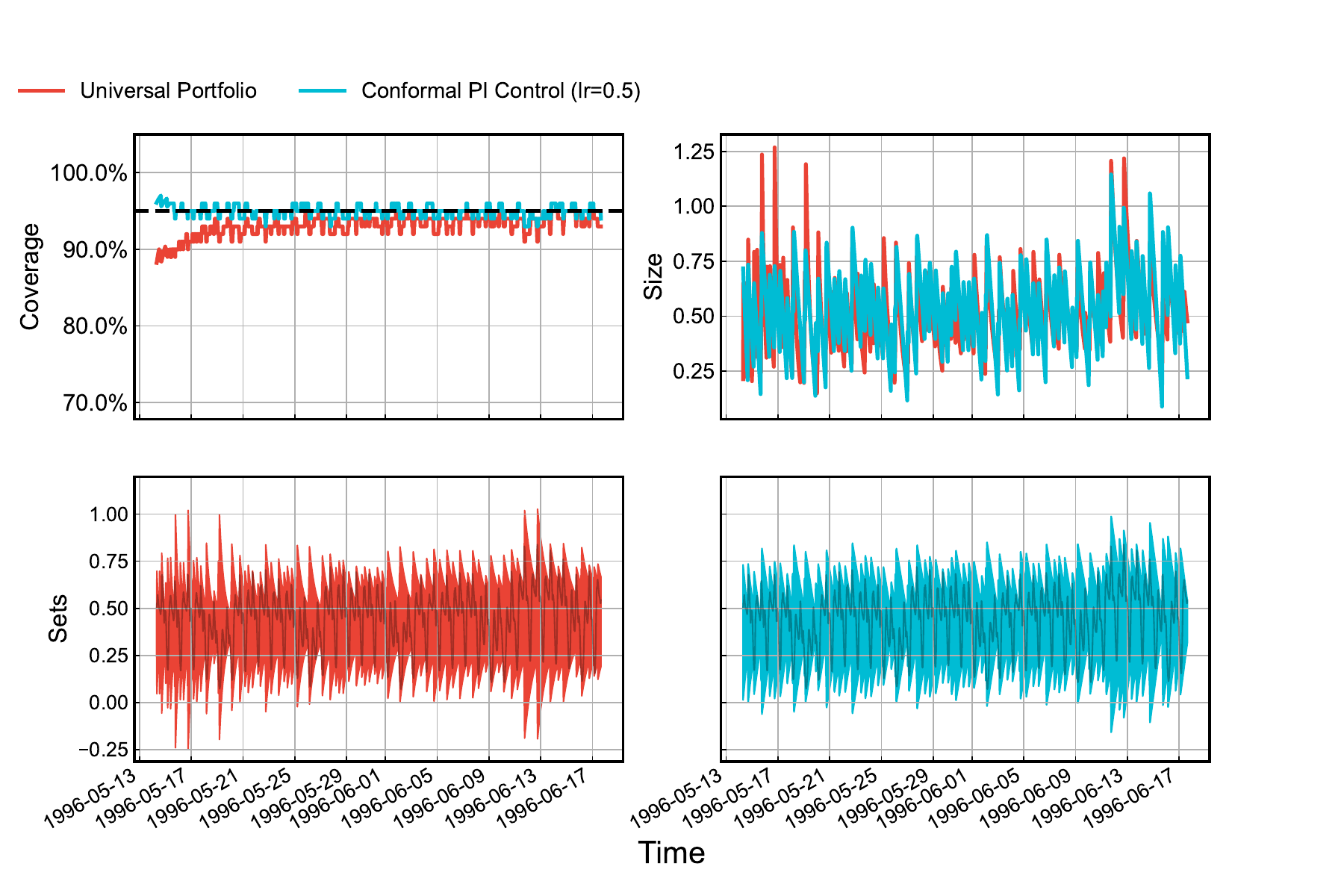}
  \caption{As in Figure~\ref{fig:elec2-UP-vs-DtACI-local}, UP-OCP vs. PI Ctrl (lr=0.5).}
  \label{fig:elec2-UP-vs-PI_05-local}
\end{figure}

\begin{table}[ht]
  \caption{Quantitative Comparison on the electricity demand dataset.}
  \label{tab:elec2-intermittent-1vall-metrics-full}
  \centering
  \begin{tabular}{lcccccc}
    \toprule
    & UP & KT & DtACI & SFOGD (lr=1.0) & P Ctrl (lr=0.5) & PI Ctrl (lr=0.5) \\
    \midrule
    Marginal coverage      & 0.933 & 0.927 & 0.939    & 0.95  & 0.95  & 0.95  \\
    Longest err sequence   & \textbf{3}     & 7     & 6        & \textbf{2}     & \textbf{1}     & \textbf{2}     \\
    Average set size       & \textbf{0.507} & 0.518 & $\infty$ & 0.561 & 0.528 & 0.533 \\
    Median set size        & \textbf{0.487} & 0.506 & 0.53     & 0.552 & 0.524 & 0.525 \\
    75\% quantile set size & 0.593 & 0.572 & 0.563    & 0.664 & 0.637 & 0.65  \\
    90\% quantile set size & 0.715 & 0.635 & 0.619    & 0.772 & 0.742 & 0.748 \\
    95\% quantile set size & 0.793 & 0.664 & 0.671    & 0.862 & 0.806 & 0.821 \\
    \bottomrule
  \end{tabular}
\end{table}

\newpage
\textbf{More Pareto Frontiers and Target-level Tracking.}

\begin{figure}[H]
  \centering
  % --- Top Row: 1x2 ---
  \begin{minipage}[t]{0.49\columnwidth}
    \centering
    \includegraphics[trim={0 0 0 0.5cm}, clip, width=\linewidth]{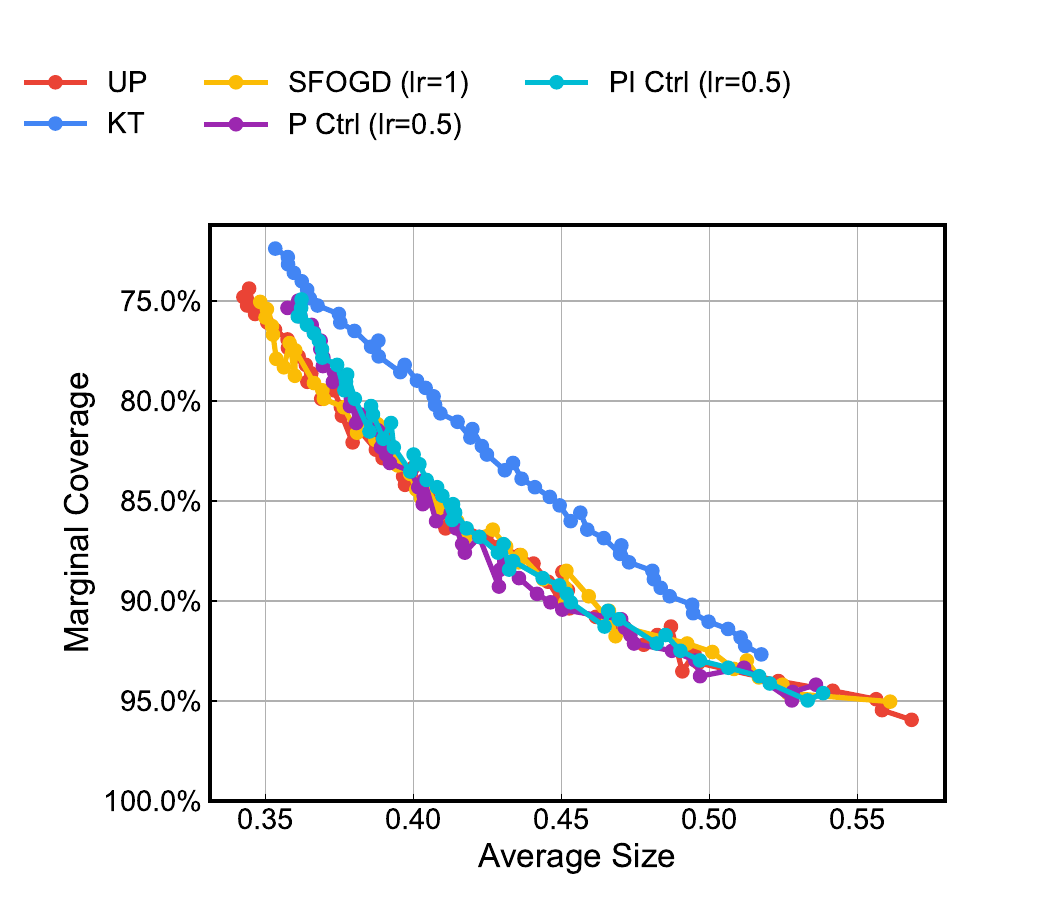}
    \caption{Mean prediction set sizes on electricity demand.}
    \label{fig:elec2-intermittent-Pareto-average}
  \end{minipage}
  \hfill
  \begin{minipage}[t]{0.49\columnwidth}
    \centering
    \includegraphics[trim={0 0 0 0.5cm}, clip, width=\linewidth]{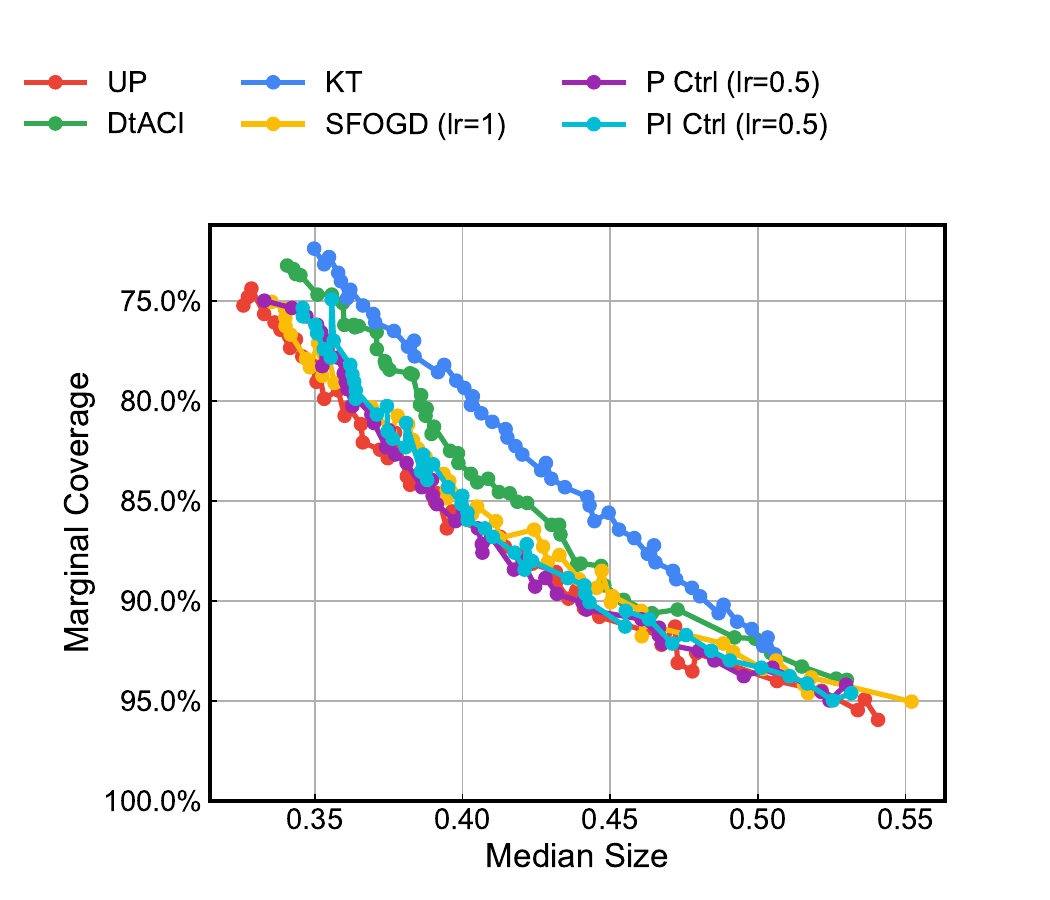}
    \caption{Median prediction set sizes on electricity demand.}
    \label{fig:elec2-intermittent-Pareto-median}
  \end{minipage}
  
  \vspace{0.5cm}

  \begin{minipage}[t]{0.49\columnwidth}
    \centering
    \includegraphics[trim={0 0 0 0.5cm}, clip, width=\linewidth]{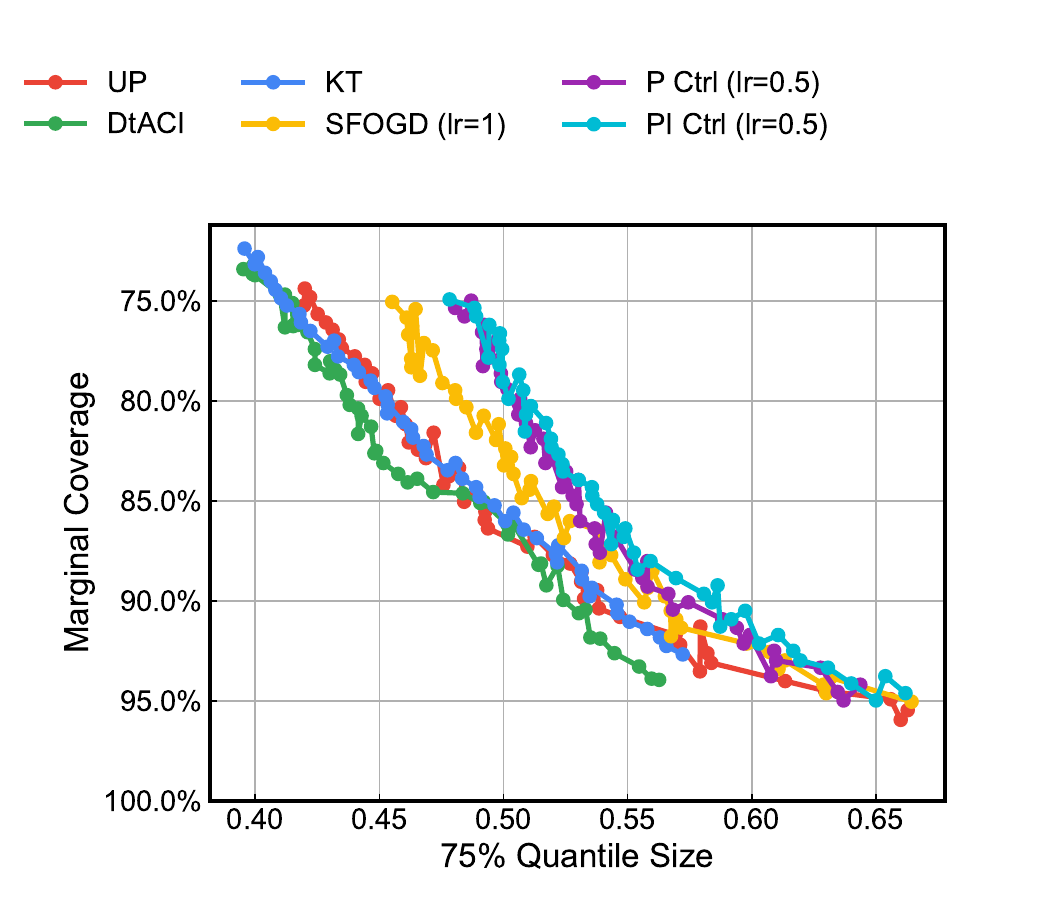}
    \caption{75\% quantile prediction set sizes on electricity demand.}
    \label{fig:elec2-intermittent-Pareto-q75}
  \end{minipage}
  \hfill
  \begin{minipage}[t]{0.49\columnwidth}
    \centering
    \includegraphics[trim={0 0 0 1cm}, clip, width=\linewidth]{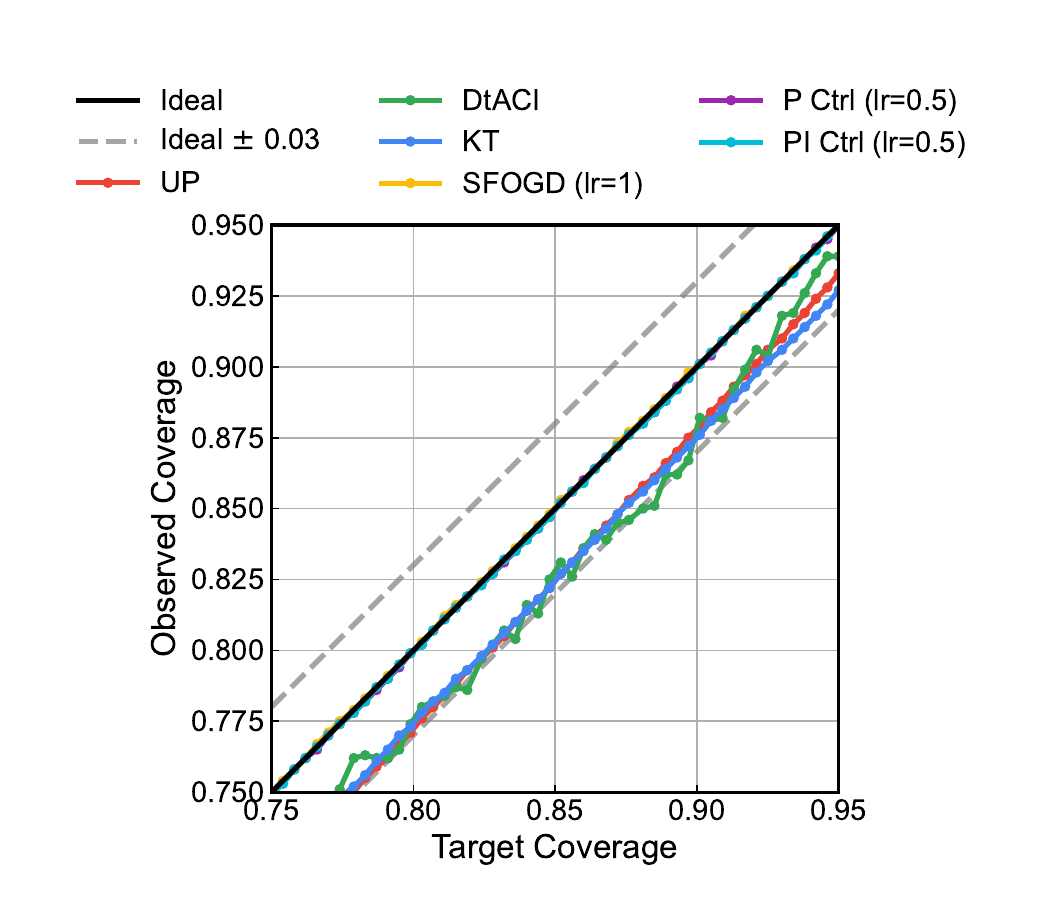}
    \caption{Realized vs. target coverage on electricity demand. Most methods track the diagonal within a small tolerance ($\pm$ 0.03).}
    \label{fig:elec2-intermittent-tracking-targets}
  \end{minipage}
\end{figure}

\newpage
\subsection{Synthetic Sinusoid}
\label{app:sinusoid}
We generate the nonconformity scores $S_t$ as a sinusoidal wave distorted by Gaussian noise. Formally, for $t=1, \dots, T$,
\[
    S_t = \max\left(0, \left[\sin\left(\frac{2\pi t}{P}\right) + 0.5\right] S_{\text{mag}} + S_{\min} + \epsilon_t \right),
\]
where $\epsilon_t \mathop{\sim}\limits^{\text{i.i.d.}} \mathcal{N}(0, \sigma^2)$. We fix the period $P=200$, the magnitude scaler $S_{\text{mag}}=10$, and the minimum offset $S_{\min}=2$. The noise scale is set to $\sigma=0.3$. The total sequence length is $T=3000$.

\textbf{Local Adaptivity.}
\begin{figure}[H]
  \centering
  \includegraphics[trim={0 0 0 1.2cm}, clip, width=0.85\columnwidth]{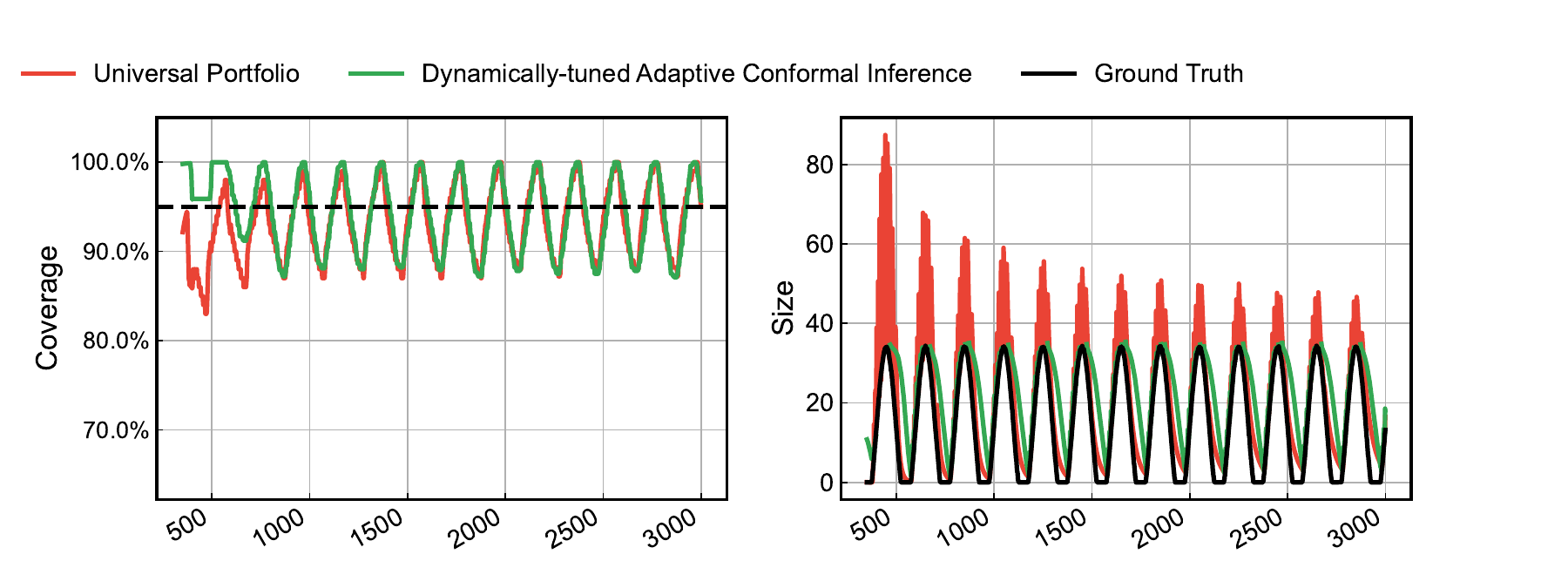}
  \caption{UP-OCP vs. DtACI for forecasting synthetic sinusoid data.}
  \label{fig:sinusoid-UP-vs-DtACI-local}
\end{figure} 

\begin{figure}[H]
  \centering
  \includegraphics[trim={0 0 0 1.2cm}, clip, width=0.85\columnwidth]{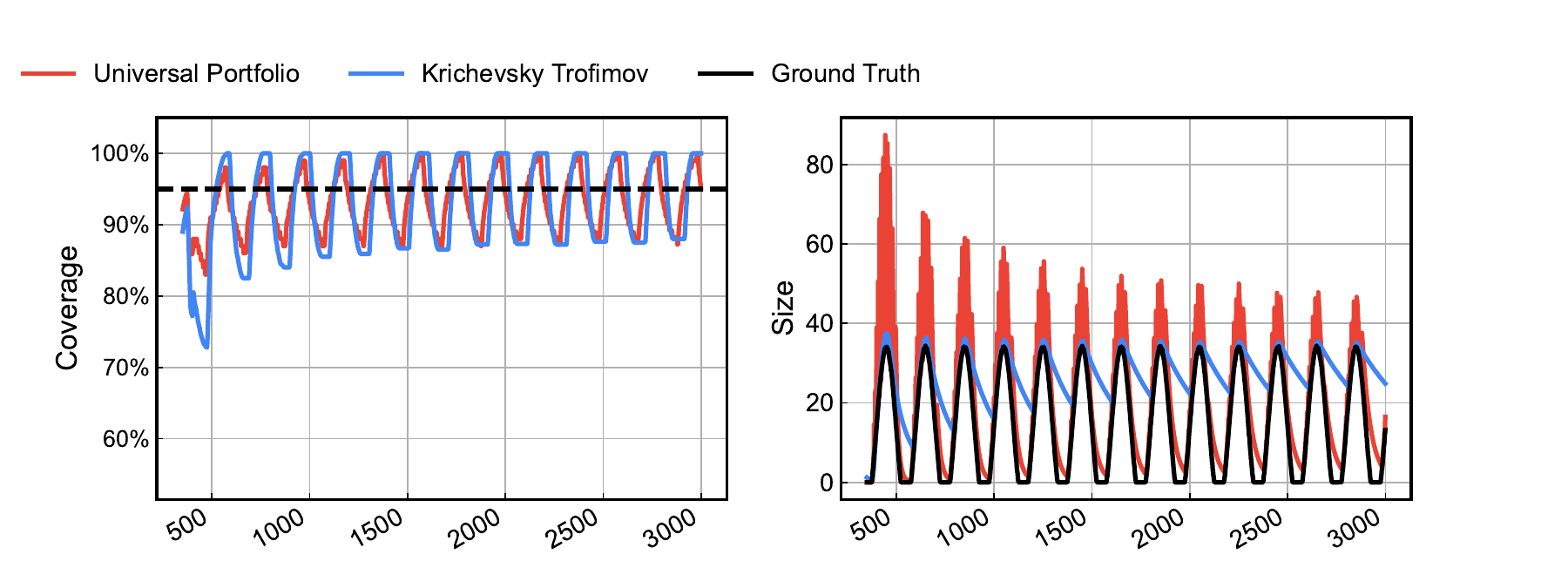}
  \caption{As in Figure~\ref{fig:sinusoid-UP-vs-DtACI-local}, UP-OCP vs. KT.}
  \label{fig:sinusoid-UP-vs-KT-local}
\end{figure} 

\begin{figure}[H]
  \centering
  \includegraphics[trim={0 0 0 1.2cm}, clip, width=0.85\columnwidth]{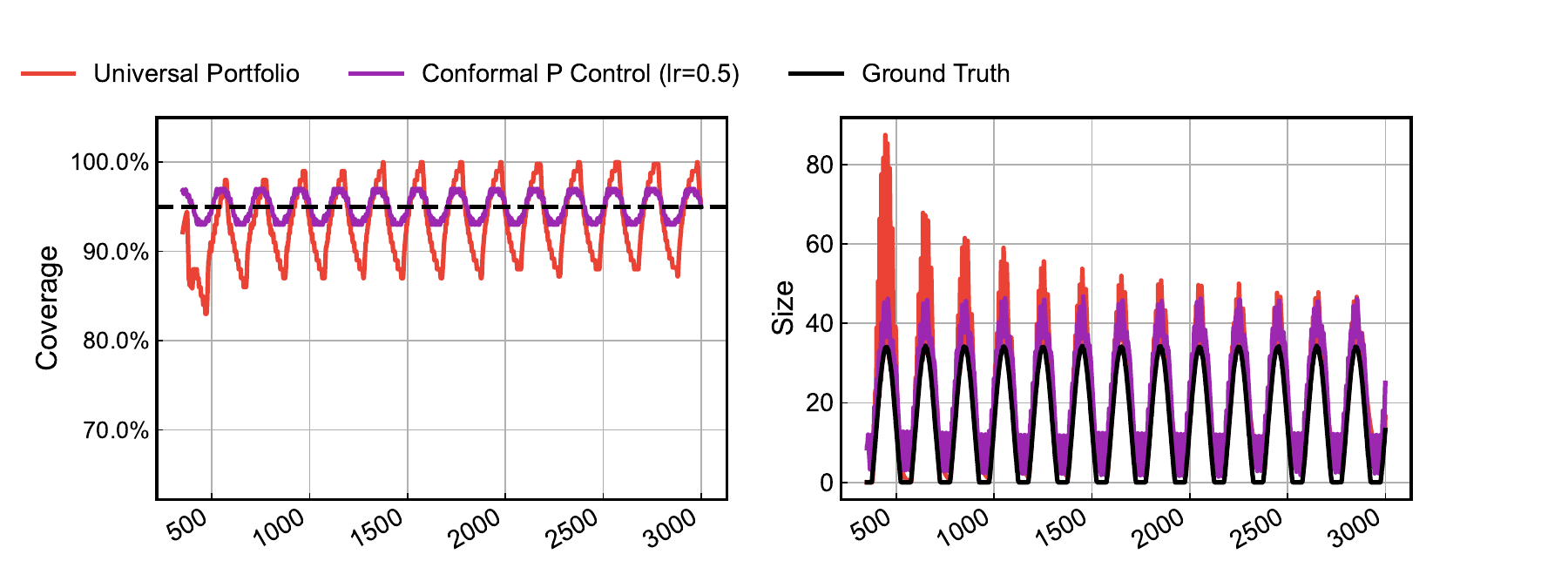}
  \caption{As in Figure~\ref{fig:sinusoid-UP-vs-DtACI-local}, UP-OCP vs. P Ctrl (lr=0.5).}
  \label{fig:sinusoid-UP-vs-P_05-local}
\end{figure}

\begin{figure}[H]
  \centering
  \includegraphics[trim={0 0 0 1.2cm}, clip, width=0.85\columnwidth]{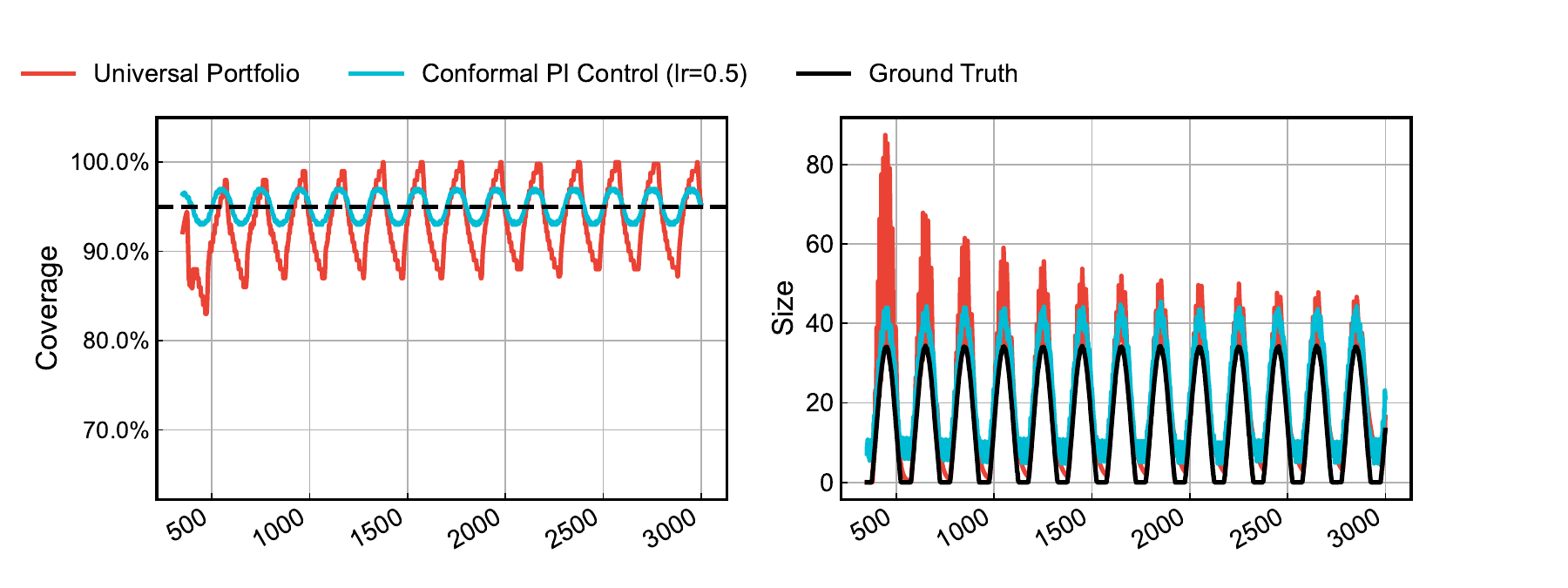}
  \caption{As in Figure~\ref{fig:sinusoid-UP-vs-DtACI-local}, UP-OCP vs. PI Ctrl (lr=0.5).}
  \label{fig:sinusoid-UP-vs-PI_05-local}
\end{figure}

\begin{figure}[H]
  \centering
  \includegraphics[trim={0 0 0 1.2cm}, clip, width=0.85\columnwidth]{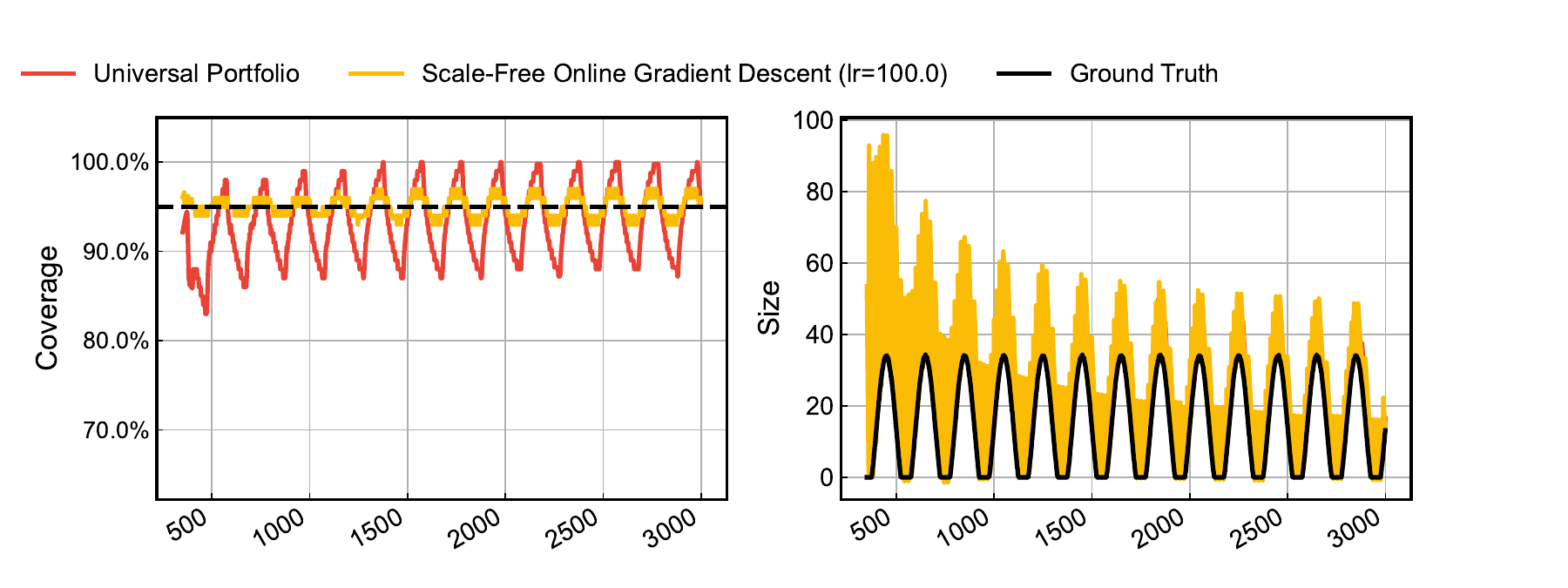}
  \caption{As in Figure~\ref{fig:sinusoid-UP-vs-DtACI-local}, UP-OCP vs. SFOGD (lr=100).}
  \label{fig:sinusoid-UP-vs-SFOGD_100-local}
\end{figure}

\begin{table}[H]
  \caption{Quantitative Comparison on the Sinusoid Dataset (synthetic).}
  \label{tab:sinusoid-1vall-metrics-full}
  \centering
  \begin{tabular}{lcccccc}
    \toprule
    & UP & KT & DtACI & SFOGD (lr=100) & P Ctrl (lr=0.5) & PI Ctrl (lr=0.5) \\
    \midrule
    Marginal coverage      & 0.931 & 0.928 & 0.942    & 0.95  & 0.95  & 0.95  \\
    Average set size       & \textbf{21.1}  & 26.9  & $\infty$ & 28.6  & 22.8  & 22.8  \\
    Median set size        & \textbf{17.9}  & 27.6  & 26.4     & 27.7  & 21.9  & 21.5  \\
    75\% quantile set size & 34.1  & 32.1  & 33.2     & 40    & 34.9  & 34.9  \\
    90\% quantile set size & 43.2  & 34.5  & $\infty$ & 50.4  & 41.8  & 41.9  \\
    95\% quantile set size & 48.3  & 35.3  & $\infty$ & 57.8  & 45    & 45.1  \\
    \bottomrule
  \end{tabular}
\end{table}

\newpage
\textbf{More Pareto Frontiers and Target-level Tracking.}

\begin{figure}[H]
  \centering
  % --- Top Row: 1x2 ---
  \begin{minipage}[t]{0.49\columnwidth}
    \centering
    \includegraphics[trim={0 0 0 0.5cm}, clip, width=\linewidth]{sinusoid_mean_burnin50_pareto_frontier.pdf}
    \caption{Mean prediction set sizes on synthetic sinusoid data.}
    \label{fig:sinusoid-Pareto-average}
  \end{minipage}
  \hfill
  \begin{minipage}[t]{0.49\columnwidth}
    \centering
    \includegraphics[trim={0 0 0 0.5cm}, clip, width=\linewidth]{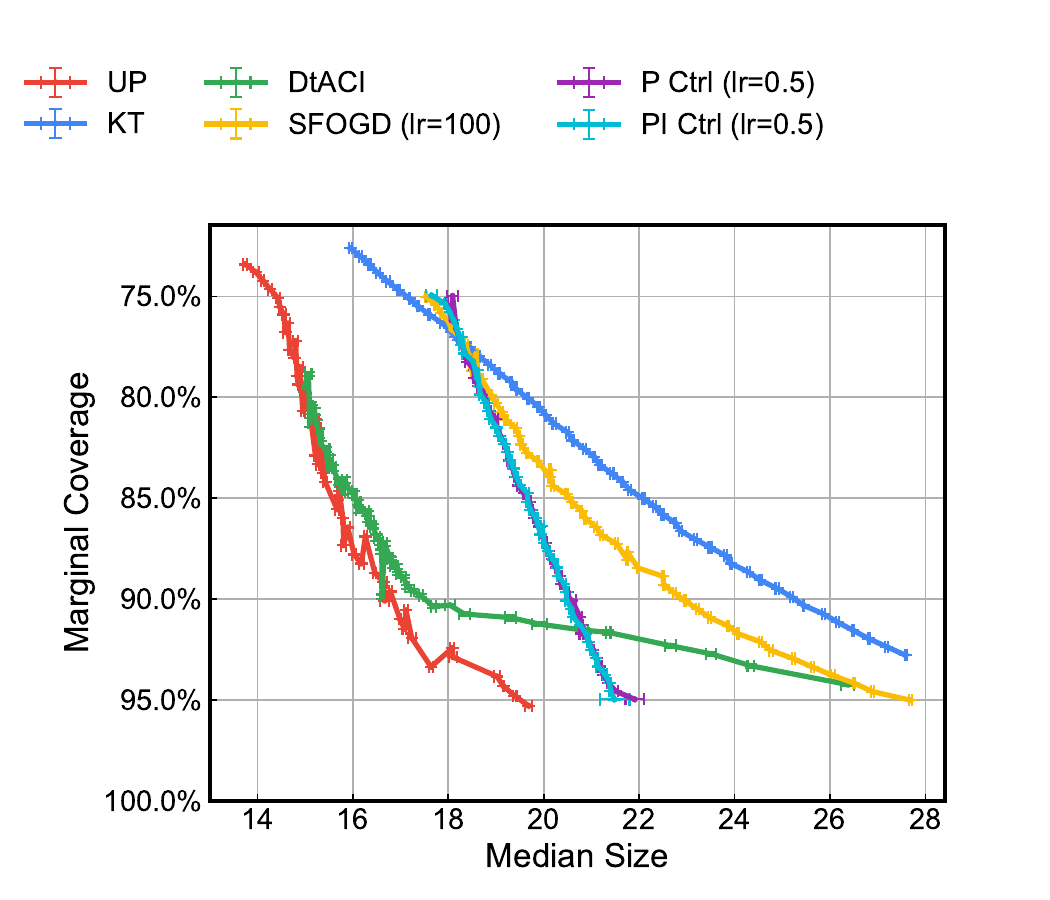}
    \caption{Median prediction set sizes on synthetic sinusoid data.}
    \label{fig:sinusoid-Pareto-median}
  \end{minipage}
  
  \vspace{0.5cm}

  \begin{minipage}[t]{0.49\columnwidth}
    \centering
    \includegraphics[trim={0 0 0 0.5cm}, clip, width=\linewidth]{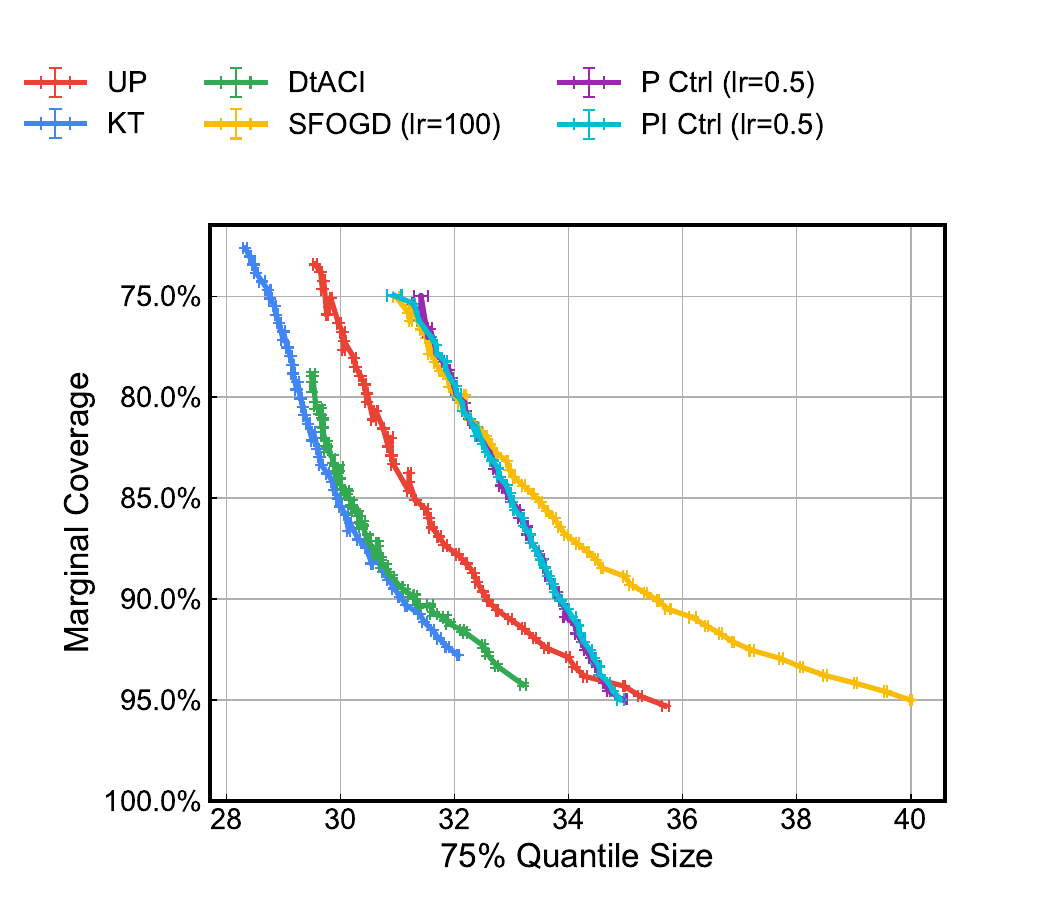}
    \caption{75\% quantile prediction set sizes on synthetic sinusoid data.}
    \label{fig:sinusoid-Pareto-q75}
  \end{minipage}
  \hfill
  \begin{minipage}[t]{0.49\columnwidth}
    \centering
    \includegraphics[trim={0 0 0 1cm}, clip, width=\linewidth]{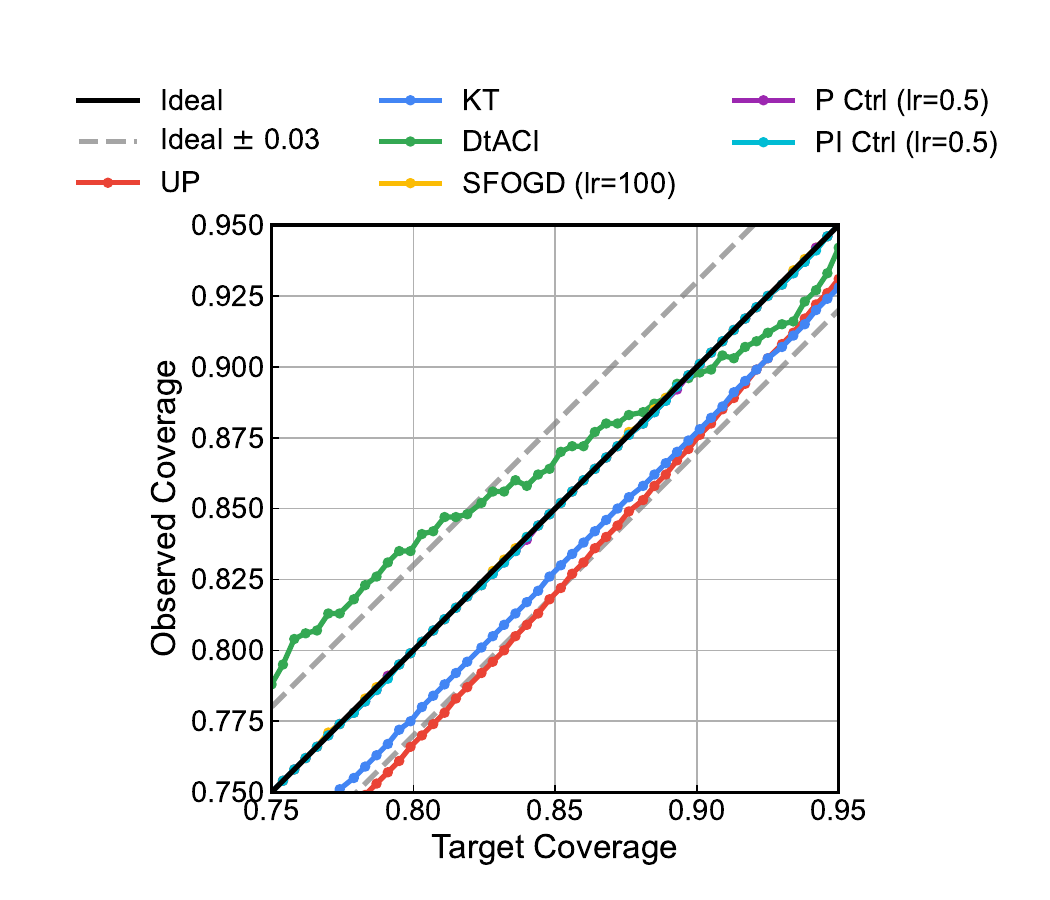}
    \caption{Realized vs. target coverage on synthetic sinusoid data. Most methods track the diagonal within a small tolerance ($\pm$ 0.03).}
    \label{fig:sinusoid-tracking-targets}
  \end{minipage}
\end{figure}

\newpage
\subsection{Stationary Trend with Random waves}

Here we try to examine robustness against sparse, heavy-tailed random waves, in comparison with fixed wave positions with periodicity for sinusoid. The nonconformity scores $S_t$ are generated via a process involving a constant baseline, sparse exponential noise, and a rolling window to create wavelet structures.

Formally, for $t=1, \dots, T$, we define a constant baseline $C=10$. We sample a sparsity mask $B_t$ and a noise magnitude $E_t$ as:
\begin{align*}
    B_t &\sim \text{Bernoulli}(p), \\
    E_t &\sim \text{Exponential}(1/\sigma),
\end{align*}
where $p=0.1$ represents the spike probability and $\sigma=10$ is the scale parameter. We first compute an intermediate score $\tilde{S}_t$ by applying multiplicative noise only when the mask is active:
\[
    \tilde{S}_t = C (1 + B_t E_t).
\]
To simulate locally correlated volatility rather than isolated point outliers, the final score $S_t$ is obtained by applying a centered rolling max-filter of window size $W=25$:
\[
    S_t = \max_{\tau \in [t - \lfloor W/2 \rfloor, t + \lfloor W/2 \rfloor]} \tilde{S}_\tau.
\]
The total sequence length is $T=3000$.

\textbf{Local Adaptivity.}

\begin{figure}[H]
  \centering
  \includegraphics[trim={0 0 0 1.2cm}, clip, width=0.85\columnwidth]{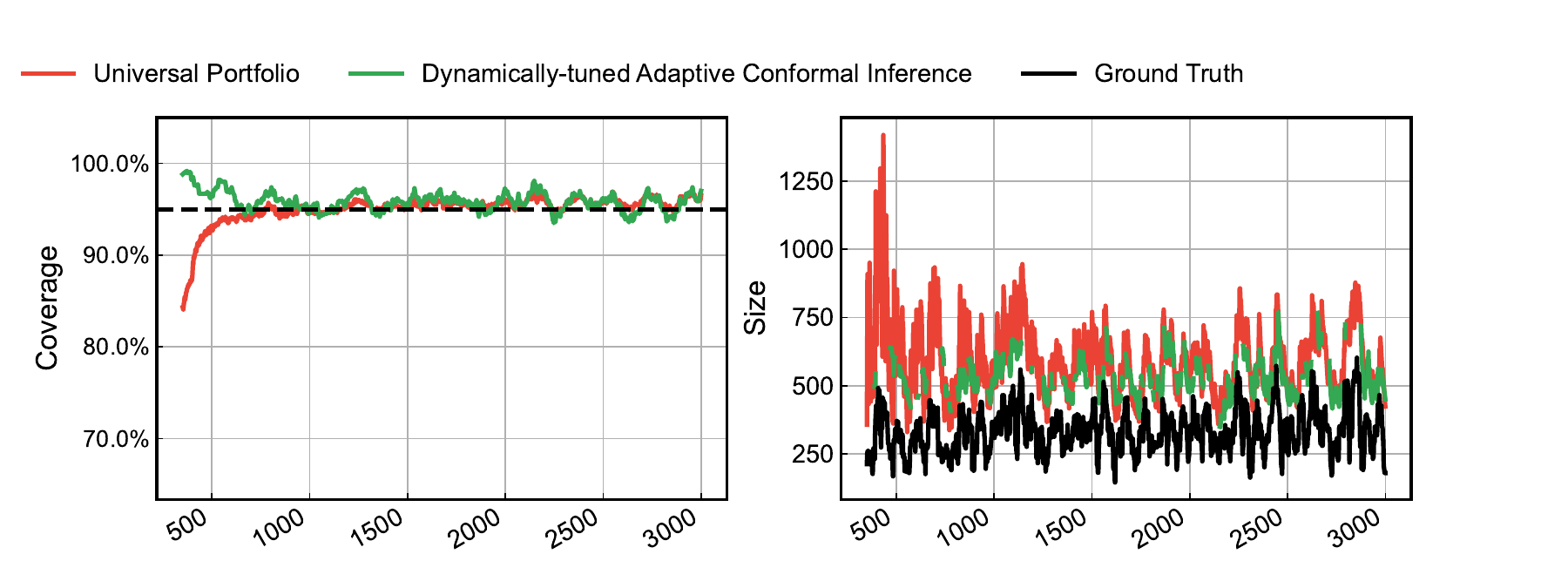}
  \caption{UP-OCP vs. DtACI for forecasting stationary synthetic data with random waves.}
  \label{fig:stationary-UP-vs-DtACI-local}
\end{figure} 

\begin{figure}[H]
  \centering
  \includegraphics[trim={0 0 0 1.2cm}, clip, width=0.85\columnwidth]{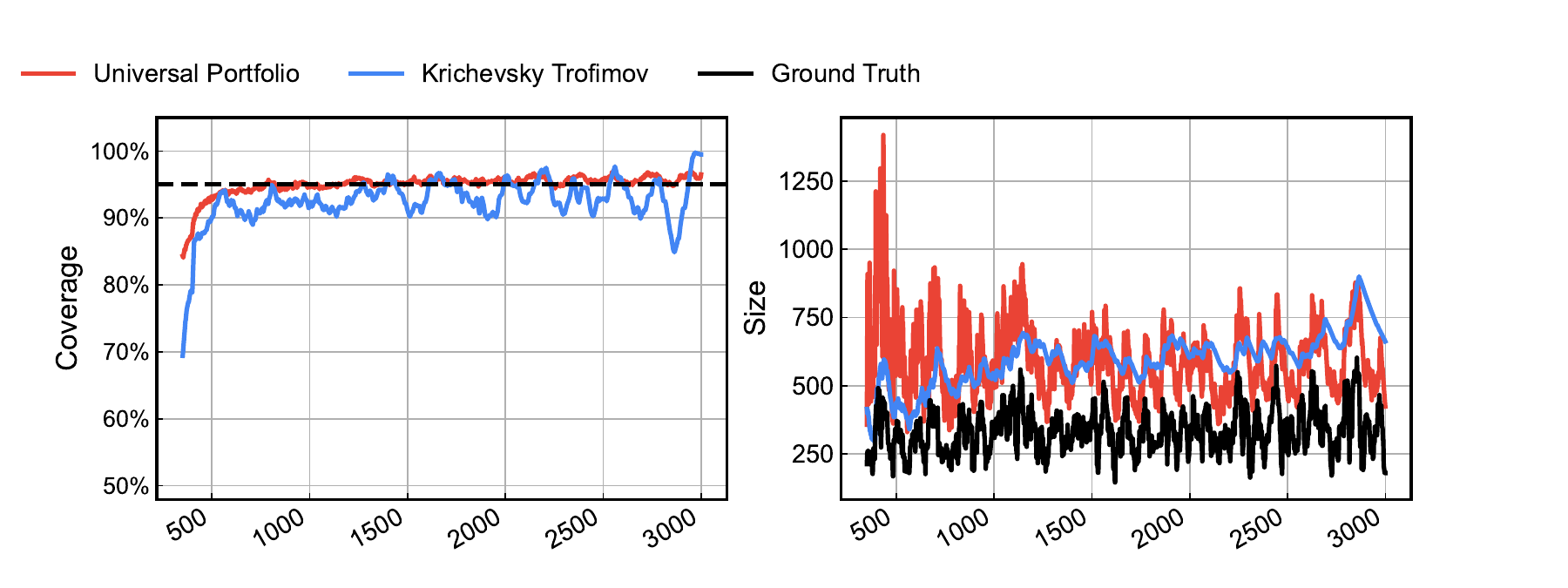}
  \caption{As in Figure~\ref{fig:stationary-UP-vs-DtACI-local}, UP-OCP vs. KT.}
  \label{fig:stationary-UP-vs-KT-local}
\end{figure} 

\begin{figure}[H]
  \centering
  \includegraphics[trim={0 0 0 1.2cm}, clip, width=0.85\columnwidth]{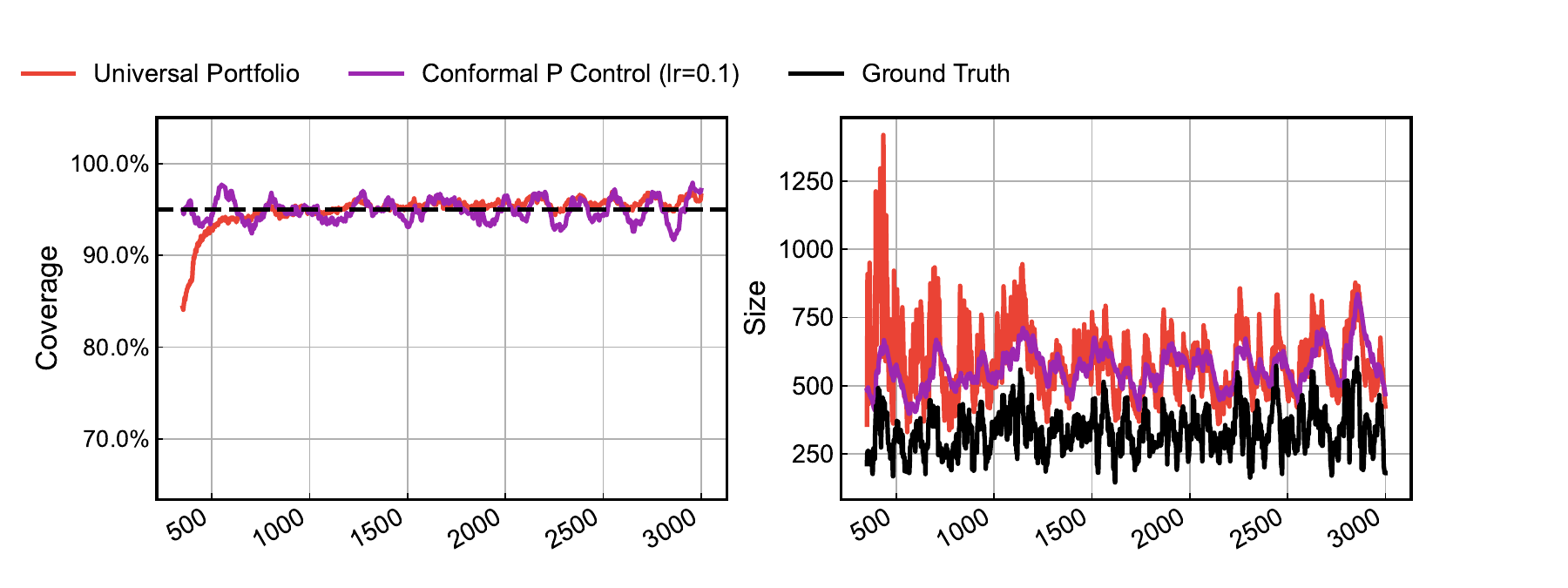}
  \caption{As in Figure~\ref{fig:stationary-UP-vs-DtACI-local}, UP-OCP vs. P Ctrl (lr=0.1).}
  \label{fig:stationary-UP-vs-P_01-local}
\end{figure}

\begin{figure}[H]
  \centering
  \includegraphics[trim={0 0 0 1.2cm}, clip, width=0.85\columnwidth]{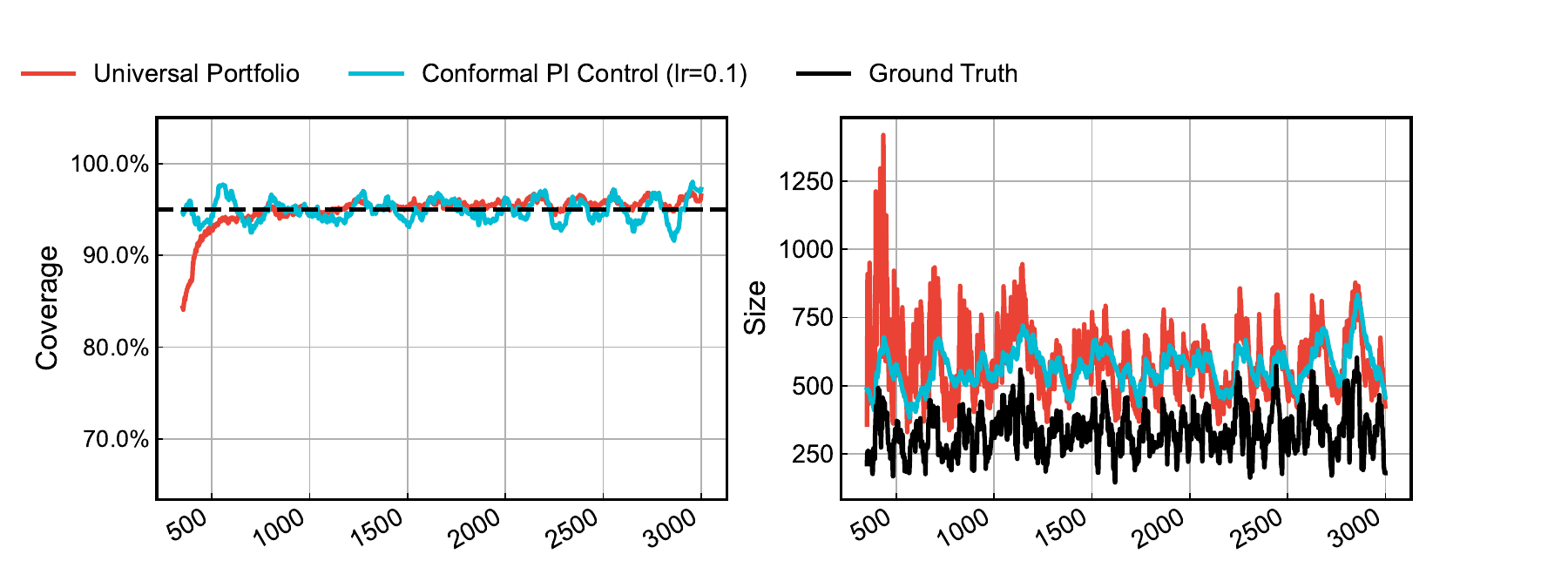}
  \caption{As in Figure~\ref{fig:stationary-UP-vs-DtACI-local}, UP-OCP vs. PI Ctrl (lr=0.1).}
  \label{fig:stationary-UP-vs-PI_01-local}
\end{figure}

\begin{figure}[H]
  \centering
  \includegraphics[trim={0 0 0 1.2cm}, clip, width=0.85\columnwidth]{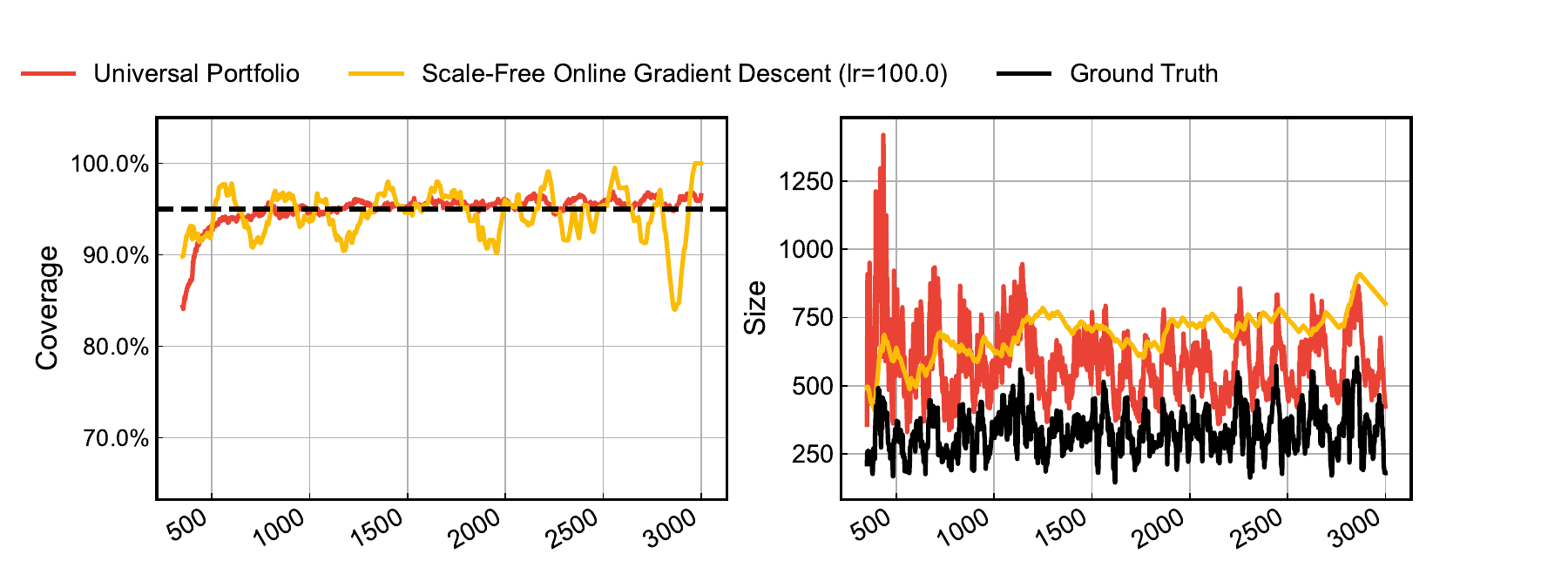}
  \caption{As in Figure~\ref{fig:stationary-UP-vs-DtACI-local}, UP-OCP vs. SFOGD (lr=100).}
  \label{fig:stationary-UP-vs-SFOGD_100-local}
\end{figure}

\begin{table}[H]
  \caption{Quantitative Comparison on the Stationary Dataset (synthetic).}
  \label{tab:stationary-1vall-metrics-full}
  \centering
  \begin{tabular}{lcccccc}
    \toprule
    & UP & KT & DtACI & SFOGD (lr=100) & P Ctrl (lr=0.1) & PI Ctrl (lr=0.1) \\
    \midrule
    Marginal coverage      & \textbf{0.952} & 0.93  & 0.958    & 0.946 & 0.949 & 0.949 \\
    Average set size       & 592   & 596   & $\infty$ & 694   & 566   & 567   \\
    Median set size        & \textbf{500}   & 574   & 494      & 691   & 531   & 534   \\
    75\% quantile set size & 746   & 724   & 665      & 841   & 692   & 694   \\
    90\% quantile set size & 1070  & 892   & $\infty$ & 962   & 882   & 880   \\
    95\% quantile set size & 1320  & 994   & $\infty$ & 1050  & 1000  & 1010  \\
    \bottomrule
  \end{tabular}
\end{table}

\newpage
\textbf{More Pareto Frontiers and Target-level Tracking.}

\begin{figure}[H]
  \centering
  % --- Top Row: 1x2 ---
  \begin{minipage}[t]{0.49\columnwidth}
    \centering
    \includegraphics[trim={0 0 0 0.5cm}, clip, width=\linewidth]{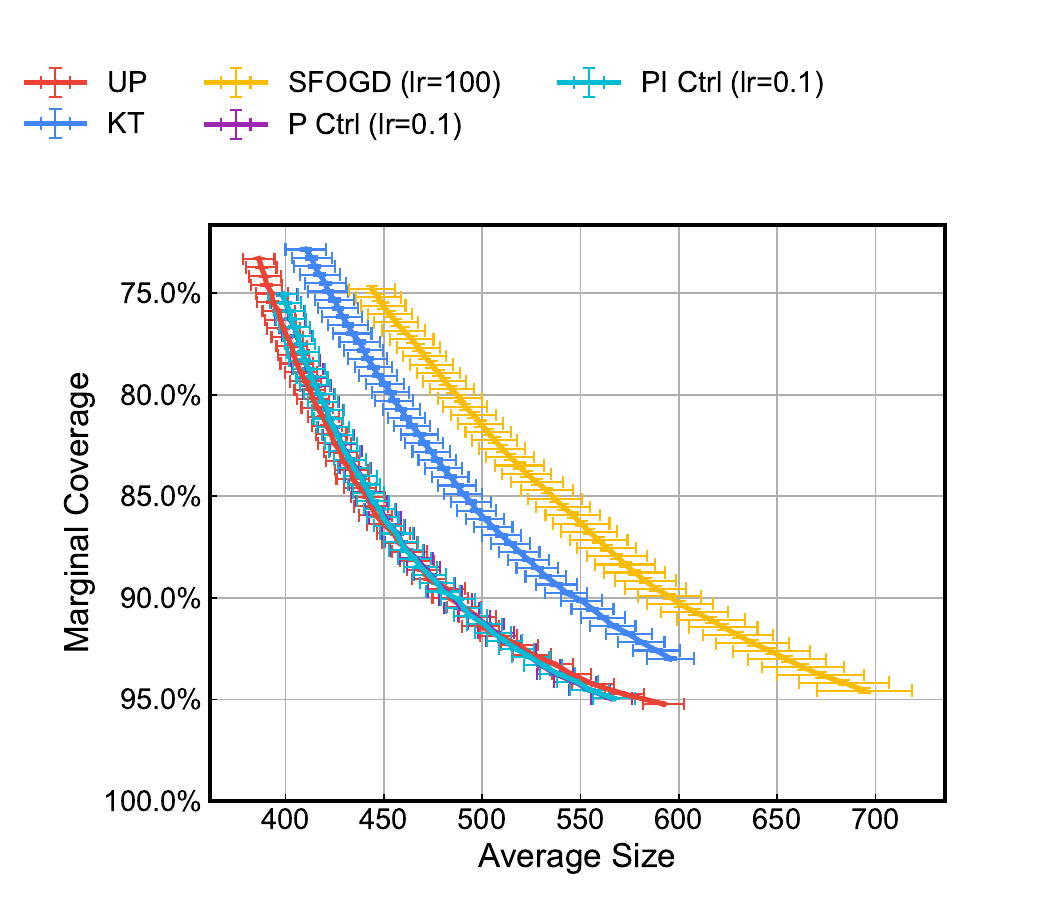}
    \caption{Mean prediction set sizes on synthetic stationary data with random waves.}
    \label{fig:stationary-Pareto-average}
  \end{minipage}
  \hfill
  \begin{minipage}[t]{0.49\columnwidth}
    \centering
    \includegraphics[trim={0 0 0 0.5cm}, clip, width=\linewidth]{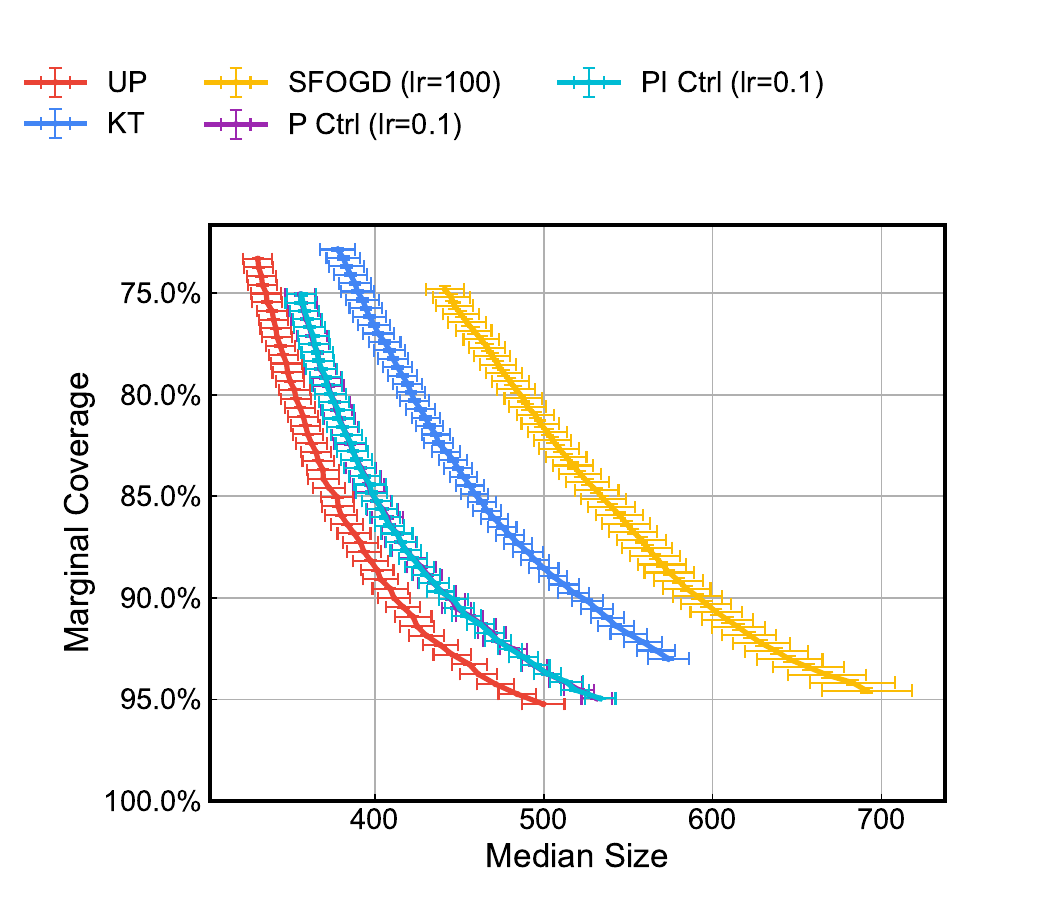}
    \caption{Median prediction set sizes on synthetic stationary data with random waves.}
    \label{fig:stationary-Pareto-median}
  \end{minipage}
  
  \vspace{0.5cm}

  \begin{minipage}[t]{0.49\columnwidth}
    \centering
    \includegraphics[trim={0 0 0 0.5cm}, clip, width=\linewidth]{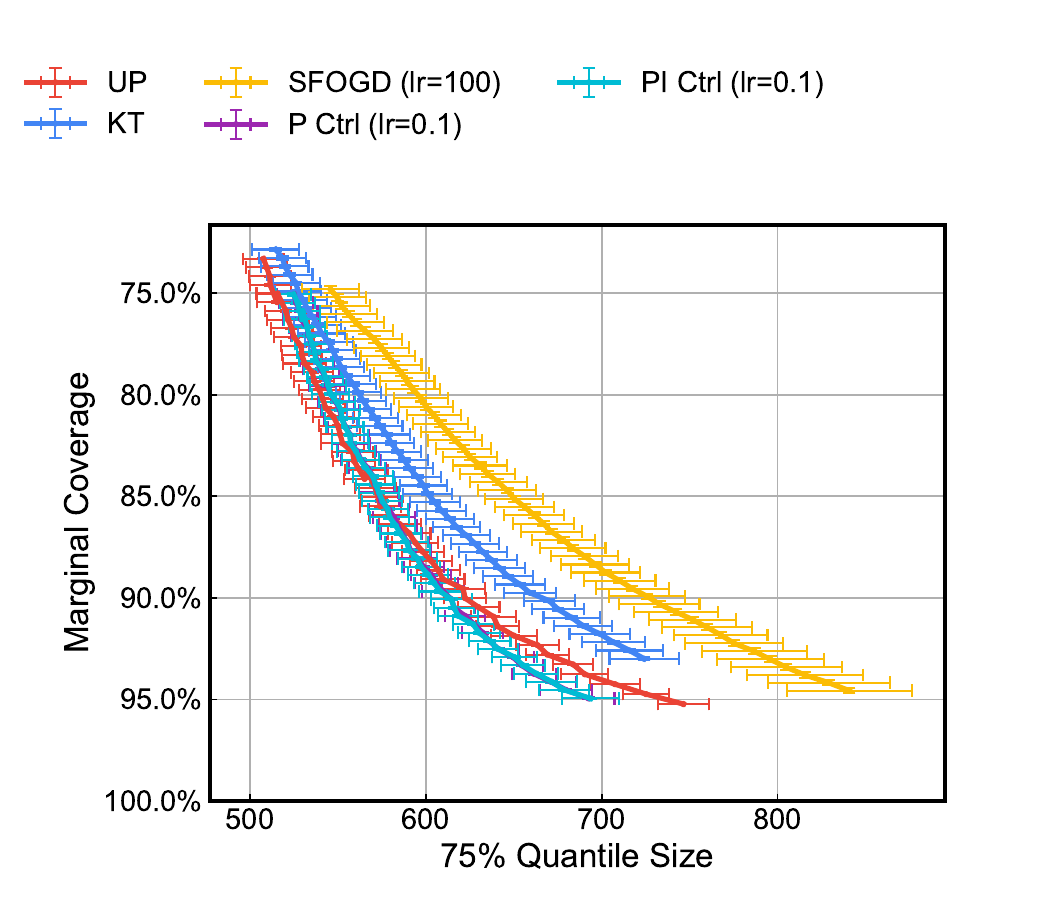}
    \caption{75\% quantile prediction set sizes on synthetic stationary data with random waves.}
    \label{fig:stationary-Pareto-q75}
  \end{minipage}
  \hfill
  \begin{minipage}[t]{0.49\columnwidth}
    \centering
    \includegraphics[trim={0 0 0 1cm}, clip, width=\linewidth]{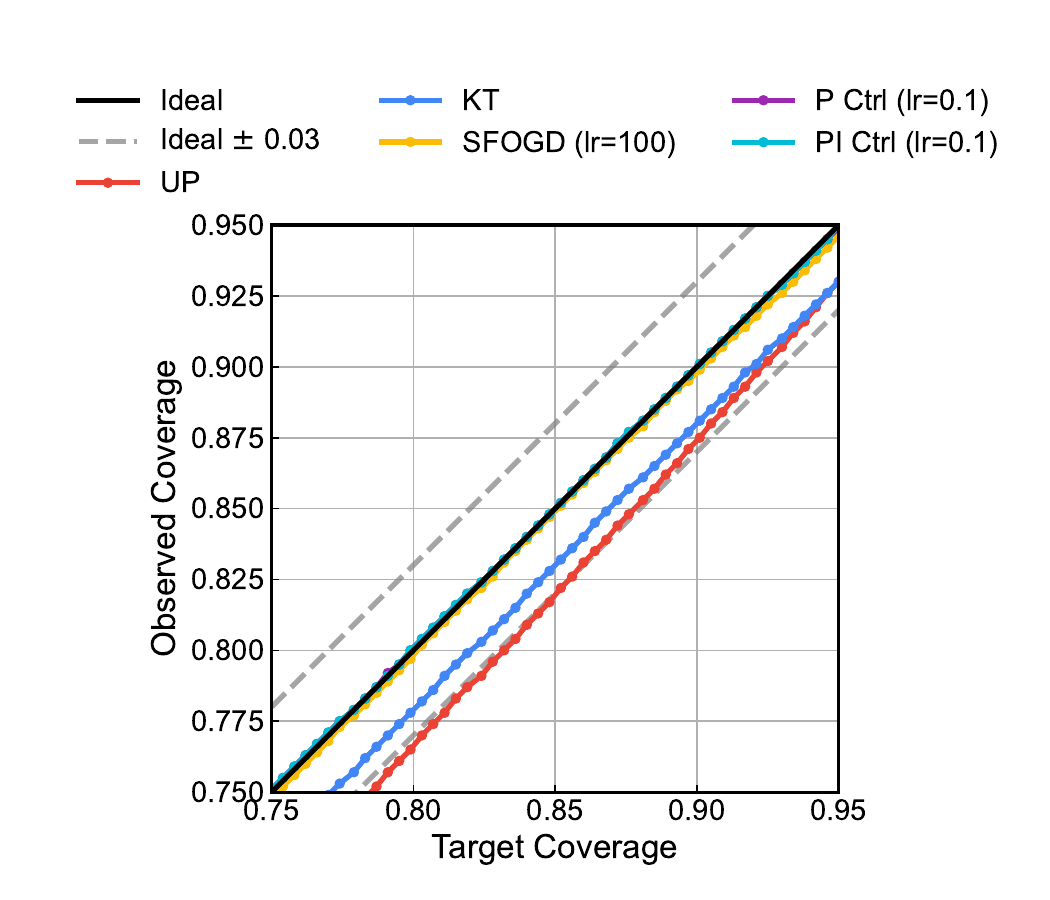}
    \caption{Realized vs. target coverage on synthetic stationary data with random waves. Most methods track the diagonal within a small tolerance ($\pm$ 0.03).}
    \label{fig:stationary-tracking-targets}
  \end{minipage}
\end{figure}

\newpage
\subsection{Quadratic Trend with Random Waves}
At last, we evaluate the algorithms on quadratic drift, which combines the sparse noise structure of the stationary regime with a non-stationary, monotonically increasing trend. This tests the ability of the algorithms to track a drifting baseline while remaining robust to random waves.

Formally, the underlying trend $T_t$ follows a quadratic trajectory starting at $0$ and ending at $20$ over $T=3000$ steps:
\begin{equation}
    T_t = \frac{20}{T^2} t^2.
\end{equation}
The noise generation follows the same multiplicative, sparse structure as the stationary wavelet experiment. We sample a Bernoulli mask $B_t \sim \text{Bernoulli}(0.1)$ and exponential noise $E_t \sim \text{Exponential}(1/10)$. The raw score $\tilde{S}_t$ applies this noise to the drifting baseline:
\begin{equation}
    \tilde{S}_t = T_t (1 + B_t E_t).
\end{equation}
Finally, the observed score $S_t$ is the result of a rolling max-pooling operation with window size $W=25$, creating locally correlated volatility structures around the drift.

\textbf{Local Adaptivity.}
\begin{figure}[H]
  \centering
  \includegraphics[trim={0 0 0 1.2cm}, clip, width=0.85\columnwidth]{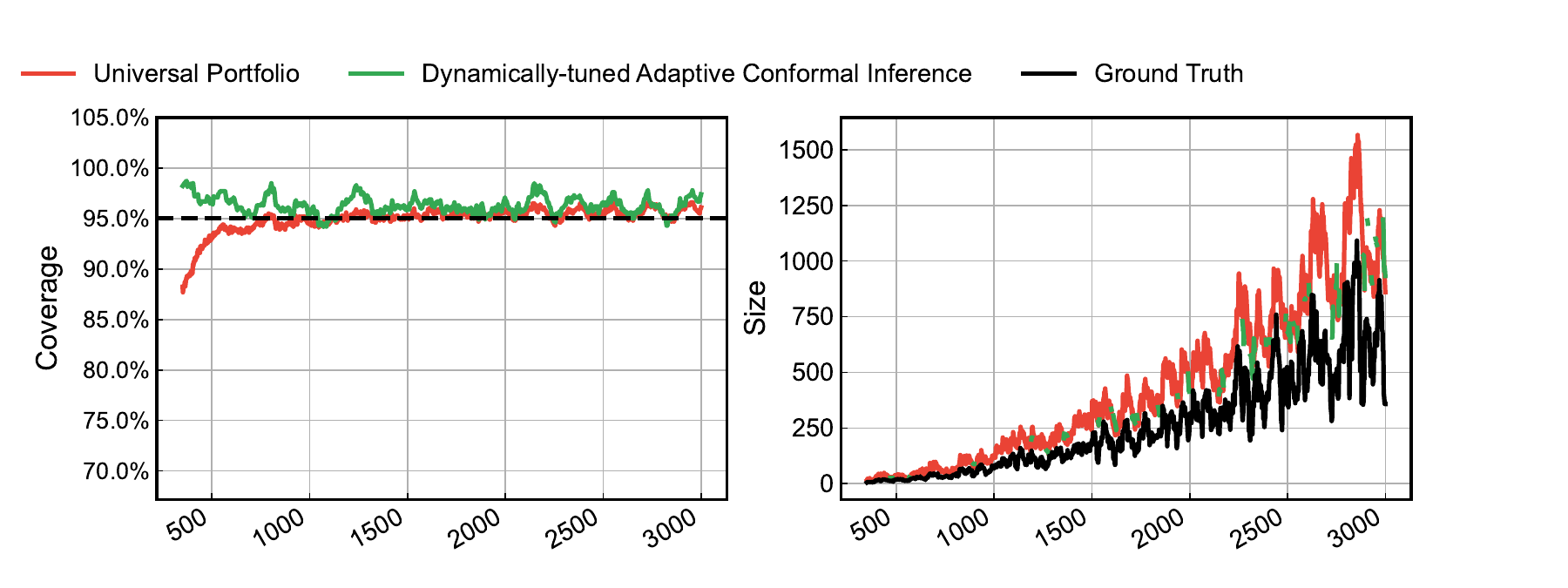}
  \caption{UP-OCP vs. DtACI for forecasting synthetic data with quadratic trend and random waves.}
  \label{fig:mix-UP-vs-DtACI-local}
\end{figure} 

\begin{figure}[H]
  \centering
  \includegraphics[trim={0 0 0 1.2cm}, clip, width=0.85\columnwidth]{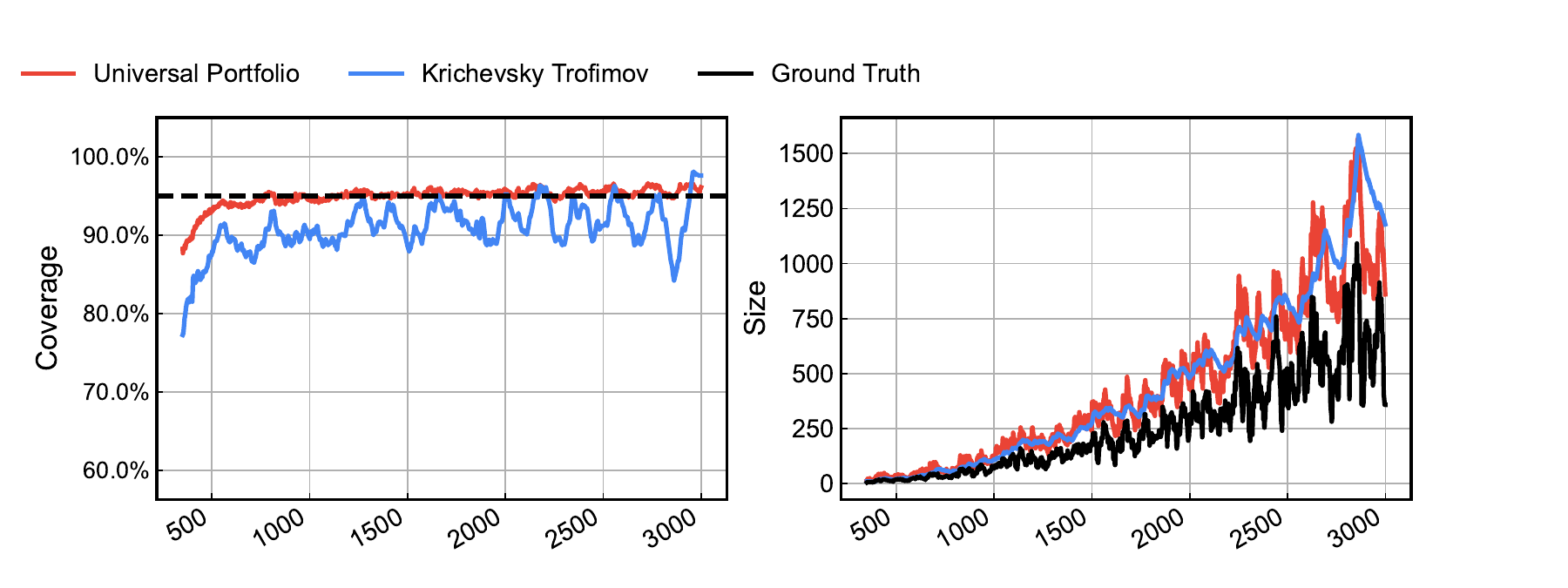}
  \caption{As in Figure~\ref{fig:mix-UP-vs-DtACI-local}, UP-OCP vs. KT.}
  \label{fig:mix-UP-vs-KT-local}
\end{figure} 

\begin{figure}[H]
  \centering
  \includegraphics[trim={0 0 0 1.2cm}, clip, width=0.85\columnwidth]{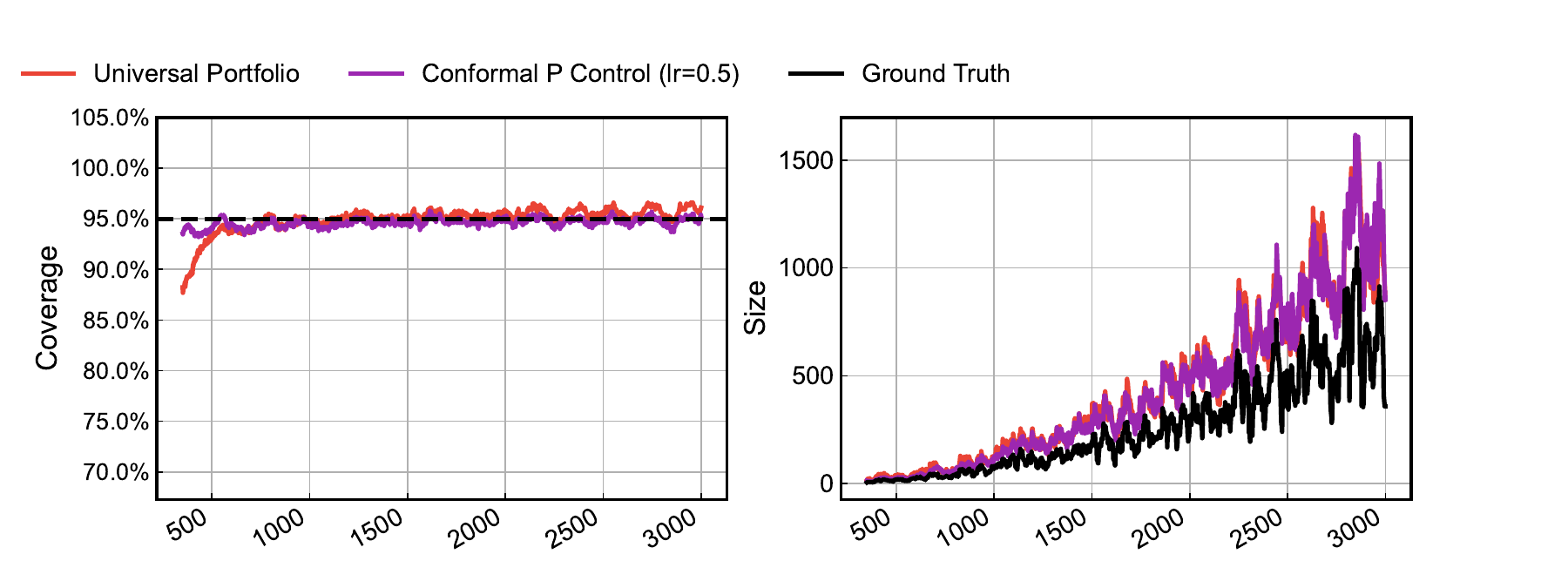}
  \caption{As in Figure~\ref{fig:mix-UP-vs-DtACI-local}, UP-OCP vs. P Ctrl (lr=0.5).}
  \label{fig:mix-UP-vs-P_05-local}
\end{figure}

\begin{figure}[H]
  \centering
  \includegraphics[trim={0 0 0 1.2cm}, clip, width=0.85\columnwidth]{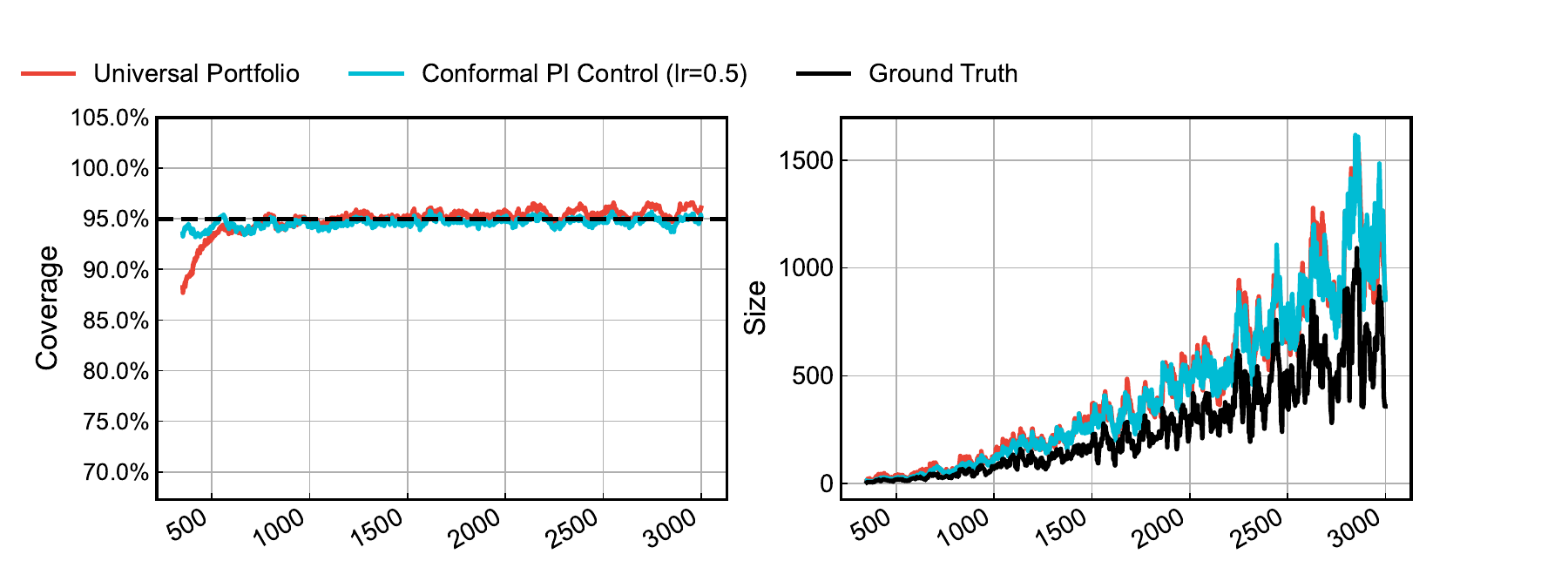}
  \caption{As in Figure~\ref{fig:mix-UP-vs-DtACI-local}, UP-OCP vs. PI Ctrl (lr=0.5).}
  \label{fig:mix-UP-vs-PI_05-local}
\end{figure}

\begin{figure}[H]
  \centering
  \includegraphics[trim={0 0 0 1.2cm}, clip, width=0.85\columnwidth]{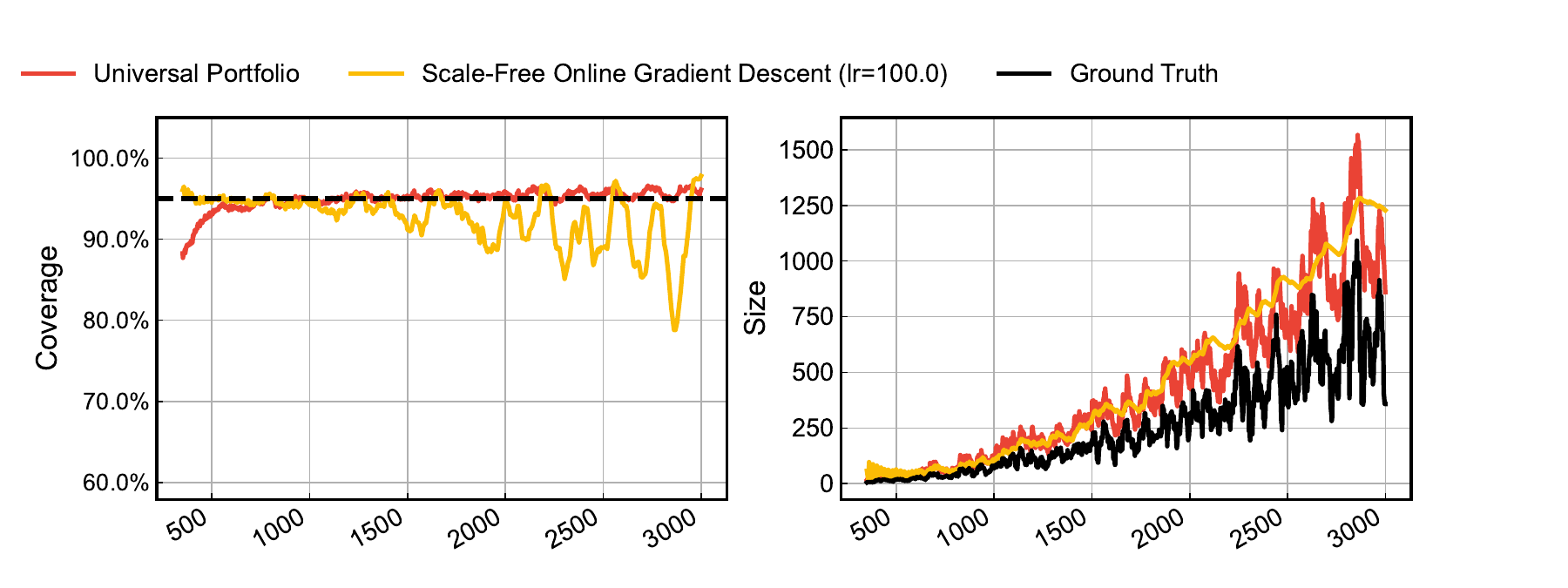}
  \caption{As in Figure~\ref{fig:mix-UP-vs-DtACI-local}, UP-OCP vs. SFOGD (lr=100).}
  \label{fig:mix-UP-vs-SFOGD_100-local}
\end{figure}

\begin{table}[H]
  \caption{Quantitative Comparison on the Mix Dataset (s synthetic).}
  \label{tab:mix-1vall-metrics-full}
  \centering
  \begin{tabular}{lcccccc}
    \toprule
    & UP & KT & DtACI & SFOGD (lr=100) & P Ctrl (lr=0.5) & PI Ctrl (lr=0.5) \\
    \midrule
    Marginal coverage      & \textbf{0.951} & 0.913 & 0.963    & 0.927 & 0.947 & 0.947 \\
    Average set size       & \textbf{433}   & 451   & $\infty$ & 467   & \textbf{429}   & \textbf{429}   \\
    Median set size        & \textbf{292}   & 308   & 406      & 324   & \textbf{285}   & \textbf{285}   \\
    75\% quantile set size & \textbf{624}   & 692   & $\infty$ & 790   & \textbf{626}   & \textbf{626}   \\
    90\% quantile set size & \textbf{1020}  & 1040  & $\infty$ & 1090  & \textbf{1020}  & \textbf{1020}  \\
    95\% quantile set size & 1320  & 1360  & $\infty$ & 1260  & 1310  & 1310  \\
    \bottomrule
  \end{tabular}
\end{table}

\newpage
\textbf{More Pareto Frontiers and Target-level Tracking.}

\begin{figure}[H]
  \centering
  % --- Top Row: 1x2 ---
  \begin{minipage}[t]{0.49\columnwidth}
    \centering
    \includegraphics[trim={0 0 0 0.5cm}, clip, width=\linewidth]{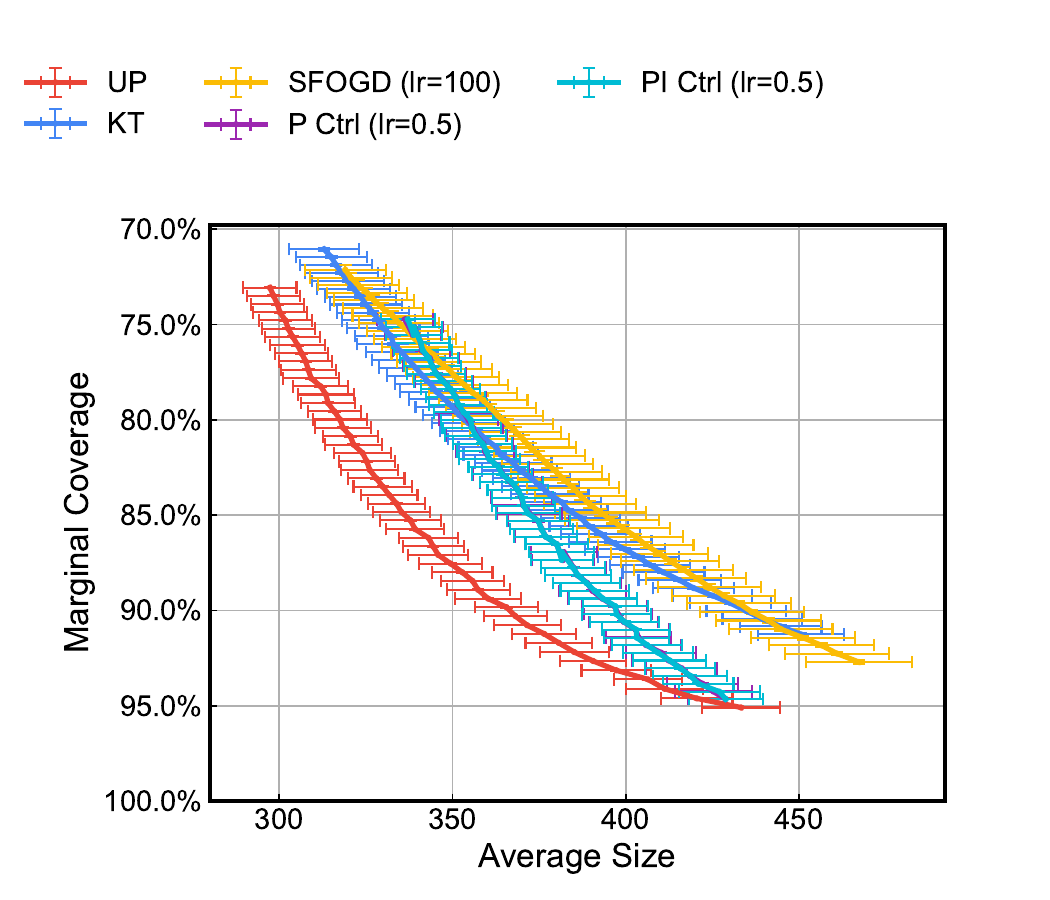}
    \caption{Mean prediction set sizes on synthetic data with quadratic trend and random waves.}
    \label{fig:mix-Pareto-average}
  \end{minipage}
  \hfill
  \begin{minipage}[t]{0.49\columnwidth}
    \centering
    \includegraphics[trim={0 0 0 0.5cm}, clip, width=\linewidth]{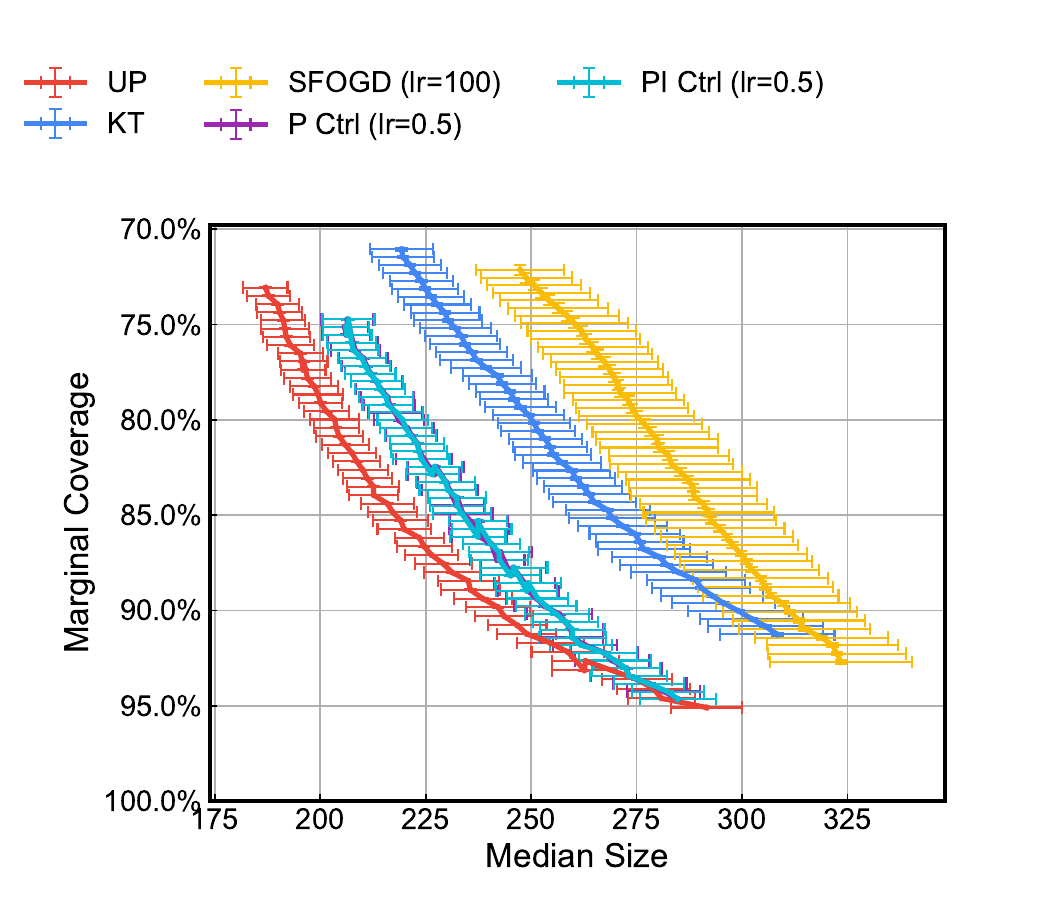}
    \caption{Median prediction set sizes on synthetic data with quadratic trend and random waves.}
    \label{fig:mix-Pareto-median}
  \end{minipage}
  
  \vspace{0.5cm} % Vertical spacing between rows

  % --- Bottom Row: Left Aligned ---
  \begin{minipage}[t]{0.49\columnwidth}
    \centering
    \includegraphics[trim={0 0 0 0.5cm}, clip, width=\linewidth]{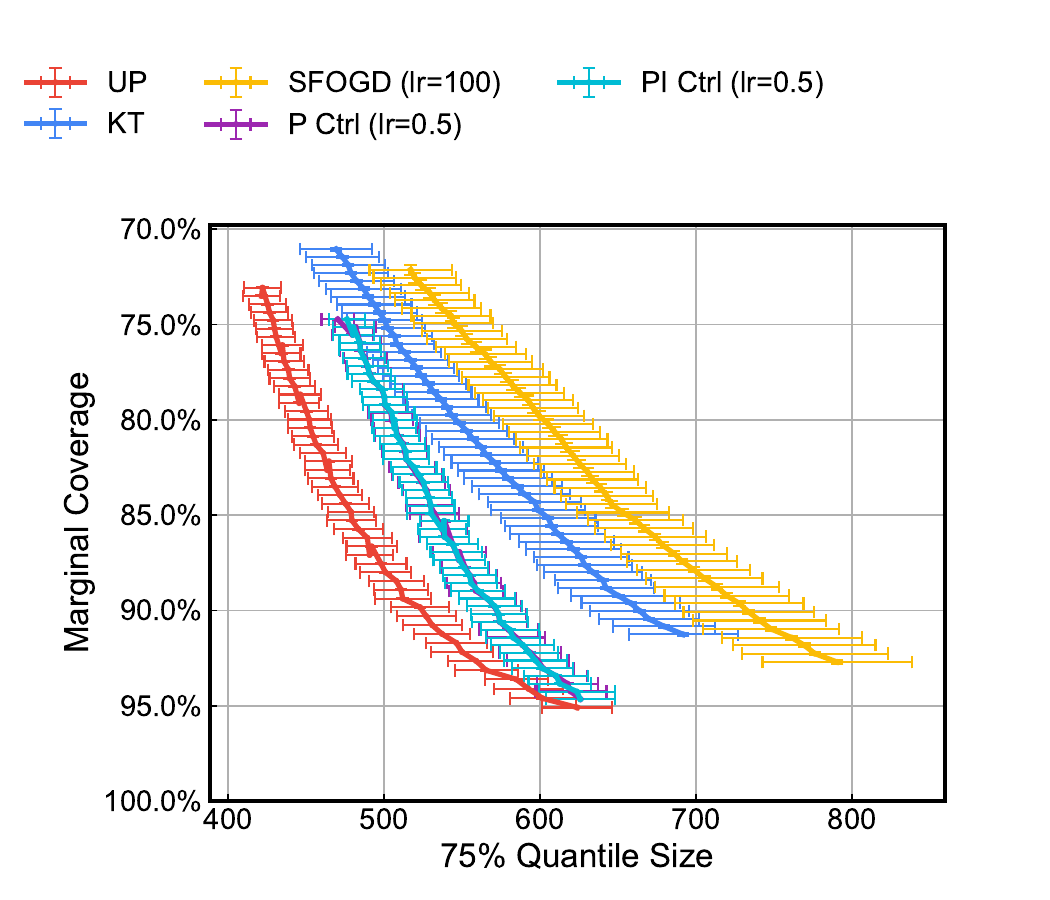}
    \caption{75\% quantile prediction set sizes on synthetic data with quadratic trend and random waves.}
    \label{fig:mix-Pareto-q75}
  \end{minipage}
  \hfill
  \begin{minipage}[t]{0.49\columnwidth}
    \centering
    \includegraphics[trim={0 0 0 1cm}, clip, width=\linewidth]{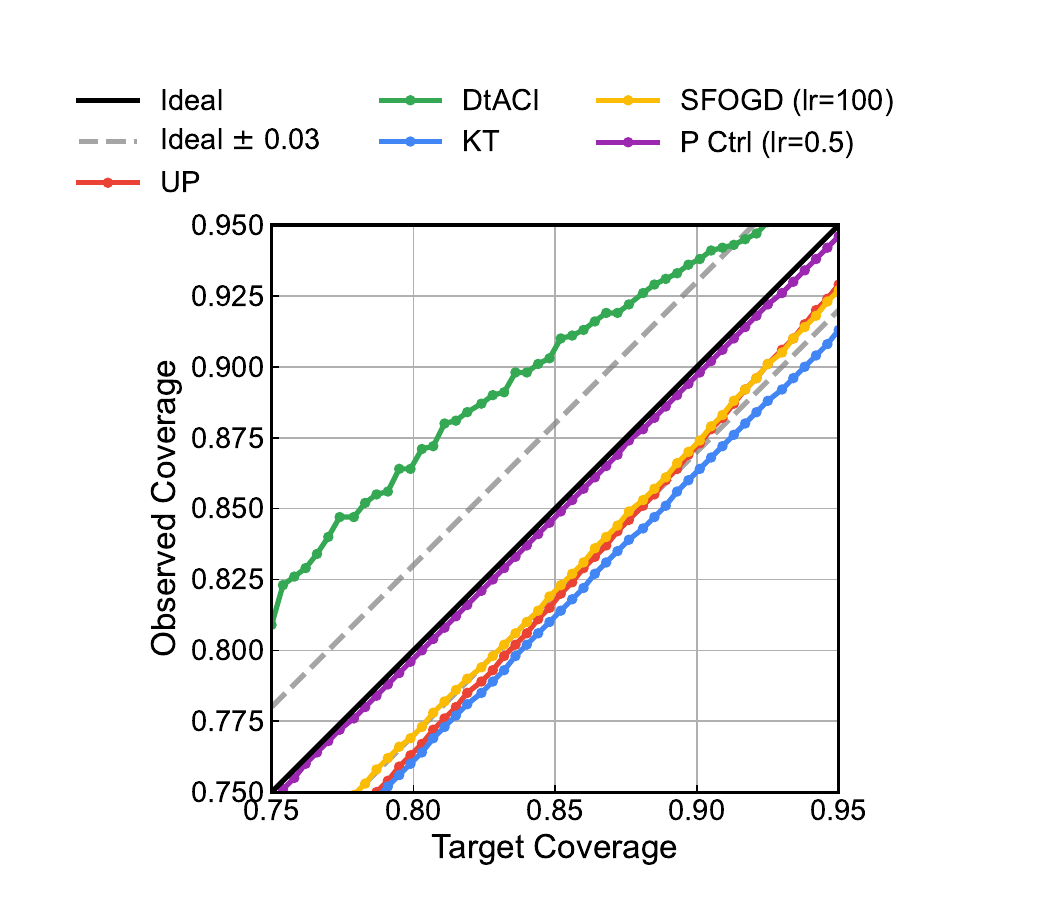}
    \caption{Realized vs. target coverage on synthetic data with quadratic trend and random waves. Most methods track the diagonal within a small tolerance ($\pm$ 0.03).}
    \label{fig:mix-tracking-targets}
  \end{minipage}
\end{figure}

\newpage
\section{Sensitivity of Parameterized Baselines to Hyperparameters}
\label{app:sensitivity}

In this section, we empirically demonstrate the sensitivity of parameterized OCP methods (SF-OGD, P/PI Control) to hyperparameter choices. While tuned baselines can achieve competitive performance (as shown in Section~\ref{sec:experiments}), selecting these parameters requires an oracle or grid search that is not feasible in a true online setting. We give three such examples of failure below. These examples underscore that parameterized methods can not be naively plugged in; they require careful tuning. UP-OCP avoids these divergence modes by design without requiring manual tuning.

\begin{figure}[H]
  \centering
  \includegraphics[trim={0 0 0 1.2cm}, clip, width=0.85\columnwidth]{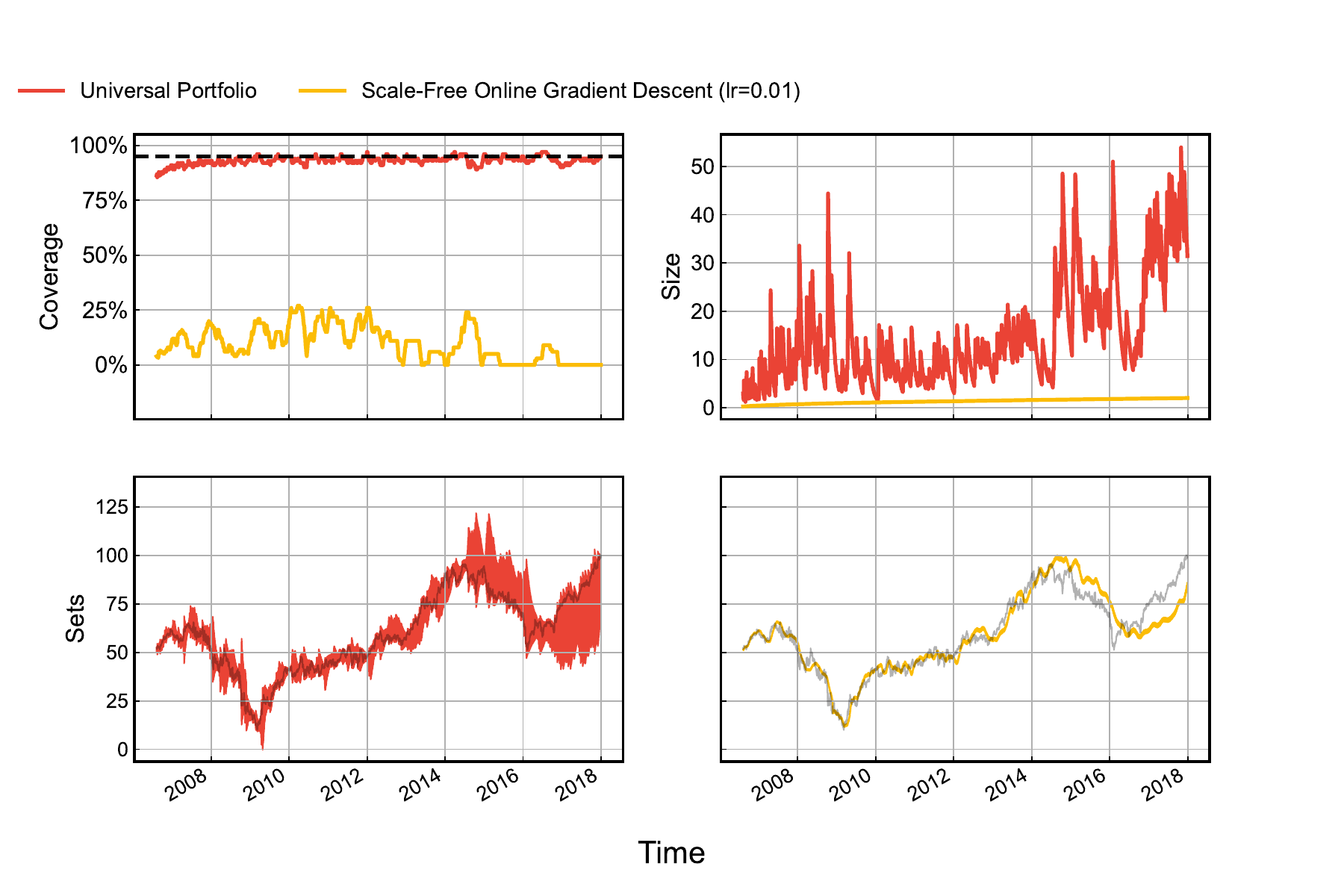}
  \caption{UP-OCP vs. SF-OGD (lr=0.01) on AXP dataset; SF-OGD (lr=0.01)fails to expand the prediction sets sufficiently, resulting in intervals that are consistently too narrow (right panel). The marginal coverage (yellow, left panel) collapses to nearly 0\%, far below the 95\% target.}
  \label{fig:undercoverage}
\end{figure} 

\begin{figure}[H]
  \centering
  \includegraphics[trim={0 0 0 1.2cm}, clip, width=0.85\columnwidth]{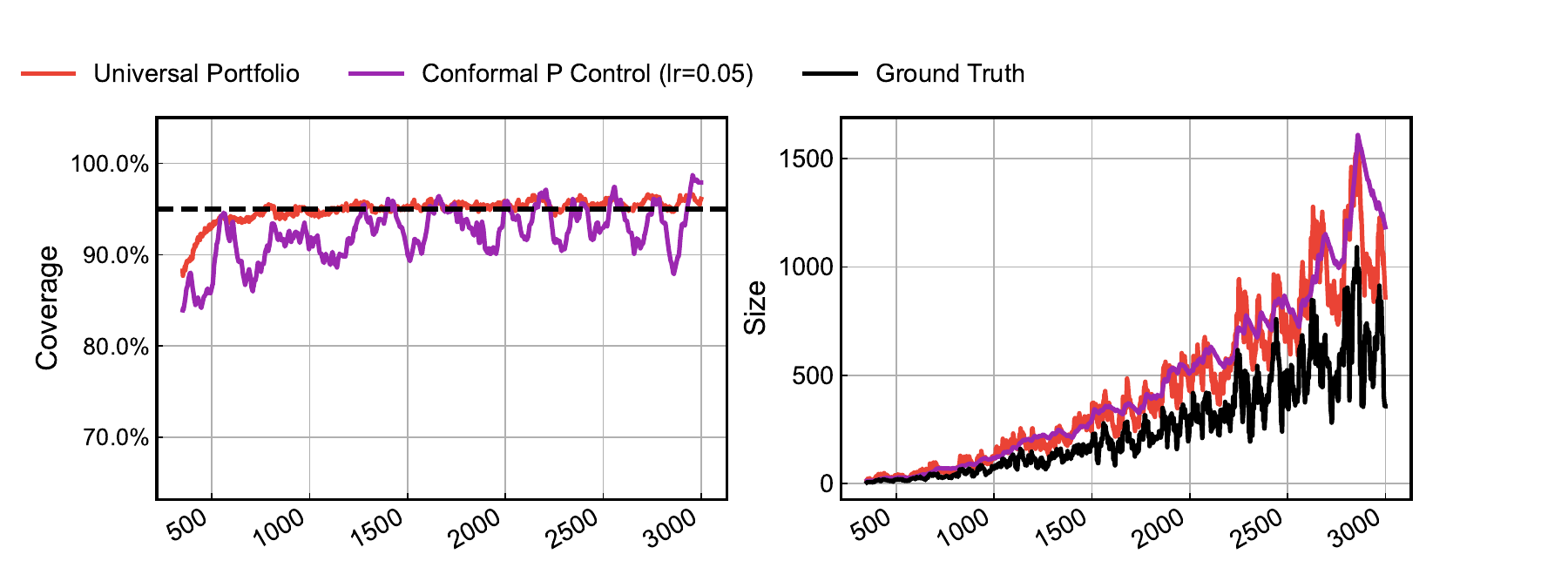}
  \caption{UP-OCP vs. P Ctrl (lr=0.05) on synthetic data with quadratic trend and random waves; While the parameter-free UP-OCP (red) maintains stable coverage near the target, the P Controller (purple) exhibits significant oscillation. Intuitively this indicates that the controller is over-reacting to single data points. }
  \label{fig:instability}
\end{figure} 

\begin{figure}[H]
  \centering
  \includegraphics[trim={0 0 0 0}, clip, width=0.85\columnwidth]{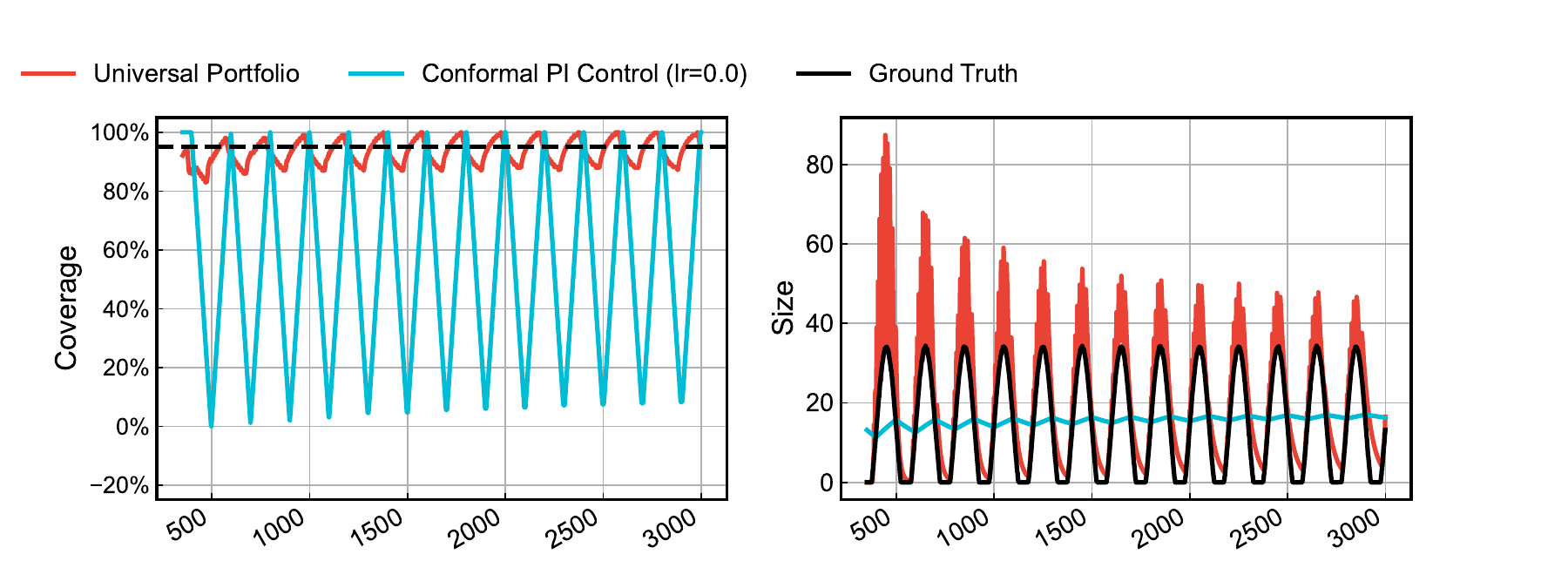}
  \caption{UP-OCP vs. PI Ctrl (lr=0.0) on synthetic sinusoid data; PI Controller fails to adapt to periodicity of the ground truth (black). The prediction set sizes (cyan, right panel) remain effectively constant. The coverage (left panel) oscillates deterministically between 0\% and 100\% as the ground truth noise wave passes in and out. In contrast, UP-OCP (red) correctly modulates the interval width to track the sinusoidal pattern, maintaining valid coverage.}
  \label{fig:no-tracking}
\end{figure}

\end{document}